\title{Distinguishing cause from effect using observational data: methods and benchmarks}
\author{\name Joris M.~Mooij\thanks{Part of this work was done while JMM, JP and JZ were with the MPI T\"ubingen.} \email \url{j.m.mooij@uva.nl}\\
  \addr Institute for Informatics, University of Amsterdam\\
  Postbox 94323, 1090 GH Amsterdam, The Netherlands\\
  \AND
  \name Jonas Peters \email \url{jonas.peters@tuebingen.mpg.de}\\
  \addr 
  Max Planck Institute for Intelligent Systems\\
  Spemannstra{\ss}e 38, 72076 T\"ubingen, Germany, and\\
  Seminar for Statistics, ETH Z\"urich\\
  R\"amistrasse 101, 8092 Z\"urich, Switzerland\\
  \AND
  \name Dominik Janzing \email \url{janzing@tuebingen.mpg.de}\\
  \addr Max Planck Institute for Intelligent Systems\\
  Spemannstra{\ss}e 38, 72076 T\"ubingen, Germany\\
  \AND
  \name Jakob Zscheischler \email \url{jakob.zscheischler@env.ethz.ch}\\
  \addr Institute for Atmospheric and Climate Science, ETH Z\"urich\\
  Universit\"atstrasse 16, 8092 Z\"urich, Switzerland\\
\AND
  \name Bernhard Sch\"olkopf \email \url{bs@tuebingen.mpg.de}\\
  \addr Max Planck Institute for Intelligent Systems\\
  Spemannstra{\ss}e 38, 72076 T\"ubingen, Germany
}
\tikzstyle{var}=[circle,draw=black,fill=black!25,thick,minimum size=24pt] 
\tikzstyle{varh}=[circle,draw=black,fill=white,thick,minimum size=24pt] 
\tikzstyle{arr}=[->,>=stealth',draw=black,fill=black,thick]
\tikzstyle{arrh}=[->,>=stealth',draw=black,fill=black,thick,dashed]
\begin{document}
\maketitle

\begin{abstract}%
The discovery of causal relationships from purely observational data is a
fundamental problem in science. The most elementary form of such a
causal discovery problem is to decide whether $X$ causes $Y$ or, alternatively,
$Y$ causes $X$, given joint observations of two variables $X,Y$. An example
is to decide whether altitude causes temperature, or vice versa, given
only joint measurements of both variables.
Even under the simplifying assumptions of no confounding, no feedback loops, and no selection bias, 
such bivariate causal discovery problems are challenging. Nevertheless, several approaches for addressing 
those problems have been proposed in recent years. We review two families of such methods:
Additive Noise Methods (ANM) and Information Geometric Causal Inference (IGCI). We present
the benchmark \CEP\ that consists of data for \nrpairs\ different cause-effect pairs selected from \nrdatasets\ datasets from various domains 
(e.g., meteorology, biology, medicine, engineering, economy, etc.) and motivate our decisions regarding
the ``ground truth'' causal directions of all pairs.
We evaluate the performance of several bivariate causal discovery methods on these real-world benchmark
data and in addition on artificially simulated data. 
Our empirical results on real-world data indicate that certain methods are indeed able to distinguish 
cause from effect using only purely observational data, although more benchmark data would be needed 
to obtain statistically significant conclusions.
One of the best performing methods overall is the additive-noise method originally proposed
by \citet{HoyerJanzingMooijPetersSchoelkopf_NIPS_08}, which obtains an accuracy of 63 $\pm$ 10 \% and 
an AUC of 0.74 $\pm$ 0.05 on the real-world benchmark. As the main theoretical contribution of this work
we prove the consistency of that method.
\end{abstract}

\begin{keywords}
  Causal discovery, additive noise, information-geometric causal inference, cause-effect pairs, benchmarks
\end{keywords}


\section{Introduction}\label{sec:introduction} 

An advantage of having knowledge about causal relationships rather than 
statistical associations is that the former enables prediction of the effects 
of actions that perturb the observed system. 
Knowledge of cause and effect can also have implications on the applicability of semi-supervised learning and covariate shift adaptation \citep{ScholkopfJPSZMJ2012}.
While the gold standard for identifying causal relationships is controlled experimentation, in many
cases, the required experiments are too expensive, unethical, 
or technically impossible to perform. The development of methods to identify
causal relationships from purely observational data therefore constitutes an
important field of research.

An observed statistical
dependence between two variables $X$, $Y$ can be explained by a causal
influence from $X$ to $Y$, a causal influence from $Y$ to $X$, a possibly
unobserved common cause that influences both $X$ and $Y$ \citep[``confounding'', see e.g.,][]{Pearl2000}, a possibly
unobserved common effect of $X$ and $Y$ that is conditioned
upon in data acquisition \citep[``selection bias'', see e.g.,][]{Pearl2000}, or combinations of these.
Most state-of-the-art causal discovery algorithms that attempt to distinguish
these cases based on observational data require that 
$X$ and $Y$ are part of a larger set of observed random variables influencing
each other. For example, in that case, and under a genericity condition called ``faithfulness'',
conditional independences between subsets of observed variables allow one
to draw partial conclusions regarding their causal relationships \citep{SpirtesGlymourScheines2000,Pearl2000,RichardsonSpirtes2002,Zhang2008}.

In this article, we focus on the \emph{bivariate} case, assuming that only two
variables, say $X$ and $Y$, have been observed. We simplify the causal discovery problem considerably by
assuming no confounding, no selection bias and no feedback. We study how to distinguish $X$ causing $Y$
from $Y$ causing $X$ using only purely observational data, i.e., a finite
i.i.d.\ sample drawn from the joint distribution $\Prb_{X,Y}$.\footnote{We denote probability
distributions by $\Prb$ and probability densities (typically with respect to Lebesgue measure on $\RN^d$) by $p$.}
As an example, consider the data visualized in Figure~\ref{fig:example_task}. The question is: does
$X$ cause $Y$, or does $Y$ cause $X$? The true answer is ``$X$ causes $Y$'', as here 
$X$ is the altitude of weather stations and $Y$ is the mean temperature measured at these weather 
stations (both in arbitrary units). In the absence of knowledge about the measurement procedures that the variables correspond
with, one can try to exploit the subtle statistical patterns in the data in order to find the causal 
direction. This challenge of distinguishing cause from effect using only
observational data has attracted increasing interest recently \citep{MooijJanzing_JMLR_10,NIPSCausalityChallenge2008,Guyon++Challenges}.
Approaches to causal discovery based on conditional independences do not work here, as $X$ and
$Y$ are typically dependent, and there are no other observed variables to condition on. 

\begin{figure}
  \centerline{\includegraphics[width=0.25\textwidth]{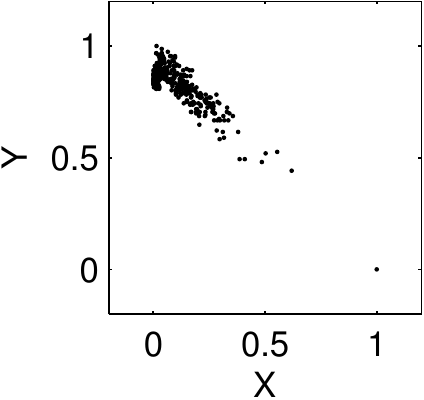}}
  \caption{\label{fig:example_task}Example of a bivariate causal discovery task: decide whether $X$ causes $Y$, or $Y$ causes $X$, using only the observed data (visualized here as a scatter plot).}
\end{figure}

A variety of causal discovery methods has been proposed in recent
years \citep{FriedmanNachman2000,KanoShimizu2003,ShimizuHoyerHyvarinenKerminen2006,SunJanzingSchoelkopf2006,
SunJanzingSchoelkopf2008,HoyerJanzingMooijPetersSchoelkopf_NIPS_08,MooijJanzingPetersSchoelkopf_ICML_09,
ZhangHyvarinen2009,JanzingHoyerSchoelkopf2010,Mooij_et_al_NIPS_10,Daniusis_et_al_UAI_10,
Mooij_et_al_NIPS_11,Shimizu++2011,Janzing_et_al_AI_12,HyvarinenSmith2013,Peters2014biom,Kpotufe++2014,Nowzohour2015,Sgouritsa++2015}
that were claimed to be able to solve this task under certain assumptions.
All these approaches exploit the \emph{complexity} of the marginal and conditional probability distributions, in one way or the other.
On an intuitive level, the idea is that the factorization of the joint density
$p_{C,E}(c,e)$ of cause $C$ and effect $E$ into
$p_{C}(c) p_{E \given C}(e\given c)$
typically yields models of lower total complexity than the alternative factorization into
$p_{E}(e) p_{C \given E}(c \given e)$. 
Although this idea is intuitively appealing, it is not clear how to define complexity. 
If ``complexity'' and ``information'' are measured by Kolmogorov complexity and algorithmic information, respectively, as in \citep{JanzingSchoelkopf2010,Lemeire2013}, one can show that the statement
``$p_C$ contains no information about $p_{E\given C}$'' implies that
the sum of the complexities of $p_C$ and $p_{E\given C}$ cannot be greater than the sum of the complexities of $p_E$ and $p_{C\given E}$. 
Some approaches, instead,
define certain classes of ``simple'' conditionals,
e.g., Additive Noise Models \citep{HoyerJanzingMooijPetersSchoelkopf_NIPS_08} and second-order exponential models \citep{SunJanzingSchoelkopf2006,JanzingSunSchoelkopf}, and 
infer $X$ to be the cause of $Y$ whenever
$\Prb_{Y\given X}$ is from this class (and $\Prb_{X\given Y}$ is not). Another approach 
that employs complexity in a more implicit way postulates that 
$\Prb_{C}$ contains no information about $\Prb_{E \given C}$ \citep{Janzing_et_al_AI_12}.


Despite the large number of methods for bivariate causal discovery that has
been proposed over the last few years, their practical performance has not been
studied very systematically (although domain-specific studies have been
performed, see \citep{Smith++2011,Statnikov++2012}). The present work attempts to address
this by presenting benchmark data and reporting extensive empirical results on the 
performance of various bivariate causal discovery methods. Our main contributions are fourfold:
\begin{itemize}
  \item We review two families of bivariate causal discovery methods, \emph{Additive Noise Methods (ANM)} \citep[originally proposed by][]{HoyerJanzingMooijPetersSchoelkopf_NIPS_08}, and \emph{Information Geometric Causal Inference (IGCI)} \citep[originally proposed by][]{Daniusis_et_al_UAI_10}.
  \item We present a detailed description of the benchmark \CEP\ that we collected over the years
  for the purpose of evaluating bivariate causal discovery methods. It currently consists of data for \nrpairs\ 
different cause-effect pairs selected from \nrdatasets\ datasets from various domains 
(e.g., meteorology, biology, medicine, engineering, economy, etc.).
  \item We report the results of extensive empirical evaluations of the performance of several members of the ANM and IGCI families,
    both on artificially simulated data as well as on the \CEP\ benchmark.
  \item We prove the consistency of the original implementation of ANM that was proposed by \citet{HoyerJanzingMooijPetersSchoelkopf_NIPS_08}.
\end{itemize}
The \CEP\ benchmark data are provided on our website \citep{MooijJanzingSchoelkopf2014}.
In addition, all the code (including the code to run the experiments and create the figures) is provided on the first author's 
homepage\footnote{\url{http://www.jorismooij.nl/}} under an open source license to allow others to reproduce and build on our work.



The structure of this article is somewhat unconventional, as it partially consists of a review of existing
methods, but it also contains new theoretical and empirical results.
We will start in the next subsection by giving a more rigorous definition of the causal discovery task we consider in 
this article. In Section~\ref{sec:ANM} we give a review of ANM, an approach based on the assumed additivity of
the noise, and describe various ways of implementing this idea for bivariate causal discovery.
In Appendix~\ref{sec:consistency} we provide a proof for the consistency of the original ANM implementation that
was proposed by \citet{HoyerJanzingMooijPetersSchoelkopf_NIPS_08}.
In Section~\ref{sec:IGCI}, we review IGCI, a method that exploits the independence of the distribution
of the cause and the functional relationship between cause and effect. This method is designed for the
deterministic (noise-free) case, but has been reported to work on noisy data as well. Section~\ref{sec:experiments}
gives more details on the experiments that we have performed, the results of which are
reported in Section~\ref{sec:results}. Appendix~\ref{sec:dataset} describes the \CEP\ benchmark data set that
we used for assessing the accuracy of various methods. We conclude in Section~\ref{sec:discussion}.

\subsection{Problem setting}
In this subsection, we formulate the problem of interest central to this work. We tried to make this section
as self-contained as possible and hope that it also appeals to readers who are not familiar with
the terminology in the field of causality. For more details, we refer the reader to \citep{Pearl2000}.

Suppose that $X,Y$ are two random variables with joint distribution
$\Prb_{X,Y}$.
This observational distribution corresponds to measurements of $X$
and $Y$ in an experiment in which $X$ and $Y$ are both (passively) observed.
If an external intervention (i.e., from outside the system under consideration) changes 
some aspect of the system, then in general, this may lead to a change in the joint 
distribution of $X$ and $Y$. In particular, we will consider a perfect intervention\footnote{Different types of ``imperfect'' interventions can be considered as well, see e.g., \citet{Eberhardt2007,EatonMurphy07,MooijHeskes_UAI_13}. In this paper we only consider perfect interventions.} 
``$\intervene{x}$'' (or more explicitly: ``$\intervene{X=x}$'') that forces the variable $X$ to have the value $x$, and leaves 
the rest of the system untouched. We denote the resulting interventional
distribution of $Y$ as $\Prb_{Y \given \intervene{x}}$,
a notation inspired by \citet{Pearl2000}. This interventional distribution
corresponds to the distribution of $Y$ in an experiment in which $X$ has 
been set to the value $x$ by the experimenter, after which $Y$ is measured.
Similarly, we may consider a
perfect intervention $\intervene{y}$ that forces $Y$ to have the value $y$,
leading to the interventional distribution $\Prb_{X \given \intervene{y}}$ of $X$.

For example, $X$ and $Y$ could be binary variables corresponding to whether the 
battery of a car is empty, and whether the start engine of the car is broken.
Measuring these variables in many cars, we get an estimate of the joint distribution $\Prb_{X,Y}$.
The marginal distribution $\Prb_X$, which only considers the distribution of $X$, can be obtained
by integrating the joint distribution over $Y$. The conditional distribution $\Prb_{X \given Y=0}$ 
corresponds with the distribution of
$X$ for the cars with a broken start engine (i.e., those cars for which we observe
that $Y = 0$). The interventional distribution $\Prb_{X \given \intervene{Y=0}}$, on the 
other hand, corresponds with the distribution of $X$ after destroying the start engines 
of all cars (i.e., after actively setting $Y=0$).
Note that the distributions $\Prb_X, \Prb_{X \given Y=0}, \Prb_{X \given \intervene{Y=0}}$ may 
all be different.

In the absence of selection bias, we define:\footnote{In the presence of selection bias, one has to be
careful when linking causal relations to interventional distributions. Indeed, if one would (incorrectly) apply
Definition~\ref{def:causes} to the conditional interventional distributions $\Prb_{Y \given \intervene{X=x},S=s}$ 
instead of to the unconditional interventional distributions $\Prb_{Y \given \intervene{X=x}}$ (e.g., because one
is not aware of the fact that the data has been conditioned on $S$), one may 
obtain incorrect conclusions regarding causal relations.}
\begin{definition}\label{def:causes}
  We say that $X$ \textbf{causes} $Y$ if\ \,$\Prb_{Y \given \intervene{x}} \ne \Prb_{Y \given \intervene{x'}}$ for some $x, x'$. 
\end{definition}
Causal relations can be \emph{cyclic}, i.e., $X$ causes $Y$ and $Y$ also causes $X$. 
For example, an increase of the global temperature causes sea ice to melt, which causes the temperature to rise further (because ice reflects more sun light).

In the context of multiple variables $X_1, \dots, X_p$ with $p \ge 2$, 
we define \emph{direct} causation in the absence of selection bias as follows:
\begin{definition}
  $X_i$ is a \textbf{direct cause} of $X_j$ with respect to $X_1,\dots,X_p$ if 
  $$\Prb_{X_j \given \intervene{X_i=x, \B{X}_{\setminus ij}=\B{c}}} \ne \Prb_{X_j \given \intervene{X_i=x', \B{X}_{\setminus ij} = \B{c}}}$$
  for some $x, x'$ and some $\B{c}$, where $\B{X}_{\setminus ij} := X_{\{1, \dots, p\} \setminus \{i,j\}}$ are all other variables besides $X_i,X_j$.
\end{definition}
In words: $X$ is a direct cause of $Y$ with respect to a set of variables under consideration
if $Y$ depends on the value we force $X$ to have in a perfect intervention, while fixing all 
other variables. The intuition is that a direct causal relation of $X$ on $Y$ is not mediated via the
other variables. The more variables one considers, the harder it becomes experimentally to distinguish
direct from indirect causation, as one has to keep more variables fixed.\footnote{For the special case
$p=2$ that is of interest in this work, we do not need to distinguish indirect from direct causality,
as they are equivalent in that special case. However, we introduce this concept in order to define causal
graphs on more than two variables, which we use to explain the concepts of confounding and selection bias.}

We may visualize direct causal relations in a \emph{causal graph}:
\begin{definition}
  The \textbf{causal graph} $\C{G}$ has variables $X_1, \dots, X_p$ as nodes, and a directed edge
  from $X_i$ to $X_j$ if and only if $X_i$ is a direct cause of $X_j$ with respect to $X_1, \dots, X_p$.
\end{definition}
Note that this definition allows for cyclic causal relations. In contrast with the typical assumption
in the causal discovery literature, we do not assume here that the causal graph is necessarily a Directed Acyclic 
Graph (DAG).

If $X$ causes $Y$, we generically have that $\Prb_{Y \given \intervene{x}} \ne \Prb_Y$.
Figure~\ref{fig:bivariate_causal_relations} illustrates how various causal relationships 
between $X$ and $Y$ (and at most one other variable) generically give rise to different (in)equalities between 
marginal, conditional, and interventional distributions involving $X$ and $Y$.\footnote{Note 
that the list of possibilities in Figure~\ref{fig:bivariate_causal_relations} 
is not exhaustive, as (i) feedback relationships with a
latent variable were not considered; (ii) combinations of the cases shown are
possible as well, e.g., (d) can be considered to be the combination of (a) and
(b), and both (e) and (f) can be combined with all other cases; (iii) more than
one latent variable could be present.}

Returning to the example of the empty batteries ($X$) and broken start engines
($Y$), it seems reasonable to assume that these two variables are not causally
related and case (c) in Figure~\ref{fig:bivariate_causal_relations} would apply,
and therefore $X$ and $Y$ must be
statistically independent. 

In order to illustrate case (f), let us introduce a
third binary variable, $S$, which measures whether the car starts or not. If
the data acquisition is done by a car mechanic who only considers cars that
do not start ($S=0$), then we are in case (f): conditioning on the common effect
$S$ of $X$ and $Y$ leads to selection bias, i.e., $X$ and $Y$ are statistically
dependent when conditioning on $S$ (even though they are not directly causally
related). Indeed, if we know that a car doesn't start, then learning that the
battery is not empty makes it much more likely that the start engine is broken.

Another way in which two variables that are not directly causally related can
still be statistically dependent is case (e), i.e., if they have a common cause. As
an example, take for $X$ the number of stork breeding pairs (per year) and for
$Y$ the number of human births (per year) in a country. Data has been 
collected for different countries and shows a significant correlation between 
$X$ and $Y$ \citep{Matthews2000}.
Few people nowadays believe that storks deliver babies, or the other way around, 
and therefore it seems reasonable to assume that $X$ and $Y$ are not directly causally related. 
One obvious confounder ($Z$ in Figure~\ref{fig:bivariate_causal_relations}(e)) 
that may explain the observed dependence between $X$ and $Y$ is land area.

When data from all (observational and interventional) distributions are available, 
it becomes straightforward in principle to distinguish the six cases in Figure~\ref{fig:bivariate_causal_relations}
simply by checking which (in)equalities in 
Figure~\ref{fig:bivariate_causal_relations} hold.
In practice, however, we often only have data from the observational distribution $\Prb_{X,Y}$
(for example, because intervening on stork population or human birth rate is impractical). 
Can we then still infer the causal relationship between $X$ and $Y$? If, under certain assumptions, we can
decide upon the causal direction, we say that the causal direction is \emph{identifiable}
from the observational distribution (and our assumptions). 

In this work, we will simplify matters considerably
by considering only (a) and (b) in Figure~\ref{fig:bivariate_causal_relations} as possibilities. In other words, we
assume that $X$ and $Y$ are dependent (i.e., $\Prb_{X,Y} \ne \Prb_X \Prb_Y$), there is 
no confounding (common cause of $X$ and $Y$), no selection bias (common effect of 
$X$ and $Y$ that is implicitly conditioned on), and no feedback between $X$ and $Y$ 
(a two-way causal relationship between $X$ and $Y$). 
Inferring the causal direction between $X$ and $Y$, i.e., deciding which of the two
cases (a) and (b) holds, using \emph{only the observational distribution $\Prb_{X,Y}$} 
is the challenging task that we consider in this work.\footnote{Note that this is a different 
question from the one often faced in problems in 
epidemiology, economics and other disciplines where causal considerations play an important role.
There, the causal direction is often known \emph{a priori}, i.e.,
one can exclude case (b), but the challenge is to distinguish case (a) from
case (e) or a combination of both. Even though our empirical results indicate
that some methods for distinguishing case (a) from case (b) still perform reasonably
well when their assumption of no confounding is violated by adding a latent
confounder as in (e), we do not claim that these methods can be used to
distinguish case (e) from case (a).}

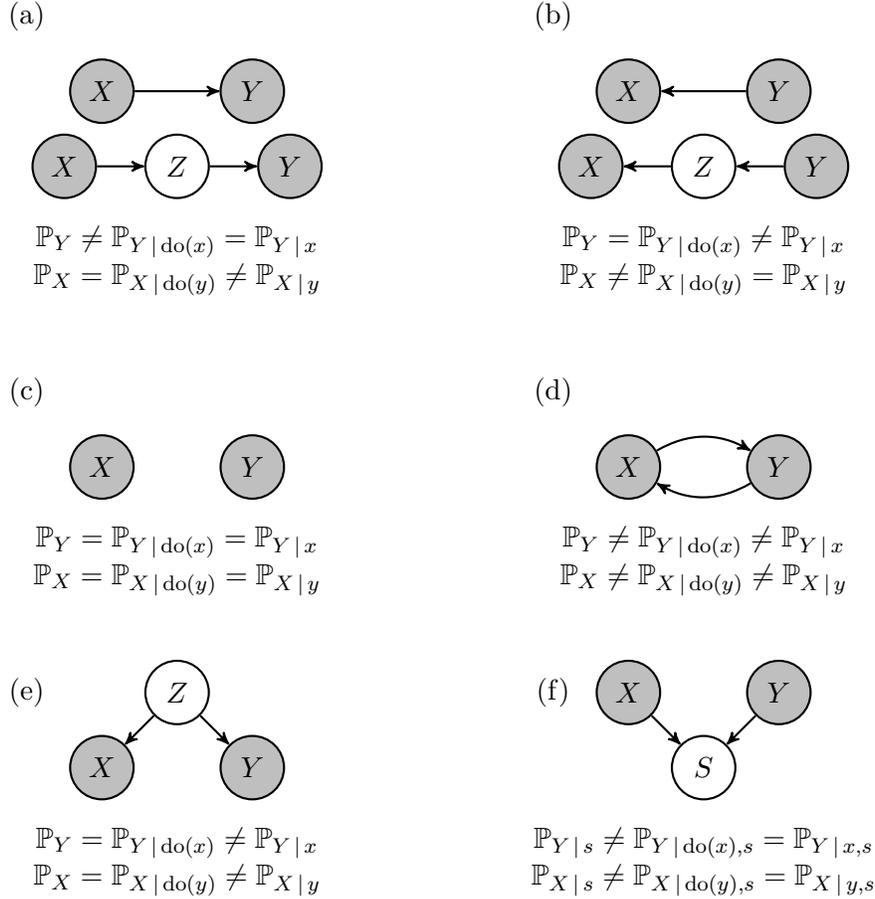
\begin{figure}[t]
\centerline{\begin{tikzpicture}
  \begin{scope}
    \draw (-2.0,1.0) node (X) {(a)};
    \draw (-1.0,0.0) node (X) [var] {$X$};
    \draw ( 1.0,0.0) node (Y) [var] {$Y$};
    \draw [arr] (X) -- (Y);
    \draw (-1.5,-1.0) node (X2) [var] {$X$};
    \draw ( 1.5,-1.0) node (Y2) [var] {$Y$};
    \draw ( 0.0,-1.0) node (Z2) [varh] {$Z$};
    \draw [arr] (X2) -- (Z2);
    \draw [arr] (Z2) -- (Y2);
    \draw ( 0.0,-2.0) node {$\Prb_Y \ne \Prb_{Y \given \intervene{x}} = \Prb_{Y \given x}$};
    \draw ( 0.0,-2.5) node {$\Prb_X = \Prb_{X \given \intervene{y}} \ne \Prb_{X \given y}$};
  \end{scope}
  \begin{scope}[xshift=7cm]
    \draw (-2.0,1.0) node (X) {(b)};
    \draw (-1.0,0.0) node (X) [var] {$X$};
    \draw ( 1.0,0.0) node (Y) [var] {$Y$};
    \draw [arr] (Y) -- (X);
    \draw (-1.5,-1.0) node (X2) [var] {$X$};
    \draw ( 0.0,-1.0) node (Z2) [varh] {$Z$};
    \draw ( 1.5,-1.0) node (Y2) [var] {$Y$};
    \draw [arr] (Y2) -- (Z2);
    \draw [arr] (Z2) -- (X2);
    \draw ( 0.0,-2.0) node {$\Prb_Y = \Prb_{Y \given \intervene{x}} \ne \Prb_{Y \given x}$};
    \draw ( 0.0,-2.5) node {$\Prb_{X} \ne \Prb_{X \given \intervene{y}} = \Prb_{X \given y}$};
  \end{scope}
  \begin{scope}[yshift=-5cm]
    \draw (-2.0,1.0) node (X) {(c)};
    \draw (-1.0,0.0) node (X) [var] {$X$};
    \draw ( 1.0,0.0) node (Y) [var] {$Y$};
    \draw ( 0.0,-1.0) node {$\Prb_Y = \Prb_{Y \given \intervene{x}} = \Prb_{Y \given x}$};
    \draw ( 0.0,-1.5) node {$\Prb_X = \Prb_{X \given \intervene{y}} = \Prb_{X \given y}$};
  \end{scope}
  \begin{scope}[xshift=7cm,yshift=-5cm]
    \draw (-2.0,1.0) node (X) {(d)};
    \draw (-1.0,0.0) node (X) [var] {$X$};
    \draw ( 1.0,0.0) node (Y) [var] {$Y$};
    \draw [arr,bend left] (Y) edge (X);
    \draw [arr,bend left] (X) edge (Y);
    \draw ( 0.0,-1.0) node {$\Prb_Y \ne \Prb_{Y \given \intervene{x}} \ne \Prb_{Y \given x}$};
    \draw ( 0.0,-1.5) node {$\Prb_X \ne \Prb_{X \given \intervene{y}} \ne \Prb_{X \given y}$};
  \end{scope}
  \begin{scope}[yshift=-9cm]
    \draw (-2.0,1.0) node (X) {(e)};
    \draw (-1.0,0.0) node (X) [var] {$X$};
    \draw ( 1.0,0.0) node (Y) [var] {$Y$};
    \draw ( 0.0,1.0) node (Z) [varh] {$Z$};
    \draw [arr] (Z) -- (X);
    \draw [arr] (Z) -- (Y);
    \draw ( 0.0,-1.0) node {$\Prb_Y = \Prb_{Y \given \intervene{x}} \ne \Prb_{Y \given x}$};
    \draw ( 0.0,-1.5) node {$\Prb_X = \Prb_{X \given \intervene{y}} \ne \Prb_{X \given y}$};
  \end{scope}
  \begin{scope}[xshift=7cm,yshift=-9cm]
    \draw (-2.0,1.0) node (X) {(f)};
    \draw ( 0.0,0.0) node (S) [varh] {$S$};
    \draw (-1.0,1.0) node (X) [var] {$X$};
    \draw ( 1.0,1.0) node (Y) [var] {$Y$};
    \draw [arr] (X) -- (S);
    \draw [arr] (Y) -- (S);
    \draw ( 0.0,-1.0) node {$\Prb_{Y \given s} \ne \Prb_{Y \given \intervene{x}, s} = \Prb_{Y \given x, s}$};
    \draw ( 0.0,-1.5) node {$\Prb_{X \given s} \ne \Prb_{X \given \intervene{y}, s} = \Prb_{X \given y, s}$};
  \end{scope}
\end{tikzpicture}}
\caption{\label{fig:bivariate_causal_relations}Several possible causal relationships between two observed variables $X,Y$ and a single latent variable: (a) $X$ causes $Y$; (b) $Y$ causes $X$; (c) $X,Y$ are not causally related; (d) feedback relationship, i.e., $X$ causes $Y$ and $Y$ causes $X$; (e) a hidden confounder $Z$ explains the observed dependence; (f) conditioning on a hidden selection variable $S$ explains the observed dependence.\protect\footnotemark\ We used shorthand notation regarding quantifiers: equalities are generally valid, inequalities not necessarily. For example, $\Prb_X = \Prb_{X \given y}$ means that $\forall y:\, \Prb_X = \Prb_{X \given y}$, whereas $\Prb_X \ne \Prb_{X \given y}$ means $\exists y:\, \Prb_X \ne \Prb_{X \given y}$. In all situations except (c), $X$ and $Y$ are (generically) dependent, i.e., $\Prb_{X,Y} \ne \Prb_X \Prb_Y$. The basic task we consider in this article is deciding between (a) and (b), using only data from $\Prb_{X,Y}$.}
\end{figure}
\footnotetext{Here, we assume that the intervention is performed \emph{before} the conditioning. Since conditioning and intervening do not commute in general, one has to be careful when modeling causal processes in the presence of selection bias to take into account the actual ordering of these events.}

\section{Additive Noise Models}\label{sec:ANM} 

In this section, we review a family of causal discovery methods that exploits \emph{additivity} of the noise.
We only consider the bivariate case here. More details and extensions to the multivariate case can be found in \citep{HoyerJanzingMooijPetersSchoelkopf_NIPS_08,PetersMooijJanzingSchoelkopf_JMLR_14}.

\subsection{Theory}


There is an extensive body of literature on causal modeling
and causal discovery that assumes that effects are linear functions of their
causes plus independent, Gaussian noise. These models are known as
\emph{Structural Equation Models} (SEM) \citep{Wright1921,Bollen1989} and are popular in
econometry, sociology, psychology and other fields. Although the assumptions of
linearity and Gaussianity are mathematically convenient, they are not always
realistic. More generally, one can define \emph{Functional Models} (also known
as \emph{Structural Causal Models} (SCM) or \emph{Non-Parametric Structural Equation
Models} (NP-SEM)) \citep{Pearl2000} in which effects are modeled as (possibly
nonlinear) functions of their causes and latent noise variables.

\subsubsection{Bivariate Structural Causal Models}

In general, if $Y \in \RN$ is a direct effect of a cause $X \in \RN$ and $m$ latent causes $\B{U} = (U_1,\dots,U_m) \in \RN^m$, 
then it is intuitively reasonable to model this relationship as follows:
\begin{equation}\label{eq:structural_eqn}
  \left\{\begin{array}{l}
    Y = f(X,U_1,\dots,U_m), \\
    X \indep \B{U}, \quad X \sim p_{X}(x), \quad \B{U} \sim p_{\B{U}}(u_1,\dots,u_m)
  \end{array}\right.
\end{equation}
where $f: \RN \times \RN^m \to \RN$ is a possibly nonlinear function (measurable with respect to the Borel
sets of $\RN \times \RN^m$ and $\RN$), and $p_{X}(x)$ and $p_{\B{U}}(u_1,\dots,u_m)$
are the joint densities of the observed cause $X$ and latent causes $\B{U}$ (with respect to Lebesgue measure
on $\RN$ and $\RN^m$, respectively). The assumption 
that $X$ and $\B{U}$ are independent (``$X \indep \B{U}$'') is justified by the assumption that there is no confounding, no selection bias, 
and no feedback between $X$ and $Y$.\footnote{Another assumption that we have made here is that there is no \emph{measurement noise}, i.e., 
noise added by the measurement apparatus. Measurement noise would mean that instead of measuring $X$ itself, we observe a noisy version $\tilde X$, but $Y$ is still a function of $X$, the (latent) variable $X$ that is not corrupted by measurement noise.}
We will denote the observational distribution corresponding to \eref{eq:structural_eqn} by $\Prb_{X,Y}^{\eref{eq:structural_eqn}}$.
By making use of the semantics of SCMs \citep{Pearl2000}, \eref{eq:structural_eqn} also induces interventional distributions 
$\Prb_{X \given \intervene y}^{\eref{eq:structural_eqn}} = \Prb_X^{\eref{eq:structural_eqn}}$ and $\Prb_{Y \given \intervene x}^{\eref{eq:structural_eqn}} = \Prb_{Y \given x}^{\eref{eq:structural_eqn}}$.

\begin{figure}[t]
\centerline{\begin{tikzpicture}
  \begin{scope}
    \draw (-2.0,1.0) node (X) {(a)};
    \draw (-1.0,0.0) node (X) [var] {$X$};
    \draw ( 1.0,0.0) node (Y) [var] {$Y$};
    \draw [arr] (X) -- (Y);
    \draw (0.0,1.5) node (U1) [varh] {$U_1$};
    \draw (1.0,2.5) node (Udots) [varh] {$\dots$};
    \draw (2.0,1.5) node (Um) [varh] {$U_m$};
    \draw [arr] (U1) -- (Y);
    \draw [arr] (Udots) -- (Y);
    \draw [arr] (Um) -- (Y);
    \draw [dotted] (U1) -- (Udots);
    \draw [dotted] (Udots) -- (Um);
    \draw [dotted] (Um) -- (U1);
  \end{scope}
  \begin{scope}[xshift=5cm]
    \draw (-2.0,1.0) node (X) {(b)};
    \draw (-1.0,0.0) node (X) [var] {$X$};
    \draw ( 1.0,0.0) node (Y) [var] {$Y$};
    \draw [arr] (X) -- (Y);
    \draw (1.0,1.5) node (EY) [varh] {$E_Y$};
    \draw [arr] (EY) -- (Y);
  \end{scope}
  \begin{scope}[xshift=10cm]
    \draw (-2.0,1.0) node (X) {(c)};
    \draw (-1.0,0.0) node (X) [var] {$X$};
    \draw ( 1.0,0.0) node (Y) [var] {$Y$};
    \draw [arr] (Y) -- (X);
    \draw (-1.0,1.5) node (EX) [varh] {$E_X$};
    \draw [arr] (EX) -- (X);
  \end{scope}
\end{tikzpicture}}
\caption{\label{fig:simple_SCM}Causal graphs of Structural Causal Models \eref{eq:structural_eqn}, \eref{eq:structural_eqn_reduced_forwards} and \eref{eq:structural_eqn_reduced_backwards}, respectively. (a) and (b) are interventionally equivalent, (b) and (c) are only observationally equivalent in general.}
\end{figure}

As the latent causes $\B{U}$ are unobserved anyway, we can summarize their influence by a single ``effective'' \emph{noise}
variable $E_Y \in \RN$ (also known as ``disturbance term''):
\begin{equation}\label{eq:structural_eqn_reduced_forwards}
  \left\{\begin{array}{l}
  Y = f_Y(X,E_Y) \\
  X \indep E_Y, \quad X \sim p_{X}(x), \quad E_Y \sim p_{E_Y}(e_Y).
  \end{array}\right.
\end{equation}
This simpler model can be constructed in such a way that it induces the same (observational and interventional) distributions as \eref{eq:structural_eqn}:
\begin{proposition}
  Given a model of the form \eref{eq:structural_eqn} for which the observational distribution 
  has a positive density with respect to Lebesgue measure, there exists a model of the form 
  \eref{eq:structural_eqn_reduced_forwards} that is \textbf{interventionally equivalent}, i.e., 
  it induces the same observational distribution $\Prb_{X,Y}$ and the same interventional
  distributions $\Prb_{X \given \intervene y}$, $\Prb_{Y \given \intervene x}$.
\end{proposition}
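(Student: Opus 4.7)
The plan is to construct the reduced model by ``outsourcing'' the randomness of the latent $\B{U}$ into a single noise variable via the conditional quantile transform, and then verify observational and interventional equivalence directly from the semantics of SCMs.

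First I would use the observational distribution $\Prb_{X,Y}^{\eref{eq:structural_eqn}}$ (whose existence as a joint distribution with positive Lebesgue density is guaranteed by hypothesis) to define the conditional CDF $F_{Y\mid X=x}(y) := \Prb^{\eref{eq:structural_eqn}}(Y \le y \mid X = x)$. By the positivity assumption, for almost every $x$ the map $y \mapsto F_{Y \mid X=x}(y)$ is continuous and strictly increasing on the support of $\Prb_Y$, so the conditional quantile function $F^{-1}_{Y\mid X=x}$ is well-defined on $(0,1)$. I then define $f_Y(x,e) := F^{-1}_{Y\mid X=x}(e)$ and take $E_Y \sim \mathrm{Uniform}(0,1)$ independent of $X$, with $X$ keeping its original marginal $\Prb_X^{\eref{eq:structural_eqn}}$. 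Joint measurability of $f_Y$ follows from the joint measurability of $F_{Y\mid X=x}(y)$ in $(x,y)$ together with monotonicity in $y$.

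Next I would check observational equivalence. The marginal of $X$ in the new model coincides with $\Prb_X^{\eref{eq:structural_eqn}}$ by construction, and conditionally on $X=x$, the standard inverse-CDF argument gives that $f_Y(x,E_Y) = F^{-1}_{Y\mid X=x}(E_Y)$ has distribution $\Prb_{Y\mid X=x}^{\eref{eq:structural_eqn}}$. Hence the joint laws of $(X,Y)$ agree, which already establishes $\Prb_{X,Y}^{\eref{eq:structural_eqn_reduced_forwards}} = \Prb_{X,Y}^{\eref{eq:structural_eqn}}$.

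For interventional equivalence I would argue separately about the two do-operations. Intervening on $X$: in both SCMs the structural equations for $Y$ are not modified, and $X$ is independent of the exogenous noise ($\B{U}$ in one case, $E_Y$ in the other); by the rules of do-calculus applied to such a single downstream equation, $\Prb_{Y\mid \intervene x}$ coincides with the observational conditional $\Prb_{Y\mid x}$ in each model, and these agree by the previous paragraph. Intervening on $Y$: in both SCMs $X$ is exogenous (it has no parents in the causal graph of Figure~\ref{fig:simple_SCM}), so setting $Y$ to a fixed value leaves the distribution of $X$ equal to its marginal $\Prb_X$, which is the same in both models. This yields $\Prb_{Y\mid \intervene x}^{\eref{eq:structural_eqn_reduced_forwards}} = \Prb_{Y\mid \intervene x}^{\eref{eq:structural_eqn}}$ and $\Prb_{X\mid \intervene y}^{\eref{eq:structural_eqn_reduced_forwards}} = \Prb_{X\mid \intervene y}^{\eref{eq:structural_eqn}}$.

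The main obstacle I anticipate is not the probabilistic content, which is essentially the standard noise-outsourcing lemma, but the measure-theoretic bookkeeping: one has to ensure that the conditional CDF admits a jointly measurable regular version and that the quantile function is jointly measurable in $(x,e)$. The positive-density hypothesis is exactly what makes this routine, since it lets us take $F_{Y\mid X=x}(y) = \int_{-\infty}^y p_{Y\mid X}(y'\mid x)\,dy'$ as a continuous, strictly increasing function of $y$ that is Borel in $x$, so the quantile inversion is unambiguous and measurable. An alternative, slightly less transparent route would be to invoke that $\RN^m$ is a standard Borel space isomorphic to $\RN$ and set $E_Y := \psi(\B{U})$ for a Borel isomorphism $\psi\colon\RN^m\to\RN$; I prefer the quantile construction because it makes the use of positivity explicit and generalizes more cleanly to later arguments about additive noise.
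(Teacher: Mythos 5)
Your proposal is correct and follows essentially the same route as the paper: both construct $f_Y(x,e) = F^{-1}_{Y\given x}(e)$ from the conditional CDF with a $\mathrm{Uniform}(0,1)$ noise variable independent of $X$, and both verify interventional equivalence directly from the SCM semantics via $\Prb_{X \given \intervene y} = \Prb_X$ and $\Prb_{Y \given \intervene x} = \Prb_{Y \given x}$. The only cosmetic difference is that the paper defines $E_Y := F_{Y\given X}(Y)$ and checks uniformity and independence by a change of variables, whereas you generate $Y$ from a fresh uniform and check its conditional law by the inverse-CDF argument; these are two directions of the same construction.
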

\begin{proof}
Denote by $\Prb_{X,Y}^{\eref{eq:structural_eqn}}$ the observational distribution induced by model \eref{eq:structural_eqn}.
One possible way to construct $E_Y$ and $f_Y$ is to define the conditional cumulative density function 
$F_{Y \given x}(y) := \Prb^{\eref{eq:structural_eqn}}(Y \le y \given X = x)$ and its inverse with respect to $y$ 
for fixed $x$, $F^{-1}_{Y \given x}$. Then, one can define $E_Y$ as the random 
variable 
$$E_Y := F_{Y \given X}(Y),$$
(where now the fixed value $x$ is substituted with the random variable $X$)
and the function $f_Y$ by\footnote{Note that we denote probability densities with the symbol $p$, so we can safely use the symbol $f$ for a function without risking any confusion.}
$$f_Y(x,e) := F^{-1}_{Y \given x}(e).$$
Now consider the change-of-variables $(X,Y) \mapsto (X,E_Y)$. The corresponding joint densities transform as:
$$p^{\eref{eq:structural_eqn}}_{X,Y}(x,y) = p_{X,E_Y}\big(x,F_{Y\given x}(y)\big) \left|\dadb{F_{Y\given x}}{y}(x,y)\right| = p_{X,E_Y}\big(x,F_{Y\given x}(y)\big) p^{\eref{eq:structural_eqn}}_{Y \given X}(y \given x),$$
and therefore:
$$p^{\eref{eq:structural_eqn}}_{X}(x) = p_{X,E_Y}(x,F_{Y\given x}(y))$$
for all $x, y$. This implies that $E_Y \indep X$ and that $p_{E_Y} = \id_{(0,1)}$.

This establishes that $\Prb_{X,Y}^{\eref{eq:structural_eqn_reduced_forwards}} = \Prb_{X,Y}^{\eref{eq:structural_eqn}}$. The identity of the
interventional distributions follows directly, because:
$$\Prb_{X \given \intervene y}^{\eref{eq:structural_eqn}} = \Prb_X^{\eref{eq:structural_eqn}} = \Prb_X^{\eref{eq:structural_eqn_reduced_forwards}} = \Prb_{X \given \intervene y}^{\eref{eq:structural_eqn_reduced_forwards}}$$
and
$$\Prb_{Y \given \intervene x}^{\eref{eq:structural_eqn}} = \Prb_{Y \given x}^{\eref{eq:structural_eqn}} = \Prb_{Y \given x}^{\eref{eq:structural_eqn_reduced_forwards}} = \Prb_{Y \given \intervene x}^{\eref{eq:structural_eqn_reduced_forwards}}.$$
\end{proof}

A similar construction of an effective noise variable can be performed in the other direction as well, at
least to obtain a model that induces the same observational distribution. More precisely, we can construct
a function $f_X$ and a random variable $E_X$ such that:
\begin{equation}\label{eq:structural_eqn_reduced_backwards}
  \left\{\begin{array}{l}
  X = f_X(Y,E_X) \\
  Y \indep E_X, \quad Y \sim p_Y(y), \quad E_X \sim p_{E_X}(e_X)
  \end{array}\right.
\end{equation}
induces the same observational distribution $\Prb_{X,Y}^{\eref{eq:structural_eqn_reduced_backwards}} = \Prb_{X,Y}^{\eref{eq:structural_eqn_reduced_forwards}}$  as \eref{eq:structural_eqn_reduced_forwards} and the original \eref{eq:structural_eqn}.
A well-known example is the linear-Gaussian case:
\begin{example}\label{ex:linear_Gauss_ANM}
Suppose that 
\begin{equation*}
  \left\{\begin{array}{ll}
  Y = \alpha X + E_Y & X \sim \C{N}(\mu_X,\sigma_X^2) \\
      E_Y \indep X & E_Y \sim \C{N}(\mu_{E_Y},\sigma_{E_Y}^2).
  \end{array}\right.
\end{equation*}
Then: 
\begin{equation*}
  \left\{\begin{array}{ll}
  X = \beta Y + E_X & Y \sim \C{N}(\mu_Y,\sigma_Y^2) \\
      E_X \indep Y & E_X \sim \C{N}(\mu_{E_X},\sigma_{E_X}^2),
  \end{array}\right.
\end{equation*}
with
\begin{align*}
  \beta & = \frac{\alpha \sigma_X^2}{\alpha^2 \sigma_X^2 + \sigma_{E_Y}^2},\\
  \mu_Y & = \alpha \mu_X + \mu_{E_Y}, \quad \sigma_Y^2 = \alpha^2 \sigma_X^2 + \sigma_{E_Y}^2, \\
  \mu_{E_X} & = (1-\alpha\beta)\mu_X  - \beta\mu_{E_Y}, \quad \sigma_{E_X}^2 = (1-\alpha\beta)^2\sigma_X^2 + \beta^2\sigma_{E_Y}^2.
\end{align*}
induces the same joint distribution on $X,Y$.
\end{example}
However, in general the interventional distributions induced by \eref{eq:structural_eqn_reduced_backwards} will be different from those of \eref{eq:structural_eqn_reduced_forwards} and the original model \eref{eq:structural_eqn}. For example, in general
$$\Prb^{\eref{eq:structural_eqn_reduced_backwards}}_{X \given \intervene y} = 
  \Prb^{\eref{eq:structural_eqn_reduced_backwards}}_{X \given y} =
  \Prb^{\eref{eq:structural_eqn_reduced_forwards}}_{X \given y} \ne 
  \Prb^{\eref{eq:structural_eqn_reduced_forwards}}_{X} =
  \Prb^{\eref{eq:structural_eqn_reduced_forwards}}_{X \given \intervene y}.$$

This means that whenever we can model an observational distribution $\Prb_{X,Y}$ with a model of the form \eref{eq:structural_eqn_reduced_backwards}, we can also model it using \eref{eq:structural_eqn_reduced_forwards}, and therefore the causal relationship between $X$ and $Y$ is not identifiable from the observational distribution without making additional assumptions. In other words: \eref{eq:structural_eqn} and \eref{eq:structural_eqn_reduced_forwards} are interventionally equivalent, but \eref{eq:structural_eqn_reduced_forwards} and \eref{eq:structural_eqn_reduced_backwards} are only observationally equivalent.
Without having access to the interventional distributions, this symmetry prevents us from drawing any
conclusions regarding the direction of the causal relationship between $X$ and $Y$ if we only have access to the observational distribution $\Prb_{X,Y}$.


\subsubsection{Breaking the symmetry}

By \emph{restricting} the models \eref{eq:structural_eqn_reduced_forwards} and \eref{eq:structural_eqn_reduced_backwards}
to have lower complexity, asymmetries can be introduced. The work of \citep{KanoShimizu2003,ShimizuHoyerHyvarinenKerminen2006} showed that 
for \emph{linear} models (i.e., where the functions $f_X$ and $f_Y$ are restricted to be linear), 
\emph{non-Gaussianity} of the input and noise distributions actually allows one to distinguish the directionality
of such functional models. 
\citet{Peters2014biom} recently proved that for linear models, Gaussian noise variables with \emph{equal variances} also lead to identifiability.
For high-dimensional variables, the structure of the covariance matrices can be exploited to achieve asymmetries \citep{JanzingHoyerSchoelkopf2010,Zscheischler2011}.

\citet{HoyerJanzingMooijPetersSchoelkopf_NIPS_08} 
showed that also \emph{nonlinearity} of the functional relationships aids in 
identifying the causal direction, as long as the influence of the noise is additive. More
precisely, they consider the following class of models:
\begin{definition}
A tuple $(p_X, p_{E_Y}, f_Y)$ consisting of a density $p_X$, a density $p_{E_Y}$ with finite mean, and a Borel-measurable function $f_Y : \RN \to \RN$, defines a
\textbf{bivariate additive noise model (ANM) $X\to Y$} by:
\begin{equation}\label{eq:anm}
  \left\{\begin{array}{l}
  Y = f_Y(X) + E_Y \\
  X \indep E_Y, \quad X \sim p_X, \quad E_Y \sim p_{E_Y}.
  \end{array}\right.
\end{equation}
If the induced distribution $\Prb_{X,Y}$ has a density with respect to Lebesgue measure, the
induced density $p(x,y)$ is said to satisfy an additive noise model $X \to Y$.
\end{definition}
Note that an ANM is a special case of model \eref{eq:structural_eqn_reduced_forwards} where
the influence of the noise on $Y$ is restricted to be additive. 

We are especially interested in cases for which the 
additivity requirement introduces an asymmetry between $X$ and $Y$:
\begin{definition}
  If the joint density $p(x,y)$ satisfies an additive noise model $X \to Y$, but does not satisfy any additive noise model $Y \to X$, then we call the ANM $X \to Y$ \textbf{identifiable} (from the observational distribution).
\end{definition}
\begin{figure}[t]
  \centerline{\includegraphics[width=0.8\textwidth]{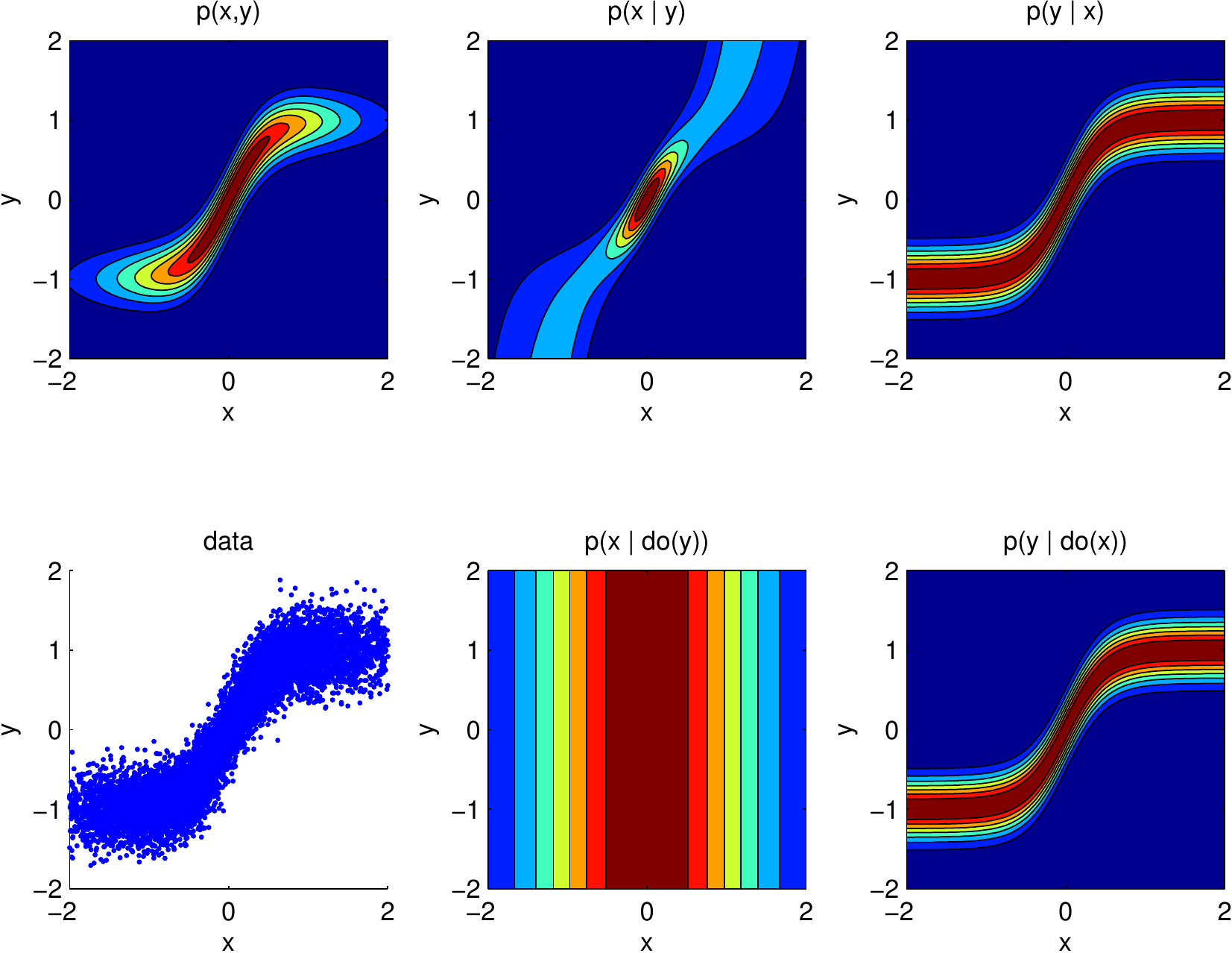}}
  \caption{\label{fig:anm_identifiability}Identifiable ANM with $Y = \tanh(X) + E$, where $X \sim \C{N}(0,1)$ and $E \sim \C{N}(0,0.5^2)$. Shown are contours of the joint and conditional distributions, and a scatter plot of data sampled from the model distribution. Note that the contour lines of $p(y\given x)$ only shift as $x$ changes. On the other hand, $p(x\given y)$ differs by more than just its mean for different values of $y$.}
\end{figure}

\citet{HoyerJanzingMooijPetersSchoelkopf_NIPS_08} proved that additive noise models are generically identifiable.
The intuition behind this result is that if $p(x,y)$ satisfies an additive noise model $X \to Y$, then $p(y \given x)$ depends
on $x$ only through its mean, and all other aspects of this conditional distribution do not depend on $x$. On the other hand,
$p(x \given y)$ will typically depend in a more
complicated way on $y$ (see also Figure~\ref{fig:anm_identifiability}). Only for very specific choices of the parameters
of an ANM one obtains a non-identifiable ANM.
We have already seen an example of such a non-identifiable ANM: the linear-Gaussian case (Example \ref{ex:linear_Gauss_ANM}). 
A more exotic example with non-Gaussian distributions was given in \citep[][Example 25]{PetersMooijJanzingSchoelkopf_JMLR_14}.
\citet{ZhangHyvarinen2009} proved that non-identifiable ANMs necessarily fall into one out of five classes.
In particular, their result implies something that we might expect intuitively: if $f$ is not injective\footnote{A mapping is said to be \emph{injective} if it does not map distinct elements of its domain to the same element of its codomain.}, the ANM is identifiable. 

\citet{Mooij_et_al_NIPS_11} showed that bivariate identifiability even holds generically when feedback is allowed (i.e., if both $X \rightarrow Y$ \emph{and} $Y \rightarrow X$), at least when assuming noise and input distributions to be Gaussian. \citet{PetersJanzingSchoelkopf2011} provide an extension of the acyclic model for discrete variables.
\citet{ZhangHyvarinen2009} give an extension of the identifiability results allowing for an additional bijective\footnote{A mapping is said to be \emph{surjective} if every element in its codomain is mapped to by at least one element of its domain. It is called \emph{bijective} if it is surjective and injective.} transformation of the data, i.e., using a functional model of the
form $Y = \phi\big(f_Y(X) + E_Y\big)$, with $E_Y \indep X$, and $\phi : \RN \to \RN$ bijective, which they call the Post-NonLinear (PNL) model.
The results on identifiability of additive noise models can be extended to the multivariate case \citep{PetersMooijJanzingSchoelkopf_JMLR_14} if there are no hidden variables and no feedback loops.
This extension can be applied to nonlinear ANMs \citep{HoyerJanzingMooijPetersSchoelkopf_NIPS_08,BuehlmannPetersErnest2013}, linear non-Gaussian models \citep{Shimizu++2011}, the model of equal error variances \citep{Peters2014biom} or to the case of discrete variables \citep{PetersJanzingSchoelkopf2011}. Full identifiability in the presence of hidden variables for the acyclic case has only been established for linear non-Gaussian models \citep{Hoyer2008hidden}.

\subsubsection{Additive Noise Principle}

Following \citet{HoyerJanzingMooijPetersSchoelkopf_NIPS_08}, we hypothesize that:
\begin{principle}
  Suppose we are given a joint density $p(x,y)$ and we know that the causal structure is either that of (a)
  or (b) in Figure \ref{fig:bivariate_causal_relations}. If $p(x,y)$ satisfies an \emph{identifiable} additive noise model 
  $X \to Y$, then it is likely 
  that we are in case (a), i.e., $X$ causes $Y$.
\end{principle}
This principle should not be regarded as a rigorous statement, but rather as an empirical assumption: 
we cannot exactly quantify \emph{how likely} the conclusion that $X$ causes $Y$ is, as there is always a possibility
that $Y$ causes $X$ while $p_{X,Y}$ happens to satisfy an identifiable additive noise model $X \to Y$. 
In general, that would require a special choice of the distribution of $X$ and the conditional distribution of $Y$
given $X$, which is unlikely. In this sence, we can regard this principle as a special case of Occam's Razor.

In the next subsection, we will discuss various ways of operationalizing this principle.
In Section~\ref{sec:experiments}, we provide empirical evidence supporting this principle. 

 
\subsection{Estimation methods}

The following Lemma is helpful to test whether a density satisfies a bivariate additive noise model:
\begin{lemma}\label{lem:ANMtest}
Given a joint density $p(x,y)$ of two random variables $X,Y$ such that the conditional expectation $\Exp(Y \given X=x)$
is well-defined for all $x$ and measurable. Then, $p(x,y)$ satisfies a bivariate additive noise model $X \to Y$ if and only if 
$E_Y := Y - \Exp(Y \given X)$ has finite mean and is independent of $X$.
\end{lemma}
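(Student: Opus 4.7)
The plan is to prove the two directions of the iff separately, both essentially by direct computation once the right objects are named.

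For the forward direction, suppose $p(x,y)$ satisfies a bivariate ANM $X\to Y$ in the sense of the definition, so there exist $f_Y$ Borel-measurable, a density $p_X$, and a density $p_{E_Y}$ with finite mean such that $Y=f_Y(X)+E_Y$ with $X\indep E_Y$. I would first argue that $\mathbb{E}(Y\mid X=x)=f_Y(x)+\mathbb{E}(E_Y)$ for almost every $x$: by the tower property together with the factorization $p_{E_Y\mid X}=p_{E_Y}$ coming from independence, $\mathbb{E}(Y\mid X=x)=f_Y(x)+\mathbb{E}(E_Y\mid X=x)=f_Y(x)+\mathbb{E}(E_Y)$, where the last mean is finite by hypothesis. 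Subtracting, $Y-\mathbb{E}(Y\mid X)=E_Y-\mathbb{E}(E_Y)$, which is a measurable function of $E_Y$ alone, hence independent of $X$, and has finite (zero) mean.

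For the converse, suppose $E_Y:=Y-\mathbb{E}(Y\mid X)$ has finite mean and is independent of $X$. I would define $f_Y(x):=\mathbb{E}(Y\mid X=x)$, which is measurable by the standing assumption of the lemma, so that by construction
\begin{equation*}
Y=f_Y(X)+E_Y,\qquad X\indep E_Y,\qquad \mathbb{E}|E_Y|<\infty.
\end{equation*}
Taking $p_X$ as the marginal density of $X$ (which exists since $p(x,y)$ is a density) and $p_{E_Y}$ as the density of $E_Y$ (which one should check exists as a density with respect to Lebesgue measure, since $E_Y$ is a measurable transformation of $(X,Y)$ whose joint has a density; alternatively one can formulate this via the distribution $\Pr_{E_Y}$), the tuple $(p_X,p_{E_Y},f_Y)$ satisfies the definition of an ANM $X\to Y$, and by construction the pushforward of $p_X\otimes p_{E_Y}$ under $(x,e)\mapsto(x,f_Y(x)+e)$ recovers $p(x,y)$.

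The only non-routine step is verifying the independence $X\indep E_Y$ on the converse side, but this is taken as a hypothesis so there is nothing to do. The main subtlety worth flagging is that $\mathbb{E}(Y\mid X)$ is defined only up to a $\Pr_X$-null set, so the decomposition $Y=f_Y(X)+E_Y$ holds almost surely; this is enough to match densities and thus agrees with the ANM definition, since the definition of an ANM is stated in terms of induced distributions. I expect no real obstacle beyond keeping the almost-sure qualifiers straight.
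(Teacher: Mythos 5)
Your proof is correct and follows essentially the same route as the paper's: in the forward direction you identify $\Exp(Y\given X=x)=f_Y(x)+\Exp(E_Y)$ so that the residual equals the centered noise, and in the converse you take the regression function itself as $f_Y$ and read off the ANM from the hypotheses. The extra care you take with null sets and the existence of a density for $E_Y$ is a refinement the paper omits, but it does not change the argument.
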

\begin{proof}
Suppose that $p(x,y)$ is induced by $(p_X, p_U, f)$, say
$Y = f(X) + U$ with $X \indep U$, $X \sim p_X$, $U \sim p_U$.
Then $\Exp(Y \given X = x) = f(x) + \nu$, with $\nu = \Exp (U)$.
Therefore, $E_Y = Y - \Exp(Y \given X) = Y - (f(X) + \nu) = U - \nu$ is independent of $X$. 
Conversely, if $E_Y$ is independent of $X$, $p(x,y)$ is induced by the bivariate additive noise model 
$(p_X, p_{E_Y}, x \mapsto \Exp(Y \given X = x))$.
\end{proof}
In practice, we usually do not have the density $p(x,y)$, but rather a finite sample of it. In that case, we can use the same idea
for testing whether this sample comes from a density that satisfies an additive noise model: we estimate the conditional expectation
$\Exp(Y \given X)$ by regression, and then test the independence of the residuals $Y - \Exp(Y \given X)$ and $X$.

Suppose we have two data sets, a \emph{training} data set $\C{D}_N := \{(x_n,y_n)\}_{n=1}^N$ (for estimating the function) and a \emph{test} data set 
$\C{D}_N' := \{(x_n',y_n')\}_{n=1}^N$ (for testing independence of residuals), both consisting of i.i.d.\ samples distributed according to 
$p(x,y)$. We will write $\B{x} = (x_1,\dots,x_N)$, $\B{y} = (y_1,\dots,y_N)$,
$\B{x}' = (x_1',\dots,x_N')$ and $\B{y}' = (y_1',\dots,y_N')$. We will consider two scenarios: the ``data splitting'' scenario where
training and test set are independent (typically achieved by splitting a bigger data set into two parts), and the ``data recycling'' 
scenario in which the training and test data are identical (where we use the same data twice for different purposes: regression and
independence testing).\footnote{\cite{Kpotufe++2014} refer to these scenarios as ``decoupled estimation'' and ``coupled estimation'',
respectively.}

\citet{HoyerJanzingMooijPetersSchoelkopf_NIPS_08} suggested the following procedure
to test whether the data come from a density that satisfies an additive noise model.\footnote{They only considered the
data recycling scenario, but the same idea can be applied to the data splitting scenario.}
By regressing $Y$ on $X$ using the training data $\C{D}_N$, an estimate $\hat f_Y$ for the regression function 
$x \mapsto \Exp(Y \given X=x)$ is obtained. Then, an independence test is used to estimate whether the predicted 
residuals are independent of the input, i.e., whether $(Y - \hat f_Y(X)) \indep X$, using test data $(\B{x}',\B{y}')$.
If the null hypothesis of independence is not rejected, one concludes
that $p(x,y)$ satisfies an additive noise model $X \to Y$. 
The regression procedure and the independence test can be freely chosen.

There is a caveat, however: under the null hypothesis that $p(x,y)$ indeed satisfies an ANM, the error in the estimated residuals 
may introduce a dependence between the predicted residuals $\B{\hat e}_Y' := \B{y}' - \hat f_Y(\B{x}')$ and $\B{x}'$ even if the true residuals $\B{y}' - \Exp(Y \given X=\B{x}')$ are independent of $\B{x}'$. 
Therefore, the threshold for the independence test statistic has to be chosen with care: the standard threshold that
would ensure consistency of the independence test on its own may be too tight. 
As far as we know, there are no theoretical results
on the choice of that threshold that would lead to a consistent way to test whether $p(x,y)$ satisfies an ANM $X \to Y$.

We circumvent this problem by assuming \emph{a priori} that $p(x,y)$ either satisfies an ANM $X \to Y$, or an ANM $Y \to X$,
but not both. In that sense, the test statistics of the independence test can be directly compared, and no threshold needs to be chosen.
This leads us to Algorithm~\ref{alg:bivariateANM} as a general scheme for identifying the direction of the ANM.
In order to decide whether $p(x,y)$ satisfies an additive noise model $X \to Y$, or an
additive noise model $Y \to X$, we simply estimate the regression functions in both directions,
calculate the corresponding residuals, estimate the dependence of the residuals with respect to
the input by some dependence measure $\hat C$, and output the direction that has the lowest dependence.

\begin{algorithm}[t]
\caption{\label{alg:bivariateANM}General procedure to decide whether $p(x,y)$ satisfies an additive noise model $X \to Y$ or $Y \to X$.}
\textbf{Input}: 
\begin{enumerate}
  \item I.i.d.\ sample $\C{D}_N := \{(x_i,y_i)\}_{i=1}^N$ of $X$ and $Y$ (``training data'');
  \item I.i.d.\ sample $\C{D}_N' := \{(x_i',y_i')\}_{i=1}^N$ of $X$ and $Y$ (``test data'');
  \item Regression method;
  \item Score estimator $\hat C : \RN^N \times \RN^N \to \RN$.
\end{enumerate}
\textbf{Output}: $\hat C_{X\to Y}$, $\hat C_{Y\to X}$, \texttt{dir}.
\begin{enumerate}
  \item Use the regression method to obtain estimates:
  \begin{enumerate}
    \item $\hat f_Y$ of the regression function $x \mapsto \Exp(Y \given X=x)$,
    \item $\hat f_X$ of the regression function $y \mapsto \Exp(X \given Y=y)$
  \end{enumerate}
  using the training data $\C{D}_N$;
  \item Use the estimated regression functions to predict residuals:
  \begin{enumerate}
    \item 
      $\B{\hat e}_Y' := \B{y}' - \hat f_Y(\B{x}')$
    \item 
      $\B{\hat e}_X' := \B{x}' - \hat f_X(\B{y}')$
  \end{enumerate}
  from the test data $\C{D}_N'$.
  \item Calculate the scores to measure dependence of inputs and estimated residuals on the test data $\C{D}_N'$:
  \begin{enumerate}
    \item $\hat C_{X\to Y} := \hat C(\B{x}',\B{\hat e}_Y')$;
    \item $\hat C_{Y\to X} := \hat C(\B{y}',\B{\hat e}_X')$;
  \end{enumerate}
  \item Output $\hat C_{X\to Y}, \hat C_{Y\to X}$ and: 
    $$\mathtt{dir} := \begin{cases}
        X \to Y & \text{ if } \hat C_{X\to Y} < \hat C_{Y\to X}, \\
        Y \to X & \text{ if } \hat C_{X\to Y} > \hat C_{Y\to X}, \\
        ?       & \text{ if } \hat C_{X\to Y} = \hat C_{Y\to X}.
    \end{cases}$$
\end{enumerate}
\end{algorithm}

In principle, any consistent regression method can be used in Algorithm~\ref{alg:bivariateANM}.
Likewise, in principle any consistent measure of dependence can be used in Algorithm~\ref{alg:bivariateANM}
as score function. 
In the next subsections, we will consider in more detail some possible choices 
for the score function. Originally,
\citet{HoyerJanzingMooijPetersSchoelkopf_NIPS_08} proposed to use the $p$-value
of the Hilbert Schmidt Independence Criterion (HSIC), a kernel-based
non-parametric independence test. Alternatively, one can also use the HSIC
statistic itself as a score, and we will show that this leads to a consistent
procedure. Other dependence measures could be used instead, e.g., the measure proposed by \citet{Reshef++2011}.
\citet{Kpotufe++2014,Nowzohour2015} proposed to use as a score the sum of
the estimated differential entropies of inputs and residuals and proved consistency
of that procedure. For the Gaussian case, that is equivalent to the score considered in a high-dimensional context that was
shown to be consistent by \citet{BuehlmannPetersErnest2013}. This Gaussian score is also 
strongly related to an empirical-Bayes score originally proposed by \citet{FriedmanNachman2000}.
Finally, we will briefly discuss a Minimum Message Length score
that was considered by \citet{Mooij_et_al_NIPS_10} and another idea (based on minimizing
a dependence measure directly) proposed by \citet{MooijJanzingPetersSchoelkopf_ICML_09}.

\subsubsection{HSIC-based scores}

One possibility, first considered by \citet{HoyerJanzingMooijPetersSchoelkopf_NIPS_08}, is to use
the Hilbert-Schmidt Independence Criterion (HSIC) \citep{GrettonBousquetSmolaSchoelkopf2005} 
for testing the independence of the estimated residuals with the inputs. 
See Appendix~\ref{sec:HSIC} for a definition and basic properties of the HSIC independence test.

As proposed by \citet{HoyerJanzingMooijPetersSchoelkopf_NIPS_08}, one can use
the $p$-value of the HSIC statistic under the null hypothesis of independence.
This amounts to the following score function for measuring dependence:
\begin{equation}\label{eq:ANM_score_pHSIC}
  \hat C(\B{u},\B{v}) := -\log \pempHSIC{\kappa_{\hat\ell(\B{u})},\kappa_{\hat\ell(\B{v})}}(\B{u},\B{v}).
\end{equation}
Here, $\kappa_{\ell}$ is a kernel with parameters $\ell$, that are estimated from the data.
$\B{u}$ and $\B{v}$ are either inputs or estimated residuals (see also Algorithm \ref{alg:bivariateANM}).
A low HSIC $p$-value indicates that we should reject the null hypothesis of independence.
Another possibility is to use the HSIC value itself (instead of its $p$-value):
\begin{equation}\label{eq:ANM_score_HSIC}
  \hat C(\B{u},\B{v}) := \empHSIC{\kappa_{\hat\ell(\B{u})},\kappa_{\hat\ell(\B{v})}}(\B{u},\B{v}).
\end{equation}
An even simpler option is to use a fixed kernel $k$:
\begin{equation}\label{eq:ANM_score_HSIC_fixed}
\hat C(\B{u},\B{v}) := \empHSIC_{k,k}(\B{u},\B{v}).
\end{equation}
In Appendix \ref{sec:consistency}, we prove that under certain technical assumptions, Algorithm
\ref{alg:bivariateANM} with score function \eref{eq:ANM_score_HSIC_fixed} is a consistent procedure for inferring the direction of the ANM.
In particular, the product kernel $k \cdot k$ should be characteristic in order for HSIC to detect all possible independencies,
and the regression method should satisfy the following condition:
\begin{definition}\label{def:suitable_main}
Let $X, Y$ be two real-valued random variables with joint distribution $\Prb_{X,Y}$.
Suppose we are given sequences of training data sets $\C{D}_N = \{X_1,X_2,\dots,X_N\}$ and test data sets $\C{D}_N' = \{X_1',X_2',\dots,X_N'\}$ (in either the data splitting or the data recycling scenario).
We call a regression method \textbf{suitable} for regressing $Y$ on $X$ if the mean squared error between true and estimated regression function, evaluated on the test data, vanishes asymptotically in expectation:
\begin{equation}\label{eq:regression_suitable_main}
  \lim_{N\to\infty} \Exp_{\C{D}_N,\C{D}_N'} \left( \frac{1}{N} \sum_{n=1}^N \abs{\hat f_Y(X_n'; \C{D}_N) - \Exp(Y \given X = X_n')}^2 \right) = 0.
\end{equation}
Here, the expectation is taken over both training data $\C{D}_N$ and test data $\C{D}_N'$.
\end{definition}
The consistency result then reads as follows:
\begin{theorem}\label{theo:bivariate_ANM_HSIC_consistency}
Let $X, Y$ be two real-valued random variables with joint distribution $\Prb_{X,Y}$ that either satisfies an additive noise model $X \to Y$, or $Y \to X$, but not both.
Suppose we are given sequences of training data sets $\C{D}_N$ and test data sets $\C{D}_N'$ (in either the data splitting or the data recycling scenario).
Let $k:\RN\times\RN\to\RN$ be a bounded non-negative Lipschitz-continuous kernel such that the product $k \cdot k$ is characteristic. If the regression procedure
used in Algorithm \ref{alg:bivariateANM} is suitable for both $\Prb_{X,Y}$ and $\Prb_{Y,X}$, then Algorithm \ref{alg:bivariateANM} with score \eref{eq:ANM_score_HSIC_fixed} is a consistent procedure for
estimating the direction of the additive noise model.
\end{theorem}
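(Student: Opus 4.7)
The plan is to argue that, fixing the true direction as $X \to Y$ without loss of generality (by the identifiability assumption), the empirical HSIC score in the causal direction converges to zero while the one in the anti-causal direction converges to a strictly positive constant; the decision rule in Algorithm \ref{alg:bivariateANM} then selects the correct direction with probability tending to one.

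First I would use Lemma \ref{lem:ANMtest} twice. In the causal direction, the true residual $E_Y := Y - \Exp(Y \given X)$ is independent of $X$, so $\mathrm{HSIC}_{k,k}(X, E_Y) = 0$. In the reverse direction, define $E_X := X - \Exp(X \given Y)$; if $E_X \indep Y$ held, Lemma \ref{lem:ANMtest} would imply that $\Prb_{X,Y}$ also satisfies an ANM $Y \to X$, contradicting the identifiability hypothesis. Since $k \cdot k$ is characteristic, HSIC metrizes independence, so $c := \mathrm{HSIC}_{k,k}(Y, E_X) > 0$.

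The core task is then to show $\hat C_{X \to Y} \to 0$ and $\hat C_{Y \to X} \to c$ in probability. I would split each convergence into an \emph{oracle step} and a \emph{plug-in step}. In the oracle step one replaces the estimated residuals $\hat{\B{e}}_Y' = \B{y}' - \hat f_Y(\B{x}')$ by the true residuals $\B{e}_Y' = \B{y}' - f_Y(\B{x}')$, where $f_Y(x) := \Exp(Y \given X = x)$; then $\{(x_n', e_{Y,n}')\}_{n=1}^N$ is an i.i.d.\ sample from $\Prb_{X,E_Y}$, and the standard consistency of the empirical HSIC statistic (Appendix \ref{sec:HSIC}) gives $\widehat{\mathrm{HSIC}}_{k,k}(\B{x}', \B{e}_Y') \to \mathrm{HSIC}_{k,k}(X, E_Y) = 0$ in probability; the analogous statement in the reverse direction delivers convergence to $c$. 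In the plug-in step I would bound the discrepancy between the empirical HSIC on true residuals and on estimated residuals: boundedness and Lipschitz continuity of $k$ imply $|k(\hat u_i, \hat u_j) - k(u_i, u_j)| \le L(|\hat u_i - u_i| + |\hat u_j - u_j|)$ for some $L > 0$, and expanding the empirical HSIC as a quadratic form in centered kernel matrices, followed by Cauchy--Schwarz, reduces the perturbation error to a constant multiple of $\frac{1}{N}\sum_{n=1}^N |\hat f_Y(x_n') - f_Y(x_n')|^2$, whose expectation vanishes as $N \to \infty$ by the suitability assumption (Definition \ref{def:suitable_main}). Markov's inequality then gives convergence in probability, and the same argument handles the reverse direction. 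Because only the test-sample mean-squared regression error enters the bound, the argument covers both the data-splitting and the data-recycling scenarios uniformly.

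The hard part, and where I expect the bulk of the technical work to lie, is the plug-in step: carefully bounding the quadratic-form perturbation of the empirical HSIC in terms of the mean-squared regression error on the test sample while keeping all constants independent of $N$, and verifying measurability and integrability so that all expectations are well defined and Fubini-style exchanges are legitimate. Once both scores converge as claimed, the event $\hat C_{X \to Y} < \hat C_{Y \to X}$ occurs with probability tending to one, which is exactly the claimed consistency of Algorithm \ref{alg:bivariateANM}.
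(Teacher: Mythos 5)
Your proposal is correct and follows essentially the same route as the paper: the ``plug-in step'' is exactly the paper's key Lemma~\ref{lemm:HSICbound} (Lipschitz continuity of the kernel plus Cauchy--Schwarz yields a perturbation bound scaling as $N^{-1/2}\norm{\hatres-\res}$, i.e., as the root mean squared regression error, which vanishes by suitability), the ``oracle step'' is Corollary~\ref{coro:HSICconsistency}, and the identification of which population HSIC is zero versus strictly positive via Lemma~\ref{lem:ANMtest} and the characteristic-kernel property is the paper's Corollary~\ref{coro:bivariate_ANM_HSIC_consistency}. The only quibble is that the perturbation of the empirical HSIC is controlled by a constant times the \emph{square root} of the test-sample MSE rather than the MSE itself, but this does not affect the argument.
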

\begin{proof}
See Appendix~\ref{sec:consistency} (where a slightly more general result is shown, allowing for two different kernels $k, l$ to be used).
The main technical difficulty consists of the fact that the error in the estimated regression function introduces a dependency
between the cause and the estimated residuals. We overcome this difficulty by showing that the dependence is so weak that
its influence on the test statistic vanishes asymptotically.
\end{proof}
In the data splitting case, weakly universally consistent regression methods \citep{Gyorfi++2002} are suitable. In the data
recycling scenario, any regression method that satisfies \eref{eq:regression_suitable_main} is suitable. An example of a
kernel $k$ that satisfies the conditions of Theorem~\ref{theo:bivariate_ANM_HSIC_consistency} is the Gaussian kernel.

\subsubsection{Entropy-based scores}

Instead of explicitly testing for independence of residuals and inputs, one can use the sum of
their differential entropies as a score function \citep[e.g.,][]{Kpotufe++2014,Nowzohour2015}. This can easily be seen using 
Lemma 1 of \citet{Kpotufe++2014}, which we reproduce here because it is very instructive: 
\begin{lemma}\label{lem:Kpotufe}
Consider a joint distribution of $X,Y$ with density $p(x,y)$. For arbitrary functions $f, g : \RN \to \RN$ we have:
$$H(X) + H(Y - f(X)) = H(Y) + H(X - g(Y)) - \big(I(X - g(Y), Y) - I(Y - f(X), X)\big).$$
where $H(\cdot)$ denotes differential Shannon entropy, and $I(\cdot,\cdot)$ denotes differential mutual information \citep{CoverThomas2006}.\hfill\BlackBox%
\end{lemma}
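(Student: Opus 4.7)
The identity looks mysterious at first, but it is really just the chain rule for differential entropy written in two different ways, together with the translation-invariance of conditional entropy. My plan is therefore to produce two expressions for the joint entropy $H(X,Y)$, one conditioning on $X$ and one conditioning on $Y$, and then equate them.

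The key observation I would use first is that for any measurable $f:\RN\to\RN$, the map $Y \mapsto Y - f(X)$ is, conditionally on $X$, just a deterministic translation. Since differential entropy is invariant under translation, this gives
\begin{equation*}
  H\big(Y - f(X) \,\big|\, X\big) = H(Y \given X),
\end{equation*}
and symmetrically $H\big(X - g(Y) \,\big|\, Y\big) = H(X \given Y)$. From the definition of (differential) mutual information, $I\big(Y-f(X), X\big) = H\big(Y-f(X)\big) - H\big(Y-f(X) \given X\big)$, so combining with the previous display yields
\begin{equation*}
  H(Y \given X) = H\big(Y - f(X)\big) - I\big(Y - f(X), X\big),
\end{equation*}
and analogously $H(X \given Y) = H\big(X - g(Y)\big) - I\big(X - g(Y), Y\big)$.

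Next I would invoke the chain rule for differential entropy in both orderings, $H(X,Y) = H(X) + H(Y\given X) = H(Y) + H(X\given Y)$, and substitute the two identities above. This gives
\begin{equation*}
  H(X) + H\big(Y - f(X)\big) - I\big(Y - f(X), X\big) = H(Y) + H\big(X - g(Y)\big) - I\big(X - g(Y), Y\big),
\end{equation*}
and rearranging is precisely the claim. There is no real obstacle here; the only point worth noting is a mild regularity caveat, namely that all the differential entropies and mutual informations involved must be well-defined and finite for the chain rule to apply, which is implicit in the statement by asking for a joint density $p(x,y)$ (and one would also want, say, $f$ and $g$ measurable and the relevant integrals absolutely convergent). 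Under those conditions the argument is a few lines.
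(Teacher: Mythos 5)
Your proof is correct and follows exactly the route the paper indicates: the paper's entire justification is that the identity is ``a simple application of the chain rule of differential entropy,'' and your argument—translation invariance giving $H(Y-f(X)\given X)=H(Y\given X)$, the definition of mutual information, and the two orderings of the chain rule $H(X,Y)=H(X)+H(Y\given X)=H(Y)+H(X\given Y)$—is precisely that application, spelled out. The algebra rearranges to the stated identity, so nothing is missing.
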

The proof is a simple application of the chain rule of differential entropy.
If $p(x,y)$ satisfies an identifiable additive noise model $X \to Y$, then there exists a function $f$ with
$I(Y - f(X), X) = 0$ (e.g., the regression function $x \mapsto \Exp(Y \given X=x)$), but $I(X - g(Y), Y) > 0$ for any function $g$.
Therefore, one can use Algorithm~\ref{alg:bivariateANM} with score function
\begin{equation}\label{eq:ANM_score_entropy}
\hat C(\B{u},\B{v}) := \hat H(\B{u}) + \hat H(\B{v})
\end{equation}
in order to estimate the causal direction, using any estimator $\hat H(\cdot)$ of the differential Shannon entropy.
\citet{Kpotufe++2014,Nowzohour2015} prove that this approach to estimating the direction of additive noise models is consistent under certain technical assumptions.

\citet{Kpotufe++2014} note that the advantage of score \eref{eq:ANM_score_entropy} (based on marginal entropies) over score \eref{eq:ANM_score_HSIC} (based on dependence)
is that marginal entropies are cheaper to estimate than dependence (or mutual information). This is certainly true
when considering computation time. However, as we will see later, a disadvantage
of relying on differential entropy estimators is that these can be quite sensitive to discretization effects.

\subsubsection{Gaussian score}\label{sec:Gaussian_score}


The differential entropy of a random variable $X$ can be upper bounded in terms of its variance \citep[see e.g.,][Theorem 8.6.6]{CoverThomas2006}: 
\begin{equation}\label{eq:entropy_Gaussian}
  H(X) \le \frac{1}{2} \log (2\pi e) + \frac{1}{2} \log \Var(X),
\end{equation}
where identity holds in case $X$ has a Gaussian distribution.
Assuming that $p(x,y)$ satisfies an identifiable Gaussian additive noise model $X \to Y$ with Gaussian input and Gaussian noise distributions, we therefore conclude from Lemma \ref{lem:Kpotufe}:
\begin{equation*}\begin{split}
  \log \Var (X) + \log \Var (Y - \hat f(X)) & = 2 H(X) + 2 H(Y - \hat f(X)) - 2 \log(2\pi e) \\
  & < 2 H(Y) + 2 H(X - \hat g(Y)) - 2 \log(2\pi e) \\
  & \le \log \Var Y + \log \Var(X - \hat g(Y))
\end{split}\end{equation*}
for any function $g$. In that case, we can therefore use Algorithm~\ref{alg:bivariateANM} with score function
\begin{equation}\label{eq:ANM_score_Gauss}
\hat C(\B{u},\B{v}) := \log \widehat{\Var}(\B{u}) + \log \widehat{\Var}(\B{v}).
\end{equation}
This score was also considered recently by \citet{BuehlmannPetersErnest2013} and shown to lead to a consistent estimation procedure under certain assumptions.

\subsubsection{Empirical-Bayes scores}\label{sec:Bayesian_score}

Deciding the direction of the ANM can also be done by applying model selection using empirical Bayes.
As an example, for the ANM $X \to Y$, one can consider a generative model that models $X$ as a Gaussian, and 
$Y$ as a Gaussian Process \citep{RasmussenWilliams2006} conditional on $X$. For the ANM $Y \to X$, one considers a similar model with
the roles of $X$ and $Y$ reversed.
Empirical-Bayes model selection is performed by calculating the maximum evidences (marginal likelihoods) 
of these two models when optimizing over the hyperparameters, and preferring the model with larger maximum
evidence. This is actually a special case (the
bivariate case) of an approach proposed by \citet{FriedmanNachman2000}.\footnote{\citet{FriedmanNachman2000} 
even hint at using this method for inferring causal relationships, although it seems that they only
thought of cases where the functional dependence of the effect on the cause was not injective.}
Considering the negative log marginal likelihoods leads to the following score for the ANM $X \to Y$:
\begin{equation}\label{eq:ANM_score_FriedmanNachman}
  \hat C_{X \to Y}(\B{x},\B{y}) := \min_{\mu, \tau^2, \B{\theta}, \sigma^2} \left( -\log \C{N}(\B{x} \given \mu \B{1}, \tau^2 \B{I}) - \log \C{N}(\B{y} \given \B{0}, \B{K}_{\B{\theta}}(\B{x}) + \sigma^2 \B{I}) \right),
\end{equation}
and a similar expression for $\hat C_{Y \to X}$, the score of the ANM $Y \to X$. Here, $\B{K}_{\B{\theta}}(\B{x})$
is the $N \times N$ kernel matrix $K_{ij} = k_{\B{\theta}}(x_i,x_j)$ for a kernel with parameters $\B{\theta}$ and $\C{N}(\cdot \given \B{\mu}, \B{\Sigma})$ denotes the density of a multivariate normal distribution with mean $\B{\mu}$ and covariance matrix $\B{\Sigma}$. If one would 
put a prior distribution on the hyperparameters and integrate them out, this would correspond to Bayesian model selection.
In practice, one typically uses ``empirical Bayes'', which means that the hyperparameters $(\mu,\tau,\B{\theta},\sigma)$ are
optimized over instead for computational reasons.
Note that this method skips the explicit regression step, instead it (implicitly) integrates over all possible regression functions
\citep{RasmussenWilliams2006}.
Also, it does not distinguish the data splitting and data recycling scenarios, instead it uses the data directly to calculate
the (maximum) marginal likelihood. Therefore, the structure of the algorithm is slightly different, see Algorithm~\ref{alg:bivariateANM_MML}.
In Appendix~\ref{sec:gp} we show that this score is actually closely related to the Gaussian score considered in Section~\ref{sec:Gaussian_score}.

\begin{algorithm}[t]
\caption{\label{alg:bivariateANM_MML}Procedure to decide whether $p(x,y)$ satisfies an additive noise model $X \to Y$ or $Y \to X$
suitable for empirical-Bayes or MML model selection.}
\textbf{Input}: 
\begin{enumerate}
  \item I.i.d.\ sample $\C{D}_N := \{(x_i,y_i)\}_{i=1}^N$ of $X$ and $Y$ (``data'');
  \item Score function $\hat C : \RN^N \times \RN^N \to \RN$ for measuring model fit and model complexity.
\end{enumerate}
\textbf{Output}: $\hat C_{X\to Y}$, $\hat C_{Y\to X}$, \texttt{dir}.
\begin{enumerate}
  \item \begin{enumerate}
    \item calculate $\hat C_{X\to Y} = \hat C(\B{x},\B{y})$
    \item calculate $\hat C_{Y\to X} = \hat C(\B{y},\B{x})$
  \end{enumerate}
  \item Output $\hat C_{X\to Y}, \hat C_{Y\to X}$ and:
    $$\mathtt{dir} := \begin{cases}
        X \to Y & \text{ if } \hat C_{X\to Y} < \hat C_{Y\to X}, \\
        Y \to X & \text{ if } \hat C_{X\to Y} > \hat C_{Y\to X}, \\
        ?       & \text{ if } \hat C_{X\to Y} = \hat C_{Y\to X}.
    \end{cases}$$
\end{enumerate}
\end{algorithm}

\subsubsection{Minimum Message Length scores}

In a similar vein as (empirical) Bayesian marginal likelihoods can be interpreted as
measuring likelihood in combination with a complexity penalty, Minimum
Message Length (MML) techniques can be used to construct scores that
incorporate a trade-off between model fit (likelihood) and model complexity
\citep{Grunwald2007}. Asymptotically, as the number of data points tends to
infinity, one would expect the model fit to outweigh the model complexity, and
therefore by Lemma~\ref{lem:Kpotufe}, simple comparison of MML scores should
be enough to identify the direction of an identifiable additive noise model.

A particular MML score was considered by \citet{Mooij_et_al_NIPS_10}. This is a special case (referred 
to in \citet{Mooij_et_al_NIPS_10} as ``AN-MML'') of their more general framework 
that allows for non-additive noise. Like \eref{eq:ANM_score_FriedmanNachman}, 
the score is a sum of two terms, one corresponding to the marginal density
$p(x)$ and the other to the conditional density $p(y \given x)$:
\begin{equation}\label{eq:ANM_score_MML}
  \hat C_{X \to Y}(\B{x},\B{y}) := \C{L}(\B{x}) + \min_{\B{\theta}, \sigma^2} \left( - \log \C{N}(\B{y} \given 0, \B{K}_{\theta}(\B{x}) + \sigma^2 \B{I}) \right).
\end{equation}
The second term is an MML score for the conditional density $p(y \given x)$, and
is identical to the conditional density term in \eref{eq:ANM_score_FriedmanNachman}.
The MML score $\C{L}(\B{x})$ for the marginal density $p(x)$ is derived as an 
asymptotic expansion based on the Minimum Message Length principle for 
a mixture-of-Gaussians model \citep{FigueiredoJain2002}: 
  \begin{equation}\label{eq:opt_MML}
    \C{L}(\B{x}) = \min_{\B{\eta}} \left( \sum_{j=1}^k \log\left(\frac{N \alpha_j}{12}\right) + \frac{k}{2}\log\frac{N}{12} + \frac{3k}{2} - \log p(\B{x} \given \B{\eta}) \right),
  \end{equation}
where $p(\B{x} \given \B{\eta})$ is a Gaussian mixture model:
$p(x_i \given \B{\eta}) = \sum_{j=1}^k \alpha_j \C{N}(x_i \given \mu_j, \sigma_j^2)$
with $\B{\eta} = (\alpha_i,\mu_i,\sigma^2_i)_{i=1}^{k}$.
The optimization problem \eref{eq:opt_MML} is solved numerically by means of the algorithm proposed by 
\cite{FigueiredoJain2002}, using a small but nonzero value ($10^{-4}$) of the regularization parameter.

Comparing this score with the empirical-Bayes score \eref{eq:ANM_score_FriedmanNachman}, 
the main conceptual difference is that the former uses a more complicated mixture-of-Gaussians 
model for the marginal density, whereas \eref{eq:ANM_score_FriedmanNachman} uses a simple Gaussian model.
We can use \eref{eq:ANM_score_MML} in combination with Algorithm~\ref{alg:bivariateANM_MML}
in order to estimate the direction of an identifiable additive noise model.

\subsubsection{Minimizing HSIC directly}

One can try to apply the idea of combining regression and independence testing
into a single procedure (as achieved with the empirical-Bayes score described in
Section~\ref{sec:Bayesian_score}, for example) more generally.
Indeed, a score that measures the dependence between the residuals $\B{y}' -
f_Y(\B{x}')$ and the inputs $\B{x}'$ can be minimized with respect to the
function $f_Y$. \citet{MooijJanzingPetersSchoelkopf_ICML_09}
proposed to minimize $\empHSIC\big(\B{x},\B{y}-f(\B{x})\big)$ with
respect to the function $f$. However, the optimization problem with respect to
$f$ turns out to be a challenging non-convex optimization problem with multiple local
minima, and there are no guarantees to find the global minimum. In addition, the
performance depends strongly on the selection of suitable kernel bandwidths, for 
which no suitable automatic procedure is known in this context. Finally, proving consistency of such a 
method might be challenging, as the minimization may introduce strong dependences
between the residuals. Therefore, we do not discuss or evaluate this method in more detail here.

\section{Information-Geometric Causal Inference}\label{sec:IGCI} 

In this section, we review a class of causal discovery methods that exploits independence of the
distribution of the cause and the conditional distribution of the effect given the cause.
It nicely complements causal inference based on additive noise 
by employing asymmetries between cause and effect that have nothing to do with noise. 

\subsection{Theory}

Information-Geometric Causal Inference (IGCI) is an approach
that builds upon the assumption that for $X\rightarrow Y$ the marginal distribution $\Prb_{X}$
contains no information about the conditional\footnote{Note that $\Prb_{Y\given X}$ 
represents the whole family of distributions
$x \mapsto \Prb_{Y\given X=x}$.}
 $\Prb_{Y\given X}$ and vice versa, since they represent independent mechanisms. As 
\citet{JanzingSchoelkopf2010} illustrated for several toy examples,
the conditional and marginal distributions
$\Prb_Y,\Prb_{X\given Y}$ may then contain information about each other, but it is hard to formalize in what sense this is the case for scenarios that go beyond 
simple toy models. IGCI is based on the strong assumption that
$X$ and $Y$ are deterministically related by a bijective function $f$, that is, $Y=f(X)$ and $X=f^{-1}(Y)$. Although  
its practical applicability is limited to causal relations
with sufficiently small noise and sufficiently high non-linearity, IGCI provides a  setting in which the independence
of $\Prb_X$ and $\Prb_{Y\given X}$ provably implies well-defined dependences 
between $\Prb_{Y}$ and $\Prb_{X\given Y}$ in a sense 
described below. 

To introduce IGCI,
note that the deterministic relation $Y=f(X)$ implies that the conditional $\Prb_{Y\given X}$
has no density $p(y\given x)$, but it can be
represented using $f$ via
\[
\Prb (Y=y\given X=x)= \left\{ \begin{array}{cc} 1 & \hbox{ if }  y=f(x) \\
                                                        0 & \hbox{ otherwise. }  \end{array}\right.
                                                        \]
The fact that $\Prb_{X}$ and $\Prb_{Y\given X}$ contain no information about each other then
translates into the statement that $\Prb_X$ and  $f$  contain no information about each other.

Before sketching a more general formulation of IGCI \citep{Daniusis_et_al_UAI_10,Janzing_et_al_AI_12}, we begin with 
the most intuitive case where $f$ is  a strictly monotonically increasing
differentiable bijection of $[0,1]$. We then assume
that the following equality is approximately satisfied:
\begin{equation}\label{eq:orth}
\int_0^1 \log f'(x) p(x) \,dx = \int_0^1 \log f'(x) \,dx,
\end{equation}
where $f'$ is the derivative of $f$.
To see why \eref{eq:orth} is an independence between function $f$ and input density $p_X$,
we interpret $x\mapsto \log f'(x)$ and $x \mapsto p(x)$ as random variables\footnote{Note that random variables are formally defined as maps from a probability space to the real numbers.} on the probability space $[0,1]$.
Then the difference between the two sides of \eref{eq:orth} is the covariance
of these two random variables with respect to the uniform distribution on $[0,1]$:
\begin{equation*}\begin{split}
  \Cov(\log f',p_X)&=\Exp(\log f' \cdot p_X)-\Exp(\log f')\Exp(p_X)\\
                   &= \int \log f'(x) \cdot p(x) dx -\int \log f'(x) dx \int p(x) dx\,.
\end{split}\end{equation*}
As shown in Section~2 in \citep{Daniusis_et_al_UAI_10}, $p_Y$ is then related to the inverse function $f^{-1}$
in the sense that
\begin{equation*}
\int_0^1 \log (f^{-1})'(y) \cdot p(y) \,dy \geq \int_0^1 \log (f^{-1})'(y) \,dy\,,
\end{equation*}
with equality if and only if $f'$ is constant.
Hence, $\log (f^{-1})'$ and $p_Y$ are positively correlated due to
\[
\Exp\big(\log (f^{-1})' \cdot p_Y\big) - \Exp\big(\log (f^{-1})'\big) \Exp(p_Y) >0\,.
\]
 Intuitively, this is because the density  $p_Y$ tends to be high in regions where $f$ is flat, or equivalently, $f^{-1}$ is steep
(see also Figure~\ref{fig:igci}). 
Hence, we have shown that $\Prb_Y$ contains information about $f^{-1}$ and hence about $\Prb_{X\given Y}$
whenever $\Prb_X$ does not contain information about $\Prb_{Y \given X}$ (in the sense that \eref{eq:orth}
is satisfied), except for the trivial case where $f$ is linear.

\begin{figure}[t]
  \centering
  \includegraphics[width=0.5\textwidth]{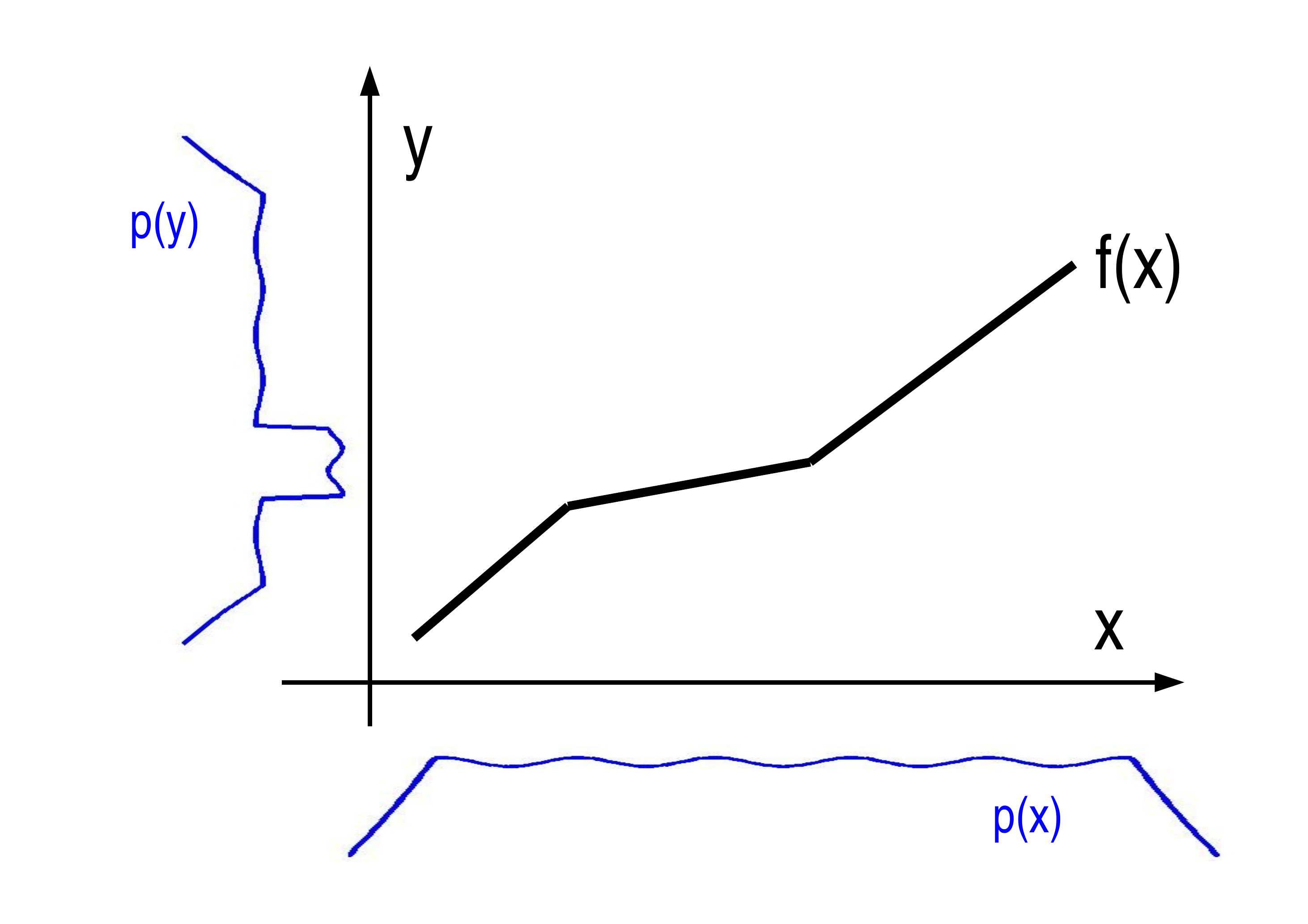}
  \caption{\label{fig:igci}Illustration of the basic intuition behind IGCI. If the density $p_X$ of the cause
  $X$ is not correlated with the slope of $f$, then the density $p_Y$ tends to be high in regions where $f$ is flat (and $f^{-1}$ is steep). Source: \cite{Janzing_et_al_AI_12}.} 
\end{figure}

To employ this asymmetry, 
\citet{Daniusis_et_al_UAI_10} introduce the expressions
\begin{eqnarray}\label{cdefx}
C_{X\to Y} &:=&
\int_0^1 \log f'(x) p(x) dx \\ \label{cdefy}
C_{Y\to X} &:=& \int_0^1 \log (f^{-1})'(y) p(y) dy =-C_{X \to Y}\,.
\end{eqnarray}
Since the right hand side of \eref{eq:orth} is smaller than zero due to $\int_0^1 \log f'(x) dx \leq \log \int_0^1 f'(x) dx=0$  by
concavity of the logarithm (exactly zero only for
constant $f$), IGCI infers $X\to Y$ whenever $C_{X\to Y}$ is negative. 
Section~3.5 in \citep{Daniusis_et_al_UAI_10} also shows that
\[
C_{X\to Y} =H(Y)-H(X)\,,
\]
i.e., the decision rule considers the variable with  lower differential entropy as the effect.
The idea is that the function introduces new irregularities
to a distribution rather than smoothing the irregularities of the distribution of the cause.

\vspace{0.5cm}
\noindent
{\it Generalization to other reference measures:} In the above version of IGCI 
the uniform distribution on $[0,1]$ plays a special role because it is the distribution with respect to which uncorrelatedness between $p_X$ and $\log f'$ is defined. 
The idea can be generalized to other reference distributions.
How to choose the right one for a particular inference problem is a difficult question which
goes beyond the scope of this article.
From a high-level perspective, it is comparable to the question of choosing the right kernel for kernel-based machine learning algorithms; it also is an \emph{a priori} structure of the range of $X$ and $Y$ without which
the inference problem is not well-defined.

Let $u_X$ and $u_Y$ be densities of $X$ and $Y$, respectively, that we call ``reference densities''. 
For example, uniform or Gaussian distributions would be reasonable choices.
Let  
$u_f$ be the image of $u_X$ under $f$ and  $u_{f^{-1}}$ be the image of $u_Y$ under $f^{-1}$. 
Then we hypothesize the following generalization of \eref{eq:orth}:
\begin{principle}
If $X$ causes $Y$  via  a deterministic bijective function $f$
such that $u_{f^{-1}}$ has a density with respect to Lebesgue measure, then
\begin{equation}\label{eq:genorth}
\int \log \frac{u_{f^{-1}}(x)}{u_X(x)} p(x) \,dx \approx
\int \log \frac{u_{f^{-1}}(x)}{u_X(x)} u_X(x) \,dx\,.
\end{equation}
\end{principle}

In analogy to the remarks above, this can also be interpreted as uncorrelatedness
of the functions $\log (u_{f^{-1}}/u_X)$  and $p_X / u_X$ 
with respect to the measure given by the density of $u_X$ with respect to the Lebesgue measure. 
Again, we hypothesize this because the former expression is a property of the function $f$
alone (and the reference densities) and should thus be unrelated to the marginal density $p_X$.
The special case \eref{eq:orth} can be obtained by taking the uniform distribution on $[0,1]$ for $u_X$ and $u_Y$.

As generalization of (\ref{cdefx},\ref{cdefy}) we define\footnote{Note that the formulation in Section~2.3 in \citep{Daniusis_et_al_UAI_10} is more general because it uses {\it manifolds} of reference densities instead of a single density.} 
\begin{eqnarray}
C_{X\to Y} &:=& \int \log \frac{u_{f^{-1}}(x)}{u_X(x)} p(x) dx\\
C_{Y\to X} &:=& \int \log \frac{u_f (y)}{u_Y(y)} p(y) dy =
\int \log \frac{u_X(x)}{u_{f^{-1}}(x)} p(x) dx =
- C_{Y\to X} \label{minusRel}\,,
\end{eqnarray}
where the second equality in (\ref{minusRel}) follows by substitution of variables.
Again, the hypothesized independence implies 
$C_{X\to Y}\leq 0$ since the right hand side of \eref{eq:genorth}
coincides with $-D(u_X\|u_{f^{-1}})$ where
$D(\cdot\|\cdot)$ denotes Kullback-Leibler divergence.
Hence, we also infer $X\to Y$ whenever $C_{X\to Y}<0$. 
Note also that 
\[
C_{X\to Y}= D(p_X\|u_X)-D(p_X\|u_{f^{-1}}) =  D(p_X\|u_X)-D(p_Y\|u_Y)\,,
\]
where we have only used the fact that relative entropy is preserved under bijections.
Hence, our decision rule amounts to inferring that the density of the cause is closer to its reference density.  
This decision rule gets quite simple, for instance, if
 $u_X$ and $u_Y$ are Gaussians with the same mean and variance as $p_X$ and $p_Y$, respectively. Then it again amounts to inferring $X\rightarrow Y$ whenever $X$ has larger entropy than $Y$
 after rescaling both $X$ and $Y$ to have the same variance. 

\subsection{Estimation methods}\label{subsec:igciimpl}

The specification of the reference measure is essential for IGCI.
We describe the implementation for two different choices:
\begin{enumerate}
\item {\it Uniform distribution:} scale and shift $X$ and $Y$ such that extrema are mapped
onto $0$ and $1$. 
\item {\it Gaussian distribution:} scale $X$ and $Y$ to variance 1. 
\end{enumerate}
Given this preprocessing step, there are different options for estimating $C_{X\to Y}$ and
$C_{Y\to X}$ from empirical data (see Section 3.5 in \citep{Daniusis_et_al_UAI_10}):
\begin{enumerate}
\item {\it Slope-based estimator:}
\begin{equation}\label{eq:IGCI_score_slope}
\hat{C}_{X\rightarrow Y}:=\frac{1}{N-1}\sum_{j=1}^{N-1} \log \frac{|y_{j+1}-y_j|}{x_{j+1}-x_j}\,, 
\end{equation}
where we assumed the pairs $\{(x_i,y_i)\}$ to be ordered ascendingly according to $x_i$. Since empirical data are noisy,
the $y$-values need not be in the same order.
$\hat{C}_{Y\to X}$ is given by exchanging the roles of $X$ and $Y$. 

\item {\it Entropy-based estimator:} 
\begin{equation}\label{eq:IGCI_score_entropy}
\hat{C}_{X\to Y} := \hat{H}(Y)-\hat{H}(X)\,,
\end{equation}
where $\hat{H}(\cdot)$ denotes some differential entropy estimator. 
\end{enumerate}
The theoretical equivalence between these estimators breaks down on empirical data
not only due to finite sample effects but also because of noise. For the slope based estimator, we even have 
\[
\hat{C}_{X\to Y} \neq -\hat{C}_{Y\to X}\,,
\]
and thus need to compute both terms separately.

\begin{algorithm}[t]
\caption{\label{alg:IGCI} General procedure to decide whether $\Prb_{X,Y}$ is generated by
a deterministic monotonic bijective function from $X$ to $Y$ or from $Y$ to $X$.}
\textbf{Input}: 
\begin{enumerate}
  \item I.i.d.\ sample $\C{D}_N := \{(x_i,y_i)\}_{i=1}^N$ of $X$ and $Y$ (``data'');
  \item Normalization procedure $\nu : \RN^N \to \RN^N$;
  \item IGCI score estimator $\hat C : \RN^N \times \RN^N \to \RN$.
\end{enumerate}
\textbf{Output}: $\hat C_{X\to Y}$, $\hat C_{Y\to X}$, \texttt{dir}.
\begin{enumerate}
  \item Normalization: \begin{enumerate}
    \item calculate $\tilde{\B{x}} = \nu(\B{x})$
    \item calculate $\tilde{\B{y}} = \nu(\B{y})$
  \end{enumerate}
  \item Estimation of scores: 
  \begin{enumerate}
    \item calculate $\hat{C}_{X\to Y} = \hat C(\tilde{\B{x}},\tilde{\B{y}})$
    \item calculate $\hat{C}_{Y\to X} = \hat C(\tilde{\B{y}},\tilde{\B{x}})$
  \end{enumerate}
  \item Output $\hat C_{X\to Y}, \hat C_{Y\to X}$ and:
    $$\mathtt{dir} := \begin{cases}
      X \to Y & \text{ if } \hat{C}_{X\to Y} < \hat{C}_{Y\to X}, \\
      Y \to X & \text{ if } \hat{C}_{X\to Y} > \hat{C}_{Y\to X}, \\
      ?       & \text{ if } \hat{C}_{X\to Y} = \hat{C}_{Y\to X}.
    \end{cases}$$
\end{enumerate}
\end{algorithm}

Note that the IGCI implementations discussed here make sense only for continuous variables with
a density with respect to Lebesgue measure.
This is because the difference quotients are undefined if a value occurs twice. 
In many empirical data sets, however, the discretization (e.g., due to rounding to some number
of digits) is not fine enough to guarantee this.
A very preliminary heuristic that was employed in earlier work \citep{Daniusis_et_al_UAI_10} removes repeated occurrences by removing data points, but
a conceptually cleaner solution would be, for instance, the following procedure:
Let ${\tilde x}_j$ with $1 \leq j\leq \tilde{N}$ be the ordered values after removing repetitions and let $\tilde{y}_j$ 
denote
the corresponding $y$-values. 
Then we replace \eref{eq:IGCI_score_slope} with
\begin{equation}\label{eq:IGCI_score_slope++}
  \hat{C}_{X\rightarrow Y} := \frac{1}{\sum_{j=1}^{\tilde{N}-1} n_j} \sum_{j=1}^{\tilde{N}-1} n_j \log \frac{|\tilde{y}_{j+1}-\tilde{y}_j|}{\tilde{x}_{j+1}-\tilde{x}_j}\,,
\end{equation}
where $n_j$ denotes the number of occurrences of $\tilde{x}_j$ in the original data set. 
Here we have ignored the problem of  repetitions of $y$-values since they are less likely,
because they are not ordered if the relation between $X$ and $Y$ is noisy (and for bijective deterministic relations, they only occur together with repetitions of $x$ anyway).

Finally, let us mention one simple case where IGCI with estimator \eref{eq:IGCI_score_slope} provably works asymptotically, even though its assumptions are violated. This happens if the effect is a non-injective function of the cause. More precisely, assume $Y=f(X)$ where $f:[0,1]\rightarrow [0,1]$ is continuously differentiable and non-injective, and moreover, that $p_X$ is strictly positive and bounded away from zero. 
To argue that 
$\hat{C}_{Y\to X}> \hat{C}_{X\to Y}$ asymptotically for $N\to\infty$ we first observe that
the mean value theorem implies 
\begin{equation}\label{eq:meanv}
\frac{|f(x_2)-f(x_1)|}{|x_2-x_1|} \leq \max_{x\in [0,1]} |f'(x)|=: s_{\max}
\end{equation}
 for any pair $x_1,x_2$. 
Thus, for any sample size $N$ we have $\hat{C}_{X\to Y} \leq \log s_{\max}$.
On the other hand, $\hat{C}_{Y\to X} \to \infty$ for $N\to\infty$.
To see this, note that all terms in the sum \eqref{eq:IGCI_score_slope} are bounded from below
by $-\log s_{\max}$ due to \eqref{eq:meanv}, while there is no {\it upper} bound for the summands
because adjacent $y$-values may be from different branches of the non-injective function $f$
and then the corresponding $x$-values may not be close. 
Indeed, this will happen for a constant fraction of adjacent pairs. For those, the gaps between
the $y$-values decrease with $\mathcal{O}(1/N)$ while the distances of the corresponding $x$-values remain of $\mathcal{O}(1)$. Thus, the overall sum \eqref{eq:IGCI_score_slope} diverges. 
It should be emphasized, however, that one can have opposite effects for any finite $N$. To see this, 
consider the function $x \mapsto 2|x-1/2|$ and modify it locally around $x=1/2$ to obtain a continuously differentiable function $f$. Assume that the probability density in $[1/2,1]$ is so low that almost all points are contained in $[0,1/2]$. 
Then, $\hat{C}_{X\to Y}\approx \log 2$ while $\hat{C}_{Y\to X} \approx -\log 2$ and IGCI with estimator \eref{eq:IGCI_score_slope} decides
(incorrectly) on $Y\to X$. For sufficiently large $N$, however, a constant (though possibly very small) fraction of
$y$-values come from different branches of $f$ and thus $\hat{C}_{Y\to X}$ diverges (while $\hat{C}_{X \to Y}$ remains bounded from above).

\section{Experiments}\label{sec:experiments} 

In this section we describe the data that we used for evaluation, implementation details for various
methods, and our evaluation criteria. The results of the empirical study will be presented in 
Section~\ref{sec:results}.

\subsection{Implementation details}

The complete source code to reproduce our experiments has been made available online as open source
under the FreeBSD license on the homepage of the first author\footnote{\url{http://www.jorismooij.nl/}}.
We used \texttt{MatLab} on a \texttt{Linux} platform, and made use of external libraries \texttt{GPML} v3.5 (2014-12-08)
\citep{RasmussenNickisch2010} for GP regression and \texttt{ITE} v0.61 \citep{Szabo2014} for entropy estimation.
For parallelization, we used the convenient command line tool \texttt{GNU parallel} \citep{Tange2011}.

\subsubsection{Regression}

For regression, we used standard Gaussian Process (GP) Regression
\citep{RasmussenWilliams2006}, using the \texttt{GPML} implementation
\citep{RasmussenNickisch2010}. We used a squared exponential covariance
function, constant mean function, and an additive Gaussian noise likelihood. We
used the FITC approximation \citep{QuinoneroRasmussen2005} as an approximation 
for exact GP regression in order to reduce computation time. We found that
100 FITC points distributed on a linearly spaced grid greatly reduce
computation time without introducing a noticeable approximation error.
Therefore, we used this setting as a default for the GP regression. 
The computation time of this method scales as $\mathcal{O}(N m^2 T)$, where
$N$ is the number of data points, $m = 100$ is the number of FITC points, and
$T$ is the number of iterations necessary to optimize the marginal likelihood
with respect to the hyperparameters. In practice, this yields considerable 
speedups compared with exact GP inference, which scales as $\mathcal{O}(N^3 T)$.

\subsubsection{Entropy estimation}

We tried many different empirical entropy estimators, see Table~\ref{tab:ITE}. The first
method, \texttt{1sp}, uses a so-called ``1-spacing'' estimate \citep[e.g.,][]{Kraskov++2004}:
\begin{equation}\label{eq:ent_estimator}
  \hat H(\B{x}) := \psi(N) - \psi(1) + \frac{1}{N-1} \sum_{i=1}^{N-1} \log \abs{x_{i+1}-x_{i}}\,,
\end{equation}
where the $x$-values should be ordered ascendingly, i.e., $x_i \le x_{i+1}$, and
$\psi$ is the digamma function (i.e., the logarithmic derivative of the gamma function: $\psi(x) = d/dx\, \log \Gamma(x)$,
which behaves as $\log x$ asymptotically for $x \to \infty$). As this estimator would
become $-\infty$ if a value occurs more than once, we first remove duplicate values from
the data before applying \eref{eq:ent_estimator}. There should be better ways of dealing with 
discretization effects, but we nevertheless include this particular estimator for comparison,
as it was also used in previous implementations of the entropy-based IGCI method \citep{Daniusis_et_al_UAI_10, Janzing_et_al_AI_12}.
These estimators can be implemented in $\mathcal{O}(N \ln N)$ complexity, as they only need
to sort the data and then calculate a sum over data points.

\begin{table}[t]
  \centering
  \caption{\label{tab:ITE}Entropy estimation methods. ``ITE'' refers to the Information Theoretical Estimators Toolbox \citep{Szabo2014}. The first group of entropy estimators is nonparametric, the second group makes additional parametric assumptions on the distribution of the data.} 
  \begin{tabular}{lll}
    Name & Implementation & References \\
    \hline
    1sp & based on \eref{eq:ent_estimator} & \citep{Kraskov++2004} \\
    3NN & ITE: Shannon\_kNN\_k & \citep{KozachenkoLeonenko1987} \\
    sp1 & ITE: Shannon\_spacing\_V & \citep{Vasicek1976} \\
    sp2 & ITE: Shannon\_spacing\_Vb & \Citep{VanEs1992} \\
    sp3 & ITE: Shannon\_spacing\_Vpconst & \citep{EbrahimiPflughoeftSoofi1994} \\
    sp4 & ITE: Shannon\_spacing\_Vplin & \citep{EbrahimiPflughoeftSoofi1994} \\
    sp5 & ITE: Shannon\_spacing\_Vplin2 & \citep{EbrahimiPflughoeftSoofi1994} \\
    sp6 & ITE: Shannon\_spacing\_VKDE & \citep{NoughabiNoughabi2013} \\
    KDP & ITE: Shannon\_KDP & \citep{StowellPlumbley2009} \\ 
    PSD & ITE: Shannon\_PSD\_SzegoT & \citep{RamirezViaSantamariaCrespo2009,Gray2006} \\
    & & \citep{GrenanderSzego1958} \\
    EdE & ITE: Shannon\_Edgeworth & \citep{VanHulle2005} \\
    \hline
    Gau & based on \eref{eq:entropy_Gaussian} & \\
    ME1 & ITE: Shannon\_MaxEnt1 & \citep{Hyvarinen1997} \\
    ME2 & ITE: Shannon\_MaxEnt2 & \citep{Hyvarinen1997} \\
    \hline
  \end{tabular}
\end{table}

We also made use of various entropy estimators implemented in the Information
Theoretical Estimators (ITE) Toolbox \citep{Szabo2014}. The method \texttt{3NN} is
based on $k$-nearest neighbors with $k=3$, all \texttt{sp*} methods use Vasicek's
spacing method with various corrections, \texttt{KDP} uses k-d partitioning,
\texttt{PSD} uses the power spectral density representation and Szego's
theorem, \texttt{ME1} and \texttt{ME2} use the maximum entropy
distribution method, and \texttt{EdE} uses the Edgeworth expansion.
For more details, see the documentation of the \text{ITE} toolbox \citep{Szabo2014}.

\subsubsection{Independence testing: HSIC}

As covariance function for HSIC, we use the popular Gaussian kernel:
$$\kappa_{\ell} : (x,x') \mapsto \exp \left(-\frac{(x-x')^2}{\ell^2}\right),$$
with bandwidths selected by the median heuristic \citep{SchoelkopfSmola02}, i.e.,
we take 
$$\hat\ell(\B{u}) := \mathrm{median}\{ \norm{u_i - u_j} : 1 \le i < j \le N, \norm{u_i - u_j} \ne 0 \},$$
and similarly for $\hat \ell(\B{v})$. We also compare with a fixed bandwidth of 0.5.
As the product of two Gaussian kernels is characteristic, HSIC with such kernels
will detect any dependence asymptotically (see also Lemma~\ref{lemm:HSICzero} in Appendix~\ref{sec:consistency}),
at least when the bandwidths are fixed.

The $p$-value can either be estimated by using permutation, or can be approximated by a Gamma
approximation, as the mean and variance of the HSIC value under the null
hypothesis can also be estimated in closed form \citep{Gretton++2008}. In this work, we use the
Gamma approximation for the HSIC $p$-value.

The computation time of our na\"ive implementation of HSIC scales as $\C{O}(N^2)$. Using
incomplete Cholesky decompositions, one can obtain an accurate approximation in only
$\C{O}(N)$ \citep{JegelkaGretton2007}. However, the na\"ive implementation was fast enough
for our purpose.

\subsection{Data sets}

We will use both real-world and simulated data in order to evaluate the methods.
Here we give short descriptions and refer the reader to Appendix~\ref{sec:sim} and Appendix~\ref{sec:dataset}
for details.

\subsubsection{Real-world benchmark data}

The \texttt{CauseEffectPairs} (\texttt{CEP}) benchmark data set that we propose
in this work consists of different ``cause-effect pairs'', each one consisting 
of samples of a pair of statistically dependent random variables, where one 
variable is known to cause the other one.
It is an extension of the collection of the eight data sets that formed
the ``CauseEffectPairs'' task in the \emph{Causality Challenge \#2: Pot-Luck}
competition \citep{MooijJanzing_JMLR_10} which was performed as part of the NIPS
2008 Workshop on Causality \citep{NIPSCausalityChallenge2008}. 
Version \version\ of the {\tt CauseEffectPairs} collection that we present here consists of \nrpairs\ pairs, 
taken from \nrdatasets\ different data sets from various domains.
The \texttt{CEP} data are publicly available at \citep{MooijJanzingSchoelkopf2014}. 
Appendix~\ref{sec:dataset} contains a detailed description of each cause-effect
pair and a justification of what we believe to be the ground truth causal relations. Scatter plots
of the pairs are shown in Figure~\ref{fig:allpairs_CEP}.
In our experiments, we only considered the 95 out of 100 pairs that have one-dimensional variables,
i.e., we left out pairs 52--55 and 71.

\begin{figure}[t]
\centerline{\includegraphics[width=0.9\textwidth]{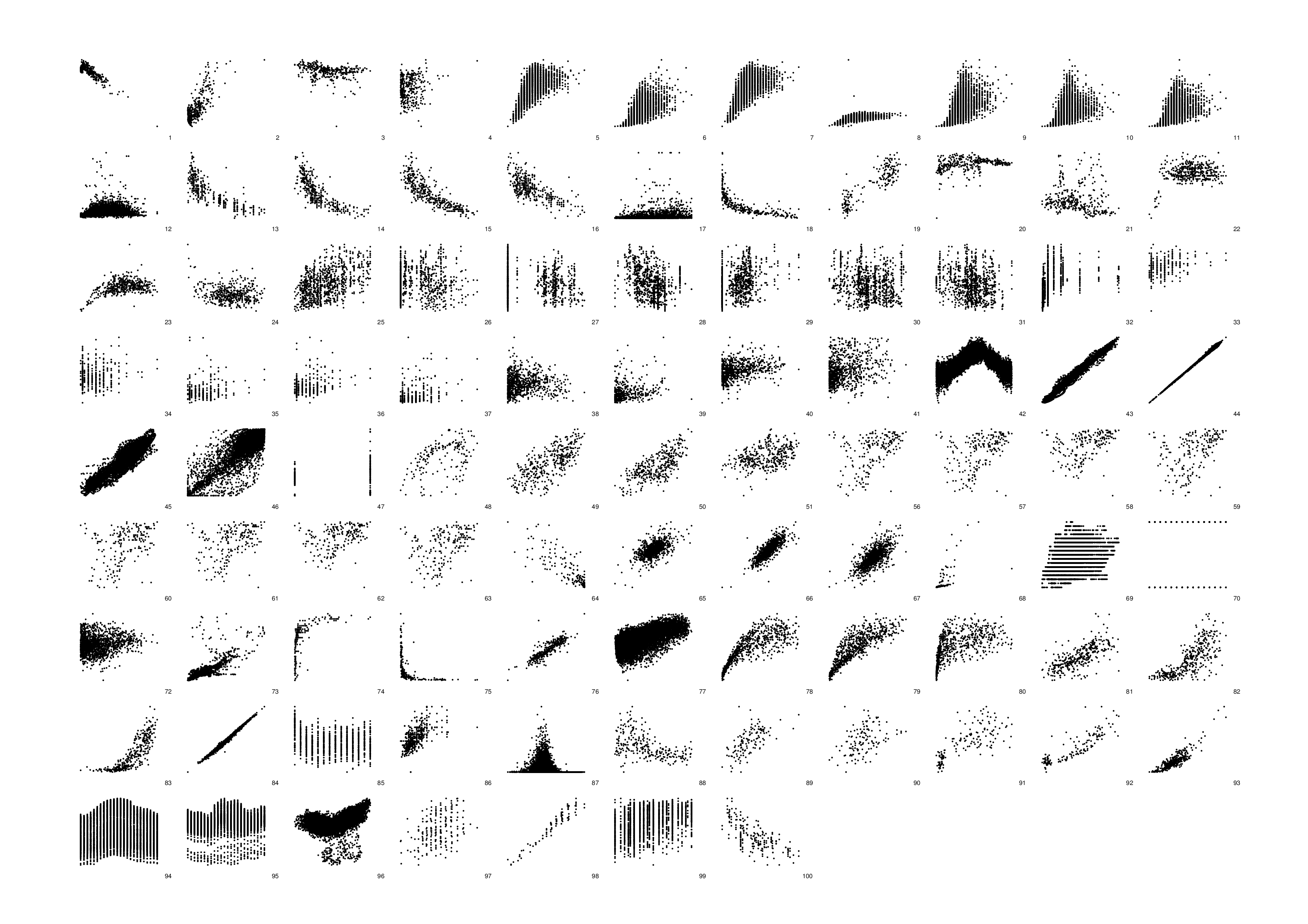}}
\caption{\label{fig:allpairs_CEP}Scatter plots of the cause-effect pairs in the \texttt{CauseEffectPairs} benchmark data. We only show the pairs for which both variables are one-dimensional.}
\end{figure}

\subsubsection{Simulated data}

As collecting real-world benchmark data is a tedious process (mostly because
the ground truths are unknown, and acquiring the necessary understanding of the
data-generating process in order to decide about the ground truth is not
straightforward), we also studied the performance of methods on simulated data
where we can control the data-generating process, and therefore can be certain
about the ground truth.

Simulating data can be done in many ways. It is not straightforward to simulate
data in a ``realistic'' way, e.g., in such a way that scatter plots of
simulated data look similar to those of the real-world data (see
Figure~\ref{fig:allpairs_CEP}). For reproducibility, we describe in Appendix~\ref{sec:sim}
in detail how the simulations were done. Here, we will just sketch
the main ideas.

We sample data from the following structural equation models. If we do not want to 
model a confounder, we use:
\begin{align*}
  & E_X \sim p_{E_X},\, E_Y \sim p_{E_Y} \\
  & X = f_X(E_X) \\
  & Y = f_Y(X,E_Y),
\end{align*}
and if we do want to include a confounder $Z$, we use:
\begin{align*}
  & E_X \sim p_{E_X},\, E_Y \sim p_{E_Y},\, E_Z \sim p_{E_Z} \\
  & Z = f_Z(E_Z) \\
  & X = f_X(E_X,E_Z) \\
  & Y = f_Y(X,E_Y,E_Z).
\end{align*}
Here, the noise distributions $p_{E_X}, p_{E_Y}, p_{E_Z}$ are randomly generated
distributions, and the causal mechanisms $f_Z, f_X, f_Y$ are randomly generated functions.
Sampling the random distributions for a noise variable $E_X$ (and similarly for $E_Y$ and $E_Z$) 
is done by mapping a standard-normal distribution through a random function, which we sample 
from a Gaussian Process. The causal mechanism $f_X$ (and similarly $f_Y$ and $f_Z$) is
drawn from a Gaussian Process as well. After sampling the noise distributions and the functional
relations, we generate data for $X,Y,Z$. Finally, Gaussian measurement noise is added to both 
$X$ and $Y$.

By controlling various hyperparameters, we can control certain aspects of the
data generation process. We considered four different scenarios. \texttt{SIM}
is the default scenario without confounders. \texttt{SIM-c} includes a
one-dimensional confounder, whose influences on $X$ and $Y$ are typically
equally strong as the influence of $X$ on $Y$. The setting \texttt{SIM-ln}
has low noise levels, and we would expect IGCI to work well in this scenario.
Finally, \texttt{SIM-G} has approximate Gaussian distributions for the cause
$X$ and approximately additive Gaussian noise (on top of a nonlinear
relationship between cause and effect); we expect that methods which make these
Gaussianity assumptions will work well in this scenario. Scatter plots of the
simulated data are shown in Figures~\ref{fig:allpairs_sim}--\ref{fig:allpairs_simgauss}.

\begin{figure}[p]
\centerline{\includegraphics[width=0.9\textwidth]{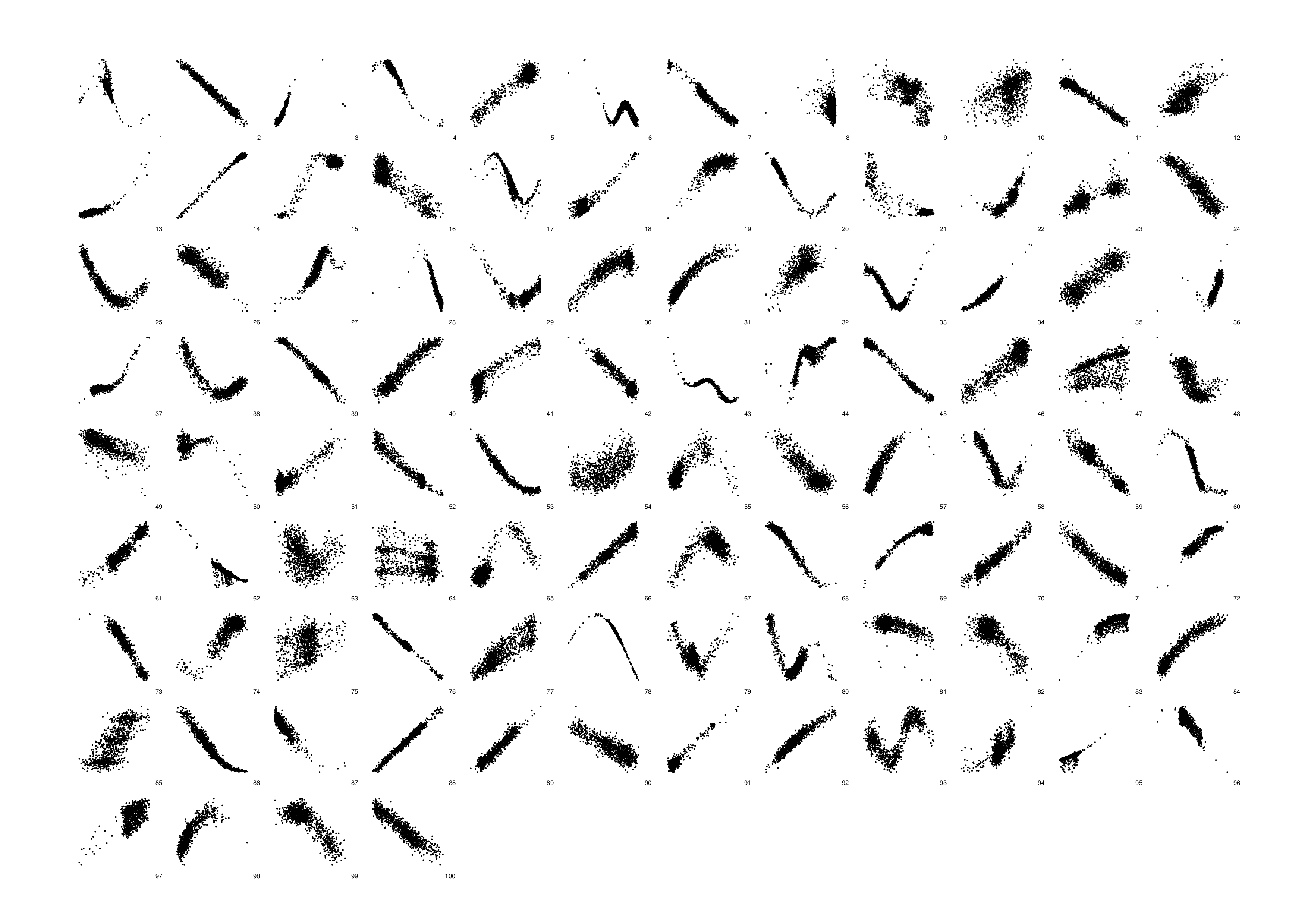}}
\caption{\label{fig:allpairs_sim}Scatter plots of the cause-effect pairs in simulation scenario \texttt{SIM}.}
\end{figure}
\begin{figure}[p]
\centerline{\includegraphics[width=0.9\textwidth]{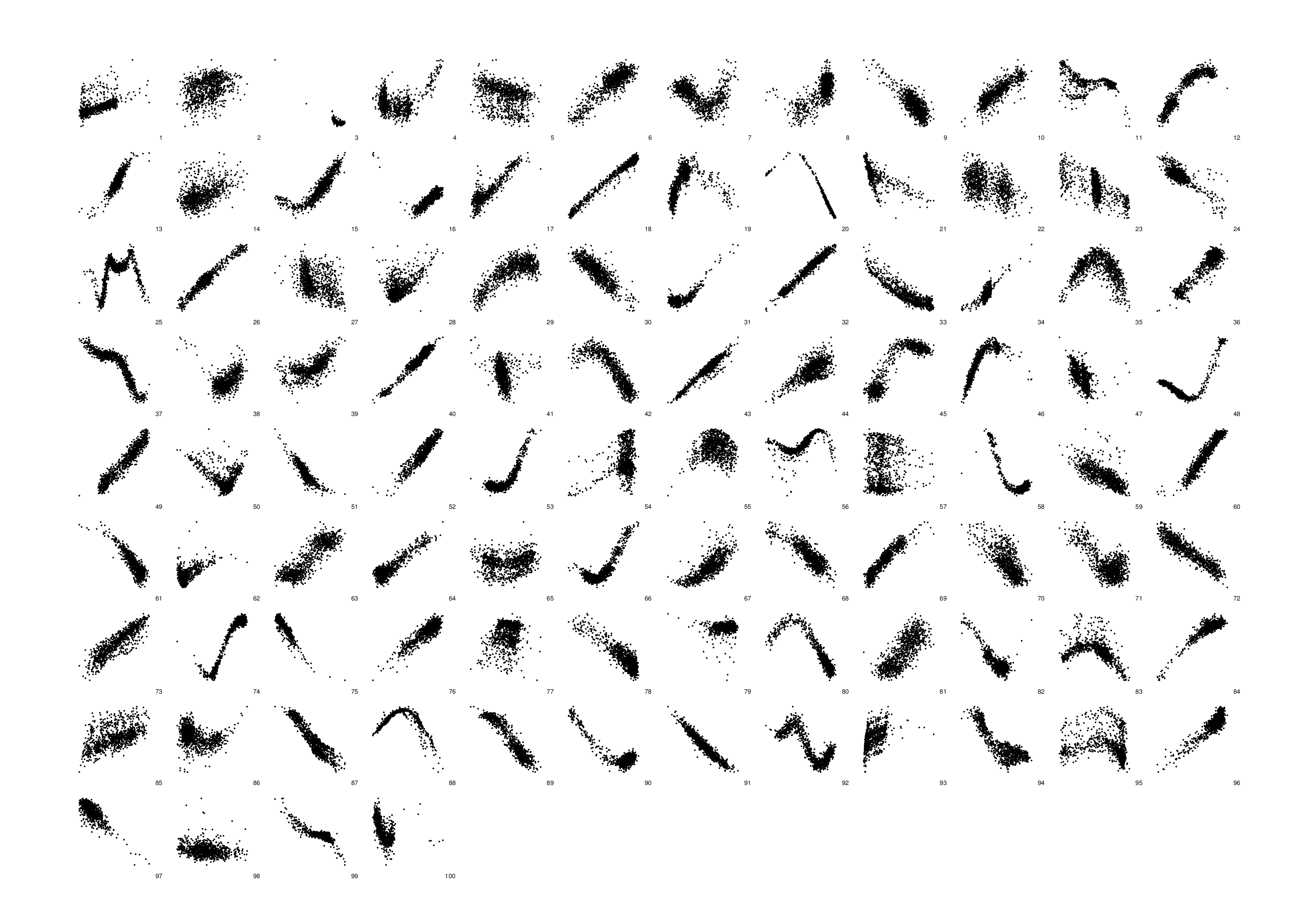}}
\caption{\label{fig:allpairs_simconf}Scatter plots of the cause-effect pairs in simulation scenario \texttt{SIM-c}.}
\end{figure}
\begin{figure}[p]
\centerline{\includegraphics[width=0.9\textwidth]{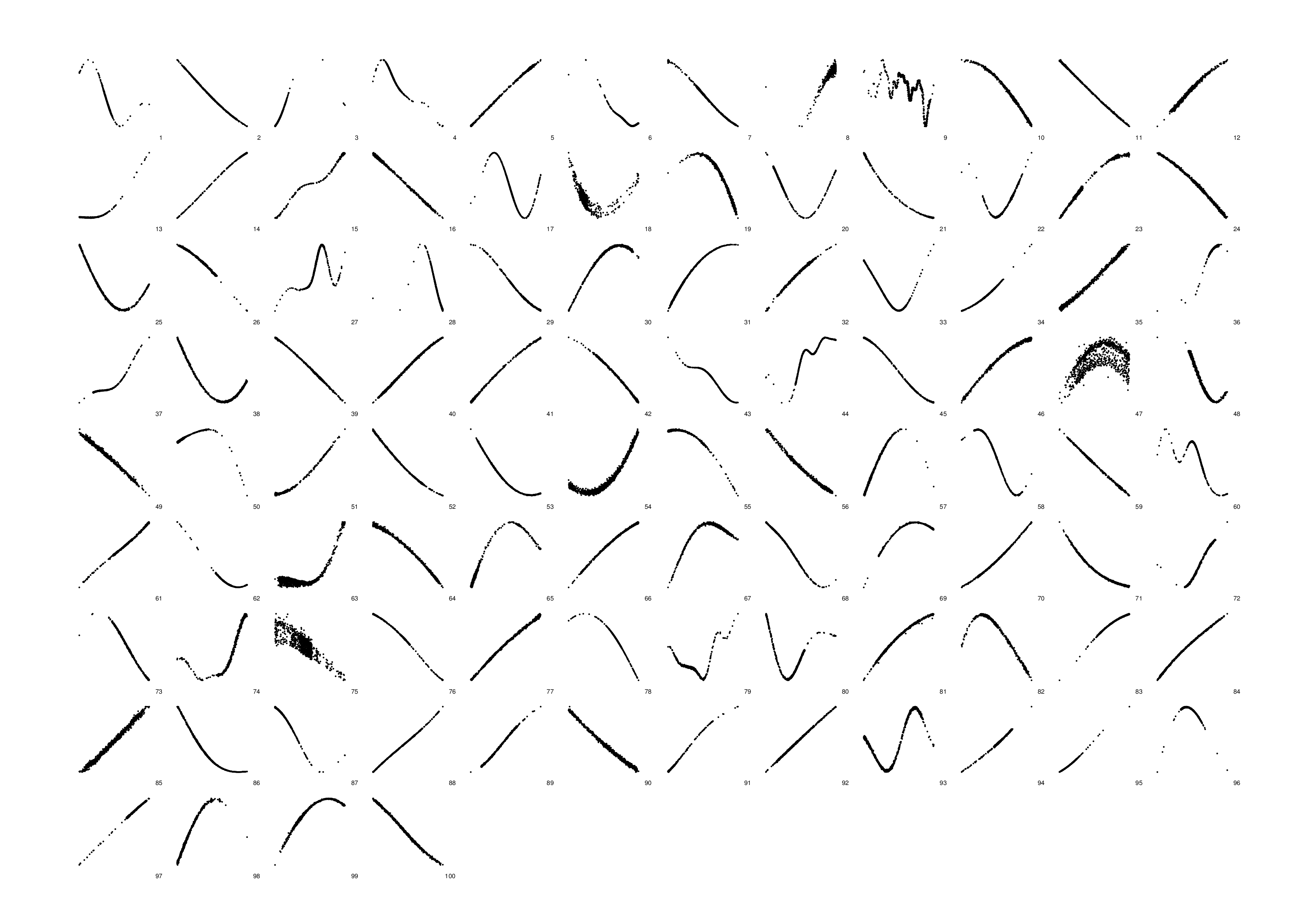}}
\caption{\label{fig:allpairs_simlownoise}Scatter plots of the cause-effect pairs in simulation scenario \texttt{SIM-ln}.}
\end{figure}
\begin{figure}[p]
\centerline{\includegraphics[width=0.9\textwidth]{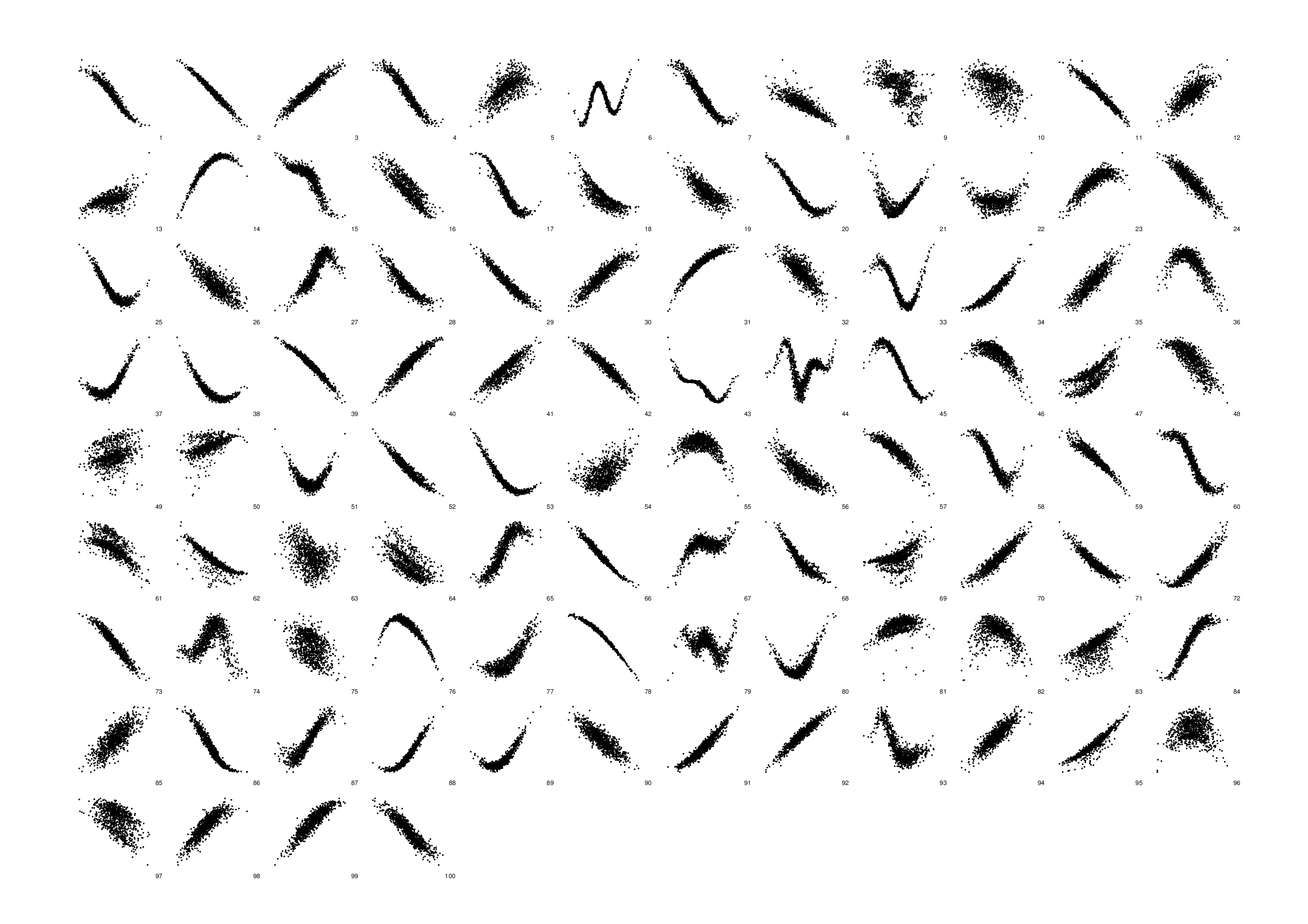}}
\caption{\label{fig:allpairs_simgauss}Scatter plots of the cause-effect pairs in simulation scenario \texttt{SIM-G}.}
\end{figure}

\subsection{Preprocessing and Perturbations}

The following preprocessing was applied to each pair $(X,Y)$.
Both variables $X$ and $Y$ were standardized (i.e., an affine transformation is
applied on both variables such that their empirical mean becomes 0, and their
empirical standard deviation becomes 1). 
In order to study the effect of discretization and other small perturbations of the data,
one of these four perturbations was applied:
\begin{description}
  \item[unperturbed]: No perturbation is applied.
  \item[discretized]: Discretize the variable that has the most unique values such that after discretization, it has as many unique values as the other variable.
    The discretization procedure repeatedly merges those values for which the sum of the absolute error that would be caused by the merge is minimized.
  \item[undiscretized]: ``Undiscretize'' both variables $X$ and $Y$. The undiscretization procedure adds noise to each data point $z$, drawn uniformly from the
  interval $[0,z'-z]$, where $z'$ is the smallest value $z' > z$ that occurs in the data.
  \item[small noise]: Add tiny independent Gaussian noise to both $X$ and $Y$ (with mean 0 and standard deviation $10^{-9}$).
\end{description}
Ideally, a causal discovery method should be robust against these and other small perturbations of the data.

\subsection{Evaluation Measures}

We evaluate the performance of the methods in two different ways:
\begin{description}
  \item[forced-decision]: given a sample of a pair $(X,Y)$ the methods have to decide either $X \to Y$ or $Y \to X$; in this setting we evaluate the accuracy of these decisions;
  \item[ranked-decision]: we used the scores $\hat C_{X\to Y}$ and $\hat C_{Y \to X}$ to construct heuristic confidence estimates that are used to rank the decisions; we then produced receiver-operating characteristic (ROC) curves and used the area under the curve (AUC) as performance measure.
\end{description}
Some methods have an advantage in the second setting, as the scores on which their decisions are based yield a reasonably accurate ranking of the decisions. By only taking the most confident (highest ranked) decisions, the accuracy of these decisions increases, and this leads to a higher AUC than for random confidence estimates. Which of the two evaluation measures (accuracy or AUC) is the most relevant depends on the application.\footnote{In earlier work, we have reported accuracy-decision rate curves instead of ROC curves. However, it is easy to visually overinterpret the significance of such a curve in the low decision-rate region. In addition, AUC was used as the evaluation measure in \citet{Guyon++Challenges}. A slight disadvantage of ROC curves is that they introduce an asymmetry between ``positives'' and ``negatives'', whereas for our task, there is no such asymmetry: we can easily transform a positive into a negative and vice versa by swapping the variables $X$ and $Y$. Therefore, ``accuracy'' is a more natural measure than ``precision'' in our setting. We mitigate this problem by balancing the class labels by swapping $X$ and $Y$ variables for a subset of the pairs.}

\subsubsection{Weights}

For the \texttt{CEP} data, we cannot always consider pairs that come from the
same data set as independent.  For example, in the case of the \texttt{Abalone}
data set \citep{UCI_ML_repository,NashSellersTalbotCawthornFord94}, the variables ``whole weight'', ``shucked weight'', ``viscera
weight'', ``shell weight'' are strongly correlated. Considering the four pairs
(age, whole weight), (age, shucked weight), etc., as independent could
introduce a bias. We (conservatively) correct for that bias by downweighting these
pairs. In general, we chose the weights
such that the weights of all pairs from the same data set are equal and sum to one. 
For the real-world cause-effect pairs, the weights are specified in Table~\ref{tab:CEP_pairs}.
For the simulated pairs, we do not use weighting.

\subsubsection{Forced-decision: evaluation of accuracy}

In the ``forced-decision'' setting, we calculate the weighted accuracy of a method in the following way:
\begin{equation}\label{eq:accuracy}
  \mathrm{accuracy} = \frac{\sum_{m=1}^M w_m \delta_{\hat d_m, d_m}}{\sum_{m=1}^M w_m},
\end{equation}
where $d_m$ is the true causal direction for the $m$'th pair (either ``$\leftarrow$'' or ``$\rightarrow$''), $\hat d_m$ is the estimated direction
(one of ``$\leftarrow$'', ``$\rightarrow$'', and ``?''), and $w_m$ is the \emph{weight} of the pair.
Note that we are only awarding correct decisions, i.e., if no estimate is given ($d_m = \text{``?''}$), this will negatively affect the accuracy.
We calculate confidence intervals assuming a binomial distribution using the method by \citet{ClopperPearson1934}.

\subsubsection{Ranked-decision: evaluation of AUC}

To construct an ROC curve, we need to rank the decisions based on some heuristic estimate of confidence.
For most methods we simply use:
\begin{equation}\label{eq:heuristic_default}
  \hat S := -\hat C_{X \to Y} + \hat C_{Y \to X}.
\end{equation}
The interpretation is that the higher $\hat S$, the more likely $X \to Y$, and the lower $\hat S$, the more likely $Y \to X$.
For \texttt{ANM-pHSIC}, we use a different heuristic:
\begin{equation}\label{eq:heuristic_pHSIC}
  \hat S := \begin{cases}
    \frac{-1}{\min\{\hat C_{X \to Y}, \hat C_{Y \to X}\}} & \text{ if }\hat C_{X \to Y} < \hat C_{Y \to X} \\
    \frac{1}{\min\{\hat C_{X \to Y}, \hat C_{Y \to X}\}} & \text{ if }\hat C_{X \to Y} > \hat C_{Y \to X}, \\
  \end{cases}
\end{equation}
and for \texttt{ANM-HSIC}, we use:
\begin{equation}\label{eq:heuristic_HSIC}
  \hat S := \begin{cases}
    \frac{-1}{1 - \min\{\hat C_{X \to Y}, \hat C_{Y \to X}\}} & \text{ if }\hat C_{X \to Y} < \hat C_{Y \to X} \\
    \frac{1}{1 - \min\{\hat C_{X \to Y}, \hat C_{Y \to X}\}} & \text{ if }\hat C_{X \to Y} > \hat C_{Y \to X}. \\
  \end{cases}
\end{equation}

In the ``ranked-decision'' setting, we also use weights to calculate weighted recall (depending on a threshold $\theta$)
\begin{equation}\label{eq:recall}
  \mathrm{recall}(\theta) = \frac{\sum_{m=1}^M w_m \id_{\hat S_m > \theta} \delta_{d_m, \rightarrow}}{\sum_{m=1}^M w_m \delta_{d_m, \rightarrow}},
\end{equation}
where $\hat S_m$ is the heuristic score of the $m$'th pair (high values indicating high likelihood that $d_m = {}\rightarrow$, low values indicating high likelihood that $d_m = {}\leftarrow$), and the weighted precision (also depending on $\theta$)
\begin{equation}\label{eq:precision}
  \mathrm{precision}(\theta) = \frac{\sum_{m=1}^M w_m \id_{\hat S_m > \theta} \delta_{d_m, \rightarrow}}{\sum_{m=1}^M w_m \id_{\hat S_m > \theta}}.
\end{equation}
We use the \texttt{MatLab} routine \texttt{perfcurve} to produce (weighted) ROC curves and to estimate weighted AUC and confidence intervals for the weighted AUC by bootstrapping.\footnote{We used the ``percentile method'' (\texttt{BootType} = \texttt{'per'}) as the default method (``bias corrected and accelerated percentile method'') sometimes yielded an estimated AUC that fell outside the estimated 95\% confidence interval of the AUC.}

\section{Results}\label{sec:results} 

In this section, we report the results of the experiments that we carried out in order to evaluate the performance of various methods. We plot the accuracies
and AUCs as box plots, indicating the estimated (weighted) accuracy or AUC, the corresponding $68\%$ confidence interval, and the $95\%$ confidence interval.
If there were pairs for which no decision was taken because of some failure, the number of nondecisions is indicated on the corresponding boxplot.
The methods that we evaluated are listed in Table~\ref{tab:methods}. Computation times are reported in Appendix~\ref{sec:computation_times}.

\begin{table}[t]
  \centering
  \caption{\label{tab:methods}The methods that are evaluated in this work.}
  \small
  \begin{tabular}{lllll}
    Name & Algorithm & Score & Heuristic & Details \\
    \hline
    \texttt{ANM-pHSIC} & \ref{alg:bivariateANM} & \eref{eq:ANM_score_pHSIC} & \eref{eq:heuristic_pHSIC} & Data recycling, adaptive kernel bandwidth \\
    \texttt{ANM-HSIC} & \ref{alg:bivariateANM} & \eref{eq:ANM_score_HSIC} & \eref{eq:heuristic_HSIC} & Data recycling, adaptive kernel bandwidth \\
    \texttt{ANM-HSIC-ds} & \ref{alg:bivariateANM} & \eref{eq:ANM_score_HSIC} & \eref{eq:heuristic_HSIC} & Data splitting, adaptive kernel bandwidth \\
    \texttt{ANM-HSIC-fk} & \ref{alg:bivariateANM} & \eref{eq:ANM_score_HSIC} & \eref{eq:heuristic_HSIC} & Data recycling, fixed kernel bandwidth (0.5) \\
    \texttt{ANM-HSIC-ds-fk} & \ref{alg:bivariateANM} & \eref{eq:ANM_score_HSIC} & \eref{eq:heuristic_HSIC} & Data splitting, fixed kernel bandwidth (0.5) \\
    \texttt{ANM-ent-\dots} & \ref{alg:bivariateANM} & \eref{eq:ANM_score_entropy} & \eref{eq:heuristic_default} & Data recycling, entropy estimators from Table~\ref{tab:ITE} \\
    \texttt{ANM-Gauss} & \ref{alg:bivariateANM} & \eref{eq:ANM_score_Gauss} & \eref{eq:heuristic_default} & Data recycling \\
    \texttt{ANM-FN} & \ref{alg:bivariateANM_MML} & \eref{eq:ANM_score_FriedmanNachman} & \eref{eq:heuristic_default}  & \\
    \texttt{ANM-MML} & \ref{alg:bivariateANM_MML} & \eref{eq:ANM_score_MML} & \eref{eq:heuristic_default}  & \\
    \hline
    \texttt{IGCI-slope} & \ref{alg:IGCI} & \eref{eq:IGCI_score_slope} & \eref{eq:heuristic_default} & \\
    \texttt{IGCI-slope++} & \ref{alg:IGCI} & \eref{eq:IGCI_score_slope++} & \eref{eq:heuristic_default} & \\
    \texttt{IGCI-ent-\dots} & \ref{alg:IGCI} & \eref{eq:IGCI_score_entropy} & \eref{eq:heuristic_default} & Entropy estimators from Table~\ref{tab:ITE} \\
    \hline
  \end{tabular}
\end{table}

\subsection{Additive Noise Models}

We start by reporting the results for methods that exploit additivity of the noise. Figure~\ref{fig:acc_ANM_all_data} shows the performance of all ANM
methods on different unperturbed data sets, i.e., the \texttt{CEP} benchmark and various simulated data sets. Figure~\ref{fig:acc_ANM_all_CEP}
shows the performance of the same methods on different perturbations of the \texttt{CEP} benchmark data. 
The six variants \texttt{sp1},\dots,\texttt{sp6} of the spacing estimators perform very similarly, so we 
show only the results for \texttt{ANM-ent-sp1}. For the ``undiscretized'' perturbed version of the \texttt{CEP}
benchmark data, GP regression failed in one case because of a numerical problem, which explains the failures
across all methods in Figure~\ref{fig:acc_ANM_all_CEP} for that case.


\begin{figure}[p]
\centering
\includegraphics[scale=0.9]{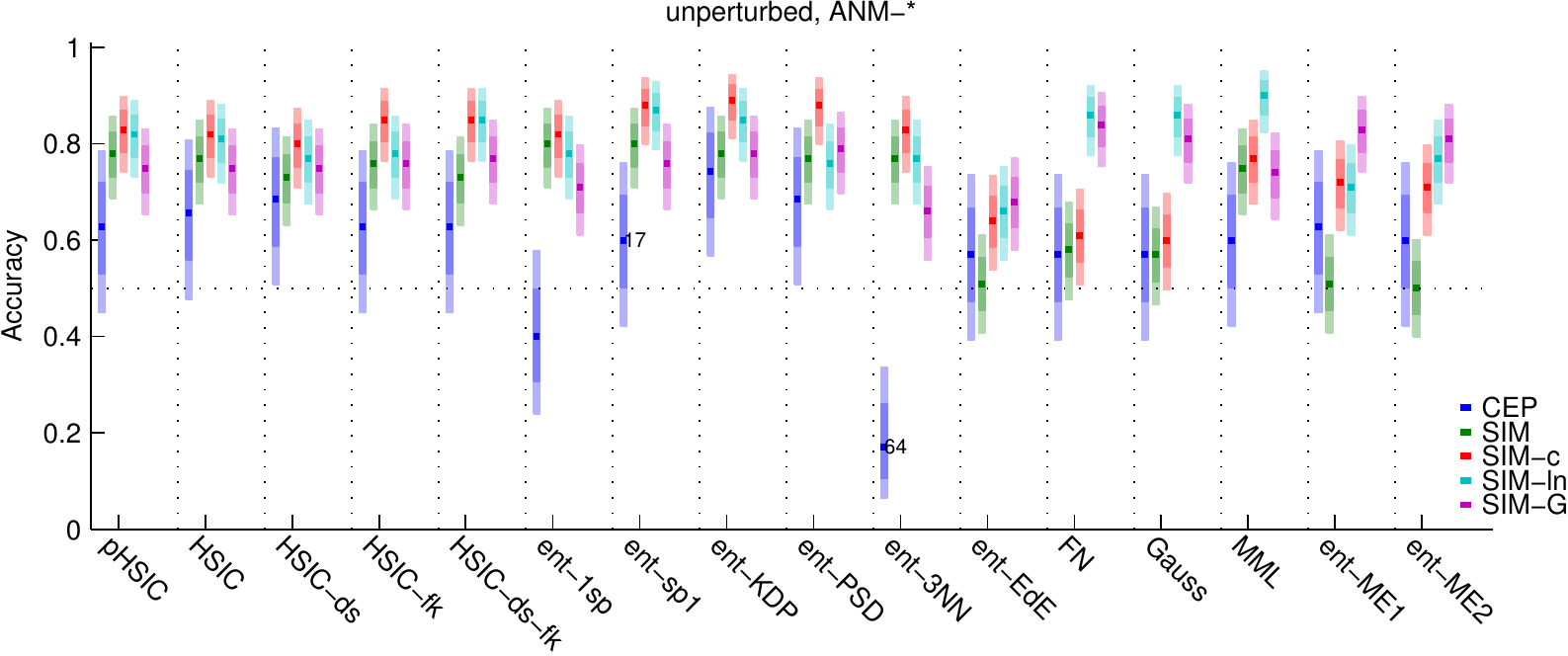}\\[\baselineskip]
\includegraphics[scale=0.9]{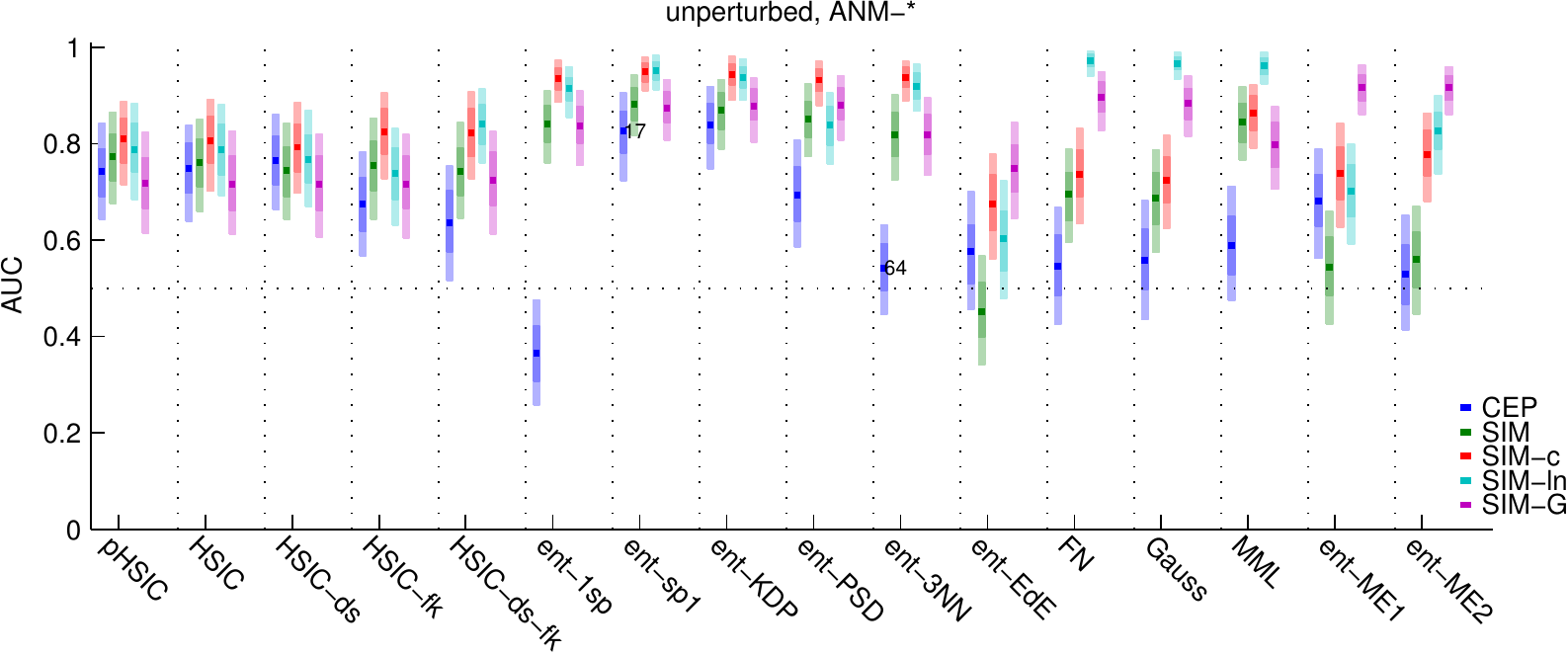}
\caption{\label{fig:acc_ANM_all_data}Accuracies (top) and AUCs (bottom) of various ANM methods on different (unperturbed) data sets. For the variants of the spacing estimator, only the results for \texttt{sp1} are shown, as results for \texttt{sp2},\dots,\texttt{sp6} were similar.}
\end{figure}

\begin{figure}[p]
\centering
\includegraphics[scale=0.9]{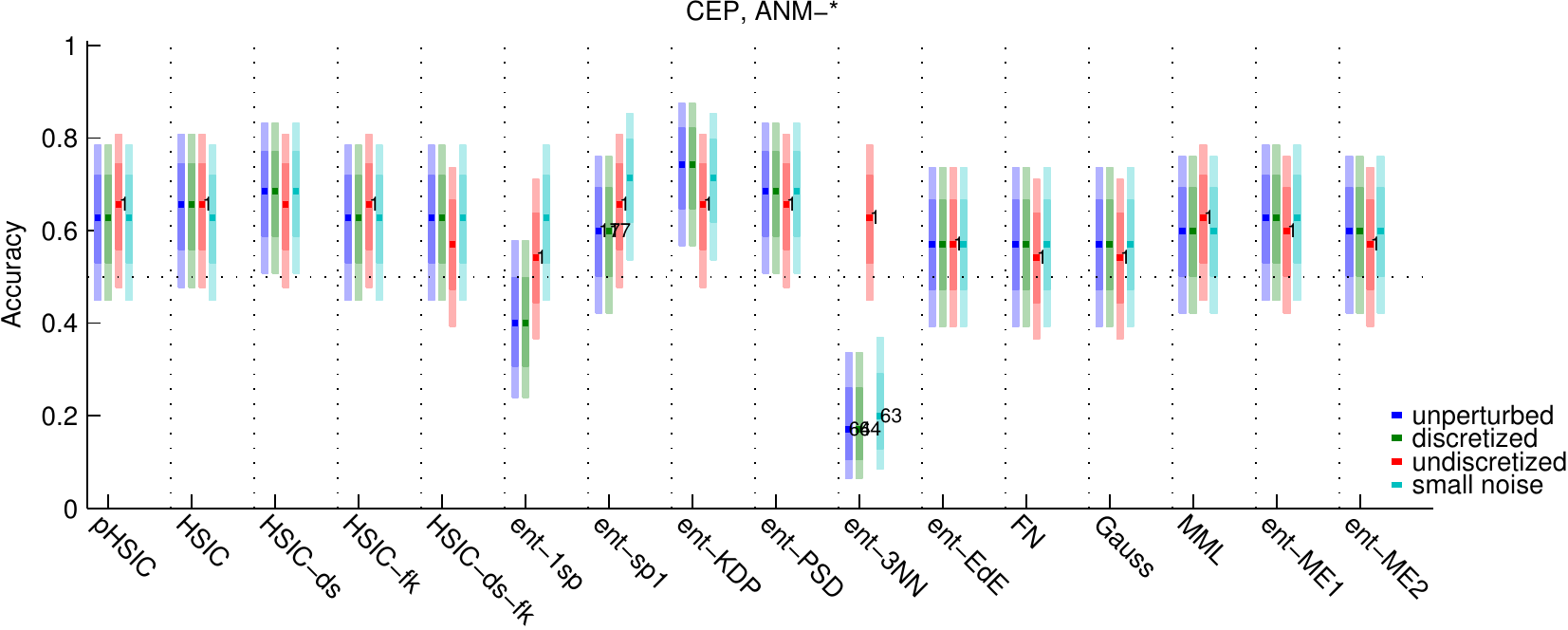}\\[\baselineskip]
\includegraphics[scale=0.9]{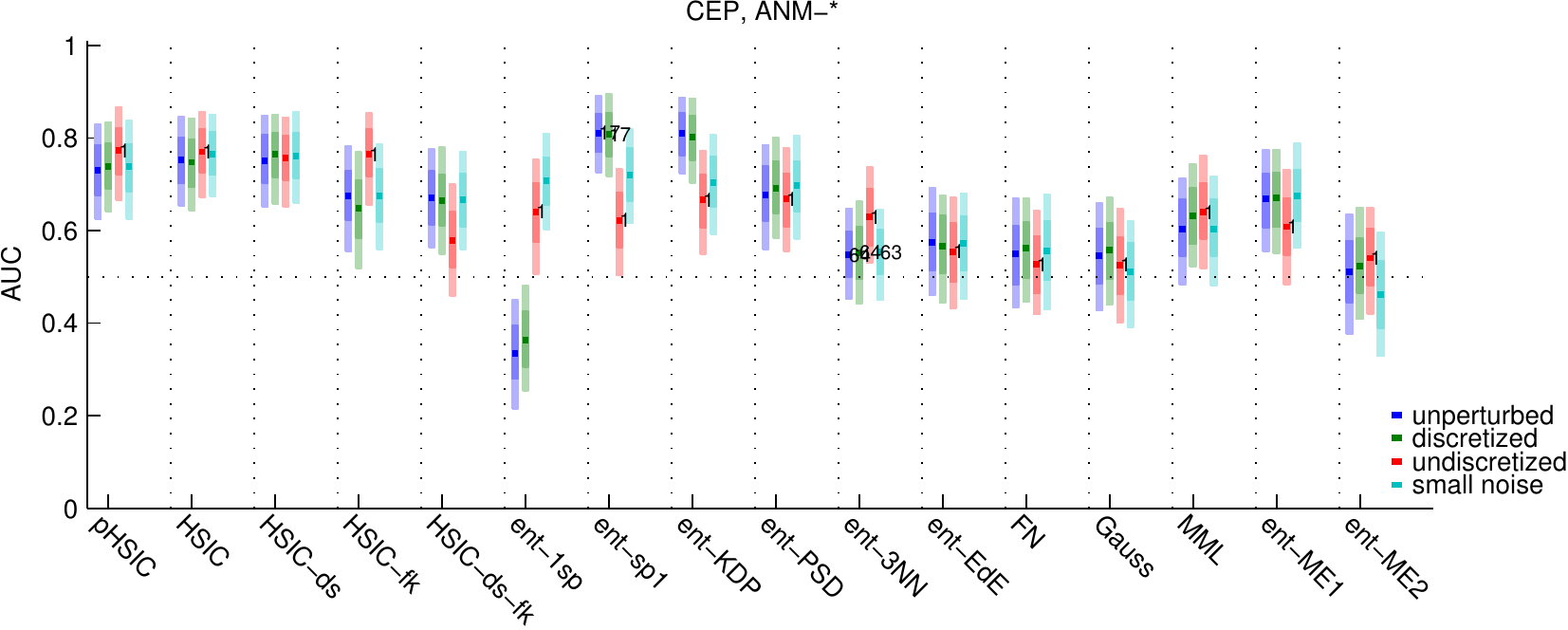}
\caption{\label{fig:acc_ANM_all_CEP}Accuracies (top) and AUCs (bottom) of various ANM methods on different perturbations of the \texttt{CEP} benchmark data. For the variants of the spacing estimator, only the results for \texttt{sp1} are shown, as results for \texttt{sp2},\dots,\texttt{sp6} were similar.}
\end{figure}

\subsubsection{HSIC-based scores}
As we see in Figure~\ref{fig:acc_ANM_all_data} and Figure~\ref{fig:acc_ANM_all_CEP}, 
the ANM methods that use HSIC perform reasonably well on all data sets, obtaining 
accuracies between 63\% and 85\%. Note that the simulated data (and also the
real-world data) deviate in at least three ways from the assumptions made by the additive
noise method: (i) the noise is not additive, (ii) a confounder can be present, 
and (iii) additional measurement noise was added to both cause and effect. 
Moreover, the results turn out to be robust
against small perturbations of the data. This shows that the additive noise
method can perform reasonably well, even in case of model misspecification.


The results of \texttt{ANM-pHSIC} and \texttt{ANM-HSIC} are very similar. The
influence of various implementation details on performance is small. On the
\texttt{CEP} benchmark, data-splitting (\texttt{ANM-HSIC-ds}) slightly increases 
accuracy, whereas using a fixed kernel (\texttt{ANM-HSIC-fk}, \texttt{ANM-HSIC-ds-fk}) 
slightly lowers AUC. Generally, the differences in performance are small and not statistically
significant. The variant \texttt{ANM-HSIC-ds-fk} is proved to be consistent in Appendix~\ref{sec:consistency}. 
If standard GP regression satisfies the property in \eref{eq:regression_suitable}, then \texttt{ANM-HSIC-fk} 
is also consistent.

\subsubsection{Entropy-based scores}

For the entropy-based score \eref{eq:ANM_score_entropy}, we see in Figure~\ref{fig:acc_ANM_all_data} and Figure~\ref{fig:acc_ANM_all_CEP}
that the results depend strongly on which entropy estimator is used. 

All (nonparametric) entropy estimators (\texttt{1sp}, \texttt{3NN},
\texttt{sp}{\it i}, \texttt{KDP}, \texttt{PSD}) perform well on simulated data,
with the exception of \texttt{EdE}. On the \texttt{CEP} benchmark on
the other hand, the performance varies greatly over estimators. One of the reasons
for this are discretization effects. Indeed, the differential entropy of a variable
that can take only a finite number of values is $-\infty$. The way in which
differential entropy estimators treat values that occur multiple times differs,
and this can have a large influence on the estimated entropy. For example,
\texttt{1sp} simply ignores values that occur more than once, which
leads to a performance that is below chance level on the CEP data. \texttt{3NN}
returns $-\infty$ (for both $\hat C_{X\to Y}$ and $\hat C_{Y\to X}$) in the majority of the pairs in the \texttt{CEP} benchmark
and therefore often cannot decide. The spacing estimators \texttt{sp}{\it i} also return $-\infty$ in quite a few cases. 
The only (nonparametric) entropy-based ANM methods that perform well on both the
\texttt{CEP} benchmark data and the simulated data are \texttt{ANM-ent-KDP} and
\texttt{ANM-ent-PSD}. Of these two methods, \texttt{ANM-ent-PSD} seems more
robust under perturbations than \texttt{ANM-ent-KDP}, and can compete with the
HSIC-based methods.

\subsubsection{Other scores}

Consider now the results for the ``parametric'' entropy estimators 
(\texttt{ANM-Gauss}, \texttt{ANM-ent-ME1}, \texttt{ANM-ent-ME2}), the empirical-Bayes method 
\texttt{ANM-FN}, and the MML method \texttt{ANM-MML}.

First, note that \texttt{ANM-Gauss} and \texttt{ANM-FN} perform very similarly. This means that the difference 
between these two scores (i.e., the complexity measure of the regression function, see also Appendix~\ref{sec:gp}) does not outweigh the common 
part (the likelihood) of these two scores. Both these scores do not perform much better than chance on the \texttt{CEP}
data, probably because the Gaussianity assumption is typically violated in real data. They do obtain 
high accuracies and AUCs for the \texttt{SIM-ln} and \texttt{SIM-G} scenarios. For \texttt{SIM-G} this is to be expected,
as the assumption that the cause has a Gaussian distribution is satisfied in that scenario. For
\texttt{SIM-ln} it is not evident why these scores perform so well---it could be that the noise is close to
additive and Gaussian in that scenario. 

The related score \texttt{ANM-MML}, which employs a more sophisticated
complexity measure for the distribution of the cause, performs better on 
the two simulation settings \texttt{SIM} and \texttt{SIM-c}. However,
\texttt{ANM-MML} performs worse in the \texttt{SIM-G} scenario, which is
probably due to a higher variance of the MML complexity measure compared with
the simple Gaussian entropy measure. This is in line with expectations.
However, performance of \texttt{ANM-MML} is hardly better than chance on
the \texttt{CEP} data. In particular, the AUC of \texttt{ANM-MML} is 
worse than that of \texttt{ANM-pHSIC}.

The parametric entropy estimators \texttt{ME1} and \texttt{ME2} do
not perform very well on the \texttt{SIM} data, although their
performance on the other simulated data sets (in particular \texttt{SIM-G}) is
good. The reasons for this behaviour are not understood; we speculate that the
parametric assumptions made by these estimators match the actual distribution
of the data in these particular simulation settings quite well.
The accuracy and AUC of \texttt{ANM-ent-ME1} and \texttt{ANM-ent-ME2} on the 
\texttt{CEP} data are lower than those of \texttt{ANM-pHSIC}.

\subsection{Information Geometric Causal Inference}

Here we report the results of the evaluation of different IGCI variants. Figure~\ref{fig:acc_IGCI_uniform_all_data} shows the performance of all the IGCI variants on different
(unperturbed) data sets, the \texttt{CEP} benchmark and four different simulation settings, using the uniform base measure. Figure~\ref{fig:acc_IGCI_Gaussian_all_data} shows the same for the Gaussian base measure. Figure~\ref{fig:acc_IGCI_uniform_all_CEP} shows the performance of the
IGCI methods on different perturbations of the \texttt{CEP} benchmark, using the uniform base measure, and Figure~\ref{fig:acc_IGCI_Gaussian_all_CEP} for the Gaussian base measure.
Again, the six variants \texttt{sp1},\dots,\texttt{sp6} of the spacing estimators perform very similarly, so we 
show only the results for \texttt{IGCI-ent-sp1}. 

\begin{figure}
\centering
\includegraphics[scale=0.9]{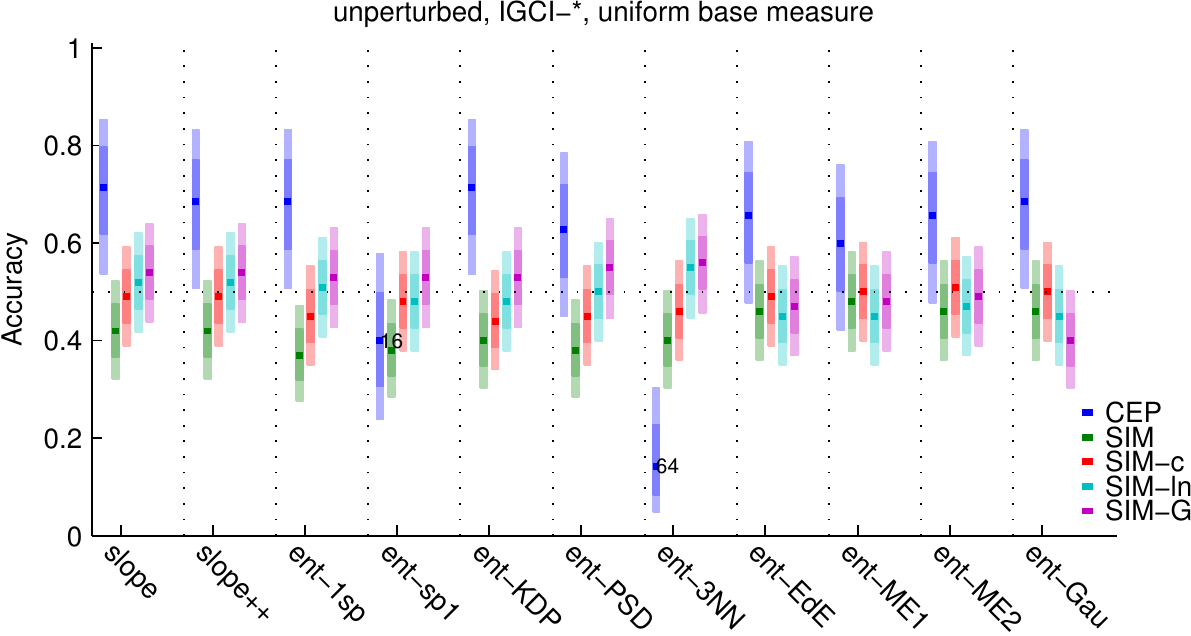}\\[\baselineskip]
\includegraphics[scale=0.9]{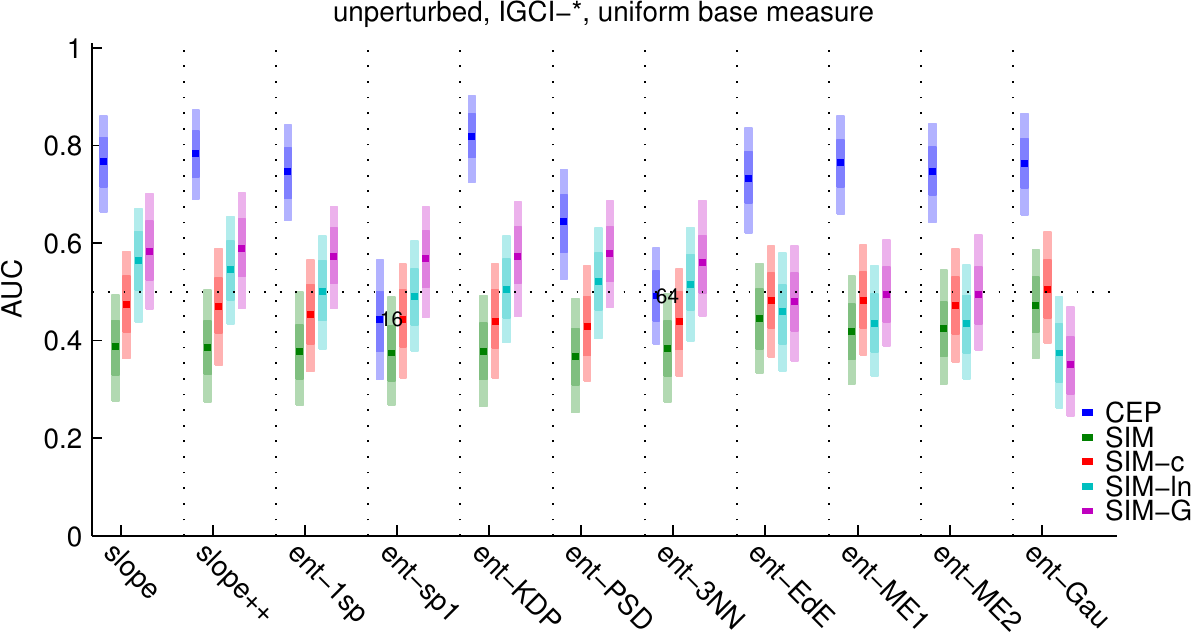}
\caption{\label{fig:acc_IGCI_uniform_all_data}Accuracies (top) and AUCs (bottom) for various IGCI methods using the uniform base measure on different (unperturbed) data sets.}
\end{figure}

\begin{figure}
\centering
\includegraphics[scale=0.9]{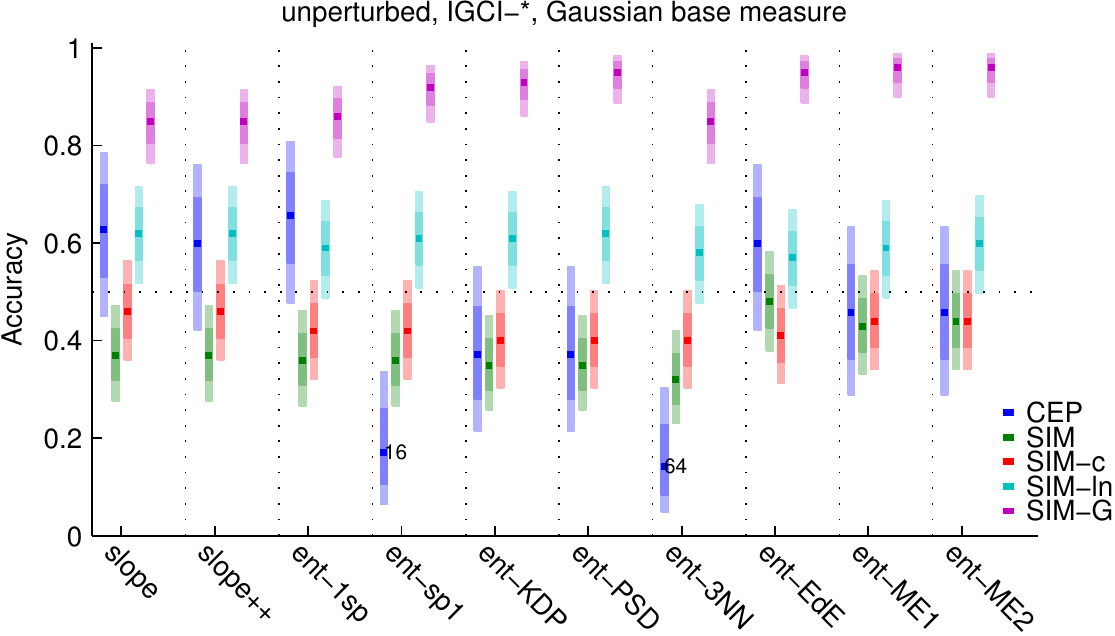}\\[\baselineskip]
\includegraphics[scale=0.9]{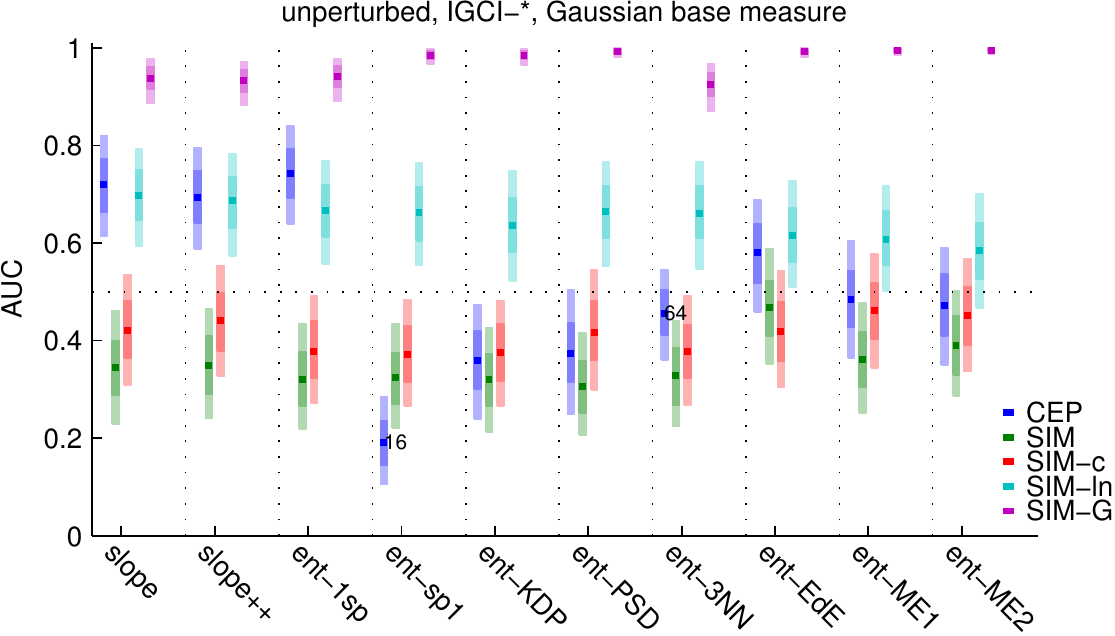}
\caption{\label{fig:acc_IGCI_Gaussian_all_data}Accuracies (top) and AUCs (bottom)  for various IGCI methods using the Gaussian base measure on different (unperturbed) data sets.}
\end{figure}

\begin{figure}
\centering
\includegraphics[scale=0.9]{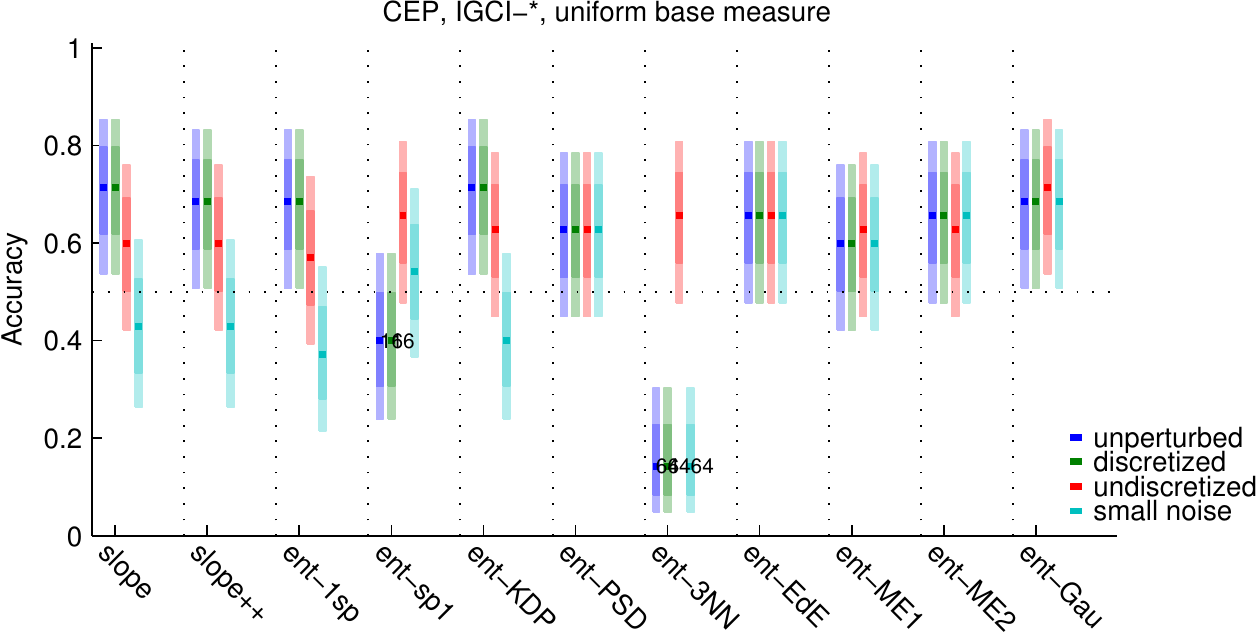}\\[\baselineskip]
\includegraphics[scale=0.9]{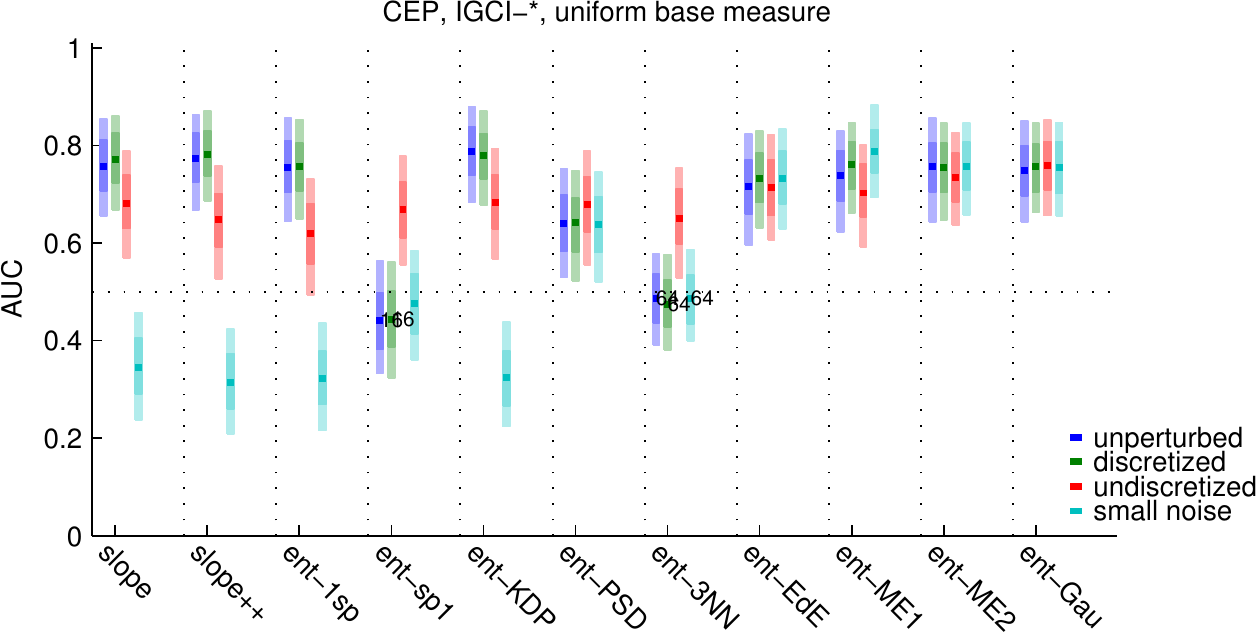}
\caption{\label{fig:acc_IGCI_uniform_all_CEP}Accuracies (top) and AUCs (bottom)  for various IGCI methods using the uniform base measure on different perturbations of the \texttt{CEP} benchmark data.}
\end{figure}

\begin{figure}
\centering
\includegraphics[scale=0.9]{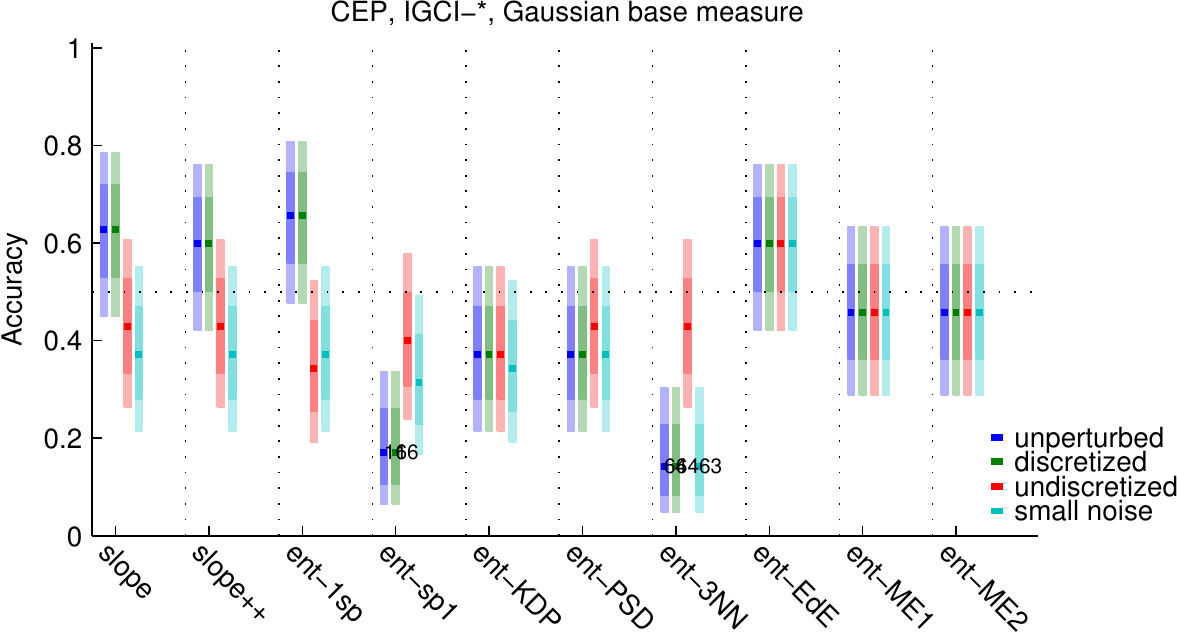}\\[\baselineskip]
\includegraphics[scale=0.9]{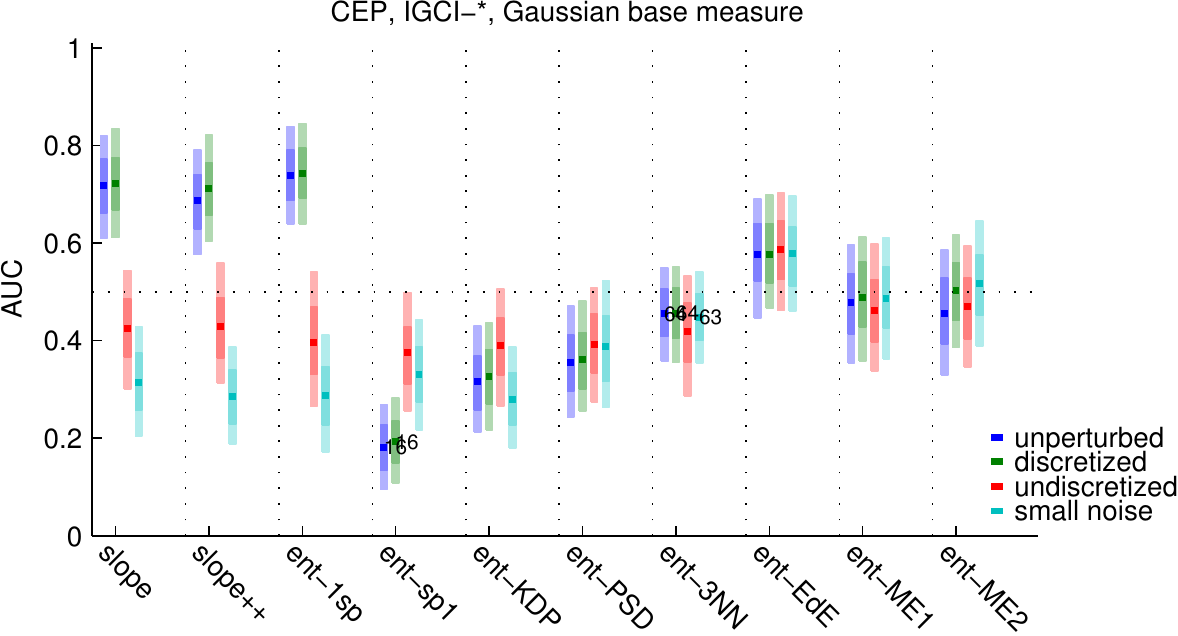}
\caption{\label{fig:acc_IGCI_Gaussian_all_CEP}Accuracies (top) and AUCs (bottom)  for various IGCI methods using the Gaussian base measure on different perturbations of the \texttt{CEP} benchmark data.}
\end{figure}

Let us first look at the performance on simulated data.  Note that none of the
IGCI methods performs well on the simulated data when using the uniform base
measure.  A very different picture emerges when using the Gaussian base
measure: here the performance covers a wide spectrum, from lower than chance
level on the \texttt{SIM} data to accuracies higher than 90\% on
\texttt{SIM-G}. The choice of the base measure clearly has a larger influence
on the performance than the choice of the estimation method.

As IGCI was designed for the bijective deterministic case, one would expect
that IGCI would work best on \texttt{SIM-ln} (without depending too strongly on
the reference measure), because in that scenario the noise is relatively small.
Surprisingly, this does not turn out to be the case.  To understand this
unexpected behavior, we inspect the scatter plots in
Figure~\ref{fig:allpairs_simlownoise}  and observe that the functions in
\texttt{SIM-ln} are either non-injective or relatively close to linear. Both
can spoil the performance despite having low noise (see also the remarks at the
end of Subsection~\ref{subsec:igciimpl} on finite sample effects).

For the more noisy settings, earlier experiments showed that
\texttt{IGCI-slope} and \texttt{IGCI-1sp} can perform surprisingly well on
simulated data \citep{Janzing_et_al_AI_12}. Here, however, we see that the
performance of all IGCI variants on noisy data depends strongly on
characteristics of the data generation process and on the chosen base measure.
IGCI seems to pick up certain features in the data that turn out to be
correlated with the causal direction in some settings, but can be
anticorrelated with the causal direction in other settings. In addition, our
results suggest that if the distribution of the cause is close to the base
measure used in IGCI, then also for noisy data the method may work well (as
in the \texttt{SIM-G} setting).  However, for causal relations that are
not sufficiently non-linear, performance can drop significantly (even below
chance level) in case of a discrepancy between the actual distribution of the
cause and the base measure assumed by IGCI. 

Even though the performance of all IGCI variants with uniform base measure is
close to chance level on the simulated data, most methods perform better than
chance on the \texttt{CEP} data (with the exception of \texttt{IGCI-ent-sp1}
and \texttt{IGCI-ent-3NN}). When using the Gaussian base measure, performance
of IGCI methods on \texttt{CEP} data varies considerably depending on
implementation details. For some IGCI variants the performance on \texttt{CEP}
data is robust to small perturbations (most notably the parametric entropy
estimators), but for most non-parametric entropy estimators and for \texttt{IGCI-slope},
there is a strong dependence and sometimes even an inversion of the accuracy when perturbing
the data slightly. We do not have a good explanation for these observations. 

\subsubsection{Original implementations}

Let us now take a closer look at the accuracies of the original methods
\texttt{IGCI-slope} and \texttt{IGCI-ent-1sp} that were proposed in
\citep{Daniusis_et_al_UAI_10,Janzing_et_al_AI_12}, and at the newly introduced
\texttt{IGCI-slope++} that is closely related to \texttt{IGCI-slope}.  The IGCI
variants \texttt{slope}, \texttt{slope++} and \texttt{ent-1sp}
perform very similar on all data sets.  For both uniform and Gaussian base
measures, the performance is better than chance level on the \texttt{CEP}
benchmark, but not as much as in previous evaluations on earlier versions of
the benchmark. The discrepancy with the accuracies of around 80\% reported by
\citet{Janzing_et_al_AI_12} could be explained by the fact that here we
evaluate on a larger set of cause-effect pairs, and we chose the weights more
conservatively.

It is also interesting to look at the behavior under perturbations of the
\texttt{CEP} data. When using the uniform base measure, the accuracy of both
\texttt{IGCI-slope} and \texttt{IGCI-ent-1sp} drops back to chance level if
small noise is added, whereas AUC even becomes worse than chance level. For the
Gaussian base measure, both accuracy and AUC become worse than random guessing
on certain perturbations of the \texttt{CEP} data, although discretization does
not affect performance. This observation motivated the introduction of the
slope-based estimator \texttt{IGCI-slope++} that uses \eref{eq:IGCI_score_slope++} 
instead of \eref{eq:IGCI_score_slope} in order to deal better with repetitions of
values. However, as we can see, this estimator does not perform better in practice
than the original estimator \texttt{IGCI-slope}.

\subsubsection{Nonparametric entropy estimators}

It is clear that discretization effects play an important role in the
performance of the nonparametric entropy estimators. For example, the closely
related estimators \texttt{1sp} and \texttt{sp}{\it i}
perform comparably on simulated data, but on the \texttt{CEP} data, the
\texttt{sp}{\it i} estimators perform worse because of nondecisions
due to repeated values. Similarly, the bad performance of \texttt{IGCI-ent-3NN}
on the \texttt{CEP} data is related to discretization effects. This is in line
with our observations on the behavior of these entropy estimators when using
them for entropy-based ANM methods.

Further, note that the performance of \texttt{IGCI-ent-KDP} is qualitatively
similar to that of \texttt{IGCI-ent-PSD}, but in contrast with the \texttt{PSD}
estimator, the results of the \texttt{KDP} estimator are not robust under
perturbations when using the uniform base measure. The only
nonparametric entropy estimators that give results that are robust to small
perturbations of the data (for both base measures) are
\texttt{PSD} and \texttt{EdE}.  The performance of \texttt{IGCI-ent-PSD} on the
\texttt{CEP} benchmark depends on the chosen base measure: for the uniform base
it is better than chance level, for the Gaussian base measure it is worse than
chance level.  Interestingly, the \texttt{EdE} estimator that performed poorly
for ANM gives consistently good results on the \texttt{CEP} benchmark when used
for IGCI: it is the only nonparametric entropy estimator that yields results
that are better than chance for both base measures and irrespectively of
whether the data were perturbed or not.

Apparently, implementation details of entropy estimators can result in huge
differences in performance, often in ways that we do not understand well. 

\subsubsection{Parametric entropy estimators}

Let us finally consider the performance of entropy-based IGCI methods
that use parametric entropy estimators, which make additional assumptions on the
distribution. As expected, these estimators are robust to small perturbations
of the data. 

Interestingly, \texttt{IGCI-ent-Gau} with uniform base measure turns out to be one 
of the best IGCI methods on the \texttt{CEP} benchmark, in the sense that
it obtains good accuracy and AUC and in addition is robust to perturbations. 
Note that the performance of 
\texttt{IGCI-ent-Gau} on the \texttt{CEP} benchmark is comparable with that 
of the original implementation \texttt{IGCI-slope} and the newer version 
\texttt{IGCI-slope++}, but that only \texttt{IGCI-ent-Gau} is robust to small
perturbations of the data.
This estimator simply estimates entropy by assuming a Gaussian distribution. 
In other words, it uses:
$$\hat C_{X\to Y} := \frac{1}{2} \log \widehat{\Var}(\tilde{\B{y}}) - \frac{1}{2} \log \widehat{\Var}(\tilde{\B{x}}) = \log \left( \frac{\sqrt{\widehat{\Var(\B{y})}}}{\max(\B{y})-\min(\B{y})} \Bigg/ \frac{\sqrt{\widehat{\Var(\B{x})}}}{\max(\B{x})-\min(\B{x})} \right).$$
Apparently, the ratio of the size of the support of the distribution and its 
standard deviation is already quite informative on the causal
direction for the \texttt{CEP} data. This might also explain the relatively
good performance on this benchmark of \texttt{IGCI-ent-ME1} and
\texttt{IGCI-ent-ME2} when using the uniform base measure, as these estimate
entropy by fitting a parametric distribution to the data (which includes
Gaussian distributions as a special case). On the other hand, these methods
do not work better than chance on the simulated data. 

Now let us look at the results when using the Gaussian base measure.
\texttt{IGCI-ent-Gau} makes no sense in combination with the Gaussian base
measure. \texttt{IGCI-ent-ME1} and \texttt{IGCI-ent-ME2} on the \texttt{CEP} data do not
perform better than chance.  On simulated data, they only do (extremely) well for the
\texttt{SIM-G} scenario which uses a Gaussian distribution of the cause
measure, i.e., for which the chosen base measure exactly corresponds with the
distribution of the cause.



\section{Discussion and Conclusion}\label{sec:discussion} 

\begin{figure}
  \includegraphics[width=0.48\textwidth]{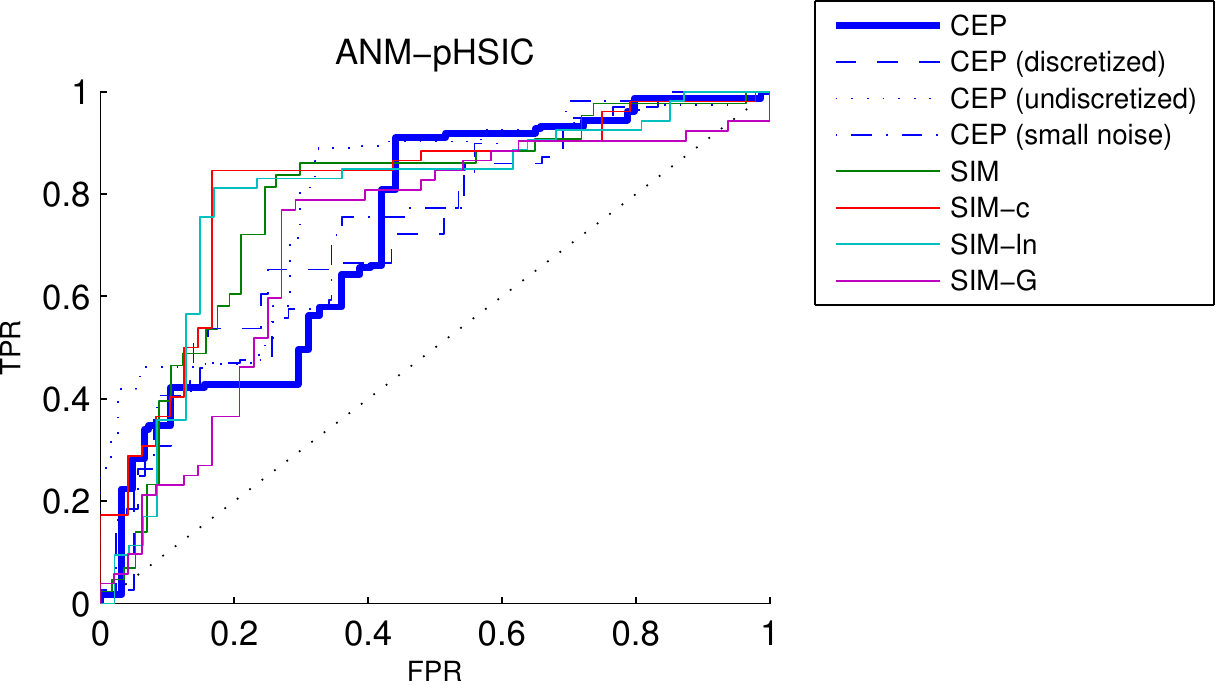}\hfill
  \includegraphics[width=0.48\textwidth]{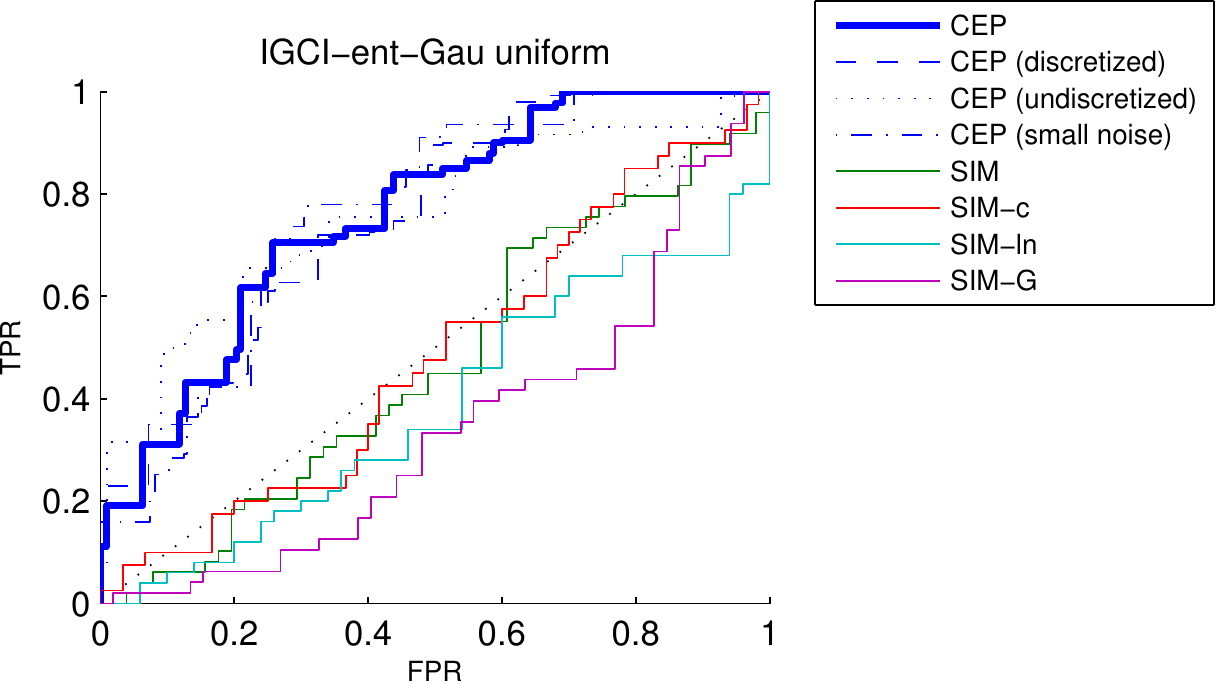}
  \caption{\label{fig:conclusion}ROC curves for two of the best-performing methods (\texttt{ANM-pHSIC} and \texttt{IGCI-ent-Gau}).
  Both methods work well on the \texttt{CEP} benchmark and keep performing well under small perturbations of the data, 
  but only \texttt{ANM-pHSIC} also performs well on the simulated data.}
\end{figure}

In this work, we considered a challenging bivariate causal discovery problem,
where the task is to decide whether $X$ causes $Y$ or vice versa, using
only a sample of purely observational data. We reviewed two families of methods
that can be applied to this task: Additive Noise Methods (ANM) and Information
Geometric Causal Inference (IGCI) methods. We discussed various possible
implementations of these methods and how they are related.

In addition, we have proposed the \texttt{CauseEffectPairs} benchmark data set consisting of
\nrpairs\ real-world cause-effect pairs and we provided our justifications for
the ground truths. We have used this benchmark data in combination with several simulated data sets in order to
evaluate various bivariate causal discovery methods. Our main conclusions (illustrated in Figure~\ref{fig:conclusion}) are twofold:
\begin{enumerate}
  \item The ANM methods that use HSIC perform reasonably well on all data sets (including the
    perturbed versions of the \texttt{CEP} benchmark and all simulation settings), obtaining
    accuracies between 63\% and 85\% (see Figures \ref{fig:acc_ANM_all_data} and \ref{fig:acc_ANM_all_CEP}).
    In particular, the original \texttt{ANM-pHSIC} method
    obtains an accuracy of 63 $\pm$ 10 \% and an AUC of 0.74 $\pm$ 0.05 on the \texttt{CEP} benchmark.
    The only other ANM method that performs well on all data sets is \texttt{ANM-ent-PSD}. It obtains
    a higher accuracy (69 $\pm$ 10\%) than \texttt{ANM-pHSIC} on the \texttt{CEP} benchmark,
    but a lower AUC (0.68 $\pm$ 0.06), but these differences are not statistically significant.
  \item The performance of IGCI-based methods varies greatly depending on implementation details, perturbations of the data and certain characteristics of the data, in ways that we do not fully understand (see Figures \ref{fig:acc_IGCI_uniform_all_data}, \ref{fig:acc_IGCI_Gaussian_all_data}, \ref{fig:acc_IGCI_uniform_all_CEP}, \ref{fig:acc_IGCI_Gaussian_all_CEP}). In many cases, causal relations seem to be too linear for IGCI to work well.  
    None of the IGCI implementations performed well on \emph{all} data sets that we considered,
    and the apparent better-than-chance performance of some of these methods on the \texttt{CEP} benchmark remains somewhat of a mystery.
\end{enumerate}
The former conclusion about the performance of \texttt{ANM-pHSIC} is in line with earlier reports, but the latter conclusion is surprising, 
considering that good performance of \texttt{IGCI-slope} and \texttt{IGCI-ent-1sp} has been reported on several occassions in earlier work
\citep{Daniusis_et_al_UAI_10,Mooij_et_al_NIPS_10,Janzing_et_al_AI_12,Statnikov++2012,Sgouritsa++2015}. 
One possible explanation that the performance of IGCI on simulated data here differs from earlier
reports is that earlier simulations used considerably smoother distributions of the cause variable.

Ironically, the original ANM method \texttt{ANM-pHSIC} proposed by
\citet{HoyerJanzingMooijPetersSchoelkopf_NIPS_08} turned out to be one of the
best methods overall, despite the recent research efforts aimed at
developing better methods. This observation motivated the consistency proof of
HSIC-based ANM methods, the major theoretical contribution of this work. We
expect that extending this consistency result to the multivariate case
\citep[see also][]{PetersMooijJanzingSchoelkopf_JMLR_14} should be
straightforward.

One reason for the disappointing performance of several methods (in particular, the slope-based
IGCI estimators and methods that make use of certain nonparametric differential entropy estimators) 
is discretization. When dealing with real-world
data on a computer, variables of a continuous nature are usually
discretized because they have to be represented as floating point numbers.
Often, additional rounding is applied, for example because only the most
significant digits are recorded. We found that for many methods, especially for
those that use differential entropy estimators, (coarse) discretization of the
data causes problems. This suggests that performance of several methods can
still be improved, e.g., by using entropy estimators that are more robust to
such discretization effects. The HSIC independence measure (and its $p$-value)
and the \texttt{PSD} entropy estimator were found to be robust against small
perturbations of the data, including discretization.

Since we compared many different implementations (which turned out to
have quite different performance characteristics), we need to use a strong correction for
multiple testing if we would like to conclude that one of these methods
performs significantly better than chance. Although it seems unlikely that the
good performance of \texttt{ANM-pHSIC} on both \texttt{CEP} data \emph{and} all
simulated data is due to chance alone, eventually we are most interested in
the performance on real-world data alone. Unfortunately, the \texttt{CEP} benchmark
turned out to be too small to warrant significant conclusions for any of the
tested methods.

A rough estimate how large the \CEP\ benchmark should have been in order to
obtain significant results can easily be made. Using a standard (conservative)
Bonferroni correction, taking into account that we compared 37 methods, we
would need about 120 (weighted) pairs for an accuracy of 65\% to be considered
significant (with two-sided testing and 5\% significance threshold). This is
about four times as much as the current number of \nrdatasets\ (weighted) pairs in the
\CEP\ benchmark. Therefore, we suggest that 
at this point, the highest priority regarding future work should be to obtain
more validation data, rather than developing additional methods or optimizing
computation time of existing methods. 
We hope that our publication of the \CEP\
benchmark data inspires researchers to collaborate on this important task
and we invite everybody to contribute pairs to the \CEP\ benchmark data.


Concluding, our results provide some evidence that distinguishing cause from
effect is indeed possible from purely observational real-world data by
exploiting certain statistical patterns in the data. However, the performance of current
state-of-the-art bivariate causal discovery methods still has to be improved further in
order to enable practical applications, and more validation data are needed in
order to obtain statistically significant conclusions.  Furthermore, it is not
clear at this stage under what assumptions current methods could be extended to
deal with possible confounding variables, an important issue in practice.





\clearpage
\appendix

\section{Consistency Proof of \texttt{ANM-HSIC}}\label{sec:consistency}

In this Appendix, we prove the consistency of Algorithm~\ref{alg:bivariateANM} with score \eref{eq:ANM_score_HSIC_fixed}, which is closely related to the algorithm originally proposed by \citet{HoyerJanzingMooijPetersSchoelkopf_NIPS_08} that uses score \eref{eq:ANM_score_pHSIC}. The main difference is
that the original implementation uses the HSIC $p$-value, whereas here, we use the HSIC value itself as a score. Also, we consider the option of splitting the dataset into one part for regression and another part for independence
testing. Finally, we fix the HSIC kernel instead of letting its bandwidth be chosen by a heuristic that depends on the data. The reason that we make these small modifications is that they lead to an easier proof of consistency of
the method.

We start with recapitulating the definition and basic properties of the Hilbert Schmidt Independence Criterion (HSIC) in Section~\ref{sec:HSIC}. Then, we discuss asymptotic properties
of non-parametric regression methods in Section~\ref{sec:regression}. Finally, we combine these ingredients in Section~\ref{sec:ANM_HSIC_consistency}.

\subsection{Consistency of HSIC}\label{sec:HSIC}
We recapitulate the definitions and some asymptotic properties of the Hilbert Schmidt Independence Criterion (HSIC), following mostly the notations and terminology in \citep{GrettonBousquetSmolaSchoelkopf2005}.
The HSIC estimator that we use here is the original biased estimator proposed by \citet{GrettonBousquetSmolaSchoelkopf2005}.
\begin{definition}
Given two random variables $X \in \C{X}$ and $Y \in \C{Y}$ with joint distribution $\Prb_{X,Y}$, and bounded kernels $k: \C{X}^2 \to \RN$ and
$l: \C{Y}^2 \to \RN$, we define the \textbf{population HSIC} of $X$ and $Y$ as:
\begin{equation*}\begin{split}
  \HSIC_{k,l}(X,Y) := & \Exp \big(k(X,X')l(Y,Y')\big) + \Exp \big(k(X,X')\big) \Exp \big(l(Y,Y')\big) \\
                      & - 2 \Exp \Big( \Exp \big( k(X,X') \given X \big) \Exp \big( l(Y,Y') \given Y \big) \Big)
\end{split}\end{equation*}
Here, $(X,Y)$ and $(X',Y')$ are two independent random variables distributed according to $\Prb_{X,Y}$.
\end{definition}
When $k$ and $l$ are clear from the context, we will typically suppress the dependence of the population HSIC on the choice of the kernels $k$ and $l$, simply writing $\HSIC(X,Y)$ instead.
The justification for the name ``independence criterion'' stems from the following important result \citep[Theorem 3]{Fukumizu++2008}:
\begin{lemma}\label{lemm:HSICzero}
Whenever the product kernel $k \cdot l$ is characteristic (in the sense of \citep{Fukumizu++2008,Sriperumbudur++2010}):
$\HSIC_{k,l}(X,Y) = 0$ if and only if $X \indep Y$ (i.e., $X$ and $Y$ are independent).\hfill\BlackBox%
\end{lemma}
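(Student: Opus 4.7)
The plan is to rewrite the population HSIC as the squared RKHS distance between the kernel mean embedding of the joint distribution $\Prb_{X,Y}$ and that of the product of marginals $\Prb_X \otimes \Prb_Y$, where both embeddings live in the tensor-product RKHS $\mathcal{H}_k \otimes \mathcal{H}_l$ whose reproducing kernel is exactly the product kernel $k \cdot l$. Once this identity is established, the conclusion follows immediately from the defining property of a characteristic kernel.

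First, I would recall that for a bounded measurable kernel $k$ on $\mathcal{X}$, every Borel probability measure $\Prb$ on $\mathcal{X}$ has a well-defined mean embedding $\mu_\Prb \in \mathcal{H}_k$ characterized by $\langle \mu_\Prb, f\rangle_{\mathcal{H}_k} = \int f\, d\Prb$ for all $f \in \mathcal{H}_k$, and that for two such measures $\langle \mu_\Prb, \mu_\Q\rangle_{\mathcal{H}_k} = \iint k(x,x')\, d\Prb(x)\, d\Q(x')$. Since $k \cdot l$ is the reproducing kernel of $\mathcal{H}_k \otimes \mathcal{H}_l$, the analogous statement holds for the joint embedding $\mu_{\Prb_{X,Y}}$ and the product-measure embedding $\mu_{\Prb_X \otimes \Prb_Y}$.

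Second, I would expand
$$\bigl\| \mu_{\Prb_{X,Y}} - \mu_{\Prb_X \otimes \Prb_Y} \bigr\|^2_{\mathcal{H}_k \otimes \mathcal{H}_l} = \bigl\| \mu_{\Prb_{X,Y}} \bigr\|^2 - 2 \bigl\langle \mu_{\Prb_{X,Y}}, \mu_{\Prb_X \otimes \Prb_Y} \bigr\rangle + \bigl\| \mu_{\Prb_X \otimes \Prb_Y} \bigr\|^2$$
and evaluate each of the three inner products using independent copies $(X,Y)$ and $(X',Y')$ of $\Prb_{X,Y}$ together with Fubini (justified by boundedness of $k$ and $l$). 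The first term becomes $\Exp\bigl(k(X,X')\,l(Y,Y')\bigr)$, the third term becomes $\Exp\bigl(k(X,X')\bigr)\Exp\bigl(l(Y,Y')\bigr)$, and the cross term becomes $2\,\Exp\bigl(\Exp(k(X,X')\mid X)\,\Exp(l(Y,Y')\mid Y)\bigr)$. Summing these three contributions reproduces exactly the three terms in the definition of $\HSIC_{k,l}(X,Y)$, so
$$\HSIC_{k,l}(X,Y) = \bigl\| \mu_{\Prb_{X,Y}} - \mu_{\Prb_X \otimes \Prb_Y} \bigr\|^2_{\mathcal{H}_{k \cdot l}}.$$

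Third, since by assumption $k \cdot l$ is characteristic, the map $\Prb \mapsto \mu_\Prb$ is injective on Borel probability measures over $\mathcal{X} \times \mathcal{Y}$. Therefore $\HSIC_{k,l}(X,Y) = 0$ is equivalent to $\mu_{\Prb_{X,Y}} = \mu_{\Prb_X \otimes \Prb_Y}$, which by injectivity is equivalent to $\Prb_{X,Y} = \Prb_X \otimes \Prb_Y$, i.e.\ $X \indep Y$; the converse direction is immediate from the definition of HSIC (each of the three terms factorizes under independence, and one checks they cancel).

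The main obstacle is the algebraic bookkeeping in the second step, in particular carefully identifying the cross term with the conditional-expectation expression that appears in the definition of HSIC. This reduces to an application of the tower property together with the reproducing property in $\mathcal{H}_k \otimes \mathcal{H}_l$, but it is the only nontrivial part of the argument; the rest is definitional.
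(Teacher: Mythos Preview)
Your argument is correct and is precisely the standard one: rewrite $\HSIC_{k,l}(X,Y)$ as $\|\mu_{\Prb_{X,Y}}-\mu_{\Prb_X\otimes\Prb_Y}\|^2$ in the RKHS of the product kernel, then invoke injectivity of the mean embedding. The paper does not supply its own proof of this lemma; it simply cites \citet[Theorem~3]{Fukumizu++2008} and, in the paragraph following the statement, sketches exactly the intuition you have made rigorous (``The HSIC is the squared RKHS distance between the embedded joint distribution and the embedded product of the marginals. Given that the embedding is injective, this distance is zero if and only if the variables are independent.''). So your proposal matches the paper's approach in spirit and fills in the details it omits.
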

A special case of this lemma, assuming that $X$ and $Y$ have compact domain, was proven originally in \citep{GrettonBousquetSmolaSchoelkopf2005}.
Recently, \citet{Gretton2015} showed that a similar result also holds if both kernels $k$ and $l$ are characteristic and satisfy some other conditions as well.
Intuitively, a characteristic kernel leads to an injective embedding 
of probability measures into the corresponding Reproducible Kernel Hilbert Space (RKHS). The HSIC is the squared RKHS distance between the embedded joint distribution and the embedded product of the marginals. Given that the embedding is injective, this distance is zero if and only if the variables are independent.
Examples of characteristic kernels are Gaussian RBF kernels and Laplace kernels. For more details on the notion of characteristic kernel, see \citep{Sriperumbudur++2010}. 
We will use the following (biased) estimator of the population HSIC \citep{GrettonBousquetSmolaSchoelkopf2005}:
\begin{definition}\label{def:empHSIC}
Given two $N$-tuples (with $N \ge 2$) $\B{x} = (x_1,\dots,x_N) \in \C{X}^N$ and $\B{y} = (y_1,\dots,y_N) \in \C{Y}^N$, and bounded kernels
$k: \C{X}^2 \to \RN$ and $l: \C{Y}^2 \to \RN$, we define
\begin{equation}\label{eq:empHSIC}
\empHSIC_{k,l}(\B{x},\B{y}) := (N - 1)^{-2} \tr (KHLH) = (N-1)^{-2} \sum_{i,j=1}^N \bar K_{ij} L_{ij},
\end{equation}
where $K_{ij} = k(x_i,x_j)$, $L_{ij} = l(y_i,y_j)$ are Gram matrices and $H_{ij} = \delta_{ij} - N^{-1}$ is a centering matrix,
and we write $\bar K := H K H$ for the centered Gram matrix $K$.
Given an i.i.d.\ sample $\C{D}_N = \{(x_n,y_n)\}_{n=1}^N$ from $\Prb_{X,Y}$, we define
the \textbf{empirical HSIC} of $X$ and $Y$ estimated from $\C{D}_N$ as:
$$\empHSIC_{k,l}(X,Y ; \C{D}_N) := \empHSIC_{k,l}(\B{x},\B{y}).$$
\end{definition}
Again, when $k$ and $l$ are clear from the context, we will typically suppress the dependence of the empirical HSIC on the choice of the kernels $k$ and $l$.
Unbiased estimators of the population HSIC were proposed in later work \citep{SongSmolaGrettonBedoBorgwardt2012}, but we will not consider those here.
A large deviation result for this empirical HSIC estimator is given by \citep[Theorem 3]{GrettonBousquetSmolaSchoelkopf2005}:
\begin{lemma}\label{lemm:HSIClargedev}
Assume that kernels $k$ and $l$ are bounded almost everywhere by 1, and are non-negative.
Suppose that the data set $\C{D}_N$ consists of $N$ i.i.d.\ samples from some joint probability distribution $\Prb_{X,Y}$.
Then, for $N \ge 2$ and all $\delta > 0$, with probability at least $1 - \delta$:
$$\abs{\HSIC_{k,l}(X,Y) - \empHSIC_{k,l}(X,Y;\C{D}_N)} \le \sqrt{\frac{\log(6 / \delta)}{\alpha^2 N}} + \frac{c}{N},$$
where $\alpha^2 > 0.24$ and $c$ are constants.\hfill\BlackBox%
\end{lemma}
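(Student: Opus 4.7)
The plan is to decompose the empirical HSIC into three V-statistics, pair it with an unbiased U-statistic of degree~$4$, bound the deterministic bias, and concentrate the U-statistic via a bounded-differences argument. Expanding the centering in Definition~\ref{def:empHSIC}, one obtains
$$(N-1)^{2}\,\empHSIC_{k,l}(\B{x},\B{y}) = \sum_{i,j} K_{ij}L_{ij} - \frac{2}{N}\sum_{i,j,k} K_{ij}L_{ik} + \frac{1}{N^{2}}\sum_{i,j,k,l} K_{ij}L_{kl},$$
exhibiting $\empHSIC$ as an affine combination of V-statistics of arities $2, 3, 4$ in the sample $\C{D}_N = \{(x_i,y_i)\}$.

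I would next pair each V-statistic with the U-statistic obtained by restricting the sum to tuples of pairwise distinct indices and dividing by the corresponding falling factorial. Because $k$ and $l$ take values in $[0,1]$, each diagonal term that is excluded contributes at most $1$, and the number of diagonal tuples of arity $m \le 4$ is $O(N^{m-1})$ out of $O(N^{m})$. A short calculation with the prefactors $(N-1)^{-2}$, $2(N-1)^{-2}N^{-1}$, $(N-1)^{-2}N^{-2}$ then yields
$$\empHSIC_{k,l}(X,Y;\C{D}_N) = U_N + B_N, \qquad |B_N| \le \tfrac{c}{N},$$
where $U_N$ is a degree-$4$ U-statistic whose symmetric kernel $h(z_1,z_2,z_3,z_4)$ is constructed exactly so that its expectation matches the three terms in the definition of $\HSIC_{k,l}(X,Y)$; thus $\Exp[U_N] = \HSIC_{k,l}(X,Y)$, and the constant $c$ depends only on the uniform bound on $k$ and $l$.

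To control $|U_N - \HSIC|$, I would apply McDiarmid's inequality to $(z_1,\dots,z_N)\mapsto U_N$. The kernel $h$ is bounded by a universal constant since $k, l \in [0,1]$, and replacing one coordinate alters at most $\binom{N-1}{3}$ of the $\binom{N}{4}$ summands, each by a bounded amount. This gives a bounded-differences coefficient of order $1/N$, leading to
$$\Prb\big(|U_N - \HSIC_{k,l}(X,Y)| \ge t\big) \le 2\exp\big(-\alpha^{2} N t^{2}\big).$$
Setting the right-hand side equal to, say, $\delta/3$ (with the factor $3$ accounting for the three V-statistic pieces being bundled together, which is what makes the exponential denominator $6/\delta$ after taking logs) and combining with the deterministic bound $|B_N| \le c/N$ via the triangle inequality produces the claimed inequality.

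The main obstacle is pinning down the explicit constant $\alpha^{2} > 0.24$: a naive McDiarmid computation that treats the three V-statistic terms separately, bounds absolute values, and union-bounds their fluctuations yields a weaker numerical constant. The sharp value requires combining the three pieces \emph{before} invoking bounded differences, exploiting the fact that perturbing a single observation changes the three terms in correlated ways so that cancellations in the bounded-differences constant can be recorded; only after this cancellation does one optimize numerically to beat $0.24$. The algebra of $B_N$ (collecting all $O(1/N)$ remainders from replacing V-statistics with U-statistics, including the discrepancy between $(N-1)^{-2}$ and $N^{-2}$) is routine but must be tracked carefully to arrive at a single clean constant $c$.
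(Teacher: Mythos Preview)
The paper does not give its own proof of this lemma; it is simply quoted (with the black box marker) as Theorem~3 of \citet{GrettonBousquetSmolaSchoelkopf2005}, so there is no ``paper's proof'' to compare against directly.

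That said, your sketch is essentially the argument of \citet{GrettonBousquetSmolaSchoelkopf2005}: expand the centered trace formula into three V-statistics, pass to U-statistics with an $O(1/N)$ bias term, and concentrate via a bounded-differences/Hoeffding-for-U-statistics argument. The structure is right.

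There is one internal inconsistency worth flagging. You first describe packaging everything into a \emph{single} degree-$4$ U-statistic $U_N$ and applying McDiarmid to it; but you then justify the factor $6$ in $\log(6/\delta)$ by a union bound over the ``three V-statistic pieces''. These two stories do not match: if you have already combined the pieces into one $U_N$ and apply McDiarmid once, you get $2\exp(-\alpha^2 N t^2)\le\delta$ and hence $\log(2/\delta)$, not $\log(6/\delta)$. The $6$ in the original statement comes precisely from concentrating the three components \emph{separately} (each a two-sided bound contributing a factor $2$) and union-bounding. So choose one route: either keep the three terms apart (union bound, $\log(6/\delta)$, easier bookkeeping for $\alpha^2$), or combine into one U-statistic (single concentration, $\log(2/\delta)$, a different numerical $\alpha^2$). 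Your discussion of why the sharp $\alpha^2>0.24$ is delicate is accurate, but it belongs to the three-term/union-bound version, not the single-$U_N$ version. This is a presentational issue rather than a mathematical gap, since either variant yields the $O(N^{-1/2})$ fluctuation plus $O(N^{-1})$ bias needed downstream for Corollary~\ref{coro:HSICconsistency}.
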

This directly implies the consistency of the empirical HSIC estimator:\footnote{Let $X_1,X_2,\dots$ be a sequence of random variables and let $X$ be another random variable.
We say that $X_n$ converges to $X$ \textbf{in probability}, written $X_n \xto{P} X$, if 
$$\forall \epsilon>0: \qquad \lim_{n\to\infty} \Prb(\abs{X_n - X} > \epsilon) = 0.$$}
\begin{corollary}\label{coro:HSICconsistency}
Let $(X_1,Y_1), (X_2,Y_2), \dots$ be i.i.d.\ according to $\Prb_{X,Y}$. Defining the sequence of data sets $\C{D}_N = \{(X_n,Y_n)\}_{n=1}^N$ for $N=2,3,\dots$,
we have for non-negative bounded kernels $k,l$ that, as $N\to\infty$:
$$\empHSIC_{k,l}(X,Y;\C{D}_N) \xto{P} \HSIC_{k,l}(X,Y).$$\hfill\BlackBox%
\end{corollary}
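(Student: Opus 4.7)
The plan is to derive convergence in probability directly from the large deviation bound of Lemma~\ref{lemm:HSIClargedev}. This is essentially the standard route from a uniform concentration bound to a consistency statement, so the argument will be short.

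First I would reduce to the normalized case where both kernels are bounded by $1$. Since $k$ and $l$ are bounded, say by $B_k$ and $B_l$, the kernels $\tilde k := k / B_k$ and $\tilde l := l / B_l$ satisfy the hypotheses of Lemma~\ref{lemm:HSIClargedev}. Both $\HSIC$ and $\empHSIC$ are bilinear in the kernels (the population version in the expectation expression, the empirical version via $K \mapsto K/B_k$ in \eref{eq:empHSIC}), so
\[
  \HSIC_{k,l}(X,Y) = B_k B_l \, \HSIC_{\tilde k, \tilde l}(X,Y), \qquad \empHSIC_{k,l}(X,Y;\C{D}_N) = B_k B_l \, \empHSIC_{\tilde k, \tilde l}(X,Y;\C{D}_N),
\]
so consistency for the normalized kernels implies consistency for the originals.

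Next I would use the large deviation bound to get convergence in probability. Fix $\epsilon > 0$ and an arbitrary $\delta > 0$. Lemma~\ref{lemm:HSIClargedev} gives, for every $N \ge 2$,
\[
  \Prb\!\left( \abs{\HSIC(X,Y) - \empHSIC(X,Y;\C{D}_N)} > \sqrt{\tfrac{\log(6/\delta)}{\alpha^2 N}} + \tfrac{c}{N} \right) \le \delta.
\]
Since $\sqrt{\log(6/\delta)/(\alpha^2 N)} + c/N \to 0$ as $N \to \infty$ (with $\delta$ fixed), there exists $N_0 = N_0(\epsilon,\delta)$ such that $\sqrt{\log(6/\delta)/(\alpha^2 N)} + c/N < \epsilon$ for all $N \ge N_0$. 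For such $N$,
\[
  \Prb\!\left( \abs{\HSIC(X,Y) - \empHSIC(X,Y;\C{D}_N)} > \epsilon \right) \le \delta.
\]
Hence $\limsup_{N\to\infty} \Prb(\abs{\HSIC - \empHSIC} > \epsilon) \le \delta$, and since $\delta$ was arbitrary, the $\limsup$ is zero, which is exactly $\empHSIC(X,Y;\C{D}_N) \xto{P} \HSIC(X,Y)$.

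There is no real obstacle here; the only step worth double-checking is the reduction to kernels bounded by $1$, which relies on the fact that both sides of the desired convergence scale multiplicatively under rescaling of the kernels. Everything else is a direct quantifier manipulation on the concentration inequality already provided by Lemma~\ref{lemm:HSIClargedev}.
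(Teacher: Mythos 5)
Your proposal is correct and follows the same route as the paper, which simply asserts that the corollary is a direct consequence of the large deviation bound in Lemma~\ref{lemm:HSIClargedev}; your quantifier argument ($\limsup \le \delta$ for all $\delta$) is the standard way to make that implication explicit. The normalization step reducing to kernels bounded by $1$ is a sensible extra precaution (the lemma is stated for kernels bounded by $1$ while the corollary only assumes boundedness), and your bilinearity argument for it is valid.
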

We do not know of any results for consistency of HSIC when using adaptive kernel parameters (i.e., when estimating the kernel from the data).
This is why we only present a consistency result for fixed kernels here.

For the special case that $\C{Y} = \RN$, we will use the following continuity property of the empirical HSIC estimator. 
It shows that for a Lipschitz-continuous kernel $l$, the empirical HSIC is also Lipschitz-continuous in the corresponding argument, 
but with a Lipschitz constant that scales at least as $N^{-1/2}$ for $N\to\infty$.
This novel technical result will be the key to our consistency proof of Algorithm~\ref{alg:bivariateANM} with score \eref{eq:ANM_score_HSIC_fixed}.
\begin{lemma}\label{lemm:HSICbound}
For all $N \ge 2$, for all $\B{x} \in \C{X}^N$, for all $\B{y},\B{y}' \in \RN^N$, for all bounded kernels $k : \C{X}^2 \to \RN$, and for all bounded and Lipschitz-continuous kernels $l : \RN^2 \to \RN$:
$$\abs{\empHSIC(\B{x},\B{y}) - \empHSIC(\B{x},\B{y}')} \le \frac{32 \lambda C}{\sqrt{N}} \norm{\B{y} - \B{y}'},$$
where $\abs{k(\xi,\xi')} \le C$ for all $\xi,\xi' \in \C{X}$ and $\lambda$ is the Lipschitz constant of $l$.
\end{lemma}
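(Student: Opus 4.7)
My plan is to exploit the fact that in the empirical HSIC formula $\empHSIC(\B{x},\B{y}) = (N-1)^{-2} \sum_{i,j} \bar K_{ij} L_{ij}$, the centered Gram matrix $\bar K = HKH$ does not depend on $\B{y}$ at all. Hence the difference telescopes cleanly:
\begin{equation*}
  \empHSIC(\B{x},\B{y}) - \empHSIC(\B{x},\B{y}') = (N-1)^{-2} \sum_{i,j=1}^N \bar K_{ij}\bigl(L_{ij} - L'_{ij}\bigr),
\end{equation*}
where $L'_{ij} = l(y'_i,y'_j)$. I would then take absolute values and bound the two factors separately.

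For the $L$ factor, Lipschitz-continuity of $l$ (with respect to the Euclidean norm on $\RN^2$) gives $|L_{ij} - L'_{ij}| \le \lambda\sqrt{(y_i-y'_i)^2+(y_j-y'_j)^2} \le \lambda(|y_i-y'_i|+|y_j-y'_j|)$. For the centered kernel, I would write out $\bar K_{ij} = K_{ij} - N^{-1}\sum_k K_{kj} - N^{-1}\sum_k K_{ik} + N^{-2}\sum_{k,l}K_{kl}$ and apply the uniform bound $|k|\le C$ term by term to get $|\bar K_{ij}| \le 4C$. Combining these two estimates and using the symmetry $i\leftrightarrow j$ in the double sum yields
\begin{equation*}
  \Bigl|\sum_{i,j}\bar K_{ij}(L_{ij}-L'_{ij})\Bigr| \;\le\; 8\lambda C \sum_{i=1}^N |y_i - y'_i| \cdot N.
\end{equation*}
Finally, Cauchy--Schwarz gives $\sum_i |y_i-y'_i| \le \sqrt{N}\,\|\B{y}-\B{y}'\|$, producing an overall factor $8\lambda C N^{3/2}\|\B{y}-\B{y}'\|$. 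Dividing by $(N-1)^2$ and using $(N-1)^2 \ge N^2/4$ for $N\ge 2$ turns the prefactor into $32\lambda C/\sqrt{N}$, which is the claimed bound.

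No step here is truly an obstacle; the whole argument is essentially bookkeeping. The one place that requires a bit of care is the conversion between the Euclidean Lipschitz constant on $\RN^2$ and the componentwise sum $|y_i-y'_i|+|y_j-y'_j|$, which costs a harmless factor of $\sqrt{2}$ absorbed into the final constant; and the replacement of $(N-1)^{-2}$ by $4N^{-2}$, which is where the assumption $N\ge 2$ in the hypothesis is used. The conceptual content is that centering only multiplies the entrywise bound on $K$ by a constant (not by $N$), while Cauchy--Schwarz trades one factor of $N$ in the double sum for $\sqrt{N}\,\|\B{y}-\B{y}'\|$, yielding the non-trivial $N^{-1/2}$ decay that makes this lemma useful for the subsequent consistency proof.
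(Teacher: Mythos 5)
Your proof is correct and follows essentially the same route as the paper's: the same telescoping based on the fact that $\bar K = HKH$ does not depend on $\B{y}$, the same entrywise bound $\abs{\bar K_{ij}} \le 4C$, and a Lipschitz-plus-Cauchy--Schwarz estimate of the double sum yielding $8 \lambda C N^{3/2} \norm{\B{y}-\B{y}'}$ and hence, after dividing by $(N-1)^2 \ge N^2/4$, the constant $32$. The only cosmetic difference is that the paper first splits $L_{ij}-L'_{ij}$ into two one-coordinate increments and applies Cauchy--Schwarz in $\RN^{N^2}$ to the full double sum, whereas you bound $\abs{\bar K_{ij}}$ entrywise and apply Cauchy--Schwarz only to $\sum_i \abs{y_i - y_i'}$; both give identical constants (and, as a side note, the step $\sqrt{a^2+b^2}\le a+b$ costs nothing, so no factor of $\sqrt{2}$ is actually incurred).
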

\begin{proof}
From its definition, \eref{eq:empHSIC}:
$$\abs{\empHSIC(\B{x},\B{y}) - \empHSIC(\B{x},\B{y}')} = (N-1)^{-2} \abs{\sum_{i,j=1}^N \bar K_{ij} (L_{ij} - L_{ij}')},$$
where $K_{ij} = k(x_i,x_j)$, $L_{ij} = l(y_i,y_j)$, $L_{ij}' = l(y_i',y_j')$ and $\bar K := H K H$ with $H_{ij} = \delta_{ij} - N^{-1}$.
First, note that $\abs{K_{ij}} \le C$ implies that $\abs{\bar K_{ij}} \le 4C$:
\begin{equation*}\begin{split}
\abs{\bar K_{ij}} 
& = \abs{K_{ij} - \frac{1}{N}\sum_{i'=1}^N K_{i'j} - \frac{1}{N}\sum_{j'=1}^N K_{ij'} + \frac{1}{N^2}\sum_{i',j'=1}^N K_{i'j'}} \\
& \le \abs{K_{ij}} + \frac{1}{N}\sum_{i'=1}^N \abs{K_{i'j}} + \frac{1}{N}\sum_{j'=1}^N \abs{K_{ij'}} + \frac{1}{N^2}\sum_{i',j'=1}^N \abs{K_{i'j'}} \le 4C.
\end{split}\end{equation*}
Now starting from the definition and using the triangle inequality:
\begin{equation*}\begin{split}
  \abs{\sum_{i,j=1}^N \bar K_{ij} (L_{ij} - L_{ij}')} 
  \le & \abs{\sum_{i,j=1}^N \bar K_{ij} \big( l(y_i', y_j') - l(y_i', y_j) \big)} 
      + \abs{\sum_{i,j=1}^N \bar K_{ij} \big( l(y_i', y_j) - l(y_i, y_j) \big)}.
\end{split}\end{equation*}
For the first term, using Cauchy-Schwartz (in $\RN^{N^2}$) and the Lipschitz property of $l$:
\begin{equation*}\begin{split}
  \abs{\sum_{i,j=1}^N \bar K_{ij} \big( l(y_i',y_j') - l(y_i', y_j) \big)}^2
  & \le \left(\sum_{i,j=1}^N \abs{\bar K_{ij}}^2 \right) \left( \sum_{i,j=1}^N \abs{l(y_i', y_j') - l(y_i', y_j)}^2 \right) \\
  & \le 16 N^2 C^2 \cdot \lambda^2 N \sum_{j=1}^N \abs{y_j' - y_j}^2 \\
  & = 16 N^3 C^2 \lambda^2 \norm{\B{y}' - \B{y}}^2.
\end{split}\end{equation*}
The second term is similar.
The result now follows (using that $\frac{N}{N-1} \le 2$ for $N \ge 2$).
\end{proof}

\subsection{Consistency of nonparametric regression}\label{sec:regression}

From now on, we will assume that both $X$ and $Y$ take values in $\RN$. \citet{Gyorfi++2002} provide consistency results for several nonparametric regression methods. Here we 
briefly discuss the main property (``weak universal consistency'') that is of particular interest in our
setting.

Given a distribution $\Prb_{X,Y}$, one defines the \textbf{regression function} of $Y$ on $X$ as the conditional expectation
  $$f(x) := \Exp(Y \given X=x).$$
Given an i.i.d.\ sample of data points $\C{D}_N = \{(x_n,y_n)\}_{n=1}^N$ (the ``training'' data), a regression method provides an
estimate of the regression function $\hat f(\,\cdot\,; \C{D}_N)$. The \textbf{mean squared error on the training data} (also called ``training error'') is defined as:
$$\frac{1}{N}\sum_{n=1}^N \abs{f(x_n) - \hat f(x_n;\C{D}_N)}^2.$$
The \textbf{risk} (also called ``generalization error''), i.e., the expected $L_2$ error on an independent test datum, is defined as:
$$\Exp_X \abs{f(X) - \hat f(X;\C{D}_N)}^2 = \int \abs{f(x) - \hat f(x;\C{D}_N)}^2 \,d\Prb_X(x).$$
Note that the risk is a random variable that depends on the training data $\C{D}_N$.

If the expected risk converges to zero as the number of training points increases, the regression method
is called ``weakly consistent''. 
More precisely, following \cite{Gyorfi++2002}:
\begin{definition}
Let $(X_1,Y_1), (X_2,Y_2), \dots$ be i.i.d.\ according to $\Prb_{X,Y}$. Defining training data sets 
$\C{D}_N = \{(X_n,Y_n)\}_{n=1}^N$ for $N=2,3,\dots$, and writing $\Exp_{\C{D}_N}$ for the expectation value 
when averaging over $\C{D}_N$, a sequence of estimated regression functions $\hat f(\cdot;\C{D}_N)$ is called 
\textbf{weakly consistent for a certain distribution $\Prb_{X,Y}$} if
\begin{equation}\label{eq:regression_weak_consistency}
  \lim_{N\to\infty} \Exp_{\C{D}_N} \Big( \Exp_X \abs{f(X) - \hat f(X;\C{D}_N)}^2 \Big) = 0.
\end{equation}
A regression method is called \textbf{weakly universally consistent} if it is weakly consistent for all distributions
$\Prb_{X,Y}$ with finite second moment of $Y$, i.e., with $\Exp_Y(Y^2) < \infty$.
\end{definition}
Many popular nonparametric regression methods have been shown to be weakly universally consistent, see e.g., \citep{Gyorfi++2002}.
One might expect na\"ively that if the expected risk goes to zero, then also the expected training error should vanish asymptotically: 
\begin{equation}\label{eq:regression_weak_consistency_training}
  \lim_{N\to\infty} \Exp_{\C{D}_N} \left( \frac{1}{N}\sum_{n=1}^N \abs{f(X_n) - \hat f(X_n;\C{D}_N)}^2 \right) = 0.
\end{equation}
However, property \eref{eq:regression_weak_consistency_training} does not necessarily follow from \eref{eq:regression_weak_consistency}.\footnote{Here 
is a counterexample. Suppose that regression method $\hat f$ satisfies properties \eref{eq:regression_weak_consistency_training} and \eref{eq:regression_weak_consistency} and that $X$ has bounded density. Given a smooth function $\phi : \RN \to \RN$ with support $\subset [-1,1]$, $0 \le \phi(x) \le 1$, and $\phi(0) = 1$,
we now construct a modified sequence of estimated regression functions $\tilde f(\,\cdot\,; \C{D}_N)$ that is defined as follows:
$$\tilde f(x; \C{D}_N) := \hat f(x; \C{D}_N) + \sum_{i=1}^N \phi\left( \frac{x - X_i}{\Delta_i^{(N)}} \right)$$
where the $\Delta_i^{(N)}$ should be chosen such that $\Delta_i^{(N)} \le N^{-2}$ and that the intervals $[X_i-\Delta_i^{(N)},X_i+\Delta_i^{(N)}]$
are disjoint. Then, we have that
$$\lim_{N\to\infty} \Exp_{\C{D}_N} \Big( \Exp_X \abs{f(X) - \tilde f(X;\C{D}_N)}^2 \Big) = 0$$
but on the other hand
$$\lim_{N\to\infty} \Exp_{\C{D}_N} \frac{1}{N}\sum_{n=1}^N \abs{f(X_n) - \tilde f(X_n;\C{D}_N)}^2 = 1.$$}
One would expect that asymptotic results on the training error would actually be easier to obtain than results on generalization error.
One result that we found in the literature is \citep[Lemma~5]{Kpotufe++2014}, which states that a certain box kernel regression method satisfies
\eref{eq:regression_weak_consistency_training} under certain assumptions on the distribution $\Prb_{X,Y}$. The reason that we bring
this up at this point is that property \eref{eq:regression_weak_consistency_training} allows to prove consistency even when one uses
the same data for both regression and independence testing (see also Lemma~\ref{lem:suitable}).

From now on, we will always consider the following setting. Let $(\tilde X_1, \tilde Y_1)$, $(\tilde X_2, \tilde Y_2)$, $\dots$ be i.i.d.\ according to some joint distribution $\Prb_{X,Y}$.
We distinguish two different scenarios: 
\begin{itemize}
  \item ``Data splitting'': using half of the data for training, and the other half of the data for testing. In particular, we define
    $X_n := \tilde X_{2n-1}$, $Y_n := \tilde Y_{2n-1}$, $X_n' := \tilde X_{2n}$ and $Y_n' := \tilde Y_{2n}$ for $n=1,2,\dots$.
  \item ``Data recycling'': using the same data both for regression and for testing. In particular, we define
    $X_n := \tilde X_n$, $Y_n := \tilde Y_n$, $X_n' := \tilde X_n$ and $Y_n' := \tilde Y_n$ for $n=1,2,\dots$.
\end{itemize}
In both scenarios, for $N=1,2,\dots$, we define a sequence of training data sets $\C{D}_N := \{(X_n,Y_n)\}_{n=1}^N$ (for the regression) and a sequence of test data sets $\C{D}_N' := \{(X_n',Y_n')\}_{n=1}^N$ (for testing independence of residuals).
Note that in the data recycling scenario, training and test data are identical, whereas in the data splitting scenario, training and test data are independent.

Define a random variable (the ``residual'')
\begin{equation}\label{eq:true_residual}
  E := Y - f(X) = Y - \Exp(Y \given X),
\end{equation}
and its vector-valued versions on the test data:
\begin{equation}\label{eq:true_residuals}
  \res := \big(Y_1' - f(X_1'), \dots, Y_N' - f(X_N')\big),
\end{equation}
called the \textbf{true residuals}. 
Using a regression method, we obtain an estimate $\hat f(x; \C{D}_N)$ for the regression function $f(x) = \Exp(Y \given X=x)$ from the training data $\C{D}_N$. 
We then define an estimate of the vector-valued version of $E$ on the test data:
\begin{equation}\label{eq:pred_residuals}
  \hatres := \big(Y_1' - \hat f(X_1'; \C{D}_N), \dots, Y_N' - \hat f(X_N'; \C{D}_N)\big),
\end{equation}
called the \textbf{predicted residuals}.

\begin{definition}\label{def:suitable}
We call the regression method \textbf{suitable} for regressing $Y$ on $X$ if the mean squared error between true and predicted residuals vanishes asymptotically in expectation:
\begin{equation}\label{eq:regression_suitable}
  \lim_{N\to\infty} \Exp_{\C{D}_N,\C{D}_N'} \left( \frac{1}{N} \norm{\hatres - \res}^2 \right) = 0.
\end{equation}
\end{definition}
Here, the expectation is taken over both training data $\C{D}_N$ and test data $\C{D}_N'$.
\begin{lemma}\label{lem:suitable}
In the data splitting case, any regression method that is weakly consistent for $\Prb_{X,Y}$ is suitable. 
In the data recycling case, any regression method satisfying property \eref{eq:regression_weak_consistency_training} is suitable.
\end{lemma}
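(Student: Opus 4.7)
The proof proposal is short because the key observation eliminates the noise terms immediately. Note that by the definitions \eqref{eq:true_residuals} and \eqref{eq:pred_residuals}, the $Y_n'$ cancels in the difference:
$$\hatres_n - \res_n = \big(Y_n' - \hat f(X_n';\C{D}_N)\big) - \big(Y_n' - f(X_n')\big) = f(X_n') - \hat f(X_n';\C{D}_N),$$
so the quantity whose expectation we must control reduces to
$$\frac{1}{N}\norm{\hatres - \res}^2 = \frac{1}{N}\sum_{n=1}^N \abs{f(X_n') - \hat f(X_n';\C{D}_N)}^2,$$
which contains no noise terms at all. The whole argument then consists of recognizing this expression as either the risk or the training error, depending on which scenario we are in.

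In the data splitting case, the test inputs $X_1',\dots,X_N'$ are i.i.d.\ copies of $X\sim\Prb_X$ and are independent of the training data $\C{D}_N$. Conditioning on $\C{D}_N$ and using the tower property of expectation, the expectation of each summand equals the risk of $\hat f(\,\cdot\,;\C{D}_N)$:
$$\Exp_{\C{D}_N,\C{D}_N'}\left(\frac{1}{N}\sum_{n=1}^N \abs{f(X_n')-\hat f(X_n';\C{D}_N)}^2\right) = \Exp_{\C{D}_N}\Big(\Exp_X \abs{f(X) - \hat f(X;\C{D}_N)}^2\Big),$$
and the right-hand side converges to $0$ as $N\to\infty$ by the assumed weak consistency \eqref{eq:regression_weak_consistency} of the regression method for $\Prb_{X,Y}$.

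In the data recycling case, $\C{D}_N' = \C{D}_N$ and hence $X_n' = X_n$ for all $n$, so the same expression equals the expected training error:
$$\Exp_{\C{D}_N,\C{D}_N'}\left(\frac{1}{N}\sum_{n=1}^N \abs{f(X_n')-\hat f(X_n';\C{D}_N)}^2\right) = \Exp_{\C{D}_N}\left(\frac{1}{N}\sum_{n=1}^N \abs{f(X_n)-\hat f(X_n;\C{D}_N)}^2\right),$$
which tends to $0$ by the assumed property \eqref{eq:regression_weak_consistency_training}. There is essentially no obstacle to this proof: once one writes out the difference of residuals and notices the cancellation of $Y_n'$, both cases reduce to unpacking a definition and applying the respective hypothesis via Fubini. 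The slight subtlety worth flagging is that in the splitting scenario one must invoke the independence of $\C{D}_N$ and $\C{D}_N'$ to pull the inner expectation outside, while in the recycling scenario no such independence is available (which is precisely why the stronger training-error condition \eqref{eq:regression_weak_consistency_training} is needed instead of \eqref{eq:regression_weak_consistency}).
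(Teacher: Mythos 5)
Your proof is correct and follows essentially the same route as the paper's: cancel the $Y_n'$ terms in the difference of residuals so that the quantity in \eref{eq:regression_suitable} becomes the mean squared error between $\hat f$ and $f$ on the test inputs, and then observe that this is exactly \eref{eq:regression_weak_consistency} in the data splitting scenario and exactly \eref{eq:regression_weak_consistency_training} in the data recycling scenario. The only difference is that you spell out the tower-property step for the splitting case, which the paper leaves implicit.
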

\begin{proof}
Simply rewriting:
  \begin{equation*}\begin{split}
    & \lim_{N\to\infty} \Exp \left( \frac{1}{N} \norm{\hatres - \res}^2 \right) = \\
  & \lim_{N\to\infty} \Exp \left( \frac{1}{N} \sum_{n=1}^N \abs{\big(Y_n' - \hat f(X_n'; \C{D}_N)\big) - \big(Y_n' - f(X_n')\big)}^2 \right) = \\
  & \lim_{N\to\infty} \Exp_{\C{D}_N,\C{D}_N'} \left( \frac{1}{N} \sum_{n=1}^N \abs{\hat f(X_n'; \C{D}_N) - f(X_n')}^2 \right)
  \end{split}\end{equation*}
Therefore, \eref{eq:regression_suitable} reduces to \eref{eq:regression_weak_consistency} in the data splitting scenario (where each $X_n'$ is
an independent copy of $X$), and reduces to \eref{eq:regression_weak_consistency_training} in the data recycling scenario (where $X_n' = X_n$).
\end{proof}
In particular, if $\Exp (X^2) < \infty$ and $\Exp (Y^2) < \infty$, any weakly universally consistent regression method is suitable both for regressing $X$ on $Y$ and $Y$ on $X$ in the data splitting scenario.

\subsection{Consistency of \texttt{ANM-HSIC}}\label{sec:ANM_HSIC_consistency}

We can now prove our main result, stating that the empirical HSIC calculated from the test set inputs and the predicted residuals on the test set 
(using the regression function estimated from the training set) converges in probability to the population HSIC of the true inputs and the true residuals:
\begin{theorem}\label{theo:consistency}
Let $X, Y \in \RN$ be two random variables with joint distribution $\Prb_{X,Y}$. Let $k,l:\RN\times\RN\to\RN$ be two bounded non-negative kernels
and assume that $l$ is Lipschitz continuous. 
Suppose we are given sequences of training data sets $\C{D}_N$ and test data sets $\C{D}_N'$ (in either the data splitting or the data recycling scenario described above).
Suppose we use a suitable regression procedure (c.f. Lemma~\ref{lem:suitable}), 
to obtain a sequence $\hat f(x; \C{D}_N)$ of estimates of the regression function $\Exp(Y \given X=x)$ from the training data.
Defining the true residual $E$ by \eref{eq:true_residual}, and the predicted residuals $\hatres$ on the test data as in \eref{eq:pred_residuals}, then, for $N \to \infty$:
$$\empHSIC_{k,l}(\B{X}_{\dots N}',\hatres) \xto{P} \HSIC_{k,l}(X,E).$$
\end{theorem}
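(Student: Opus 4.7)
The plan is to bound the quantity of interest via a triangle inequality that separates (i) the error from replacing true residuals by predicted ones, and (ii) the usual estimation error of HSIC from a finite i.i.d.\ sample. More precisely, I would write
\[
\bigl|\empHSIC_{k,l}(\B{X}',\hatres) - \HSIC_{k,l}(X,E)\bigr|
\le
\bigl|\empHSIC_{k,l}(\B{X}',\hatres) - \empHSIC_{k,l}(\B{X}',\res)\bigr|
+ \bigl|\empHSIC_{k,l}(\B{X}',\res) - \HSIC_{k,l}(X,E)\bigr|,
\]
and show that each term tends to zero in probability.

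For the second term, observe that in both the data splitting and the data recycling scenario, $(X_n',Y_n')$ are i.i.d.\ draws from $\Prb_{X,Y}$, so the pairs $(X_n',E_n')=(X_n',Y_n'-f(X_n'))$ are i.i.d.\ draws from $\Prb_{X,E}$. Hence Corollary~\ref{coro:HSICconsistency} applied to this sample gives $\empHSIC_{k,l}(\B{X}',\res)\xto{P}\HSIC_{k,l}(X,E)$. (A minor technicality is that Corollary~\ref{coro:HSICconsistency} is stated for non-negative kernels bounded by $1$; this just amounts to rescaling $k$ and $l$ by their (finite) suprema and absorbing constants, and I would note this explicitly.)

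For the first term, this is where the new Lipschitz-type bound in Lemma~\ref{lemm:HSICbound} does the real work. Since $l$ is bounded and Lipschitz, and $k$ is bounded, the lemma yields
\[
\bigl|\empHSIC_{k,l}(\B{X}',\hatres) - \empHSIC_{k,l}(\B{X}',\res)\bigr|
\le \frac{32\lambda C}{\sqrt{N}}\,\bigl\|\hatres - \res\bigr\|
= 32\lambda C\,\sqrt{\tfrac{1}{N}\|\hatres - \res\|^2}.
\]
By the suitability of the regression procedure (Definition~\ref{def:suitable}, combined with Lemma~\ref{lem:suitable}), we have $\Exp_{\C{D}_N,\C{D}_N'}\!\bigl(\tfrac{1}{N}\|\hatres-\res\|^2\bigr)\to 0$, and Markov's inequality then gives $\tfrac{1}{N}\|\hatres-\res\|^2\xto{P}0$. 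By the continuous mapping theorem, the right-hand side of the display above converges to zero in probability, and hence so does the first term. Combining the two pieces via the triangle inequality and Slutsky-type reasoning finishes the proof.

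The main obstacle, and the whole reason for Lemma~\ref{lemm:HSICbound}, is that the predicted residuals $\hatres$ are \emph{not} independent of $\B{X}'$ (the regression error introduces a data-dependent coupling), so one cannot apply Corollary~\ref{coro:HSICconsistency} directly to $(\B{X}',\hatres)$. The key technical insight is that the Lipschitz constant in Lemma~\ref{lemm:HSICbound} scales as $1/\sqrt{N}$, which exactly matches the $L^2$-rate at which $\tfrac{1}{\sqrt{N}}\|\hatres-\res\|$ can be controlled via suitability; this is what makes the spurious dependence asymptotically negligible. Once this decoupling step is in place, everything reduces to standard convergence arguments.
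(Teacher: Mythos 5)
Your proposal is correct and follows essentially the same route as the paper's proof: the same decomposition into a residual-replacement term controlled by Lemma~\ref{lemm:HSICbound} together with the suitability condition \eref{eq:regression_suitable}, plus an i.i.d.\ term handled by Corollary~\ref{coro:HSICconsistency}, combined at the end. The only cosmetic difference is that the paper passes through $L_2$ convergence before invoking convergence in probability, whereas you use Markov's inequality and the continuous mapping theorem directly; these are interchangeable.
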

\begin{proof}
We start by applying Lemma \ref{lemm:HSICbound}:
\begin{equation*}
  \abs{\HSICa - \HSICb}^2
  \le \left(\frac{32 \lambda C}{\sqrt{N}}\right)^2 \norm{\hatres - \res}^2
\end{equation*}
where $\lambda$ and $C$ are constants. From the suitability of the regression method, \eref{eq:regression_suitable},
it therefore follows that
\begin{equation*}
  \lim_{N\to\infty} \Exp_{\C{D}_N, \C{D}_N'} \abs{\HSICa - \HSICb}^2 = 0,
\end{equation*}
i.e.,
$$\HSICa - \HSICb \xto{L_2} 0.$$
As convergence in $L_2$ implies convergence in probability (see, e.g., \citep{Wasserman2004}),
$$\HSICa - \HSICb \xto{P} 0.$$
From the consistency of the empirical HSIC, Corollary \ref{coro:HSICconsistency}:
$$\HSICb \xto{P} \HSICc.$$
Hence, by taking sums (see e.g., \citep[Theorem 5.5]{Wasserman2004}), we arrive at the desired statement.
\end{proof}

We are now ready to show that Algorithm~\ref{alg:bivariateANM} with score \eref{eq:ANM_score_HSIC_fixed} (which is the special case $k=l$) is consistent.

\begin{corollary}\label{coro:bivariate_ANM_HSIC_consistency}
Let $X, Y$ be two real-valued random variables with joint distribution $\Prb_{X,Y}$ that either satisfies an additive noise model $X \to Y$, or $Y \to X$, but not both.
Suppose we are given sequences of training data sets $\C{D}_N$ and test data sets $\C{D}_N'$ (in either the data splitting or the data recycling scenario).
Let $k,l:\RN\times\RN\to\RN$ be two bounded non-negative Lipschitz-continuous kernels such that their product $k \cdot l$ is characteristic. If the regression procedure
used in Algorithm \ref{alg:bivariateANM} is suitable for both $\Prb_{X,Y}$ and $\Prb_{Y,X}$, then Algorithm \ref{alg:bivariateANM} with score \eref{eq:ANM_score_HSIC_fixed} is a consistent procedure for
estimating the direction of the additive noise model.
\end{corollary}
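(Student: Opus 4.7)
The plan is to deduce the corollary directly from Theorem~\ref{theo:consistency} together with Lemmas~\ref{lem:ANMtest} and~\ref{lemm:HSICzero}. Without loss of generality, assume the true direction is $X \to Y$, i.e., $\Prb_{X,Y}$ satisfies an additive noise model $X \to Y$ but not $Y \to X$. The goal is then to show that the event $\{\hat C_{X\to Y} < \hat C_{Y\to X}\}$ has probability tending to $1$ as $N \to \infty$, so that Algorithm~\ref{alg:bivariateANM} returns the correct direction asymptotically.

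First I would pin down the two population limits. Let $E_Y := Y - \Exp(Y \mid X)$ and $E_X := X - \Exp(X \mid Y)$. By Lemma~\ref{lem:ANMtest}, the assumption that $\Prb_{X,Y}$ satisfies an ANM $X \to Y$ is equivalent to $E_Y \indep X$, while the assumption that it does \emph{not} satisfy an ANM $Y \to X$ gives that $E_X$ is not independent of $Y$ (the finite-mean condition in Lemma~\ref{lem:ANMtest} is automatic once the corresponding conditional expectation is well-defined). Because the product kernel $k \cdot l$ is characteristic, Lemma~\ref{lemm:HSICzero} converts these two statements into
\begin{equation*}
\HSIC_{k,l}(X, E_Y) = 0 \quad \text{and} \quad \Delta := \HSIC_{k,l}(Y, E_X) > 0.
\end{equation*}

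Next I would apply Theorem~\ref{theo:consistency} twice. Both kernels are bounded, non-negative, and Lipschitz continuous, so the kernel hypotheses of the theorem are satisfied in either orientation (the theorem only requires one of the two kernels to be Lipschitz). Suitability of the regression procedure for $\Prb_{X,Y}$ supplies the remaining hypothesis in the forward application, giving
\begin{equation*}
\hat C_{X\to Y} = \empHSIC_{k,l}(\B{x}', \B{\hat e}_Y') \xto{P} \HSIC_{k,l}(X, E_Y) = 0,
\end{equation*}
and suitability for $\Prb_{Y,X}$, after exchanging the roles of $X$ and $Y$, gives
\begin{equation*}
\hat C_{Y\to X} = \empHSIC_{k,l}(\B{y}', \B{\hat e}_X') \xto{P} \HSIC_{k,l}(Y, E_X) = \Delta.
\end{equation*}

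Finally, a simple continuous-mapping-style argument closes the proof: for any $\epsilon \in (0, \Delta/2)$, the probabilities $\Prb(\hat C_{X\to Y} < \epsilon)$ and $\Prb(\hat C_{Y\to X} > \Delta - \epsilon)$ both tend to $1$, and on their intersection Algorithm~\ref{alg:bivariateANM} outputs $X \to Y$. I do not expect a serious obstacle at this stage, because Theorem~\ref{theo:consistency} already absorbs the genuinely delicate step, namely controlling the dependence between $\B{x}'$ and $\B{\hat e}_Y'$ induced by the regression estimation error. The remaining work is essentially bookkeeping: verifying that the kernel and suitability hypotheses line up correctly when Theorem~\ref{theo:consistency} is invoked in each of the two directions.
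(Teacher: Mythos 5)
Your proposal is correct and follows essentially the same route as the paper's own proof: characterize the two population HSIC values via Lemma~\ref{lem:ANMtest} and Lemma~\ref{lemm:HSICzero}, invoke Theorem~\ref{theo:consistency} once in each direction, and conclude from the fact that exactly one of the two limits is zero. The only difference is that you spell out the final convergence-in-probability bookkeeping (which the paper leaves implicit) and write $\HSIC_{k,l}(Y,E_X)$ where the paper writes $\HSIC_{l,k}(Y,E_X)$, an immaterial distinction since the score \eref{eq:ANM_score_HSIC_fixed} is the case $k=l$.
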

\begin{proof}
Define ``population residuals'' $E_Y := Y - \Exp(Y \given X)$ and $E_X := X - \Exp(X \given Y)$.
Note that $\Prb_{X,Y}$ satisfies a bivariate additive noise model $X \to Y$ if and only if $E_Y \indep X$
(c.f.\ Lemma~\ref{lem:ANMtest}). Further, by Lemma~\ref{lemm:HSICzero}, we have $\HSIC_{k,l}(X,E_Y) = 0$ 
if and only if $X \indep E_Y$.
Similarly, $\Prb_{X,Y}$ satisfies a bivariate additive noise model $Y \to X$ if and only if $\HSIC_{l,k}(Y,E_X) = 0$.

Now, by Theorem \ref{theo:consistency},
$$\hat C_{X\to Y} := \empHSIC_{k,l}(\B{X}_{\dots N}',\hat{\B{E}}_Y(\C{D}_N';\C{D}_N)) \xto{P} \HSIC_{k,l}(X,E_Y),$$
and similarly 
$$\hat C_{Y\to X} := \empHSIC_{l,k}(\B{Y}_{\dots N}',\hat{\B{E}}_X(\C{D}_N';\C{D}_N)) \xto{P} \HSIC_{l,k}(Y,E_X),$$
where the predicted residuals are defined by
\begin{align}\label{eq:pred_residuals_XY}
  \hat{\B{E}}_Y(\C{D}_N';\C{D}_N) & := \big(Y_1' - \hat f_Y(X_1'; \C{D}_N), \dots, Y_N' - \hat f_Y(X_N'; \C{D}_N)\big), \\
  \hat{\B{E}}_X(\C{D}_N';\C{D}_N) & := \big(X_1' - \hat f_X(Y_1'; \C{D}_N), \dots, X_N' - \hat f_X(Y_N'; \C{D}_N)\big).
\end{align}
with estimates $\hat f_Y(x; \C{D}_N), \hat f_X(y; \C{D}_N)$ of the regression functions $\Exp(Y \given X=x), \Exp(X \given Y=y)$ from the training data $\C{D}_N$.

Because $\Prb_{X,Y}$ satisfies an additive noise model only in one of the two directions, this
implies that either $\HSIC_{k,l}(X,E_Y) = 0$ and $\HSIC_{l,k}(Y,E_X) > 0$ (corresponding with $X \to Y$),
or $\HSIC_{k,l}(X,E_Y) > 0$ and $\HSIC_{l,k}(Y,E_X) = 0$ (corresponding with $Y \to X$).
Therefore the test procedure is consistent.
\end{proof}

%

\section{Relationship between scores \eref{eq:ANM_score_Gauss} and \eref{eq:ANM_score_FriedmanNachman}}\label{sec:gp} 

For the special case of an additive noise model $X \to Y$, the empirical-Bayes score proposed in \citet{FriedmanNachman2000} is given in
\eref{eq:ANM_score_FriedmanNachman}:
$$\hat C_{X \to Y} = \min_{\mu, \tau^2, \B{\theta}, \sigma^2} \left( -\log \C{N}(\B{x} \given \mu \B{1}, \tau^2 \B{I}) - \log \C{N}(\B{y} \given \B{0}, \B{K}_{\B{\theta}}(\B{x}) + \sigma^2 \B{I}) \right).$$
It is a sum of the negative log likelihood of a Gaussian model for the inputs:
\begin{equation}\label{eq:Gauss_input}\begin{split}
  & \min_{\mu, \tau^2} \big( -\log \C{N}(\B{x} \given \mu \B{1}, \tau^2 \B{I}) \big) \\
  & = \min_{\mu, \tau^2} \left( \frac{N}{2} \log (2\pi\tau^2) + \frac{1}{2 \tau^2} \sum_{i=1}^N (x_i - \mu)^2 \right) \\
  & = \frac{N}{2} \log (2\pi e) + \frac{N}{2} \log \left( \frac{1}{N} \sum_{i=1}^N (x_i - \bar x)^2 \right)
\end{split}\end{equation}
with $\bar x := \frac{1}{N} \sum_{i=1}^N x_i$, and the negative log marginal likelihood of a GP model for the outputs, given the inputs:
\begin{equation}\label{eq:Gauss_conditional}\begin{split}
  & \min_{\B{\theta}, \sigma^2} \big( - \log \C{N}(\B{y} \given 0, \B{K}_{\B{\theta}}(\B{x}) + \sigma^2 \B{I}) \big) \\
  & = \min_{\B{\theta}, \sigma^2} \left( \frac{N}{2} \log (2\pi) + \frac{1}{2} \log |\det (\B{K}_{\B{\theta}}(\B{x}) + \sigma^2 \B{I})| + \B{y}^T (\B{K}_{\B{\theta}}(\B{x}) + \sigma^2 \B{I})^{-1} \B{y} \right).
\end{split}\end{equation}
Note that \eref{eq:Gauss_input} is an empirical estimator of the entropy of a Gaussian with variance $\Var(X)$, up to a factor $N$:
$$H(X) = \frac{1}{2} \log(2\pi e) + \frac{1}{2} \log \Var(X).$$
We will show that \eref{eq:Gauss_conditional} is closely related to an empirical estimator of the entropy of 
the residuals $Y - \Exp(Y \given X)$:
$$H(Y - \Exp(Y \given X)) = \frac{1}{2} \log(2\pi e) + \frac{1}{2} \log \Var(Y - \Exp(Y \given X)).$$
This means that the score \eref{eq:ANM_score_FriedmanNachman} considered by \citet{FriedmanNachman2000} is closely
related to the Gaussian score \eref{eq:ANM_score_Gauss} for $X \to Y$:
$$\hat C_{X\to Y} = \log \Var(X) + \log \Var(Y-\hat f_Y(X)).$$


The following Lemma shows that standard Gaussian Process regression can be interpreted as a penalized maximum likelihood optimization. 
\begin{lemma}\label{lemm:gp_as_pml}
  Let $\B{K}_{\B{\theta}}(\B{x})$ be the kernel matrix (abbreviated as $\B{K}$) and define a negative penalized log-likelihood as:
\begin{equation}\label{eq:gp_as_pml}
  -\log \C{L}(\B{f}, \sigma^2; \B{y}, \B{K}) := \underbrace{\frac{N}{2} \log(2\pi \sigma^2) + \frac{1}{2 \sigma^2} \sum_{i=1}^N (y_i - f_i)^2}_{Likelihood} + \underbrace{\frac{1}{2} \B{f}^T \B{K}^{-1} \B{f} + \frac{1}{2} \log |\det \big(\B{I} + \sigma^{-2} \B{K}\big)|}_{Penalty}.
\end{equation}
Minimizing with respect to $\B{f}$ yields a minimum at:
\begin{equation}\label{eq:gp_as_pml_minimum}
  \B{\hat f}_{\sigma,\B{\theta}} = \argmin_{\B{f}} \left( -\log \C{L}(\B{f}, \sigma^2; \B{y}, \B{K}_{\B{\theta}}) \right) = \B{K}_{\B{\theta}}(\B{K}_{\B{\theta}} + \sigma^2 \B{I})^{-1} \B{y},
\end{equation}
and the value at the minimum is given by:
\begin{equation}\label{eq:gp_as_pml_value}
  \min_{\B{f}} \left( -\log \C{L}(\B{f}, \sigma^2; \B{y}, \B{K}_{\B{\theta}}) \right) = -\log \C{L}(\B{\hat f}_{\sigma,\B{\theta}}, \sigma^2; \B{y}, \B{K}_{\B{\theta}}) = -\log \C{N}(\B{y} \given \B{0}, \B{K}_{\B{\theta}} + \sigma^2 \B{I}).
\end{equation}
\end{lemma}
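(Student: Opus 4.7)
The plan is to treat this as a direct computation: $\B{f}$ appears only in the likelihood term and in the first penalty term of \eref{eq:gp_as_pml}, both of which are quadratic in $\B{f}$, so the minimization has a unique closed-form solution. I would first take the gradient of $-\log \C{L}$ with respect to $\B{f}$, which yields $-\sigma^{-2}(\B{y}-\B{f}) + \B{K}^{-1}\B{f}$, and setting it to zero gives $(\B{K}^{-1} + \sigma^{-2}\B{I})\B{f} = \sigma^{-2}\B{y}$. Using the identity $\B{K}^{-1} + \sigma^{-2}\B{I} = \sigma^{-2}\B{K}^{-1}(\B{K} + \sigma^2 \B{I})$ (which relies only on $\B{K}$ commuting with itself and invertibility of $\B{K}$ and $\B{K} + \sigma^2 \B{I}$), inversion immediately yields $\B{\hat f}_{\sigma,\B{\theta}} = \B{K}(\B{K} + \sigma^2\B{I})^{-1}\B{y}$, giving \eref{eq:gp_as_pml_minimum}. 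Convexity (the Hessian $\sigma^{-2}\B{I} + \B{K}^{-1}$ is positive definite) confirms this is indeed the minimum.

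For the value at the minimum \eref{eq:gp_as_pml_value}, I would proceed by matching the three ingredients of $-\log\C{N}(\B{y}\given\B{0},\B{K}+\sigma^2\B{I})$, namely the constant $\frac{N}{2}\log(2\pi)$, the log-determinant $\frac{1}{2}\log|\det(\B{K}+\sigma^2\B{I})|$, and the quadratic form $\frac{1}{2}\B{y}^T(\B{K}+\sigma^2\B{I})^{-1}\B{y}$. The constants and log-determinants combine cleanly: $\frac{N}{2}\log(2\pi\sigma^2) + \frac{1}{2}\log|\det(\B{I}+\sigma^{-2}\B{K})| = \frac{N}{2}\log(2\pi) + \frac{1}{2}\log|\det(\sigma^2\B{I})(\B{I}+\sigma^{-2}\B{K})| = \frac{N}{2}\log(2\pi) + \frac{1}{2}\log|\det(\B{K}+\sigma^2\B{I})|$, which matches.

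The remaining piece is to show that the two quadratic terms collapse to $\frac{1}{2}\B{y}^T(\B{K}+\sigma^2\B{I})^{-1}\B{y}$. Writing $\B{A} := (\B{K}+\sigma^2\B{I})^{-1}$, the minimizer reads $\B{\hat f} = \B{K}\B{A}\B{y}$, so $\B{y} - \B{\hat f} = (\B{I} - \B{K}\B{A})\B{y} = \sigma^2\B{A}\B{y}$ (using $\B{I} - \B{K}\B{A} = (\B{K}+\sigma^2\B{I})\B{A} - \B{K}\B{A} = \sigma^2\B{A}$). Substituting:
\begin{equation*}
\tfrac{1}{2\sigma^2}\|\B{y}-\B{\hat f}\|^2 + \tfrac{1}{2}\B{\hat f}^T\B{K}^{-1}\B{\hat f}
= \tfrac{\sigma^2}{2}\B{y}^T\B{A}^2\B{y} + \tfrac{1}{2}\B{y}^T\B{A}\B{K}\B{A}\B{y}
= \tfrac{1}{2}\B{y}^T\B{A}(\sigma^2\B{I}+\B{K})\B{A}\B{y}
= \tfrac{1}{2}\B{y}^T\B{A}\B{y},
\end{equation*}
where symmetry of $\B{A}$ and $\B{K}$ and the commutation of $\B{K}$ with $\B{A}$ are used. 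This completes the identification.

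No step presents a genuine obstacle; the only care needed is in handling the non-commutative algebra carefully and in checking that all matrix inverses exist (which follows from $\B{K}$ being a positive definite kernel matrix and $\sigma^2>0$). The key algebraic identity $\B{I}-\B{K}(\B{K}+\sigma^2\B{I})^{-1} = \sigma^2(\B{K}+\sigma^2\B{I})^{-1}$ is what makes both the penalized-likelihood interpretation and the collapse of the quadratic form work, and recognizing this upfront streamlines the calculation considerably.
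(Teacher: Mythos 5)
Your proof is correct, but it takes a genuinely different route from the paper's. You minimize directly: take the gradient in $\B{f}$, solve the normal equations via $\B{K}^{-1}+\sigma^{-2}\B{I} = \sigma^{-2}\B{K}^{-1}(\B{K}+\sigma^2\B{I})$, confirm convexity from the Hessian, and then verify the value at the minimum by matching the constant, log-determinant, and quadratic pieces separately, with the key observation $\B{I}-\B{K}(\B{K}+\sigma^2\B{I})^{-1}=\sigma^2(\B{K}+\sigma^2\B{I})^{-1}$ collapsing the two quadratic terms to $\frac{1}{2}\B{y}^T(\B{K}+\sigma^2\B{I})^{-1}\B{y}$. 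The paper instead rewrites the penalized likelihood as $\C{N}(\B{f}\,|\,\B{0},\B{K})\,\C{N}(\B{y}-\B{f}\,|\,\B{0},\sigma^2\B{I})$ times an $\B{f}$-independent factor and invokes the Gaussian product formula (Rasmussen and Williams, Eq.\ A.7) to factor this as $\C{N}(\B{y}\,|\,\B{0},\B{K}+\sigma^2\B{I})\,\C{N}(\B{f}\,|\,\B{\hat f},(\B{K}^{-1}+\sigma^{-2}\B{I})^{-1})$, from which both the minimizer and the minimum value are read off at once; this makes the Bayesian interpretation explicit (the minimizer is the GP posterior mean and the minimum value is the negative log marginal likelihood), which is the point the surrounding appendix is driving at. Your argument is more elementary and self-contained, at the cost of obscuring that interpretation; all your algebraic steps check out, including the use of symmetry and the commutation of $\B{K}$ with $(\B{K}+\sigma^2\B{I})^{-1}$.
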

\begin{proof}
Because $\B{B} (\B{A}^{-1} + \B{B}^{-1}) \B{A} = \B{A} + \B{B}$ for invertible (equally-sized square) matrices $\B{A}, \B{B}$, the following identity holds:
$$(\B{A}^{-1} + \B{B}^{-1})^{-1} = \B{A} (\B{A} + \B{B})^{-1} \B{B}.$$
Substituting $\B{A} = \B{K}$ and $\B{B} = \sigma^2 \B{I}$, we obtain directly that:
\begin{equation}\label{eq:gp_as_pml_id3}
  (\B{K}^{-1} + \sigma^{-2} \B{I})^{-1} = \B{K} (\B{K} + \sigma^2 \B{I})^{-1} \sigma^2.
\end{equation}
By taking log-determinants, it also follows that:
\begin{equation}\label{eq:gp_as_pml_id1}
  \log |\det \B{K}| + \log |\det (\B{K}^{-1} + \sigma^{-2} \B{I})| = \log |\det \big(\B{I} + \sigma^{-2}\B{K}\big)|.
\end{equation}
Therefore, we can rewrite \eref{eq:gp_as_pml} as follows:
\begin{equation}\label{eq:gp_as_pml_id4}
\C{L}(\B{f}, \sigma^2; \B{y}, \B{K}) = \C{N}(\B{f} | \B{0}, \B{K}) \C{N}(\B{y} - \B{f} | \B{0}, \sigma^2 \B{I}) \abs{\det \big(\B{K}^{-1} + \sigma^{-2} \B{I}\big)}^{-1/2} (2\pi)^{N/2}.
\end{equation}

Equation (A.7) in \citep{RasmussenWilliams2006} for the product of two Gaussians states that
$$\C{N}(\B{x}|\B{a},\B{A}) \C{N}(\B{x}|\B{b},\B{B}) = \C{N}(\B{a}|\B{b},\B{A}+\B{B}) \C{N}(\B{x}|\B{c},\B{C}),$$
where $\B{C} = (\B{A}^{-1} + \B{B}^{-1})^{-1}$ and $\B{c} = \B{C}(\B{A}^{-1}\B{a} + \B{B}^{-1}\B{b})$.
Substituting $\B{x} = \B{f}$, $\B{a} = \B{0}$, $\B{A} = \B{K}$, $\B{b} = \B{y}$, and $\B{B} = \sigma^2\B{I}$, and using \eref{eq:gp_as_pml_id3}, this gives:
\begin{equation}\label{eq:gp_as_pml_id2}\begin{split}
  \C{N}(\B{f} | \B{0}, \B{K}) \C{N}(\B{y} - \B{f} | \B{0}, \sigma^2 \B{I}) = \C{N}(\B{y} \given \B{0}, \B{K} + \sigma^2 \B{I}) \C{N}(\B{f} | \B{\hat f}, (\B{K}^{-1} + \sigma^{-2} \B{I})^{-1}),
\end{split}\end{equation}
where
$$\B{\hat f}_{\sigma,\B{\theta}} := \B{K}_{\B{\theta}}(\B{K}_{\B{\theta}} + \sigma^2 \B{I})^{-1} \B{y}.$$ 
Therefore, we can rewrite \eref{eq:gp_as_pml_id4} as:
$$\C{L}(\B{f}, \sigma^2; \B{y}, \B{K}) = \C{N}(\B{y} \given \B{0}, \B{K} + \sigma^2 \B{I}) \C{N}(\B{f} | \B{\hat f}, (\B{K}^{-1} + \sigma^{-2} \B{I})^{-1}) \abs{\det \big(\B{K}^{-1} + \sigma^{-2} \B{I}\big)}^{-1/2} (2\pi)^{N/2}.$$

It is now obvious that the penalized likelihood is maximized for $\B{f} = \B{\hat f}_{\sigma,\B{\theta}}$ (for fixed hyperparameters $\sigma,\B{\theta}$) and that at the maximum, it has the value 
$$\C{L}(\B{\hat f}_{\sigma,\B{\theta}}, \sigma^2; \B{y}, \B{K}_{\B{\theta}}) = \C{N}(\B{y} \given \B{0}, \B{K}_{\B{\theta}} + \sigma^2 \B{I}).$$
\end{proof}
Note that the estimated function \eref{eq:gp_as_pml_minimum} is identical to the mean posterior GP,
and the value \eref{eq:gp_as_pml_value} is identical to the negative logarithm of the marginal likelihood (evidence) of the data according to the GP model \citep{RasmussenWilliams2006}.

Making use of Lemma~\ref{lemm:gp_as_pml}, the conditional part \eref{eq:Gauss_conditional} in score \eref{eq:ANM_score_FriedmanNachman} can be rewritten as:
\begin{equation*}\begin{split}
  &\min_{\sigma^2,\B{\theta}} \left(-\log \C{N}(\B{y} \given \B{0}, \B{K}_{\B{\theta}} + \sigma^2 \B{I})\right) \\
  & = \min_{\sigma^2, \B{\theta}} \left( \frac{N}{2} \log(2\pi \sigma^2) + \frac{1}{2 \sigma^2} \sum_{i=1}^N (y_i - (\B{\hat f}_{\sigma,\B{\theta}})_i)^2 + \frac{1}{2} \B{\hat f}_{\sigma,\B{\theta}}^T \B{K}_{\B{\theta}}^{-1} \B{\hat f}_{\sigma,\B{\theta}} + \frac{1}{2} \log |\det \big(\B{I} + \sigma^{-2} \B{K}_{\B{\theta}}\big)| \right) \\
  & = \underbrace{\frac{N}{2} \log(2\pi \hat\sigma^2) + \frac{1}{2 \hat\sigma^2} \sum_{i=1}^N (y_i - (\B{\hat f})_i)^2}_{\text{Likelihood term}} + \underbrace{\frac{1}{2} \B{\hat f}^T \B{K}_{\B{\hat \theta}}^{-1} \B{\hat f} + \frac{1}{2} \log |\det \big(\B{I} + \hat\sigma^{-2} \B{K}_{\B{\hat \theta}}\big)|}_{\text{Complexity penalty}},
\end{split}\end{equation*}
where $\B{\hat f} := \B{\hat f}_{\hat \sigma, \B{\hat \theta}}$ for the minimizing $(\hat \sigma, \B{\hat \theta})$.
If the complexity penalty is small compared to the likelihood term around the optimal values $(\hat \sigma, \B{\hat \theta})$, we can approximate:
\begin{equation*}\begin{split}
  & \min_{\sigma^2,\B{\theta}} \left(-\log \C{N}(\B{y} \given \B{0}, \B{K}_{\B{\theta}} + \sigma^2 \B{I})\right) \\
  & \approx \frac{N}{2} \log(2\pi \hat\sigma^2) + \frac{1}{2 \hat\sigma^2} \sum_{i=1}^N (y_i - \hat f_i)^2 \\
  & \approx \min_{\sigma^2} \left( \frac{N}{2} \log(2\pi\sigma^2) + \frac{1}{2 \sigma^2} \sum_{i=1}^N (y_i - \hat f_i)^2 \right) \\
  & = \frac{N}{2} \log(2\pi e) + \frac{N}{2} \log \left( \frac{1}{N} \sum_{i=1}^N (y_i - \hat f_i)^2 \right).
\end{split}\end{equation*}
This shows that there is a close relationship between the two scores
\eref{eq:ANM_score_FriedmanNachman} and \eref{eq:ANM_score_Gauss}.

\section{Details on the simulated data}\label{sec:sim} 


\subsection{Sampling from a random density}
We first describe how we sample from a random density. First, we sample $\B{X} \in \RN^N$ from a standard-normal distribution:
$$\B{X} \sim \C{N}(\B{0}_N,\B{I}_N)$$
and define $\overrightarrow{\B{X}}$ to be the vector that is obtained by sorting $\B{X}$ in ascending order.
Then, we sample a realization $\B{F}$ of a Gaussian Process with inputs $\overrightarrow{\B{X}}$, 
using a kernel with hyperparameters $\B{\theta}$ and white noise with standard deviation $\sigma$:
$$\B{F} \sim \C{N}(\B{0}, \B{K}_{\B{\theta}}(\overrightarrow{\B{X}}) + \sigma^2 \B{I}),$$
where $\B{K}_{\B{\theta}}(\overrightarrow{\B{X}})$ is the Gram matrix for $\overrightarrow{\B{X}}$ using
kernel $k_{\B{\theta}}$.
We use the trapezoidal rule to calculate the cumulative integral of the function $e^F : \RN \to \RN$ that
linearly interpolates the points $(\overrightarrow{\B{X}}, \exp(\B{F}))$. In this way, we obtain a vector
$\B{G} \in \RN^N$ where each element $G_i$ corresponds to $\int_{\overrightarrow{X}_1}^{\overrightarrow{X}_i} e^{F}(x) \,dx$.
As covariance function, we used the Gaussian kernel:
\begin{equation}\label{eq:kernel}
  k_{\B{\theta}}(\B{x},\B{x}') = \exp \left( -\sum_{i=1}^D \frac{(x_i - x_i')^2}{\theta_i^2} \right).
\end{equation}
We will denote this whole sampling procedure by:
$$\B{G} \sim \C{RD}(\B{\theta},\sigma).$$

\subsection{Sampling cause-effect pairs}
We simulate cause-effect pairs as follows. First, we sample three noise variables:
\begin{align*}
  W_{E_X} &\sim \Gamma(a_{W_{E_X}}, b_{W_{E_X}}) & \B{E}_X &\sim \C{RD}(W_{E_X}, \tau)\\
  W_{E_Y} &\sim \Gamma(a_{W_{E_Y}}, b_{W_{E_Y}}) & \B{E}_Y &\sim \C{RD}(W_{E_X}, \tau)\\
  W_{E_Z} &\sim \Gamma(a_{W_{E_Z}}, b_{W_{E_Z}}) & \B{E}_Z &\sim \C{RD}(W_{E_Z}, \tau)
\end{align*}
where each noise variable has a random characteristic length scale. We then standardize each noise sample $\B{E}_X$, $\B{E}_Y$ and $\B{E}_Z$.

If there is no confounder, we sample $\B{X}$ from a GP with inputs $\B{E}_X$:
\begin{align*}
  S_{E_X} & \sim \Gamma(a_{S_{E_X}},b_{S_{E_X}}) \\
  \B{X} & \sim \C{N}(\B{0}, \B{K}_{S_{E_X}}(\B{E}_X) + \tau^2 \B{I})
\end{align*}
and then we standardize $\B{X}$. Then, we sample $\B{Y}$ from a GP with inputs $(\B{X},\B{E}_Y) \in \RN^{N\times 2}$:
\begin{align*}
  S_{X} & \sim \Gamma(a_{S_{X}},b_{S_{X}}) \\
  S_{E_Y} & \sim \Gamma(a_{S_{E_Y}},b_{S_{E_Y}}) \\
  \B{Y} & \sim \C{N}(\B{0}, \B{K}_{(S_X,S_{E_Y})}((\B{X},\B{E}_Y)) + \tau^2 \B{I})
\end{align*}
and then we standardize $\B{Y}$.

If there is a confounder, we sample $\B{X}$ from a GP with inputs $(\B{E}_X, \B{E}_Z) \in \RN^{N\times 2}$:
\begin{align*}
  S_{E_X} & \sim \Gamma(a_{S_{E_X}},b_{S_{E_X}}) \\
  S_{E_Z} & \sim \Gamma(a_{S_{E_Z}},b_{S_{E_Z}}) \\
\B{X} & \sim \C{N}(\B{0}, \B{K}_{(S_{E_X},S_{E_Z})}((\B{E}_X,\B{E}_Z)) + \tau^2 \B{I})
\end{align*}
and then we standardize $\B{X}$.
Then, we sample $\B{Y}$ from a GP with inputs $(\B{X},\B{E}_Y,\B{E}_Z) \in \RN^{N\times 3}$:
\begin{align*}
  S_{X} & \sim \Gamma(a_{S_{X}},b_{S_{X}}) \\
  S_{E_Y} & \sim \Gamma(a_{S_{E_Y}},b_{S_{E_Y}}) \\
  S_{E_Z} & \sim \Gamma(a_{S_{E_Z}},b_{S_{E_Z}}) \\
  \B{Y} & \sim \C{N}(\B{0}, \B{K}_{(S_X,S_{E_Y},S_{E_Z})}((\B{X},\B{E}_Y,\B{E}_Z)) + \tau^2 \B{I})
\end{align*}
and then we standardize $\B{Y}$.

Finally, we add measurement noise:
\begin{align*}
  S_{M_X} & \sim \Gamma(a_{S_{M_X}},b_{S_{M_X}}) \\
  \B{M}_X & \sim \C{N}(\B{0}, S_{M_X}^2 \B{I}) \\
  \B{X} & \leftarrow \B{X} + \B{M}_X \\
  S_{M_Y} & \sim \Gamma(a_{S_{M_Y}},b_{S_{M_Y}}) \\
  \B{M}_Y & \sim \C{N}(\B{0}, S_{M_Y}^2 \B{I}) \\
  \B{Y} & \leftarrow \B{Y} + \B{M}_Y
\end{align*}

We considered the four scenarios in Table~\ref{tab:sim_scenarios}: \texttt{SIM}, a scenario
without confounders; \texttt{SIM-c}, a similar scenario but with one confounder; 
\texttt{SIM-ln}, a scenario with low noise levels (for which we expect IGCI to perform
well); \texttt{SIM-G}, a scenario with a distribution of $X$ that is almost Gaussian.
We used $N = 1000$ samples for each pair, and simulated 100 cause-effect pairs for each scenario.


\begin{table}
\caption{\label{tab:sim_scenarios}Parameter settings used to simulate cause-effect pairs for four scenarios. The common parameters for the four scenarios are:
$\tau = 10^{-4}$, $(a_{Wn{E_Y}}, b_{W_{E_Y}}) = (5,0.1)$, $(a_{W_{E_Z}}, b_{W_{E_Z}}) = (5,0.1)$, $(a_{S_{E_Z}},b_{S_{E_Z}}) = (2,15)$, $(a_{S_{X}},b_{S_{X}}) = (2,15)$.}
\small
\centerline{\begin{tabular}{lcccccc}
  name &                    \# confounders & $(a_{W_{E_X}}, b_{W_{E_X}})$ &    $(a_{S_{E_X}},b_{S_{E_X}})$ &   $(a_{S_{E_Y}},b_{S_{E_Y}})$ &    $(a_{S_{M_X}},b_{S_{M_X}})$ &  $(a_{S_{M_Y}},b_{S_{M_Y}})$ \\
  \hline
  \texttt{SIM} &            0 &              $(5,0.1)$ &        $(2,1.5)$ &        $(2,15)$ &  $(2,0.1)$ &  $(2,0.1)$ \\
  \texttt{SIM-c} &          1 &              $(5,0.1)$ &        $(2,1.5)$ &        $(2,15)$ &  $(2,0.1)$ &  $(2,0.1)$ \\
  \texttt{SIM-ln} &         0 &              $(5,0.1)$ &        $(2,1.5)$ &        $(2,300)$ & $(2,0.01)$ & $(2,0.01)$ \\
  \texttt{SIM-G} &          0 &              $(10^6,10^{-3})$ & $(10^6,10^{-3})$ & $(2,15)$ &  $(2,0.1)$ &  $(2,0.1)$ \\
\end{tabular}}
\end{table}

\section{Description of the {\tt CauseEffectPairs} benchmark set}\label{sec:dataset}


The {\tt CauseEffectPairs} benchmark set described here is an extension of the
collection of the eight data sets that formed the {\tt CauseEffectPairs} task 
in the \emph{Causality Challenge \#2: Pot-Luck}
competition \citep{MooijJanzing_JMLR_10} that was performed as part of the
NIPS 2008 Workshop on Causality \citep{NIPSCausalityChallenge2008}.\footnote{The introduction of this 
section and the descriptions of these 8 pairs are heavily based on \citep{MooijJanzing_JMLR_10}.}
Here we describe version \version\ of the {\tt CauseEffectPairs} benchmark, which consists of \nrpairs\ ``cause-effect pairs'', each one
consisting of samples of a pair of statistically dependent
random variables, where one variable is known to cause the other
one. The task is to identify for each pair which of the two variables is the cause and which
one the effect, using the observed samples only. The data are publicly available at \citep{MooijJanzingSchoelkopf2014}.

The data sets were selected such
that we expect common agreement on the ground truth. For example, the first pair
consists of measurements of altitude and mean annual temperature of more than 300 weather
stations in Germany. It should be obvious that altitude causes temperature rather than the other way around. Even though part of the
statistical dependences may also be due to hidden common causes and selection bias,
we expect that there is a significant cause-effect relation between the two
variables in each pair, based on our understanding of the data
generating process.

The best way to decide upon the ground truth of the causal relationships in the
systems that generated the data would be by performing interventions on one of
the variables and observing whether the intervention changes the distribution
of the other variable.  Unfortunately, these interventions cannot be performed in
practice for many of the existing pairs because the original
data-generating system is no longer available, or because of other practical
reasons. Therefore, we have selected data sets in which the causal
direction should be clear from the meanings of the variables and the way in
which the data were generated. Unfortunately, for many data sets that are publicly
available, it is not always clearly documented exactly how the variables are defined
and measured.

In selecting the cause-effect pair data sets, we applied the following criteria:
\begin{itemize}
\item The minimum number of samples per pair should be a few hundred;
\item The variables should have values in $\RN^d$ for some $d=1,2,3,\dots$;
\item There should be a significant cause--effect relationship between the
two variables;
\item The direction of the causal relationship should be known or obvious from
the meaning of the variables.
\end{itemize}
Version \version\ of the {\tt CauseEffectPairs} collection consists of \nrpairs\ pairs satisfying
these criteria, taken from \nrdatasets\ different data sets from different domains. We refer to these
pairs as {\tt pair0001}, \dots, {\tt pair00\nrpairs}. Table~\ref{tab:CEP_pairs} gives an overview of
the cause-effect pairs. In the following subsections, we describe the cause-effect pairs in detail, 
and motivate our decisions on the causal relationships present in the pairs. We provide a scatter plot
for each pair, where the horizontal axis corresponds with the cause, and the vertical axis with the
effect. For completeness, we describe all the pairs in the data set, including those
that have been described before in \citep{MooijJanzing_JMLR_10}.

\bigskip
{\tiny\begin{longtable}{llllcl}
\caption{\label{tab:CEP_pairs} Overview of the pairs in version \version\ of the \CEP\ benchmark.}\\
\hline
Pair & Variabele 1 & Variable 2 & Dataset & Ground Truth & Weight \\
\hline
{\tt pair0001} & Altitude & Temperature & \hyperlink{sec:D1}{D1} & $\rightarrow$ & 1/6 \\
{\tt pair0002} & Altitude & Precipitation & \hyperlink{sec:D1}{D1} & $\rightarrow$ & 1/6 \\
{\tt pair0003} & Longitude & Temperature & \hyperlink{sec:D1}{D1} & $\rightarrow$ & 1/6 \\
{\tt pair0004} & Altitude & Sunshine hours & \hyperlink{sec:D1}{D1} & $\rightarrow$ & 1/6 \\
\hline
{\tt pair0005} & Age & Length & \hyperlink{sec:D2}{D2} & $\rightarrow$ & 1/7 \\
{\tt pair0006} & Age & Shell weight & \hyperlink{sec:D2}{D2} & $\rightarrow$ & 1/7 \\
{\tt pair0007} & Age & Diameter & \hyperlink{sec:D2}{D2} & $\rightarrow$ & 1/7 \\
{\tt pair0008} & Age & Height & \hyperlink{sec:D2}{D2} & $\rightarrow$ & 1/7 \\
{\tt pair0009} & Age & Whole weight & \hyperlink{sec:D2}{D2} & $\rightarrow$ & 1/7 \\
{\tt pair0010} & Age & Shucked weight & \hyperlink{sec:D2}{D2} & $\rightarrow$ & 1/7 \\
{\tt pair0011} & Age & Viscera weight & \hyperlink{sec:D2}{D2} & $\rightarrow$ & 1/7 \\
\hline
{\tt pair0012} & Age & Wage per hour & \hyperlink{sec:D3}{D3} & $\rightarrow$ & 1/2 \\
\hline
{\tt pair0013} & Displacement & Fuel consumption & \hyperlink{sec:D4}{D4} & $\rightarrow$ & 1/4 \\
{\tt pair0014} & Horse power & Fuel consumption & \hyperlink{sec:D4}{D4} & $\rightarrow$ & 1/4 \\
{\tt pair0015} & Weight & Fuel consumption & \hyperlink{sec:D4}{D4} & $\rightarrow$ & 1/4 \\
{\tt pair0016} & Horsepower & Acceleration & \hyperlink{sec:D4}{D4} & $\rightarrow$ & 1/4 \\
\hline
{\tt pair0017} & Age & Dividends from stocks & \hyperlink{sec:D3}{D3} & $\rightarrow$ & 1/2 \\
\hline
{\tt pair0018} & Age & Concentration GAG & \hyperlink{sec:D5}{D5} & $\rightarrow$ & 1 \\
\hline
{\tt pair0019} & Current duration & Next interval & \hyperlink{sec:D6}{D6} & $\rightarrow$ & 1 \\
\hline
{\tt pair0020} & Latitude & Temperature & \hyperlink{sec:D1}{D1} & $\rightarrow$ & 1/6 \\
{\tt pair0021} & Longitude & Precipitation & \hyperlink{sec:D1}{D1} & $\rightarrow$ & 1/6 \\
\hline
{\tt pair0022} & Age & Height & \hyperlink{sec:D7}{D7} & $\rightarrow$ & 1/3 \\
{\tt pair0023} & Age & Weight & \hyperlink{sec:D7}{D7} & $\rightarrow$ & 1/3 \\
{\tt pair0024} & Age & Heart rate & \hyperlink{sec:D7}{D7} & $\rightarrow$ & 1/3 \\
\hline
{\tt pair0025} & Cement & Compressive strength & \hyperlink{sec:D8}{D8} & $\rightarrow$ & 1/8 \\
{\tt pair0026} & Blast furnace slag & Compressive strength & \hyperlink{sec:D8}{D8} & $\rightarrow$ & 1/8 \\
{\tt pair0027} & Fly ash & Compressive strength & \hyperlink{sec:D8}{D8} & $\rightarrow$ & 1/8 \\
{\tt pair0028} & Water & Compressive strength & \hyperlink{sec:D8}{D8} & $\rightarrow$ & 1/8 \\
{\tt pair0029} & Superplasticizer & Compressive strength & \hyperlink{sec:D8}{D8} & $\rightarrow$ & 1/8 \\
{\tt pair0030} & Coarse aggregate & Compressive strength & \hyperlink{sec:D8}{D8} & $\rightarrow$ & 1/8 \\
{\tt pair0031} & Fine aggregate & Compressive strength & \hyperlink{sec:D8}{D8} & $\rightarrow$ & 1/8 \\
{\tt pair0032} & Age & Compressive strength & \hyperlink{sec:D8}{D8} & $\rightarrow$ & 1/8 \\
\hline
{\tt pair0033} & Alcohol consumption & Mean corpuscular volume & \hyperlink{sec:D9}{D9} & $\rightarrow$ & 1/5 \\
{\tt pair0034} & Alcohol consumption & Alkaline phosphotase & \hyperlink{sec:D9}{D9} & $\rightarrow$ & 1/5 \\
{\tt pair0035} & Alcohol consumption & Alanine aminotransferase & \hyperlink{sec:D9}{D9} & $\rightarrow$ & 1/5 \\
{\tt pair0036} & Alcohol consumption & Aspartate aminotransferase & \hyperlink{sec:D9}{D9} & $\rightarrow$ & 1/5 \\
{\tt pair0037} & Alcohol consumption & Gamma-glutamyl transpeptidase & \hyperlink{sec:D9}{D9} & $\rightarrow$ & 1/5 \\
\hline
{\tt pair0038} & Age & Body mass index & \hyperlink{sec:D10}{D10} & $\rightarrow$ & 1/4 \\
{\tt pair0039} & Age & Serum insulin & \hyperlink{sec:D10}{D10} & $\rightarrow$ & 1/4 \\
{\tt pair0040} & Age & Diastolic blood pressure & \hyperlink{sec:D10}{D10} & $\rightarrow$ & 1/4 \\
{\tt pair0041} & Age & Plasma glucose concentration & \hyperlink{sec:D10}{D10} & $\rightarrow$ & 1/4 \\
\hline
{\tt pair0042} & Day of the year & Temperature & \hyperlink{sec:D11}{D11} & $\rightarrow$ & 1/2 \\
\hline
{\tt pair0043} & Temperature at $t$ & Temperature  at $t+1$ & \hyperlink{sec:D12}{D12} & $\rightarrow$ & 1/4 \\
{\tt pair0044} & Surface pressure  at $t$ & Surface pressure  at $t+1$& \hyperlink{sec:D12}{D12} & $\rightarrow$ & 1/4 \\
{\tt pair0045} & Sea level pressure  at $t$ & Sea level pressure at $t+1$ & \hyperlink{sec:D12}{D12} & $\rightarrow$ & 1/4 \\
{\tt pair0046} & Relative humidity  at $t$ & Relative humidity at $t+1$ & \hyperlink{sec:D12}{D12} & $\rightarrow$ & 1/4 \\
\hline
{\tt pair0047} & Number of cars & Type of day & \hyperlink{sec:D13}{D13} & $\leftarrow$ & 1 \\
\hline
{\tt pair0048} & Indoor temperature & Outdoor temperature & \hyperlink{sec:D14}{D14} & $\leftarrow$ & 1 \\
\hline
{\tt pair0049} & Ozone concentration & Temperature & \hyperlink{sec:D15}{D15} & $\leftarrow$ & 1/3 \\
{\tt pair0050} & Ozone concentration & Temperature & \hyperlink{sec:D15}{D15} & $\leftarrow$ & 1/3 \\
{\tt pair0051} & Ozone concentration & Temperature & \hyperlink{sec:D15}{D15} & $\leftarrow$ & 1/3 \\
\hline
{\tt pair0052} & (Temp, Press, SLP, Rh) & (Temp, Press, SLP, RH) & \hyperlink{sec:D12}{D12} & $\leftarrow$ & 0 \\
\hline
{\tt pair0053} & Ozone concentration & (Wind speed, Radiation, Temperature) & \hyperlink{sec:D16}{D16} & $\leftarrow$ & 0 \\
\hline
{\tt pair0054} & (Displacement, Horsepower, Weight) & (Fuel consumption, Acceleration) & \hyperlink{sec:D4}{D4} & $\rightarrow$ & 0 \\
\hline
{\tt pair0055} & Ozone concentration (16-dim.) & Radiation (16-dim.) & \hyperlink{sec:D15}{D15} & $\leftarrow$ & 0 \\
\hline
{\tt pair0056} & Female life expectancy, 2000--2005 & Latitude of capital & \hyperlink{sec:D17}{D17} & $\leftarrow$ & 1/12 \\
{\tt pair0057} & Female life expectancy, 1995--2000 & Latitude of capital & \hyperlink{sec:D17}{D17} & $\leftarrow$ & 1/12 \\
{\tt pair0058} & Female life expectancy, 1990--1995 & Latitude of capital & \hyperlink{sec:D17}{D17} & $\leftarrow$ & 1/12 \\
{\tt pair0059} & Female life expectancy, 1985--1990 & Latitude of capital & \hyperlink{sec:D17}{D17} & $\leftarrow$ & 1/12 \\
{\tt pair0060} & Male life expectancy, 2000--2005 & Latitude of capital & \hyperlink{sec:D17}{D17} & $\leftarrow$ & 1/12 \\
{\tt pair0061} & Male life expectancy, 1995--2000 & Latitude of capital & \hyperlink{sec:D17}{D17} & $\leftarrow$ & 1/12 \\
{\tt pair0062} & Male life expectancy, 1990--1995 & Latitude of capital & \hyperlink{sec:D17}{D17} & $\leftarrow$ & 1/12 \\
{\tt pair0063} & Male life expectancy, 1985--1990 & Latitude of capital & \hyperlink{sec:D17}{D17} & $\leftarrow$ & 1/12 \\
{\tt pair0064} & Drinking water access & Infant mortality & \hyperlink{sec:D17}{D17} & $\rightarrow$ & 1/12 \\
\hline
{\tt pair0065} & Stock return of Hang Seng Bank & Stock return of HSBC Hldgs & \hyperlink{sec:D18}{D18} & $\rightarrow$ & 1/3 \\
{\tt pair0066} & Stock return of Hutchison & Stock return of Cheung kong & \hyperlink{sec:D18}{D18} & $\rightarrow$ & 1/3 \\
{\tt pair0067} & Stock return of Cheung kong & Stock return of Sun Hung Kai Prop.\ & \hyperlink{sec:D18}{D18} & $\rightarrow$ & 1/3 \\
\hline
{\tt pair0068} & Bytes sent & Open http connections & \hyperlink{sec:D19}{D19} & $\leftarrow$ & 1 \\
\hline
{\tt pair0069} & Inside temperature & Outside temperature & \hyperlink{sec:D20}{D20} & $\leftarrow$ & 1 \\
\hline
{\tt pair0070} & Parameter & Answer & \hyperlink{sec:D21}{D21} & $\rightarrow$ & 1 \\
\hline
{\tt pair0071} & Symptoms (6-dim.) & Classification of disease (2-dim.) & \hyperlink{sec:D22}{D22} & $\rightarrow$ & 0 \\
\hline
{\tt pair0072} & Sunspots & Global mean temperature & \hyperlink{sec:D23}{D23} & $\rightarrow$ & 1 \\
\hline
{\tt pair0073} & CO$_2$ emissions & Energy use & \hyperlink{sec:D17}{D17} & $\leftarrow$ & 1/12 \\
{\tt pair0074} & GNI per capita & Life expectancy & \hyperlink{sec:D17}{D17} & $\rightarrow$ & 1/12 \\
{\tt pair0075} & Under-5 mortality rate & GNI per capita & \hyperlink{sec:D17}{D17} & $\leftarrow$ & 1/12 \\
\hline
{\tt pair0076} & Population growth & Food consumption growth & \hyperlink{sec:D24}{D24} & $\rightarrow$ & 1 \\
\hline
{\tt pair0077} & Temperature & Solar radiation & \hyperlink{sec:D11}{D11} & $\leftarrow$ & 1/2 \\
\hline
{\tt pair0078} & PPFD & Net Ecosystem Productivity & \hyperlink{sec:D25}{D25} & $\rightarrow$ & 1/3 \\
{\tt pair0079} & Net Ecosystem Productivity & Diffuse PPFD & \hyperlink{sec:D25}{D25} & $\leftarrow$ & 1/3 \\
{\tt pair0080} & Net Ecosystem Productivity & Direct PPFD & \hyperlink{sec:D25}{D25} & $\leftarrow$ & 1/3 \\
\hline
{\tt pair0081} & Temperature & Local CO$_2$ flux, BE-Bra & \hyperlink{sec:D26}{D26} & $\rightarrow$ & 1/3 \\
{\tt pair0082} & Temperature & Local CO$_2$ flux, DE-Har & \hyperlink{sec:D26}{D26} & $\rightarrow$ & 1/3 \\
{\tt pair0083} & Temperature & Local CO$_2$ flux, US-PFa & \hyperlink{sec:D26}{D26} & $\rightarrow$ & 1/3 \\
\hline
{\tt pair0084} & Employment & Population & \hyperlink{sec:D27}{D27} & $\leftarrow$ & 1 \\
\hline
{\tt pair0085} & Time of measurement & Protein content of milk & \hyperlink{sec:D28}{D28} & $\rightarrow$ & 1 \\
\hline
{\tt pair0086} & Size of apartment & Monthly rent & \hyperlink{sec:D29}{D29} & $\rightarrow$ & 1 \\
\hline
{\tt pair0087} & Temperature & Total snow & \hyperlink{sec:D30}{D30} & $\rightarrow$ & 1 \\
\hline
{\tt pair0088} & Age & Relative spinal bone mineral density & \hyperlink{sec:D31}{D31} & $\rightarrow$ & 1 \\
\hline
{\tt pair0089} & Root decomposition in Oct & Root decomposition in Apr & \hyperlink{sec:D32}{D32} & $\leftarrow$ & 1/4 \\
{\tt pair0090} & Root decomposition in Oct & Root decomposition in Apr & \hyperlink{sec:D32}{D32} & $\leftarrow$ & 1/4 \\
{\tt pair0091} & Clay content in soil & soil moisture & \hyperlink{sec:D32}{D32} & $\rightarrow$ & 1/4 \\
{\tt pair0092} & Organic carbon in soil & Clay content in soil & \hyperlink{sec:D32}{D32} & $\leftarrow$ & 1/4 \\
\hline
{\tt pair0093} & Precipitation & Runoff & \hyperlink{sec:D33}{D33} & $\rightarrow$ & 1 \\
\hline
{\tt pair0094} & Hour of the day & Temperature & \hyperlink{sec:D34}{D34} & $\rightarrow$ & 1/3 \\
{\tt pair0095} & Hour of the day & Electricity consumption& \hyperlink{sec:D34}{D34} & $\rightarrow$ & 1/3 \\
{\tt pair0096} & Temperature & Electricity consumption& \hyperlink{sec:D34}{D34} & $\rightarrow$ & 1/3 \\
\hline
{\tt pair0097} & Initial speed & Final speed & \hyperlink{sec:D35}{D35} & $\rightarrow$ & 1/2 \\
{\tt pair0098} & Initial speed & Final speed & \hyperlink{sec:D35}{D35} & $\rightarrow$ & 1/2 \\
\hline
{\tt pair0099} & Language test score & Social-economic status family & \hyperlink{sec:D36}{D36} & $\leftarrow$ & 1 \\
\hline
{\tt pair0100} & Cycle time of CPU & Performance & \hyperlink{sec:D37}{D37} & $\rightarrow$ & 1 \\
\end{longtable}}

\hypertarget{sec:D1}{\subsection*{D1: DWD}}


The DWD climate data were provided by the Deutscher Wetterdienst (DWD). We downloaded
the data from \url{http://www.dwd.de} and merged several of the original data
sets to obtain data for 349 weather stations in Germany, selecting only those
weather stations without missing data. After merging the data sets, we
selected the following six variables: altitude, latitude, longitude, and annual
mean values (over the years 1961--1990) of sunshine duration, temperature and
precipitation. We converted the latitude and longitude variables from
sexagesimal to decimal notation. Out of these six variables, we selected six
different pairs with ``obvious'' causal relationships: altitude--temperature ({\tt pair0001}),
altitude--precipitation ({\tt pair0002}), longitude--temperature ({\tt pair0003}), altitude--sunshine hours ({\tt pair0004}),
latitute--temperature ({\tt pair0020}), and longitude--precipitation ({\tt pair0021}).

\begin{figure}[h!]
\centerline{\small\begin{tabular}{ccc}
  \includegraphics[width=0.2\textwidth]{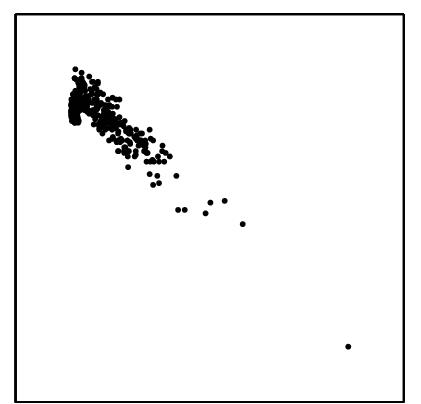}&
  \includegraphics[width=0.2\textwidth]{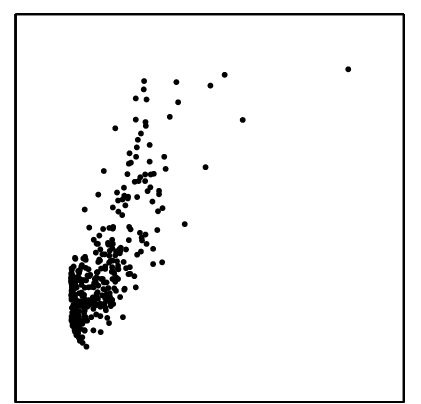}&
  \includegraphics[width=0.2\textwidth]{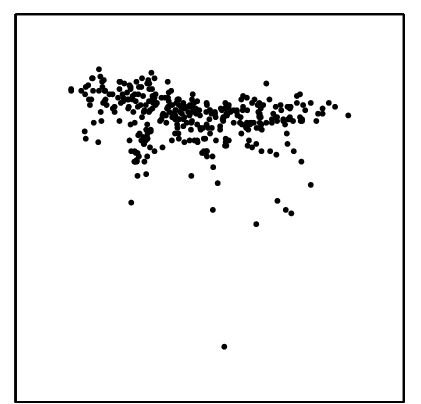}\\
  {\tt pair0001} & {\tt pair0002} & {\tt pair0003}\\
  \\
  \includegraphics[width=0.2\textwidth]{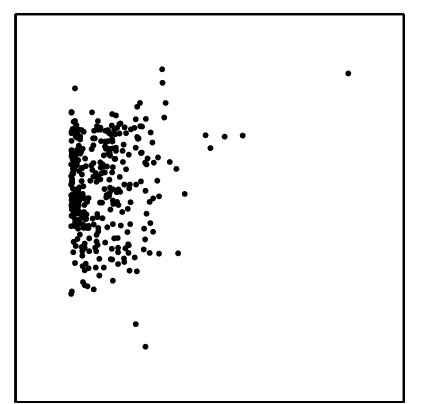}&
  \includegraphics[width=0.2\textwidth]{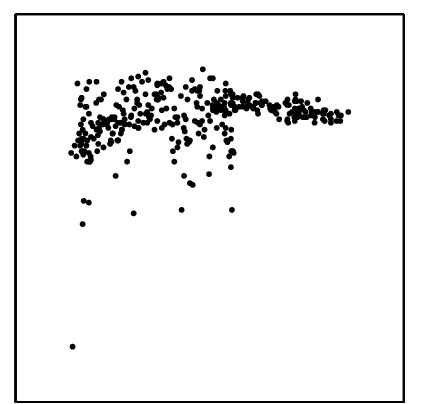}&
  \includegraphics[width=0.2\textwidth]{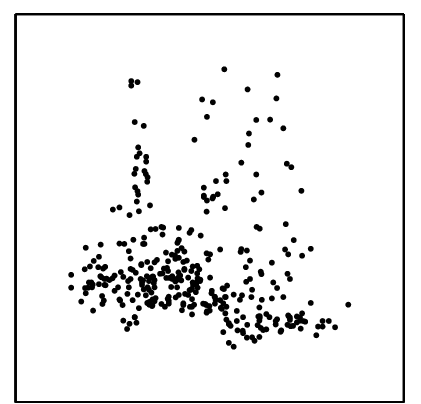}\\
  {\tt pair0004} & {\tt pair0020} & {\tt pair0021}\\
\end{tabular}}
\caption{\label{fig:D1}Scatter plots of pairs from D1.
{\tt pair0001}: altitude \causes temperature,
{\tt pair0002}: altitude \causes precipitation,
{\tt pair0003}: longitude \causes temperature,
{\tt pair0004}: altitude \causes sunshine hours,
{\tt pair0020}: latitude \causes temperature,
{\tt pair0021}: longitude \causes precipitation.}
\end{figure}

\subsubsection*{{\tt pair0001}: Altitude \causes temperature}

As an elementary fact of meteorology, places with higher altitude tend to be
colder than those that are closer to sea level (roughly 1 centigrade per 100
meter). There is no doubt that altitude is the cause and
temperature the effect: one could easily think of an intervention where the
thermometer is lifted (e.g., by using a balloon) to measure the temperature at
a higher point of the same longitude and latitude. On the other hand, heating
or cooling a location usually does not change its altitude (except perhaps if the 
location happens to be the space enclosed by a hot air balloon, but let us assume
that the thermometers used to gather this data were fixed to the ground).
The altitudes in the DWD data set range from 0\,m to 2960\,m, which is 
sufficiently large to detect significant statistical dependences.

One potential confounder is latitude, since all mountains are
in the south and far from the sea, which is also an important factor for the
local climate. The places with the highest average temperatures are therefore
those with low altitude but lying far in the south. Hence this
confounder should induce positive correlations between altitude and temperature
as opposed to the negative correlation between altitude and temperature that is
observed empirically. This suggests that the direct causal
relation between altitude and temperature dominates over the confounder. 

\subsubsection*{{\tt pair0002}: Altitude \causes precipitation}

It is known that altitude is also an important factor for precipitation since
rain often occurs when air is forced to rise over a mountain range and the air
becomes oversaturated with water due to the lower temperature (orographic
rainfall). This effect defines an indirect causal influence of altitude on
precipitation via temperature. These causal relations are, however, less simple
than the causal influence from altitude to temperature because gradients of the
altitude with respect to the main direction of the wind are more relevant than
the altitude itself. A hypothetical intervention that would allow us to validate
the causal relation could be to build artificial mountains and observe orographic
rainfall.

\subsubsection*{{\tt pair0003}: Longitude \causes temperature}

To detect the causal relation between longitude and temperature, a hypothetical
intervention could be to move a thermometer between West and East. Even if one
would adjust for altitude and latitude, it is unlikely that temperature would
remain the same since the climate in the West is more oceanic and less
continental than in the East of Germany. Therefore, longitude causes
temperature.

\subsubsection*{{\tt pair0004}: Altitude \causes sunshine hours}

Sunshine duration and altitude are slightly positively correlated. 
Possible explanations are that higher weather stations are sometimes
above low-hanging clouds.  Cities in valleys,
especially if they are close to rivers or lakes,  typically have more misty
days. Moving a sunshine sensor above the  clouds clearly increases the sunshine
duration whereas installing an artificial sun  would not change the altitude.
The causal influence from altitude to sunshine  duration can be confounded, for
instance, by the fact that there is a simple statistical dependence between
altitude and longitude in Germany as explained earlier.

\subsubsection*{{\tt pair0020}: Latitude \causes temperature}

Moving a thermometer towards the equator will generally result in an increased
mean annual temperature. Changing the temperature, on the other hand,
does not necessarily result in a north-south movement of the
thermometer. The obvious ground truth of latitude causing temperature might be 
somewhat ``confounded'' by longitude, in combination with the selection bias that arises
from only including weather stations in Germany.

\subsubsection*{{\tt pair0021}: Longitude \causes precipitation}

As the climate in the West is more oceanic and less continental than in the
East of Germany, we expect there to be a relationship between longitude and
precipitation. Changing longitude by moving in East-West direction may therefore
change precipitation, even if one would adjust for altitude and latitude.
On the other hand, making it rain locally
(e.g., by cloud seeding) will not result in a change in longitude.

\hypertarget{sec:D2}{\subsection*{D2: Abalone}}
  

The {\tt Abalone} data set \citep{NashSellersTalbotCawthornFord94}
in the UCI Machine Learning Repository \citep{UCI_ML_repository}
contains 4177 measurements of several variables concerning the sea
snail \emph{Abalone}. We downloaded the data from \url{https://archive.ics.uci.edu/ml/datasets/Abalone}.
The original data set contains the nine variables sex, length, diameter,
height, whole weight, shucked weight, viscera weight, shell weight and number
of rings. The number of rings in the shell is directly related to the age of
the snail: adding 1.5 to the number of rings gives the age in years.
Of these variables, we selected six pairs with obvious cause-effect 
relationships: age--length ({\tt pair0005}), age--shell weight ({\tt pair0006}),
age--diameter ({\tt pair0007}), age--height ({\tt pair0008}), age--whole weight ({\tt pair0009}),
age--shucked weight ({\tt pair0010}), age--viscera weight ({\tt pair0011}).

\begin{figure}[h!]
\centerline{\small\begin{tabular}{cccc}
  \includegraphics[width=0.2\textwidth]{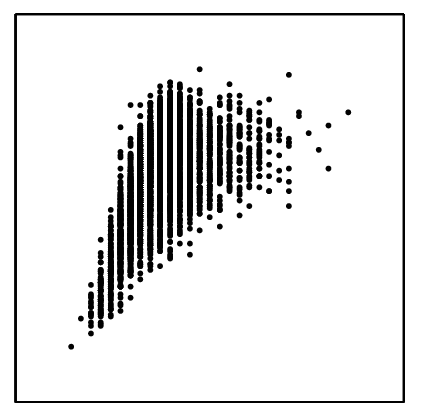}&
  \includegraphics[width=0.2\textwidth]{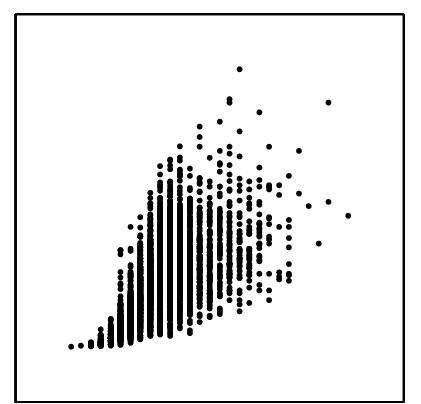}&
  \includegraphics[width=0.2\textwidth]{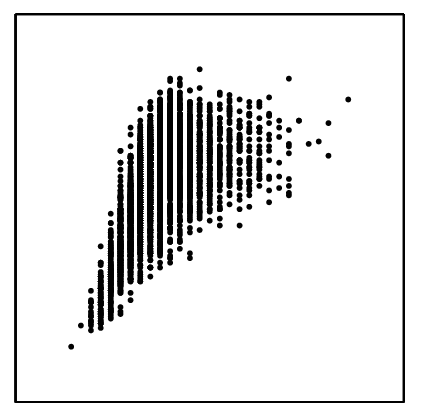}&
  \includegraphics[width=0.2\textwidth]{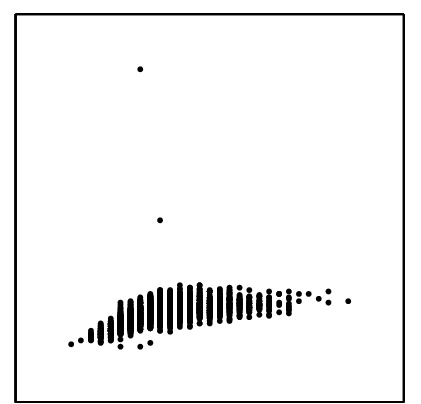}\\
  {\tt pair0005} & {\tt pair0006} & {\tt pair0007} & {\tt pair0008}\\
  \\
  \includegraphics[width=0.2\textwidth]{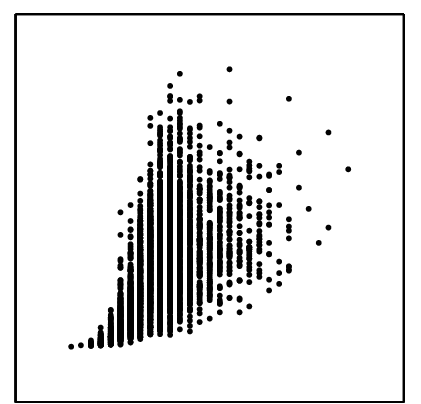}&
  \includegraphics[width=0.2\textwidth]{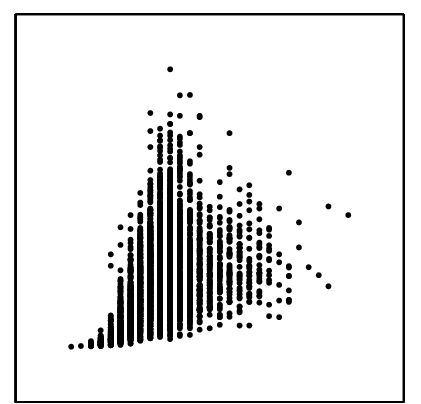}&
  \includegraphics[width=0.2\textwidth]{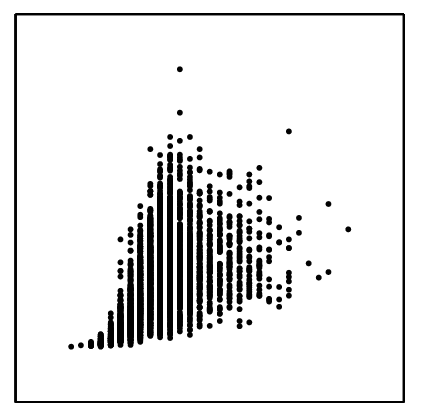}\\
  {\tt pair0009} & {\tt pair0010} & {\tt pair0011}\\
\end{tabular}}
\caption{\label{fig:D2}Scatter plots of pairs from D2.
{\tt pair0005}: age \causes length, {\tt pair0006}: age \causes shell weight,
{\tt pair0007}: age \causes diameter, {\tt pair0008}: age \causes height,
{\tt pair0009}: age \causes whole weight, {\tt pair0010}: age \causes shucked weight, {\tt pair0011}: age \causes viscera weight.}
\end{figure}

\subsubsection*{{\tt pair0005}--{\tt pair0011}: Age \causes \{length, shell weight, diameter, height, whole/shucked/viscera weight\}}

For the variable ``age'' it is not obvious what a reasonable intervention would
be since there is no possibility to change the time. However, waiting and
observing how variables change over time can be considered as equivalent to the hypothetical intervention on age
(provided that the relevant background conditions do not change too much).
Clearly, this ``intervention'' would change the probability distribution of
the length, whereas changing the length of snails (by surgery) would not change
the distribution of age (assuming that the surgery does not take years). Regardless of the difficulties of defining
interventions, we expect common agreement on the ground truth: age causes all the
other variables related to length, diameter height and weight. 

There is one subtlety that has to do with how age is measured for these
shells: this is done by counting the rings. For the variable
``number of rings'' however, changing the length of the snail may actually
change the number of rings. We here presume that all snails have undergone
their natural growing process so that the number of rings is a good proxy
for the variable age.

\hypertarget{sec:D3}{\subsection*{D3: Census Income KDD}}


The {\tt Census Income (KDD)} data set \citep{Census} in the UCI Machine
Learning Repository \citep{UCI_ML_repository} has been extracted from 
the 1984 and 1985 U.S.\ Census studies. We downloaded the data from \url{https://archive.ics.uci.edu/ml/datasets/Census-Income+(KDD)}.
We have selected the following variables: {\tt AAGE} (age), {\tt AHRSPAY} (wage per hour)
and {\tt DIVVAL} (dividends from stocks). 

\begin{figure}[h!]
\centerline{\small\begin{tabular}{cc}
  \includegraphics[width=0.2\textwidth]{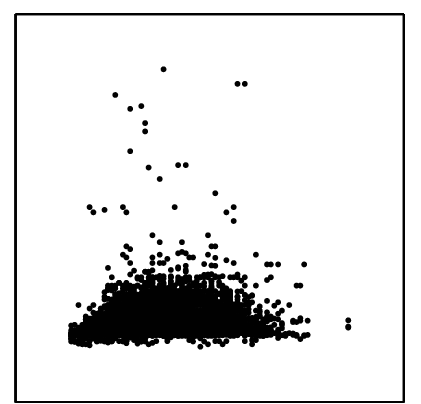}&
  \includegraphics[width=0.2\textwidth]{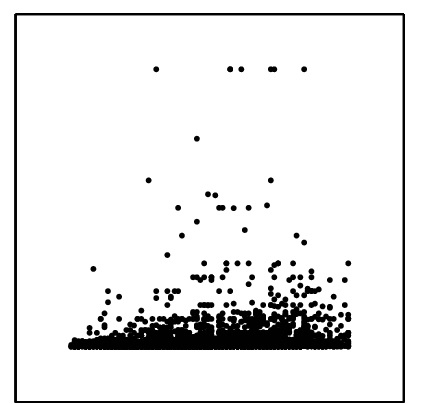}\\
  {\tt pair0012} & {\tt pair0017} 
\end{tabular}}
\caption{\label{fig:D3}Scatter plots of pairs from D3.
{\tt pair0012}: age \causes wage per hour, {\tt pair0017}: age \causes dividends from stocks.}
\end{figure}

\subsubsection*{{\tt pair0012}: Age \causes wage per hour}

We only used the first 5000 instances for which wage per hour was not equal to zero.
The data clearly shows an increase of wage up to about 45 and a decrease for higher
age.

As already argued for the {\tt Abalone} data, interventions on the variable
``age'' are difficult to define. 
Compared to the discussion in the context of the {\tt Abalone} data set, it seems more problematic to consider
waiting as a reasonable ``intervention'' here, since the relevant (economical) background conditions 
change rapidly compared  to the length of the human life: If someone's salary is higher than the salary of a 20 year younger colleague {\it because} of his/her longer job experience, we cannot conclude that  the younger colleague will earn the same 
money 20 years later as the older colleague earns now. Possibly, the factory or even  the branch of industry he/she was working in does
not exist any more and his/her job experience is no longer appreciated.
However, we know that employees sometimes indeed do get a higher income because of their longer job experience.
Pretending longer job experience by 
a fake certificate of employment would be a possible
intervention. 
On the  other hand, changing the wage per hour is an intervention that is easy
to imagine (though difficult for us to perform) and this  would certainly not
change the age. 

\subsubsection*{{\tt pair0017}: Age \causes dividends from stocks}

We only used the first 5000 instances for which dividends from stocks was not
equal to zero.  Similar considerations apply as for age vs.\ wage per hour.
Doing an intervention on age is not practical, but companies could
theoretically intervene on the dividends from stocks, and that would not result
in a change of age, obviously. On the other hand, age influences income, and
thereby over time, the amount of money that people can invest in stocks, and
thereby, the amount of dividends they earn from stocks. This causal relation is
a very indirect one, though, and the dependence between age and dividends from
stock is less pronounced than that between age and wage per hour.

\hypertarget{sec:D4}{\subsection*{D4: Auto-MPG}}
  

The {\tt Auto-MPG} data set in the UCI Machine Learning Repository \citep{UCI_ML_repository} 
concerns city-cycle fuel consumption in miles per gallon
(MPG), i.e., the number of miles a car can drive on one gallon of gasoline,
and contains several other attributes, like displacement,
horsepower, weight, and acceleration. The original dataset comes from the StatLib
library \citep{StatLib} and was used in the 1983 American Statistical Association 
Exposition. We downloaded the data from \url{http://archive.ics.uci.edu/ml/datasets/Auto+MPG}
and selected only instances without missing data, thereby obtaining 392 samples.

\begin{figure}[h!]
\centerline{\small\begin{tabular}{cccc}
  \includegraphics[width=0.2\textwidth]{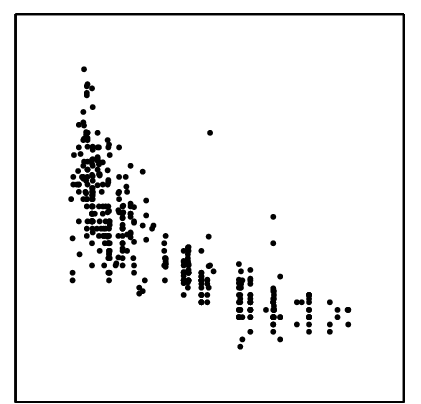}&
  \includegraphics[width=0.2\textwidth]{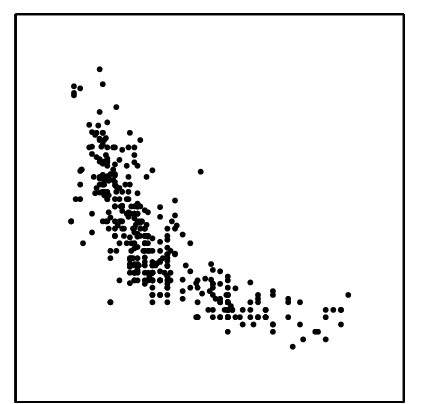}&
  \includegraphics[width=0.2\textwidth]{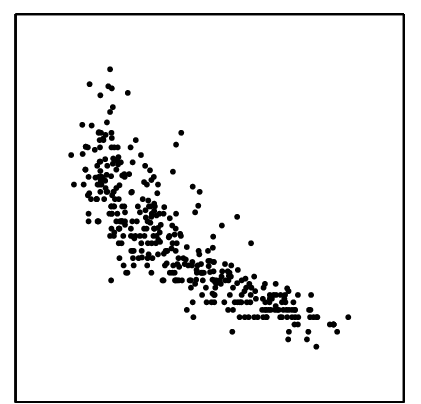}&
  \includegraphics[width=0.2\textwidth]{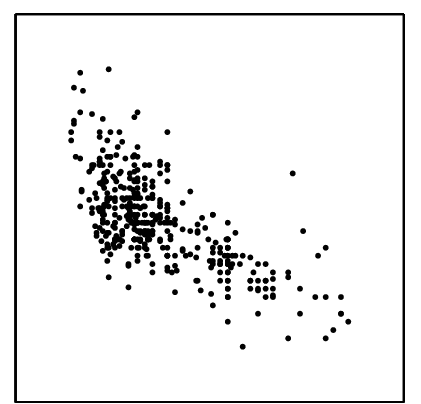}\\
  {\tt pair0013} & {\tt pair0014} & {\tt pair0015} & {\tt pair0016}
\end{tabular}}
\caption{\label{fig:D4}Scatter plots of pairs from D4.
{\tt pair0013}: displacement \causes fuel consumption,
{\tt pair0014}: horse power \causes fuel consumption,
{\tt pair0015}: weight \causes fuel consumption,
{\tt pair0016}: horsepower \causes acceleration,
{\tt pair0054}: (displacement,horsepower,weight) \causes (MPG,acceleration)}
\end{figure}

\subsubsection*{{\tt pair0013}: Displacement \causes fuel consumption}
Displacement is the total volume of air/fuel mixture an engine can draw in during 
one complete engine cycle. 
The larger the discplacement, the more fuel the engine can consume with every turn.
Intervening on displacement (e.g., by increasing the cylinder bore) changes the fuel consumption. 
Changing the fuel consumption (e.g., by increasing the weight of the car, or changing
its air resistance, or by using another gear) will not change the displacement, though.

\subsubsection*{{\tt pair0014}: Horse power \causes fuel consumption}
Horse power measures the amount of power an engine can deliver. There are various ways to define horsepower and different standards to measure horse power of vehicles.
In general, though, it should be obvious that fuel consumption depends on various factors, including horse power. Changing horsepower (e.g., by adding more cylinders to an engine, or adding a second engine to the car) would lead to a change in fuel consumption. On the other
hand, changing fuel consumption does not necessarily change horse power.

\subsubsection*{{\tt pair0015}: Weight \causes fuel consumption}
There is a strong selection bias here, as car designers use a more powerful motor (with higher fuel consumption) for a heavier car.
Nevertheless, the causal relationship between weight and fuel consumption should be obvious: if we intervene on weight, then fuel consumption will change, but not necessarily vice versa.

\subsubsection*{{\tt pair0016}: Horsepower \causes acceleration}
Horsepower is one one of the factors that cause acceleration. Other factors are wheel size, the gear used, and air resistance.
Intervening on acceleration does not necessarily change horsepower.

\subsubsection*{{\tt pair0054}: (Displacement,horsepower,weight) \causes (MPG,acceleration)}
This pair consists of two multivariate variables that are combinations of the variables we have considered before.
The multivariate variable consisting of the three components displacement, horsepower and weight can be considered to cause the multivariate variable comprised of fuel consumption and acceleration.

\hypertarget{sec:D5}{\subsection*{D5: GAGurine}}


This data concerns the concentration of the chemical compound Glycosaminoglycan (GAG) in the urine of 314 children aged from zero to seventeen years. This is the \texttt{GAGurine} data set supplied with the \texttt{MASS} package of the computing language \texttt{R} \citep{VenablesRipley2002}.

\begin{figure}[h!]
\centerline{\small\begin{tabular}{c}
  \includegraphics[width=0.2\textwidth]{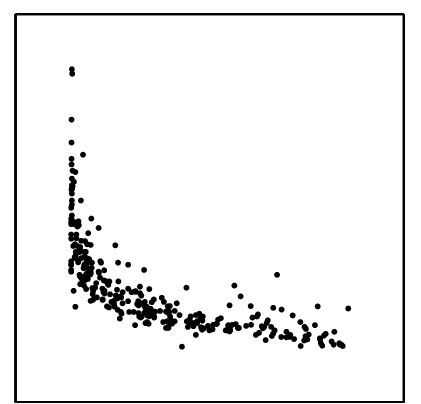}\\
  {\tt pair0018}
\end{tabular}}
\caption{\label{fig:D5}Scatter plots of pairs from D5.
{\tt pair0018}: age \causes concentration GAG.}
\end{figure}

\subsubsection*{{\tt pair0018}: Age \causes concentration GAG}

Obviously, GAG concentration does not cause age, but it could be the other way around, considering the strong dependence between the two variables.

\hypertarget{sec:D6}{\subsection*{D6: Old Faithful}}


This is the \texttt{geyser}
data set supplied with the \texttt{MASS} package of the computing language \texttt{R} \citep{VenablesRipley2002}.
It is originally described in \citep{AzzaliniBowman1990} and contains
data about the duration of an eruption and the time interval between subsequent
eruptions of the Old Faithful geyser in Yellowstone National Park, USA. The data 
consists of 194 samples and was collected in a single continuous measurement from August 1 to August 15, 1985. 

\begin{figure}[h!]
\centerline{\small\begin{tabular}{c}
  \includegraphics[width=0.2\textwidth]{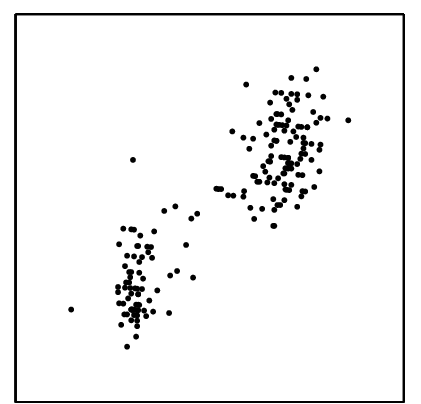}\\
  {\tt pair0019}
\end{tabular}}
\caption{\label{fig:D6}Scatter plots of pairs from D6.
{\tt pair0019}: current duration \causes next interval.}
\end{figure}

\subsubsection*{{\tt pair0019}: Current duration \causes next interval}

The chronological ordering of events implicates that the time interval between 
the current and the next eruption is an effect of the duration of the current 
eruption.

\hypertarget{sec:D7}{\subsection*{D7: Arrhythmia}}


The {\tt Arrhythmia} dataset \citep{GuvenirAcarDemirozCekin1997} from the UCI Machine Learning Repository \citep{UCI_ML_repository}
concerns cardiac arrhythmia. It consists of 452 patient records and contains many different variables. 
We downloaded the data from \url{https://archive.ics.uci.edu/ml/datasets/Arrhythmia} and
only used the variables for which the causal relationships should be evident.
We removed two instances from the dataset, corresponding with patient lengths of 680 and 780 cm, respectively.

\begin{figure}[h!]
\centerline{\small\begin{tabular}{ccc}
  \includegraphics[width=0.2\textwidth]{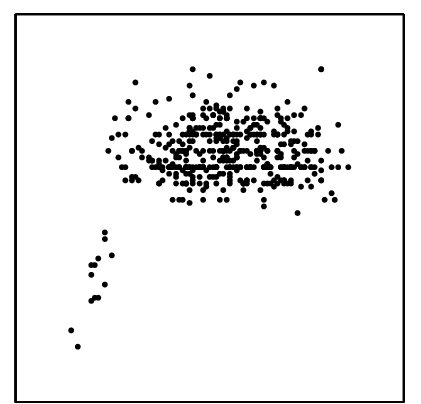}&
  \includegraphics[width=0.2\textwidth]{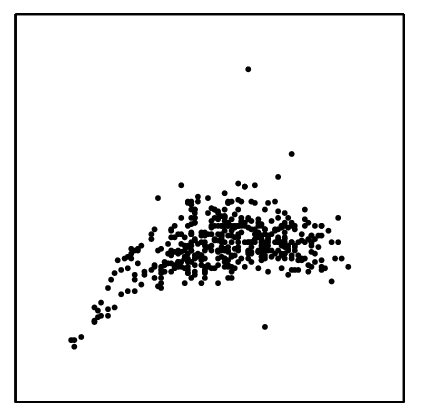}&
  \includegraphics[width=0.2\textwidth]{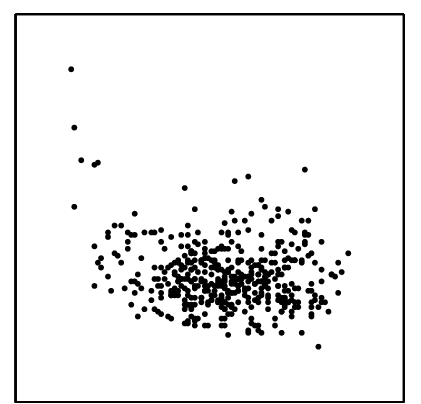}\\
  {\tt pair0022} & {\tt pair0023} & {\tt pair0024}
\end{tabular}}
\caption{\label{fig:D7}Scatter plots of pairs from D7.
{\tt pair0022}: age \causes height, 
{\tt pair0023}: age \causes weight,
{\tt pair0024}: age \causes heart rate.}
\end{figure}

\subsubsection*{{\tt pair0022}--{\tt pair0024}: Age \causes \{height, weight, heart rate\}}
As discussed before, ``interventions'' on age (for example, waiting a few years) may affect height of persons. On the other hand, we know that height does not cause age.
The same holds for age and weight and for age and heart rate. It is important to note here that age is simply measured in years since the birth of a person. Indeed, weight,
height and also heart rate might influence ``biological aging'', the gradual deterioration of function of the human body.

\hypertarget{sec:D8}{\subsection*{D8: Concrete Compressive Strength}}


This data set, available at the UCI Machine Learning Repository \citep{UCI_ML_repository},
concerns a systematic study \citep{Yeh1998} regarding concrete compressive strength as a function of ingredients and age. Citing \citep{Yeh1998}:
``High-performance concrete (HPC) is a new terminology used in the concrete construction industry. In addition to the three basic ingredients in conventional concrete, i.e., Portland cement, fine and coarse aggregates, and water, the making of HPC needs to incorporate supplementary cementitious materials, such as fly ash and blast furnace slag, and chemical admixture, such as superplasticizer 1 and 2. 
Several studies independently have shown that concrete strength development is determined not only by the water-to-cement ratio, but that it also is influenced by the content of other concrete ingredients.''
Compressive strength is measured in units of MPa, age in days, and the other variables are measured in kilograms per cubic metre of concrete mixture. 
The dataset was downloaded from \url{https://archive.ics.uci.edu/ml/datasets/Concrete+Compressive+Strength} and contains 1030 measurements.

\begin{figure}[h!]
\centerline{\small\begin{tabular}{cccc}
  \includegraphics[width=0.2\textwidth]{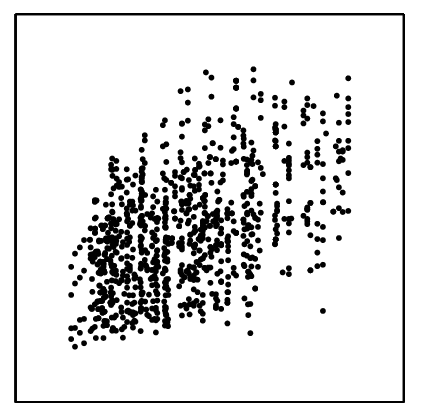}&
  \includegraphics[width=0.2\textwidth]{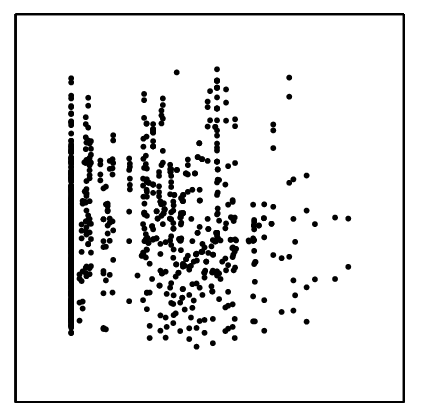}&
  \includegraphics[width=0.2\textwidth]{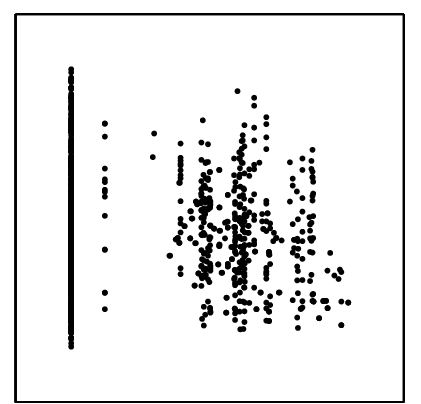}&
  \includegraphics[width=0.2\textwidth]{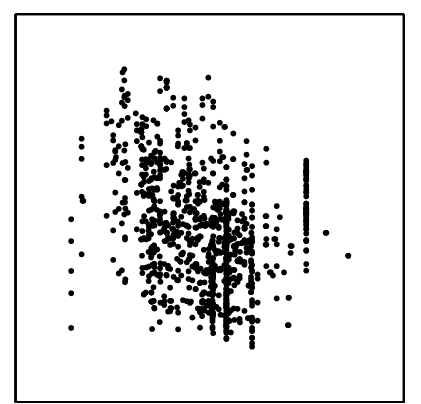}\\
  {\tt pair0025} & {\tt pair0026} & {\tt pair0027} & {\tt pair0028}\\
  \\
  \includegraphics[width=0.2\textwidth]{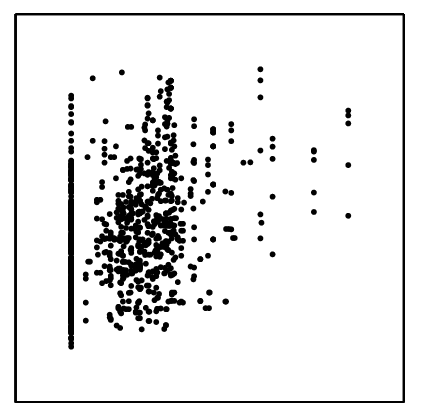}&
  \includegraphics[width=0.2\textwidth]{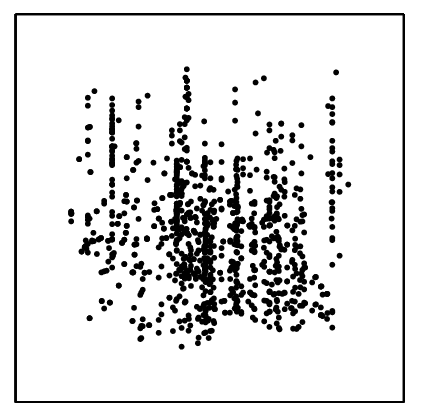}&
  \includegraphics[width=0.2\textwidth]{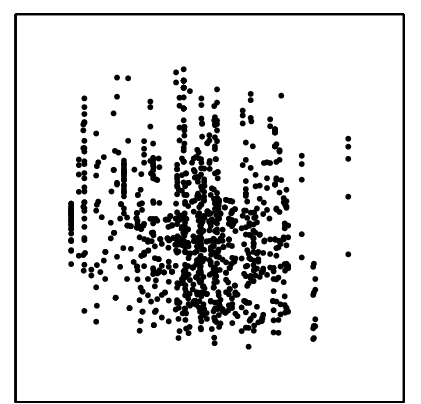}&
  \includegraphics[width=0.2\textwidth]{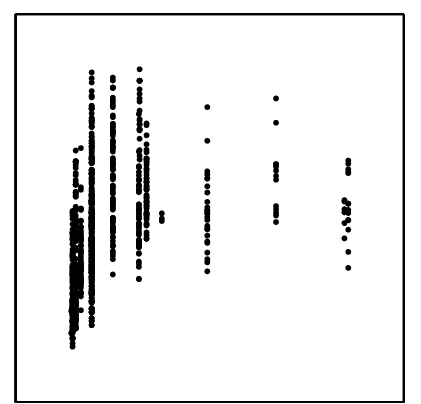}\\
  {\tt pair0029} & {\tt pair0030} & {\tt pair0031} & {\tt pair0032}
\end{tabular}}
\caption{\label{fig:D8}Scatter plots of pairs from D8.
{\tt pair0025}: cement  \causes compressive strength,
{\tt pair0026}: blast furnace slag  \causes compressive strength,
{\tt pair0027}: fly ash \causes compressive strength,
{\tt pair0028}: water \causes compressive strength,
{\tt pair0029}: superplasticizer \causes compressive strength,
{\tt pair0030}: coarse aggregate \causes compressive strength,
{\tt pair0031}: fine aggregate \causes compressive strength,
{\tt pair0032}: age \causes compressive strength.}
\end{figure}

\subsubsection*{{\tt pair0025}--{\tt pair0032}: \{Cement, blast furnace slag, fly ash, water, superplasticizer, coarse aggregate, fine aggregate, age\} \causes Compressive Strength}

It should be obvious that compressive strength is the effect, and the other variables are its causes.
Note, however, that in practice one cannot easily intervene on the mixture components without simultaneously changing the other mixture components. For example, if one adds more water to the mixture, then as a result, all other components will decrease, as they are measured in kilograms per cubic metre of concrete mixture. Nevertheless, we expect that we can see these interventions as reasonable approximations of ``perfect interventions'' on a single variable.

\hypertarget{sec:D9}{\subsection*{D9: Liver Disorders}}


This data set, available at the UCI Machine Learning Repository
\citep{UCI_ML_repository}, was collected by BUPA Medical Research Ltd. It
consists of several blood test results, which are all thought to be indicative
for liver disorders that may arise from excessive alcohol consumption.  Each of the
345 instances constitutes the record of a single male individual. Daily alcohol consumption
is measured in number of half-pint equivalents of alcoholic beverages drunk per
day. The blood test results are mean corpuscular volume (MCV), alkaline
phosphotase (ALP), alanine aminotransferase (ALT), aspartate aminotransferase
(AST), and gamma-glutamyl transpeptidase (GGT).
The data is available at \url{https://archive.ics.uci.edu/ml/datasets/Liver+Disorders}.

\begin{figure}[h!]
\centerline{\small\begin{tabular}{ccc}
  \includegraphics[width=0.2\textwidth]{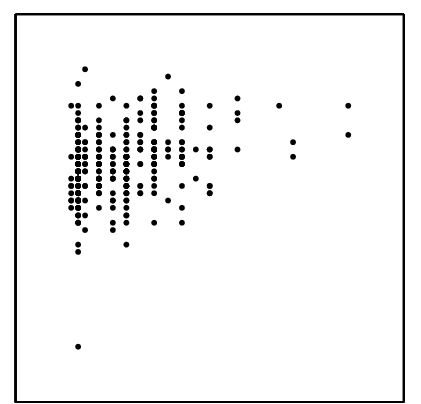}&
  \includegraphics[width=0.2\textwidth]{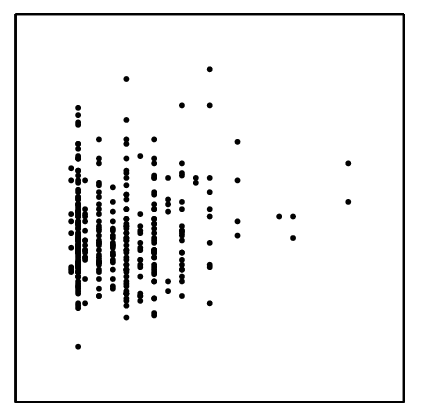}&
  \includegraphics[width=0.2\textwidth]{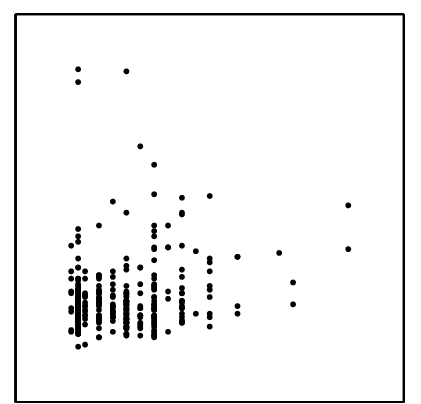}\\
  {\tt pair0033} & {\tt pair0034} & {\tt pair0035} \\
  \\
  \includegraphics[width=0.2\textwidth]{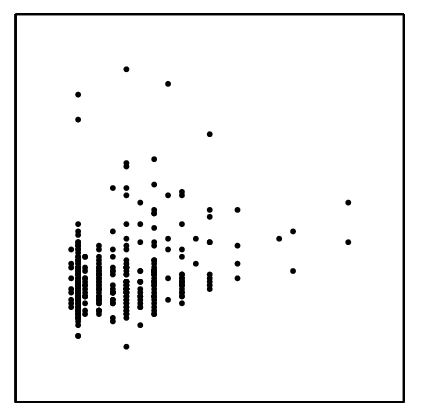}&
  \includegraphics[width=0.2\textwidth]{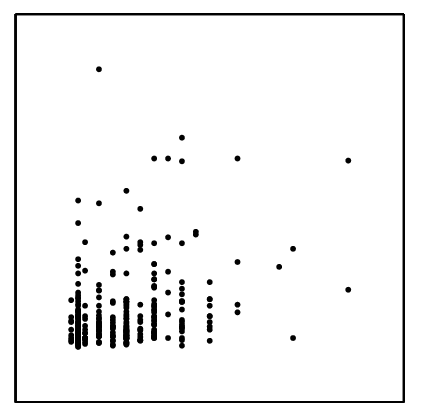}& \\
  {\tt pair0036} & {\tt pair0037} & 
\end{tabular}}
\caption{\label{fig:D9}Scatter plots of pairs from D9.
{\tt pair0033}: alcohol consumption \causes mean corpuscular volume,
{\tt pair0034}: alcohol consumption \causes alkaline phosphotase,
{\tt pair0035}: alcohol consumption \causes alanine aminotransferase,
{\tt pair0036}: alcohol consumption \causes aspartate aminotransferase,
{\tt pair0037}: alcohol consumption \causes gamma-glutamyl transpeptidase.}
\end{figure}

Although one would expect that daily alcohol consumption is the
cause, and the blood test results are the effects, this is not necessarily the case.
Indeed, citing \citep{BaynesDominiczak1999}: ``[...] increased plasma concentrations of acetaldehyde after the ingestion of alcohol [...] causes the individual to experience unpleasant flushing and sweating, which discourages alcohol abuse. Disulfiram, a drug that inhibits ALDH, also leads to these symptoms when alcohol is taken, and may be given to reinforce abstinence from alcohol.'' This means that \emph{a priori}, a reverse causation of the chemical whose concentration is measured in one of these blood tests on daily alcohol consumption cannot be excluded with certainty. Nevertheless, we consider this to be unlikely, as the
medical literature describes how these particular blood tests can be used to diagnose
liver disorders, but we did not find any evidence that these chemicals can be used to
\emph{treat} excessive alcohol consumption.




\subsubsection*{{\tt pair0033}: Alcohol Consumption \causes Mean Corpuscular Volume}
The mean corpuscular volume (MCV) is the average volume of a red blood cell. An elevated MCV has been associated with
alcoholism \citep{TonnesenHejbergFrobeniusAndersen1986}, but there are many other factors also associated with MCV.

\subsubsection*{{\tt pair0034}: Alcohol Consumption \causes Alkaline Phosphotase}
Alkaline phosphatase (ALP) is an enzyme that is predominantly abundant in liver cells, but is also present in bone and placental tissue. Elevated ALP levels in blood can be due to many different liver diseases and also bone diseases, but also occur during pregnancy \citep{HarrisonsV2}.


\subsubsection*{{\tt pair0035}: Alcohol Consumption \causes Alanine Aminotransferase}

Alanine Aminotransferase (ALT) is an enzyme that is found primarily in the liver cells. It is released into the blood in greater amounts when there is damage to the liver cells, for example due to a viral hepatitis or bile duct problems. ALT levels are often normal in alcoholic liver disease \citep{HarrisonsV2}.

\subsubsection*{{\tt pair0036}: Alcohol Consumption \causes Aspartate Aminotransferase}


Aspartate aminotransferase (AST) is an enzyme that is found in the liver, but also in many other bodily tissues, for example the heart and skeletal muscles. Similar to ALT, the AST levels raise in acute liver damage. Elevated AST levels are not specific to the liver, but can also be caused by other diseases, for example by pancreatitis. An AST:ALT ratio of more than 3:1 is highly suggestive of alcoholic liver disease \citep{HarrisonsV2}.

\subsubsection*{{\tt pair0037}: Alcohol Consumption \causes Gamma-Glutamyl Transpeptidase}

Gamma-Glutamyl Transpeptidase (GGT) GGT is another enzyme that is primarily found in liver cells. It is rarely elevated in conditions other than liver disease. High GGT levels have been associated with alcohol use \citep{HarrisonsV2}.



\hypertarget{sec:D10}{\subsection*{D10: Pima Indians Diabetes}}


This data set, available at the UCI Machine Learning Repository \citep{UCI_ML_repository},
was collected by the National Institute of Diabetes and Digestive and Kidney Diseases
in the USA to forecast the onset of diabetes mellitus in a high risk population of Pima Indians
near Phoenix, Arizona. Cases in this data set were selected according to several criteria, in
particular being female, at least 21 years of age and of Pima Indian heritage.
This means that there could be selection bias on age.

We downloaded the data from \url{https://archive.ics.uci.edu/ml/datasets/Pima+Indians+Diabetes}.
We only selected the instances with nonzero values, as it seems likely that zero values encode
missing data. This yielded 768 samples.

\begin{figure}[h!]\small
\centerline{\small\begin{tabular}{cccc}
  \includegraphics[width=0.2\textwidth]{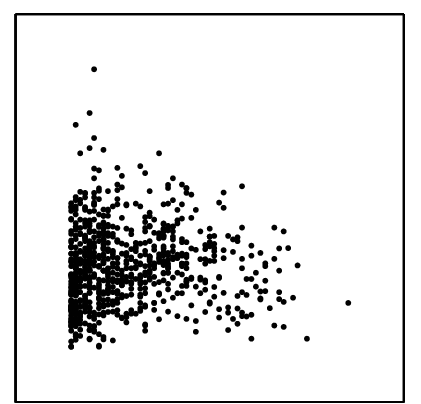}&
  \includegraphics[width=0.2\textwidth]{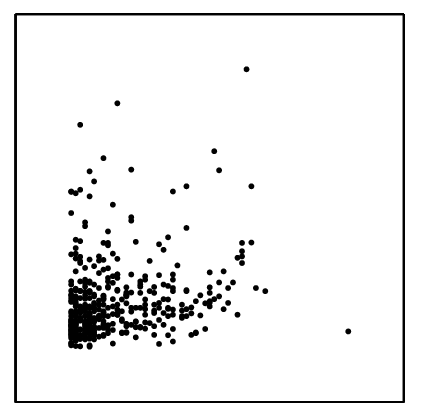}&
  \includegraphics[width=0.2\textwidth]{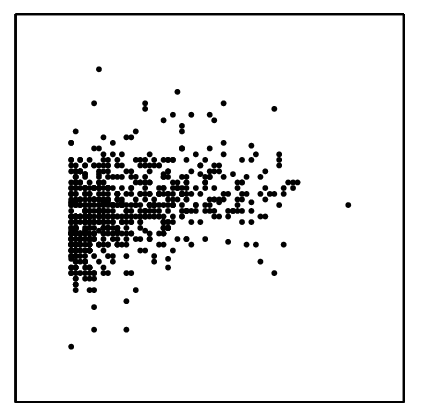}&
  \includegraphics[width=0.2\textwidth]{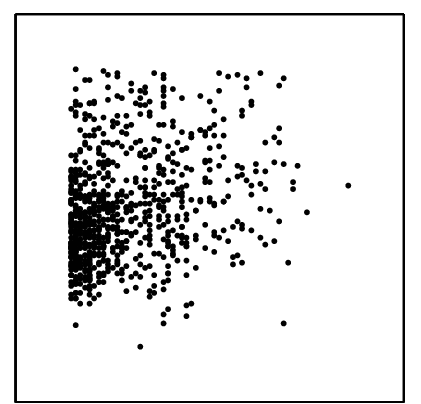}\\
  {\tt pair0038} & {\tt pair0039} & {\tt pair0040} & {\tt pair0041}
\end{tabular}}
\caption{\label{fig:D10}Scatter plots of pairs from D10.
{\tt pair0038}: age \causes body mass index,
{\tt pair0039}: age \causes serum insulin,
{\tt pair0040}: age \causes diastolic blood pressure,
{\tt pair0041}: age \causes plasma glucose concentration.}
\end{figure}

\subsubsection*{{\tt pair0038}: Age \causes Body Mass Index}
Body mass index (BMI) is defined as the ratio between weight (kg) and the square of height (m). Obviously, age is not caused by body mass index, but as age is a cause of both height and weight, age causes BMI.

\subsubsection*{{\tt pair0039}: Age \causes Serum Insulin}
2-Hour serum insulin ($\mu$U/ml), measured 2 hours after the ingestion of a standard dose of glucose, in an oral glucose tolerance test. We can exclude that serum insulin causes age, and there could be an effect of age on serum insulin. Another explanation for the observed dependence could be the selection bias.

\subsubsection*{{\tt pair0040}: Age \causes Diastolic Blood Pressure}
Diastolic blood pressure (mm Hg). It seems obvious that blood pressure does not cause age. The other causal direction seems plausible, but again, an alternative explanation for the dependence could be selection bias.

\subsubsection*{{\tt pair0041}: Age \causes Plasma Glucose Concentration}
Plasma glucose concentration, measured 2 hours after the ingestion of a
standard dose of glucose, in an oral glucose tolerance test. Similar reasoning as before: we do not believe that plasma glucose concentration causes ages, but it could be the other way around, and there may be selection bias.

\hypertarget{sec:D11}{\subsection*{D11: B.~Janzing's meteo data}}


This data set is from a private weather station, owned by Bernward Janzing, located in Furtwangen (Black Forest), Germany
at an altitude of 956 m.   
The measurements include temperature, precipitation, and snow height  (since 1979),
as well as solar radiation (since 1986).
The data have been archived by Bernward Janzing,
statistical evaluations have been published in \citep{Wetterjanzing}, monthly summaries
of the weather are published in local newspapers since 1981.

\begin{figure}[h!]
\centerline{\small\begin{tabular}{ccc}
  \includegraphics[width=0.2\textwidth]{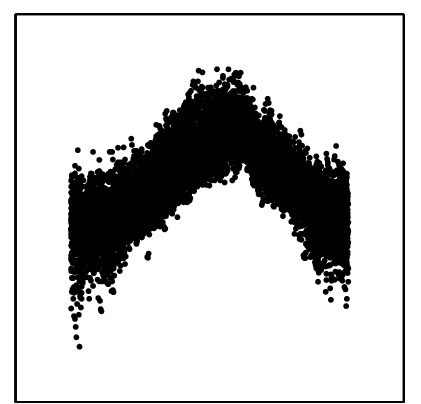}&
  \includegraphics[width=0.2\textwidth]{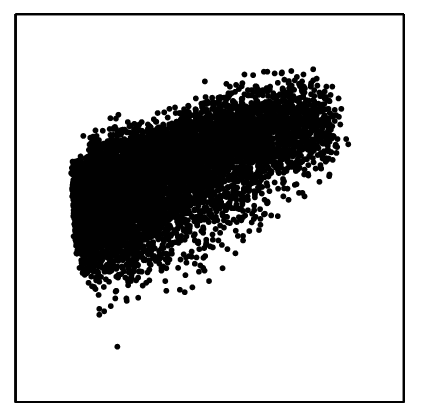}\\
  {\tt pair0042} & {\tt pair0077}
\end{tabular}}
\caption{\label{fig:D11}Scatter plots of pairs from D11.
{\tt pair0042}: day of the year \causes temperature,
{\tt pair0077}: solar radiation \causes temperature.}
\end{figure}

\subsubsection*{{\tt pair0042}: Day of the year \causes Temperature}

This data set shows the dependence between season and temperature over 25 years plus one month, namely the time range
 01/01/1979--01/31/2004. It consists of 9162 measurements.

One variable is the day of the year, represented by an integer 
from $1$ to $365$ (or $366$ for leap years). The information about the year has been dropped.
$Y$ is the mean temperature of the respective day, calculated according to the following definition:
\[
T_{mean}:= \frac{T_{morning} + T_{midday} +2 T_{evening}}{4},
\]
where morning, midday, and evening are measured at 7:00 am, 14:00 pm, and 21:00 pm (MEZ), respectivley (without daylight saving time). Double counting of the evening value 
is official standard of the German authority ``Deutscher Wetterdienst''.
It has been defined at a time where no electronic data loggers were available and
 thermometers had to be read out by humans. Weighting the evening value twice has been considered a useful heuristics to account for the missing values at night.

We consider day of the year as the cause, since it can be seen as expressing the angular position
on its orbit around the sun. Although true interventions are infeasible, it is commonly agreed that 
changing the position of the earth would  
result in temperature changes at a fixed location due to
the different solar incidence angle.

\subsubsection*{{\tt pair0077}: Solar radiation \causes Temperature}

This data set shows the relation between solar radiation and temperature
over 23 years, namely the interval 01/01/1986--12/31/2008. It consists of 8401 measurements.

Solar radiation is measured per area in $\mathrm{W}/\mathrm{m}^2$ averaged over one day on a horizontal surface. Temperature is the averaged daily, as in {\tt pair0042}. The original data has been processed by us to extract the common time interval. 
We assume that radiation causes temperature. 
 High solar radiation increases the temperature of the air already at a scale of hours. Interventions are easy to implement: Creating artificial shade on a large enough surface would decrease the air temperature. On longer time scales there might also be an influence from temperature to radiation via the generation of clouds through evaporation in more humid environments. This should, however, not play a role for daily averages.

\hypertarget{sec:D12}{\subsection*{D12: NCEP-NCAR Reanalysis}}


This data set, available from the NOAA (National Oceanic and Atmospheric Administration) Earth System Research Laboratory website at \url{http://www.esrl.noaa.gov/psd/data/gridded/data.ncep.reanalysis.surface.html}, is a subset of a reanalysis data set, incorporating observations and numerical weather prediction model output from 1948 to date \citep{Kalnay1996}. The reanalysis data set was produced by the National Center for Environmental Prediction (NCEP) and the National Center for Atmospheric Research (NCAR). Reanalysis data products aim for a realistic representation of all relevant climatological variables on a spatiotemporal grid. We collected four variables from a global grid of 144 $\times$ 73 cells: air temperature (in K, {\tt pair0043}), surface pressure (in Pascal, {\tt pair0044}), sea level pressure (in Pascal, {\tt pair0045}) and relative humidity (in $\%$, {\tt pair0045}) on two consecutive days, day 50 and day 51 of the year 2000 (i.e., Feb 19th and 20th). Each data pair consists of $144 \times 73 - 143 = 10369$ data points, distributed across the globe. 143 data points were subtracted because at the north pole values are repeated across all longitudes. 
\begin{figure}[h!]\small
\centerline{\small\begin{tabular}{cccc}
  \includegraphics[width=0.2\textwidth]{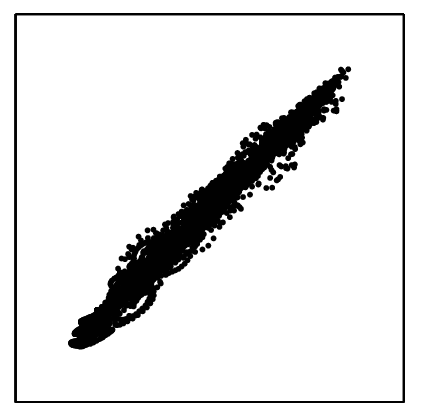}&
  \includegraphics[width=0.2\textwidth]{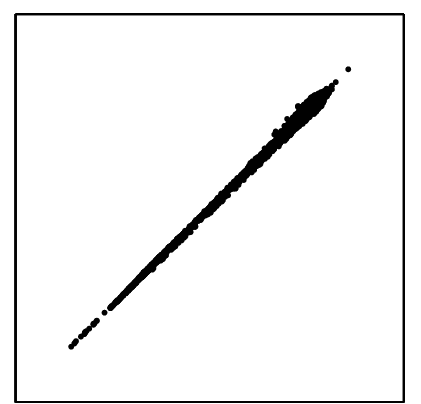}&
  \includegraphics[width=0.2\textwidth]{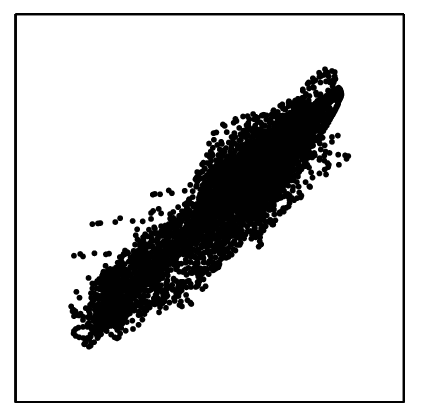}&
  \includegraphics[width=0.2\textwidth]{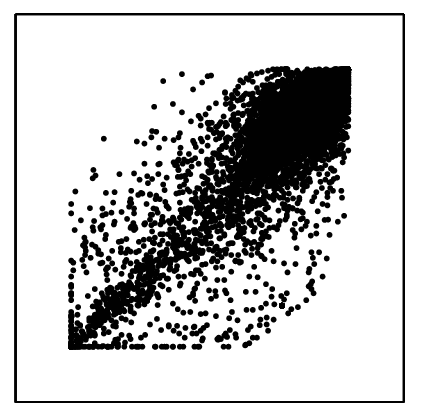}\\
  {\tt pair0043} & {\tt pair0044} & {\tt pair0045} & {\tt pair0046}
\end{tabular}}
\caption{\label{fig:D12}Scatter plots of pairs from D12.
{\tt pair0043}: temperature at \emph{t} \causes temperature at \emph{t}+1,
{\tt pair0044}: surface oressure at \emph{t} \causes surface pressure at \emph{t}+1,
{\tt pair0045}: sea level oressure at \emph{t} \causes sea level pressure at \emph{t}+1,
{\tt pair0046}: relative humidity at \emph{t} \causes relative humidity at \emph{t}+1,
{\tt pair0052}: (temp, press, slp, rh) at \emph{t} \causes (temp, press, slp, rh) at \emph{t+1}.}
\end{figure}

Each data point is the daily average over an area that covers 2.5$^\circ$ $\times$ 2.5$^\circ$ (approximately 250 km $\times$ 250 km at the equator). Because causal influence cannot propagate backwards in time,  temperature, pressure and humidity in a certain area are partly affected by their value the day before in the same area.

\subsubsection*{{\tt pair0043}: Temperature at \emph{t} \causes Temperature at \emph{t}+1}

Due to heat storage, mean daily air temperature near surface at any day largely impact daily air temperature at the following day. We assume there is no causation backwards in time, hence the correlation between temperatures at two consecutive days must be driven by confounders (such as large-scale weather patterns) or a causal influence from the first day to the second.

\subsubsection*{{\tt pair0044}: Surface Pressure at \emph{t} \causes Surface Pressure at \emph{t}+1}

Pressure patterns near the earth's surface are mostly driven by large-scale weather patterns. However, large-scale weather patterns are also driven by local pressure gradients and hence, some of the correlation between surface pressure at two consecutive days stems from a direct causal link between the first and the second day, as we assume there is no causation in time.

\subsubsection*{{\tt pair0045}: Sea Level Pressure at \emph{t} \causes Sea Level Pressure at \emph{t}+1}

Similar reasoning as in {\tt pair0044}. 

\subsubsection*{{\tt pair0046}: Relative Humidity at \emph{t} \causes Relative Humidity at \emph{t}+1}

Humidity of the air at one day affects the humidity of the following day because if no air movement takes place and no drying or moistening occurs, it will approximately stay the same.
Furthermore, as reasoned above, because there is no causation backwards in time, relative humidity at day $t+1$ cannot affect humidity at day $t$. Note that relative humidity has values between 0 and 100. Values can be saturated in very humid places such as tropical rainforest and approach 0 in deserts. For this reason, the scatter plot looks as if the data were clipped.

\subsubsection*{{\tt pair0052}: (Temp, Press, SLP, RH) at \emph{t} \causes (Temp, Press, SLP, RH) at \emph{t+1}}

The pairs {\tt pair0043}--{\tt pair0046} were combined to a 4-dimensional vector. From the reasoning above it follows that the vector of temperature, near surface pressure, sea level pressure and relative humidity at day $t$ has a causal influence on the vector of the same variables at time $t+1$.

\hypertarget{sec:D13}{\subsection*{D13: Traffic}}


This dataset has been extracted from \url{http://www.b30-oberschwaben.de/html/tabelle.html}, a website containing various kinds of information about the national highway B30. 
This is a road in the federal state Baden-W\"urttemberg, Germany, which provides an important connection of the region around Ulm (in the North) with the Lake Constance region (in the South).
After extraction, the data set contains 254 samples.

\begin{figure}[h!]
\centerline{\small\begin{tabular}{c}
  \includegraphics[width=0.2\textwidth]{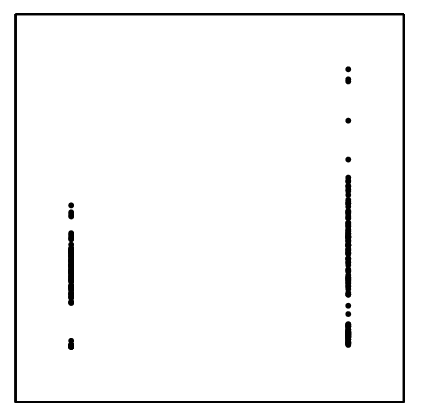}\\
  {\tt pair0047}
\end{tabular}}
\caption{\label{fig:D13}Scatter plots of pairs from D13.
{\tt pair0047}: type of day \causes number of cars.}
\end{figure}

\subsubsection*{{\tt pair0047}: Type of Day \causes Number of Cars}

One variable is the number of cars per day, the other denotes the type of the respective day, with ``1'' indicating Sundays and holidays and ``2'' indicating working days.    
The type of day causes the number of cars per day. Indeed,
introducing an additional holiday by a political decision would certainly change the amount of traffic on that day, while changing the amount of traffic by instructing a large number of drivers to drive or not to drive at a certain day would certainly not change the type of that day.

\hypertarget{sec:D14}{\subsection*{D14: Hipel \& McLeod}}


This dataset contains 168 measurements of indoor and outdoor temperatures. 
It was taken from a book by \citet{Hipel1994} and can be downloaded from 
\url{http://www.stats.uwo.ca/faculty/mcleod/epubs/mhsets/readme-mhsets.html}. 

\begin{figure}[h!]
\centerline{\small\begin{tabular}{c}
  \includegraphics[width=0.2\textwidth]{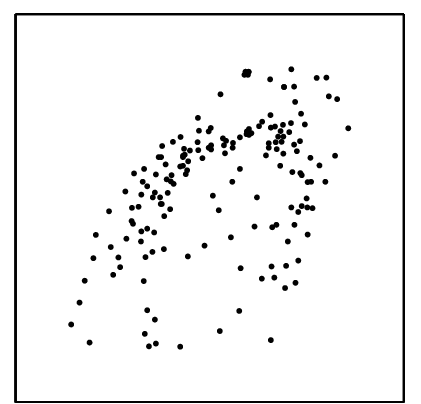}\\
  {\tt pair0048}
\end{tabular}}
\caption{\label{fig:D14}Scatter plots of pairs from D14.
{\tt pair0048}: outdoor temperature \causes indoor temperature.}
\end{figure}

\subsubsection*{{\tt pair0048}: Outdoor temperature \causes Indoor temperature}
Outdoor temperatures can have a strong impact on indoor temperatures, in particular when indoor temperatures are not adjusted by air conditioning or heating. Contrarily, indoor temperatures will have little or no effect on outdoor temperatures, because the outside environment has a much larger heat capacity.

\hypertarget{sec:D15}{\subsection*{D15: Bafu}}


This data set deals with the relationship between daily ozone concentration in the air 
and temperature. It was downloaded from \url{http://www.bafu.admin.ch/luft/luftbelastung/blick_zurueck/datenabfrage/index.html}.
Lower atmosphere ozone (O$_3$) is a secondary pollutant that is produced by the photochemical oxidation of carbon monoxide (CO), methane (CH$_4$), and non-methane volatile organic compounds (NMVOCs) by OH in the presence of nitrogen oxides (NO$_x$, NO + NO$_2$) \citep{Rasmussen2012}.
It is known that ozone concentration strongly correlates with surface temperature \citep{Bloomer2009}. Several explanations are given in the literature \citep[see e.g.,][]{Rasmussen2012}. Without going into details of the complex underlying chemical processes, we mention that the crucial chemical reactions are stronger at higher temperatures. For instance, 
isoprene emissions of plants increase with increasing temperature and isoprene can play a similar role in the generation of O$_3$ as NO$_x$ \citep{Rasmussen2012}.
Apart from this, air pollution may be influenced indirectly by temperature, e.g., via increasing traffic at `good' weather conditions or an increased occurrence rate of wildfires. 
All these explanations state a causal path from temperature to ozone. Note that
the phenomenon of ozone pollution in the lower atmosphere discussed here should not 
be confused with the `ozone hole', which is a lack of ozone in the higher atmosphere. Close to the surface, ozone concentration does not have an impact on temperatures.
For all three data sets, ozone is measured in $\mu$g/m$^3$ and temperature in $^\circ$C.

\begin{figure}[h!]
\centerline{\small\begin{tabular}{ccc}
  \includegraphics[width=0.2\textwidth]{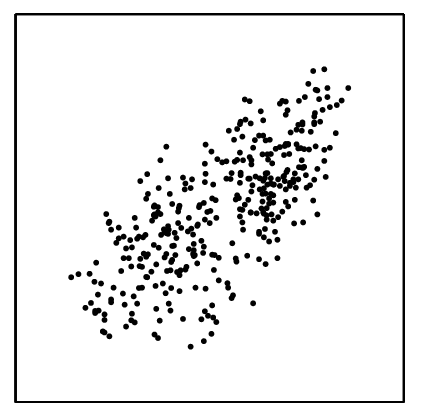}&
  \includegraphics[width=0.2\textwidth]{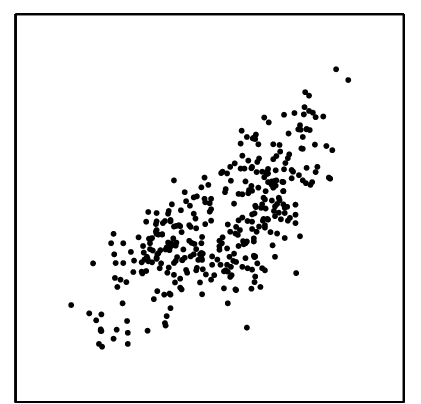}&
  \includegraphics[width=0.2\textwidth]{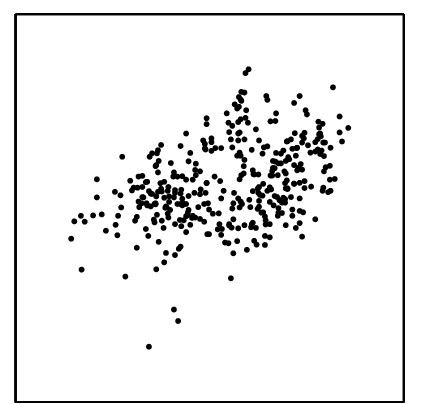}\\
  {\tt pair0049} & {\tt pair0050} & {\tt pair0051}
\end{tabular}}
\caption{\label{fig:D15}Scatter plots of pairs from D15.
{\tt pair0049}: temperature \causes ozone concentration,
{\tt pair0050}: temperature \causes ozone concentration,
{\tt pair0051}: temperature \causes ozone concentration,
{\tt pair0055}: radiation \causes ozone concentration.}
\end{figure}

\subsubsection*{{\tt pair0049}: Temperature \causes Ozone concentration}

365 daily mean values of ozone and temperature of year 2009 in Lausanne-C\'esar-Roux, Switzerland.

\subsubsection*{{\tt pair0050}: Temperature \causes Ozone concentration}

365 daily mean values of ozone and temperature of year 2009 in Chaumont, Switzerland.

\subsubsection*{{\tt pair0051}: Temperature \causes Ozone concentration}

365 daily mean values of ozone and temperature of year 2009 in Davos-See, Switzerland.

\subsubsection*{{\tt pair0055}: Radiation \causes Ozone concentration}

72 daily mean values of ozone concentrations and radiation in the last 83 days of 2009 at 16 different places in Switzerland (11 days were deleted due to missing data). Solar radiation and surface ozone concentration are correlated \citep{Feister1991}. The deposition of ozone is driven by complex micrometeorological
processes including wind direction, air temperature, and global radiation \citep{Stockwell1997}. For instance, solar radiation affects the height of the planetary boundary layer and cloud formation and thus indirectly influences ozone concentrations. In contrast, global radiation is not driven by ozone concentrations close to the surface.

Ozone is given in $\mu$g/m$^3$, radiation in W/m$^2$. 
The 16 different places are: 
1: Bern-Bollwerk, 2: Magadino-Cadenazzo, 3: Lausanne-C\'esar-Roux, 4: Payerne, 5: Lugano-Universita, 6: Taenikon, 7: Zuerich-Kaserne, 8: Laegeren, 9: Basel-Binningen,
10: Chaumont, 11: Duebendorf, 12: Rigi-Seebodenalp, 13: Haerkingen, 14: Davos-See,
15: Sion-A\'eroport, 16: Jungfraujoch.

\hypertarget{sec:D16}{\subsection*{D16: Environmental}}
  

We downloaded ozone concentration, wind speed, radiation and temperature from 
\url{http://www.mathe.tu-freiberg.de/Stoyan/umwdat.html}, discussed in \citet{Stoyan1997}. The data consist of 989 daily values over the time period from 05/01/1989 to 10/31/1994 observed in Heilbronn, Germany. 


\subsubsection*{{\tt pair0053}: (wind speed, radiation, temperature) \causes Ozone concentration}

As we have argued above in Section~\hyperlink{sec:D15}{D15}, wind direction (and speed), air temperature, and global radiation influence local ozone concentrations. Wind can influence ozone concentrations for example in the following way. No wind will keep the the concentration of ozone in a given air parcel constant if no lateral or vertical sources or sinks are prevalent. In contrast, winds can move and disperse and hence mix air with different ozone concentrations. Ozone concentration is given in $\mu$g/m$^3$, wind speed in m/s, global radiation in W/m$^2$ and temperature in $^\circ$C.

\hypertarget{sec:D17}{\subsection*{D17: UNdata}}


The following data were taken from the ``UNdata'' database of the United Nations Statistics Division at \url{http://data.un.org}.

\begin{figure}[h!]
\centerline{\small\begin{tabular}{cccc}
  \includegraphics[width=0.2\textwidth]{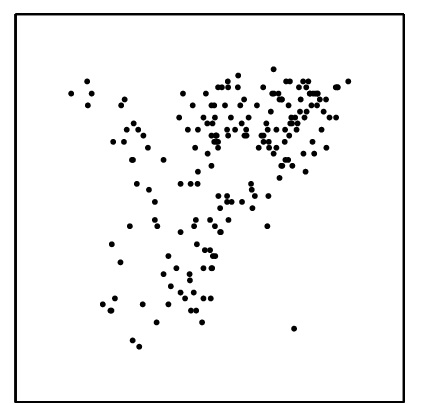}&
  \includegraphics[width=0.2\textwidth]{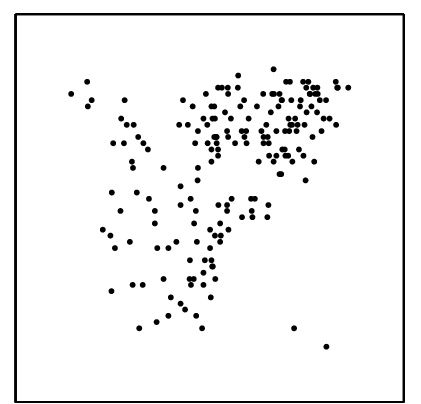}&
  \includegraphics[width=0.2\textwidth]{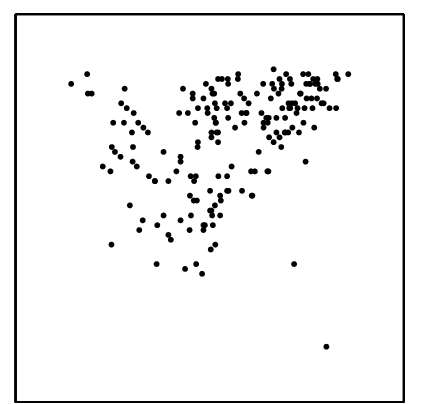}&
  \includegraphics[width=0.2\textwidth]{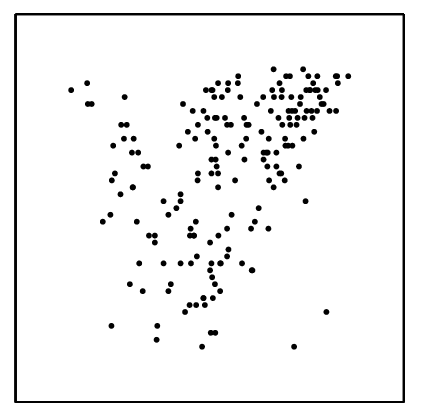}\\
  {\tt pair0056} & {\tt pair0057} & {\tt pair0058} & {\tt pair0059}\\
  \\
  \includegraphics[width=0.2\textwidth]{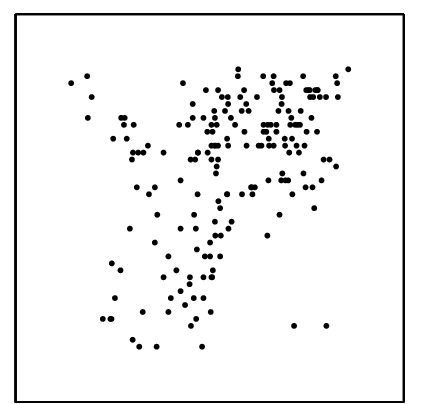}&
  \includegraphics[width=0.2\textwidth]{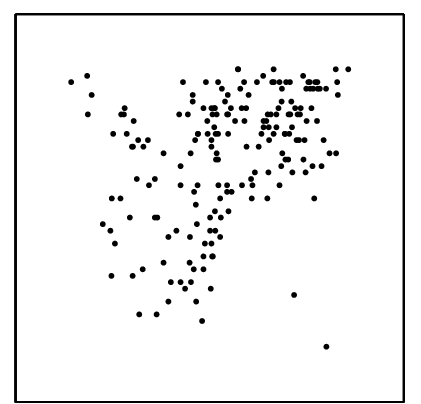}&
  \includegraphics[width=0.2\textwidth]{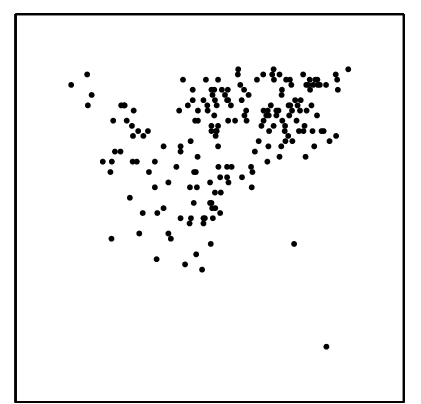}&
  \includegraphics[width=0.2\textwidth]{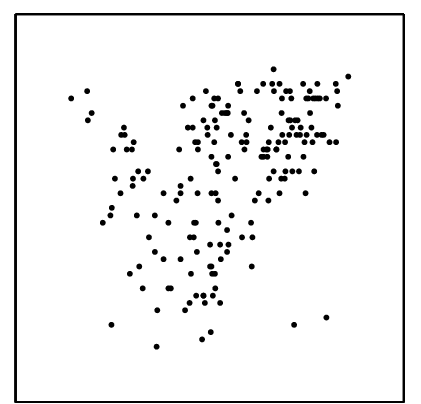}\\
  {\tt pair0060} & {\tt pair0061} & {\tt pair0062} & {\tt pair0063}\\
  \\
  \includegraphics[width=0.2\textwidth]{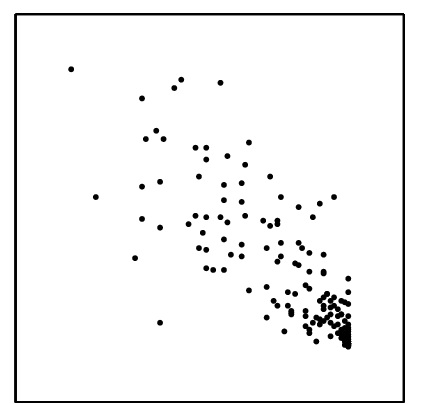}&
  \includegraphics[width=0.2\textwidth]{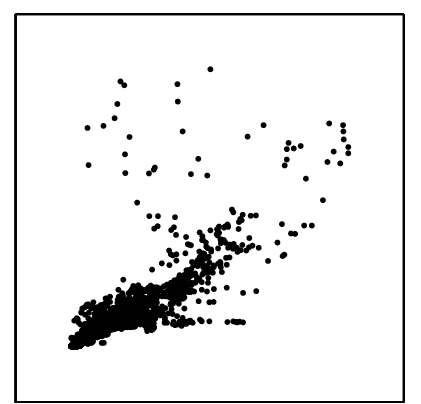}&
  \includegraphics[width=0.2\textwidth]{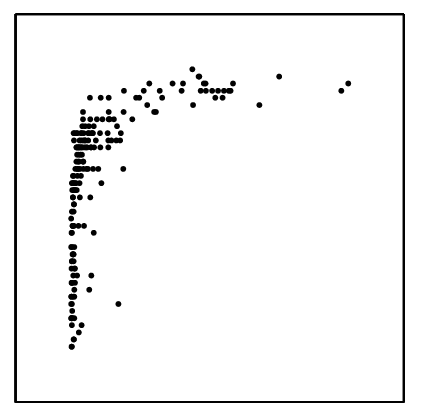}&
  \includegraphics[width=0.2\textwidth]{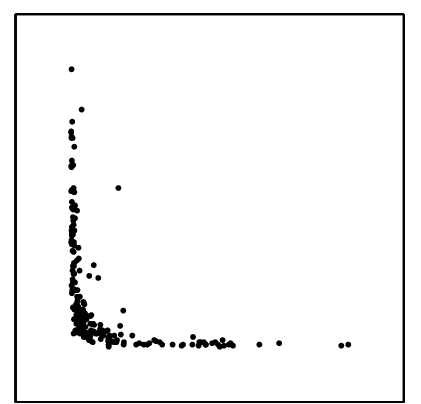}\\
  {\tt pair0064} & {\tt pair0073} & {\tt pair0074} & {\tt pair0075}
\end{tabular}}
\caption{\label{fig:D17}Scatter plots of pairs from D17.
{\tt pair0056}--{\tt pair0059}: latitude of capital \causes female life expectancy,
{\tt pair0060}--{\tt pair0063}: latitude of capital \causes male life expectancy,
{\tt pair0064}: drinking water access \causes infant mortality,
{\tt pair0073}: energy use \causes CO$_2$ emissions,
{\tt pair0074}: GNI per capita \causes life expectancy,
{\tt pair0075}: GNI per capita \causes under-5 mortality rate.}
\end{figure}

\subsubsection*{{\tt pair0056}--{\tt pair0059}: Latitude of Capital \causes Female Life Expectancy}

Pairs {\tt pair0056}--{\tt pair0059} consist of female life expectancy (in years) at birth versus latitude of the country's capital, for various
countries (China, Russia and Canada were removed). The four pairs correspond with measurements over the periods 2000--2005, 1995--2000,
1990--1995, 1985--1990, respectively. The data were downloaded from \url{http://data.un.org/Data.aspx?d=GenderStat&f=inID%3a37}.

The location of a country (encoded in the latitude of its capital) has an influence on how poor or rich a country is, hence affecting the quality of the health care system and ultimately life expectancy. This influence could stem from abundance of natural resources within the country's borders or the influence neighboring countries have on its economic welfare.
Furthermore, the latitude can influence life expectancy via climatic factors. For instance, life expectancy might be smaller if a country frequently experiences climatic extremes. 
In contrast, it is clear that life expectancy does not have any effect on latitude.


\subsubsection*{{\tt pair0060}--{\tt pair0063}: Latitude of Capital \causes Male Life Expectancy}


Pairs {\tt pair0060}--{\tt pair0063} are similar, but concern male life expectancy.
The same reasoning as for female life expectancy applies here.

\subsubsection*{{\tt pair0064}: Drinking Water Access \causes Infant Mortality}

Here, one variable describes the percentage of population with sustainable access to improved drinking water sources in 2006, whereas the other variable denotes
the infant mortality rate (per 1000 live births) for both sexes. The data were downloaded from \url{http://data.un.org/Data.aspx?d=WHO&f=inID%3aMBD10} 
and \url{http://data.un.org/Data.aspx?d=WHO&f=inID%3aRF03}, respectively, and consist of 163 samples. 

Clean drinking water is a primary requirement for health, in particular for infants \citep{Esrey1991}.
Changing the percentage of people with access to clean water will directly change the mortality rate of infants, since infants are particularly susceptible to diseases \citep{Lee1997}.  
There may be some feedback, because if infant mortality is high in a poor country, development aid may be directed towards increasing the access to clean drinking water.

\subsubsection*{{\tt pair0073}: Energy Use \causes CO$_2$ Emissions}

This data set contains energy use (in kg of oil equivalent per capita) and CO$_2$ emission data from 152 countries between 1960 and 2005, yielding together 5084 samples.
Considering the current energy mix across the world, the use of energy clearly results in CO$_2$ emissions (although in varying amounts across energy sources). Contrarily, a hypothetical change in CO$_2$ emissions will not affect the energy use of a country on the short term. On the longer term, if CO$_2$ emissions increase, this may cause energy use to decrease because of fear for climate change.

\subsubsection*{{\tt pair0074}: GNI per capita \causes Life Expectancy}

We collected the Gross National Income (GNI, in USD) per capita and the life expectancy at birth (in years) for 194 different countries.
GNI can be seen as an index of wealth of a country. In general, richer countries have a better health care system than poor countries an thus can take better care of their citizens when they are ill. Reversely, we believe that the life expectancy of humans has a smaller impact on how wealthy a country is than vice versa.

\subsubsection*{{\tt pair0075}: GNI per capita \causes Under-5 Mortality Rate}

Here we collected the Gross National Income (GNI, in USD) per capita and the under-5 mortality rate (deaths per 1000 live births) for 205 different countries.
The reasoning is similar as in {\tt pair0074}. GNI as an index of wealth influences the quality of the health care system, which in turn determines whether young children will or will not die from minor diseases.
As children typically do not contribute much to GNI per capita, we do not expect the reverse causal relation to be very strong.

\hypertarget{sec:D18}{\subsection*{D18: Yahoo database}}


These data denote stock return values and were downloaded from \url{http://finance.yahoo.com}. We collected 1331 samples from the following stocks between January 4th, 2000 and June 17, 2005:
Hang Seng Bank (0011.HK), HSBC Hldgs (0005.HK), Hutchison (0013.HK), Cheung kong (0001.HK), and Sun Hung Kai Prop.\ (0016.HK). Subsequently, the following preprocessing was applied, which is common in financial data processing:
\begin{enumerate}
\item Extract the dividend/split adjusted closing price data from the Yahoo Finance data base.
\item For the few days when the price is not available, we use simple linear interpolation to estimate the price.  
\item For each stock, denote the closing price on day $t$ by $P_t$, and the corresponding return is calculated as $X_t = (P_t-P_{t-1 }) / P_{t-1}$.
\end{enumerate}

\begin{figure}[h!]
\centerline{\small\begin{tabular}{ccc}
  \includegraphics[width=0.2\textwidth]{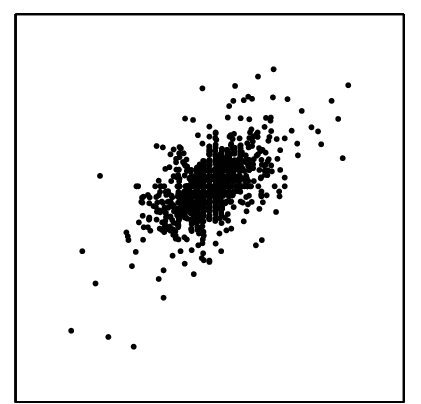}&
  \includegraphics[width=0.2\textwidth]{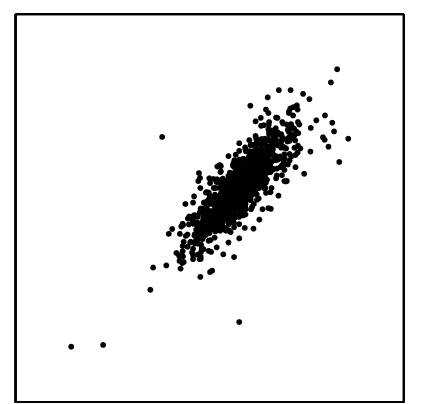}&
  \includegraphics[width=0.2\textwidth]{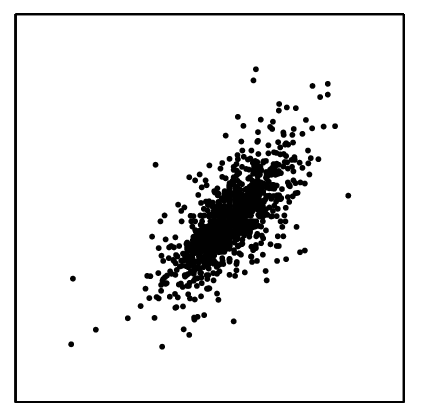}\\
  {\tt pair0065} & {\tt pair0066} & {\tt pair0067}
\end{tabular}}
\caption{\label{fig:D18}Scatter plots of pairs from D18.
{\tt pair0065}: Stock Return of Hang Seng Bank \causes Stock Return of HSBC Hldgs,
{\tt pair0066}: Stock Return of Hutchison \causes Stock Return of Cheung kong,
{\tt pair0067}: Stock Return of Cheung kong \causes Stock Return of Sun Hung Kai Prop.}
\end{figure}

\subsubsection*{{\tt pair0065}: Stock Return of Hang Seng Bank \causes Stock Return of HSBC Hldgs}

HSBC owns 60$\%$ of Hang Seng Bank. Consequently, if stock returns of Hang Seng Bank change, this should have an influence on stock returns of HSBC Hldgs, whereas causation in the other direction would be expected to be less strong.

\subsubsection*{{\tt pair0066}: Stock Return of Hutchison \causes Stock Return of Cheung kong}

Cheung kong owns about 50$\%$ of Hutchison. Same reasoning as in {\tt pair0065}.

\subsubsection*{{\tt pair0067}: Stock Return of Cheung kong \causes Stock Return of Sun Hung Kai Prop.}

Sun Hung Kai Prop. is a typical stock in the Hang Seng Property subindex, and is believed to depend on other major stocks, including Cheung kong.

\hypertarget{sec:D19}{\subsection*{D19: Internet traffic data}}


This dataset has been created from the log-files of a http-server of  the Max Planck Institute
for Intelligent Systems in T\"ubingen, Germany. 
The variable Internet connections counts the number of times an internal website of the institute has been accessed
during a time interval of $1$ minute (more precisely, it counts the number of URL requests).
Requests for non-existing websites are not counted.
The variable Byte transferred counts the total number of bytes sent  
for all those accesses during the same time interval. 
The values $(x_1,y_1),\dots,(x_{498},y_{498})$ refer to $498$ time intervals. To avoid too strong dependence between the measurements, the time intervals are not adjacent but have a distance of $20$ minutes.

\begin{figure}[h!]
\centerline{\small\begin{tabular}{c}
  \includegraphics[width=0.2\textwidth]{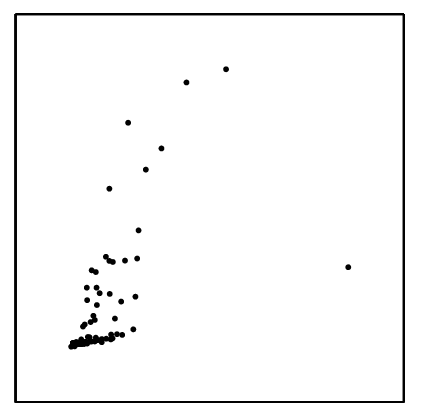}\\
  {\tt pair0068}
\end{tabular}}
\caption{\label{fig:D19}Scatter plots of pairs from D19. 
{\texttt pair0068}: internet connections \causes bytes transferred.}
\end{figure}

\subsubsection*{{\tt pair0068}: Internet connections \causes Bytes transferred}


Internet connections  causes Bytes transferred because an additional access of the website raises the transfer of data, while
transferring more data does not create an additional website access. 
Note that not every access yields data transfer because the website may still be cached. 
However, this fact does not spoil the causal relation, it only makes it less deterministic.

\hypertarget{sec:D20}{\subsection*{D20: Inside and outside temperature}}


This bivariate time-series data consists of measurements of inside room
temperature ($^\circ$C) and outside temperature ($^\circ$C),
where measurements were taken every 5 minutes for a period of about 56 days,
yielding a total of 16382 measurements.
The outside thermometer was located on a spot that was exposed to direct
sunlight, which explains the large fluctuations. The data were collected by
Joris M.\ Mooij.

\begin{figure}[h!]
\centerline{\small\begin{tabular}{c}
  \includegraphics[width=0.2\textwidth]{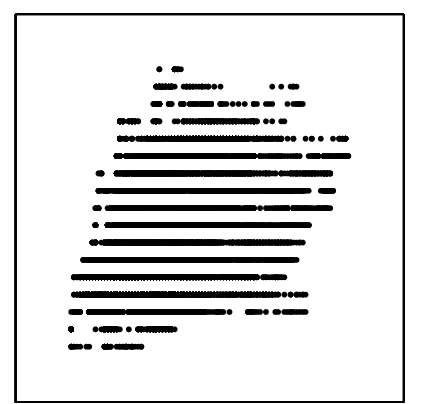}\\
  {\tt pair0069}
\end{tabular}}
\caption{\label{fig:D20}Scatter plots of pairs from D20.
{\tt pair0069}: outside temperature \causes inside temperature.}
\end{figure}

\subsubsection*{{\tt pair0069}: Outside Temperature \causes Inside Temperature}

Although there is a causal relationship in both directions, we
expect that the strongest effect is from outside temperature on inside temperature,
as the heat capacity of the inside of a house is much smaller than that of its
surroundings. See also the reasoning for {\tt pair0048}.

\hypertarget{sec:D21}{\subsection*{D21: Armann \& Buelthoff's data }}


This dataset is taken from a psychological experiment that artificially generates images
of human faces that interpolate between male and female, taking real faces as basis \citep{Armann}.
The interpolation is done via principal component analysis after representing true face images
as vectors in an appropriate high-dimensional space.
Human subjects are instructed to label the faces as male or female.
The variable ``parameter'' runs between $0$ and $14$ and describes the transition from female to male. 
It is chosen by the experimenter. The binary variable ``answer'' indicates the answers `female' and 'male', respectively.
The dataset consists of 4499 samples.

\begin{figure}[h!]
\centerline{\small\begin{tabular}{c}
  \includegraphics[width=0.2\textwidth]{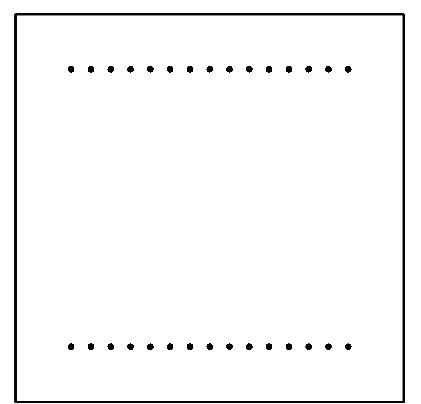}\\
  {\tt pair0070}
\end{tabular}}
\caption{\label{fig:D21}Scatter plots of pairs from D21.
{\tt pair0070}: parameter \causes answer.}
\end{figure}

\subsubsection*{{\tt pair0070}: Parameter \causes Answer}

Certainly parameter causes answer. We do not have to talk about {\it hypothetical} interventions. Instead, we have a true intervention, since ``parameter'' has been set by the experimenter.

\hypertarget{sec:D22}{\subsection*{D22: Acute Inflammations}}


This data set, available at the UCI Machine Learning Repository \citep{UCI_ML_repository},
was collected in order to create a computer expert system that decides whether a patient suffers from 
two different diseases of urinary system \citep{Czerniak2003}. We downloaded it from \url{https://archive.ics.uci.edu/ml/datasets/Acute+Inflammations}.
The two possible diseases are acute inflammations of urinary bladder and acute nephritises of renal pelvis origin.
As it is also possible to chose none of those, the class variable takes values in $\{0,1\}^2$.
The decision is based on six symptoms: temperature of patient (e.g. $35.9$), 
occurrence of nausea (``yes'' or ``no''),
lumbar pain (``yes'' or ``no''),
urine pushing (``yes'' or ``no''), 
micturition pains (``yes'' or ``no'') and
burning of urethra, itch, swelling of urethra outlet (``yes'' or ``no'').
These are grouped together in a six-dimensional vector ``symptoms''.


\subsubsection*{{\tt pair0071}: Symptoms \causes classification of disease}

One would think that the disease is causing the symptoms but this data set was created artificially. 
The description on the UCI homepage says: ``The data was created by a medical expert as a data set to test the expert system, which
will perform the presumptive diagnosis of two diseases of urinary system. (...) Each instance represents an potential patient.''
We thus consider the symptoms as the cause for the expert's decision.

\hypertarget{sec:D23}{\subsection*{D23: Sunspot data}}
  

The data set consists of 1632 monthly values between May 1874 and April 2010 and therefore contains $1632$ data points. The temperature data have been taken from  
\url{http://www.cru.uea.ac.uk/cru/data/temperature/}
and have been collected by Climatic Research Unit (University of East Anglia) in conjunction with the Hadley Centre (at the UK Met Office) \citep{Morice2012}. The temperature data is expressed in deviations from the 1961--90 mean global temperature of the Earth (i.e., monthly anomalies). 
The sunspot data \citep{Hathaway2010} are taken from the National Aeronautics and Space Administration and
were downloaded from \url{http://solarscience.msfc.nasa.gov/SunspotCycle.shtml}.
According to the description on that website, ``sunspot number is calculated by first counting the number of sunspot groups and then the number of individual sunspots.
The sunspot number is then given by the sum of the number of individual sunspots and ten times the number of groups. Since most sunspot groups have, on average, about ten spots, this formula for counting sunspots gives reliable numbers even when the observing conditions are less than ideal and small spots are hard to see.''

\begin{figure}[h!]
\centerline{\small\begin{tabular}{c}
  \includegraphics[width=0.2\textwidth]{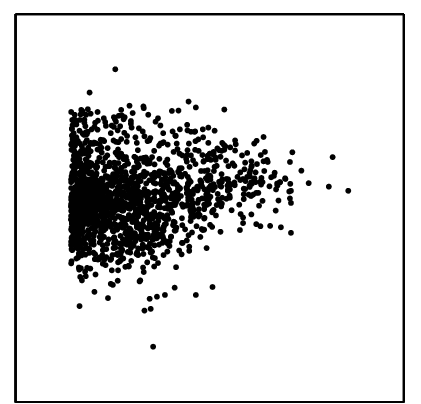}\\
  {\tt pair0072}
\end{tabular}}
\caption{\label{fig:D23}Scatter plots of pairs from D23.
{\tt pair0072}: sunspots \causes global mean temperature.}
\end{figure}

\subsubsection*{{\tt pair0072}: Sunspots \causes Global Mean Temperature}

Sunspots are phenomena that appear temporarily on the sun's surface. Although the causes of sunspots are not entirely understood, 
there is a significant dependence between the number of sunspots and the global mean temperature anomalies 
($p$-value for zero correlation is less than $10^{-4}$). There is evidence that the Earth's climate heats and cools as solar activity rises and falls \citep{Haigh2007}, and the sunspot number can be seen as a proxy for solar activity.
Also, we do not believe that the Earth's surface temperature (or changes of the Earth's atmosphere) has an influence on the activity of the sun. 
We therefore consider number of sunspots causing temperature as the ground truth. 

\hypertarget{sec:D24}{\subsection*{D24: Food and Agriculture Organization of the UN}}


The data set has been collected by Food and Agriculture Organization of the UN (\url{http://www.fao.org/economic/ess/ess-fs/en/}) and is accessible at \url{http://www.docstoc.com/docs/102679223/Food-consumption-and-population-growth---FAO}. It covers $174$ countries or areas during the period from 1990--92 to 1995--97 and the period from 1995--97 to 2000--02. As one entry is missing, this gives 347 data points.
We selected two variables: population growth and food consumption.
The first variable indicates the average annual rate of change of population (in \%), the second one describes
the average annual rate of change of total dietary consumption for total population (kcal/day) (also in \%).

\begin{figure}[h!]
\centerline{\small\begin{tabular}{c}
  \includegraphics[width=0.2\textwidth]{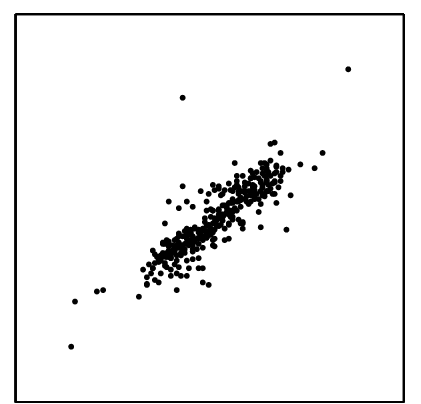}\\
  {\tt pair0076}
\end{tabular}}
\caption{\label{fig:D24}Scatter plots of pairs from D24.
{\tt pair0076}: population growth \causes food consumption growth.}
\end{figure}

\subsubsection*{{\tt pair0076}: Population Growth \causes Food Consumption Growth}

We regard population growth to cause food consumption growth, mainly because more people eat more. Both variables are most likely also confounded by the availability of food, driven for instance by advances in agriculture and subsequently increasing yields, but also by national and international conflicts, the global food market and other economic factors. However, for the short time period considered here, confounders which mainly influence the variables on a temporal scale can probably be neglected. Their might also be a causal link from food consumption growth to population growth, for instance one could imagine that if people are well fed, they also reproduce more. However, we assume this link only plays a minor role here.

\hypertarget{sec:D25}{\subsection*{D25: Light response data}}


The filtered version of the light response data was obtained from \citet{Moffat2012}. It consists of 721 measurements of
Net Ecosystem Productivity (NEP) and three different measures of the Photosynthetic Photon Flux Density (PPFD):
the direct, diffuse, and total PPFD. NEP is a measure of the net $CO_2$ flux between the biosphere and the atmosphere, mainly driven by biotic activity.
It is defined as the photosynthetic carbon uptake minus the carbon release by respiration, and depends on the available light. NEP is measured in units of $\mu \mathrm{mol\,CO_2}\,\mathrm\,\mathrm{m}^{-2}\,\mathrm{s}^{-1}$.
PPFD measures light intensity in terms of photons that are available for photosynthesis, i.e., with wavelength between 400\,nm 
and 700\,nm (visible light). More precisely, PPFD is defined as the number of photons with wavelength of 400--700\,nm falling on 
a certain area per time interval, measured in units of $\mu \mathrm{mol\,photons}\,\mathrm{m}^{-2}\,\mathrm{s}^{-1}$. The total
PPFD is the sum of PPFDdif, which measures only diffusive photons, and PPFDdir, which measures only direct (solar light) photons.
The data was measured over several hectare of a forest in Hainich, Germany (site name DE-Hai, latitude: $51.08^\circ$N, longitude: $10.45^\circ$E), and is available from \url{http://fluxnet.ornl.gov}.


\begin{figure}[h!]
\centerline{\small\begin{tabular}{ccc}
  \includegraphics[width=0.2\textwidth]{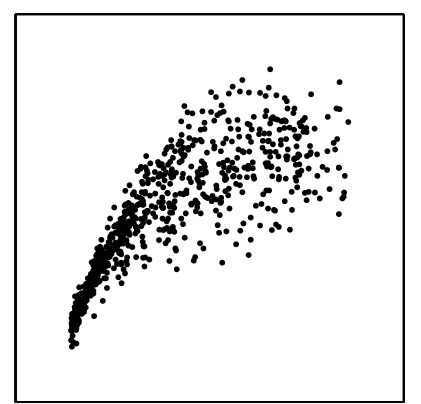}&
  \includegraphics[width=0.2\textwidth]{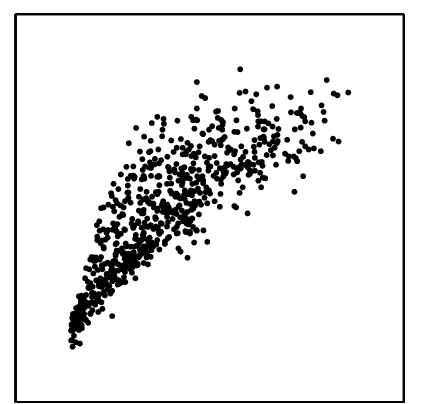}&
  \includegraphics[width=0.2\textwidth]{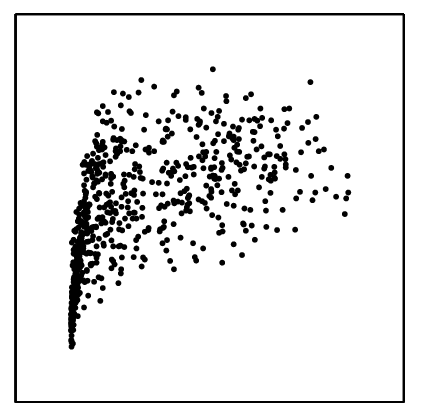}\\
  {\tt pair0078} & {\tt pair0079} & {\tt pair0080}
\end{tabular}}
\caption{\label{fig:D25}Scatter plots of pairs from D25.
{\tt pair0078}: PPFD \causes NEP, 
{\tt pair0079}: PPFDdif \causes NEP,
{\tt pair0080}: PPFDdir \causes NEP.}
\end{figure}

\subsubsection*{{\tt pair0078}--{\tt pair0080}: \{PPFD,PPFDdif,PPFDdir\} \causes NEP}
Net Ecosystem Productivity is known to be driven by both the direct and the diffuse Photosynthetic Photon Flux Density, and hence also by their sum, the total PPFD.

\hypertarget{sec:D26}{\subsection*{D26: FLUXNET}}


The data set contains measurements of net $CO_2$ exchanges between atmosphere and biosphere aggregated
over night, and the corresponding temperature. It is taken from the FLUXNET network \citep{Baldocchi2001}, available at \url{http://fluxnet.ornl.gov} (see also Section~\hyperlink{sec:D25}{D25}).
The data have been collected at a 10 Hz rate and was aggregated to one value per day over one year (365 values) and at three different sites (BE-Bra, DE-Har, US-PFa). $CO_2$ exchange measurements typically have a footprint of about 1km$^2$.
The data set contains further information on the quality of the data (``1'' means that the value is credible, ``NaN'' means that the data point has been filled in).

\begin{figure}[h!]
\centerline{\small\begin{tabular}{ccc}
  \includegraphics[width=0.2\textwidth]{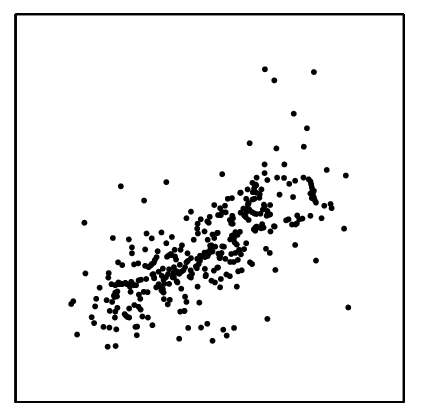}&
  \includegraphics[width=0.2\textwidth]{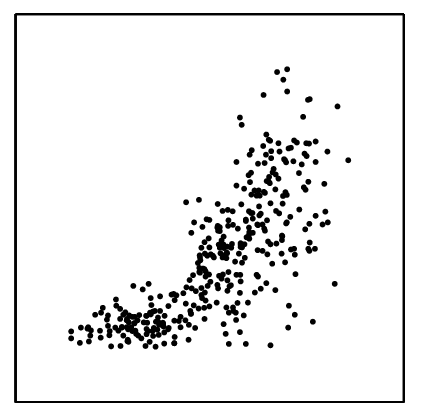}&
  \includegraphics[width=0.2\textwidth]{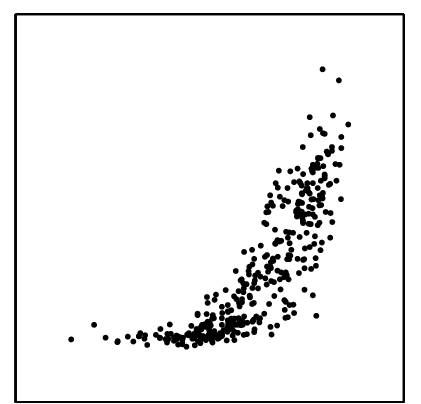}\\
  {\tt pair0081} & {\tt pair0082} & {\tt pair0083}
\end{tabular}}
\caption{\label{fig:D26}Scatter plots of pairs from D26.
{\tt pair0081} (BE-Bra): temperature \causes local $CO_2$ flux,
{\tt pair0082} (DE-Har): temperature \causes local $CO_2$ flux,
{\tt pair0083} (US-PFa): temperature \causes local $CO_2$ flux.}
\end{figure}

\subsubsection*{{\tt pair0081}--{\tt pair0083}: Temperature \causes Local $CO_2$ Flux}

Because of lack of sunlight, $CO_2$ exchange at night approximates ecosystem respiration (carbon release from the biosphere to the atmosphere), which is largely dependent on temperature \citep[e.g.][]{Mahecha2010}.
The $CO_2$ flux is mostly generated by microbial decomposition in soils and maintenance respiration from plants and does not have a direct effect on temperature. 
We thus consider temperature causing $CO_2$ flux as the ground truth. The three pairs {\tt pair0081}--{\tt pair0083} correspond with sites BE-Bra, DE-Har, US-PFa, respectively.

\hypertarget{sec:D27}{\subsection*{D27: US county-level growth data}}


The data set \citep{Wheeler2003}, available at \url{http://www.spatial-econometrics.com/data/contents.html}, contains both employment and population information for 3102 counties in the US in 1980.
We selected columns eight and nine in the file ``countyg.dat''.
Column eight contains the natural logarithm of the number of employed people, while
column nine contains the natural logarithm of the total number of people living 
in this county, and is therefore always larger than the number in column eight.

\begin{figure}[h!]
\centerline{\small\begin{tabular}{c}
  \includegraphics[width=0.2\textwidth]{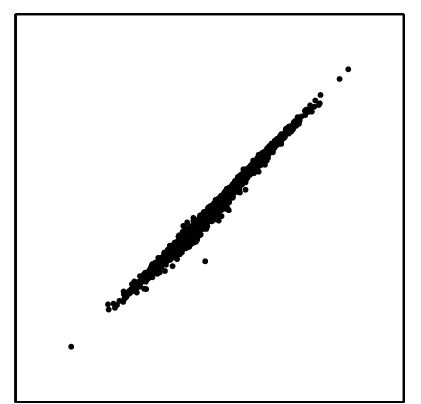}\\
  {\tt pair0084}
\end{tabular}}
\caption{\label{fig:D27}Scatter plots of pairs from D27.
{\tt pair0084}: population \causes employment.}
\end{figure}

\subsubsection*{{\tt pair0084}: Population \causes Employment}
It seems reasonable that the total population causes the employment and not vice versa. 
If we increase the number of people living in an area, this has a direct effect on the number of employed people.
We believe that the decision to move into an economically strong area is rather based on the employment rate rather than the absolute number of employed people. 
There might be an effect that the employment status influences the decision to get children
but we regard this effect to be less relevant.

\hypertarget{sec:D28}{\subsection*{D28: Milk protein trial}}


This data set is extracted from that for the milk protein trial used by \citet{Verbyla1990}. 
The original data set consists of assayed protein content of milk samples taken weekly from each of $79$ cows. 
The cows were randomly allocated to one of three diets: barley, mixed barley-lupins, and lupins, with $25$, $27$ and $27$ cows in the three groups, respectively. 
Measurements were taken for up to $19$ weeks but there were $38$ drop-outs from week $15$ onwards, corresponding to cows who stopped producing milk before the end of the experiment.
We removed the missing values (drop-outs) in the data set: we did not consider the measurements from week 15 onwards, which contain many drop-outs, and we discarded the cows with drop-outs before week 15. 
Finally, the data set contains 71 cows and 14 weeks, i.e., 994 samples in total. 
Furthermore, we re-organized the data set to see the relationship between the milk protein and the time to take the measurement.
We selected two variables: the time to take weekly measurements (from 1 to 14), and the protein content of the milk produced by each cow at that time.

\begin{figure}[h!]
\centerline{\small\begin{tabular}{c}
  \includegraphics[width=0.2\textwidth]{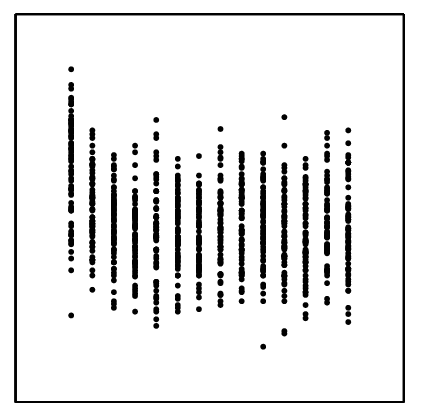}\\
  {\tt pair0085}
\end{tabular}}
\caption{\label{fig:D28}Scatter plots of pairs from D28.
{\tt pair0085}: time of measurement \causes protein content of milk.}
\end{figure}

\subsubsection*{{\tt pair0085}: Time of Measurement \causes Protein Content of Milk}
Clearly, the time of the measurement causes the protein content and not vice versa.
We do not consider the effect of the diets on the protein content.

\hypertarget{sec:D29}{\subsection*{D29: kamernet.nl data}}


This data was collected by Joris M.\ Mooij from \url{http://www.kamernet.nl}, a Dutch website for matching supply and demand of rooms and appartments for students, in 2007.
The variables of interest are the size of the appartment or room (in $\mathrm{m}^2$) and the monthly rent in EUR. Two outliers (one with size
$0\,\mathrm{m}^2$, the other with rent of 1 EUR per month) were removed, after which 666 samples remained. 

\begin{figure}[h!]
\centerline{\small\begin{tabular}{c}
  \includegraphics[width=0.2\textwidth]{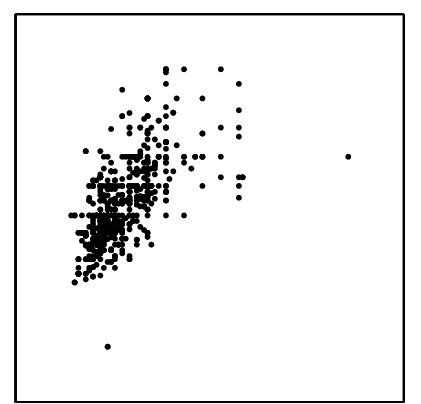}\\
  {\tt pair0086}
\end{tabular}}
\caption{\label{fig:D29}Scatter plots of pairs from D29.
{\tt pair0086}: size of apartment \causes monthly rent.}
\end{figure}

\subsubsection*{{\tt pair0086}: Size of apartment \causes Monthly rent}
Obviously, the size causes the rent, and not vice versa.

\hypertarget{sec:D30}{\subsection*{D30: Whistler daily snowfall}}


The Whistler daily snowfall data is one of the data sets on \url{http://www.mldata.org}, and was originally obtained from
\url{http://www.climate.weatheroffice.ec.gc.ca/} (Whistler Roundhouse station, identifier 1108906). 
We downloaded it from \url{http://www.mldata.org/repository/data/viewslug/whistler-daily-snowfall}.
It concerns historical daily snowfall data in Whistler, BC, Canada, over the period July 1, 1972 to December 31, 2009. It was
measured at the top of the Whistler Gondola (Latitude: 50${}^\circ$04$'$04.000$''$\,N, Longitude: 122${}^\circ$56$'$50.000$''$\,W, Elevation: 1835\,m).
We selected two attributes, mean temperature (${}^\circ$C) and total snow (cm). The data consists of 7753 measurements of these two attributes.

\begin{figure}[h!]
\centerline{\small\begin{tabular}{c}
  \includegraphics[width=0.2\textwidth]{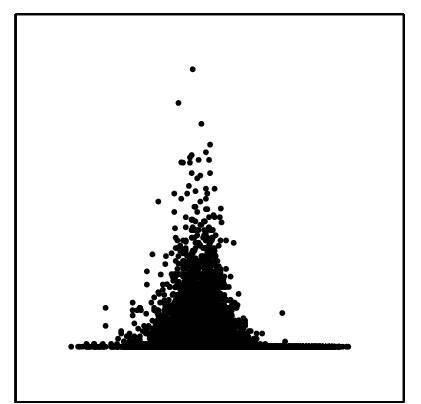}\\
  {\tt pair0087}
\end{tabular}}
\caption{\label{fig:D30}Scatter plots of pairs from D30.
{\tt pair0087}: temperature \causes total snow.}
\end{figure}

\subsubsection*{{\tt pair0087}: Temperature \causes Total Snow}

Common sense tells us that the mean temperature is one of the causes of the total amount of snow, although there may be a small feedback effect of
the amount of snow on temperature. Confounders are expected to be present (e.g., whether there are clouds).

\hypertarget{sec:D31}{\subsection*{D31: Bone Mineral Density}}


This dataset comes from the {\tt R} package {\tt ElemStatLearn}, and contains
measurements of the age and the relative change of the bone mineral density of
261 adolescents. Each value is the difference in the spinal bone mineral
density taken on two consecutive visits, divided by the average. The age is the
average age over the two visits. We preprocessed the data by taking only the
first measurement for each adolescent, as each adolescent has 1--3 measurements.

\begin{figure}[h!]
\centerline{\small\begin{tabular}{c}
  \includegraphics[width=0.2\textwidth]{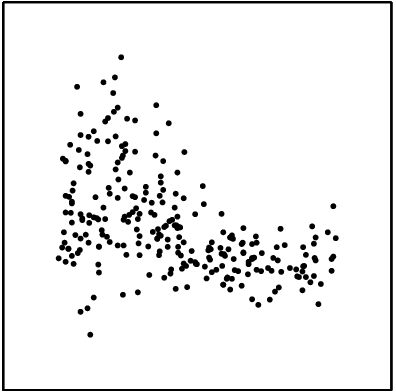}\\
  {\tt pair0088}
\end{tabular}}
\caption{\label{fig:D31}Scatter plots of pairs from D31.
{\tt pair0088}: age \causes relative bone mineral density.}
\end{figure}

\subsubsection*{{\tt pair0088}: Age \causes Bone mineral density}

Age must be the cause, bone mineral density the effect.

\hypertarget{sec:D32}{\subsection*{D32: Soil properties}}


These data were collected within the Biodiversity Exploratories project, see \url{http://www.biodiversity-exploratories.de}. 
We used dataset 14686 (soil texture) and 16666 (root decomposition). 
With the goal to study fine root decomposition rates, \citet{Solly2014} placed litterbags containing fine roots in 150 forest and 150 grassland sites
along a climate gradient across Germany. Besides the decomposition rates, a range of other relevant variables
were measured, including soil properties such as clay content, soil organic carbon content and soil moisture. We deleted sites
with missing values and separated grasslands and forests.

\begin{figure}[h!]
\centerline{\small\begin{tabular}{cccc}
  \includegraphics[width=0.2\textwidth]{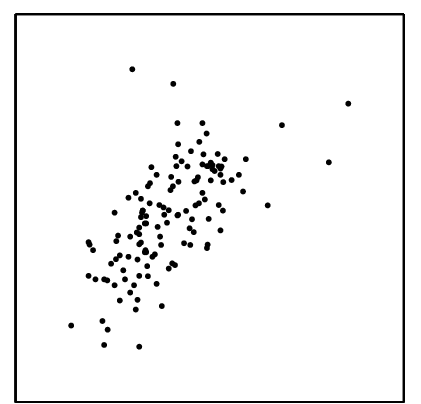}&
  \includegraphics[width=0.2\textwidth]{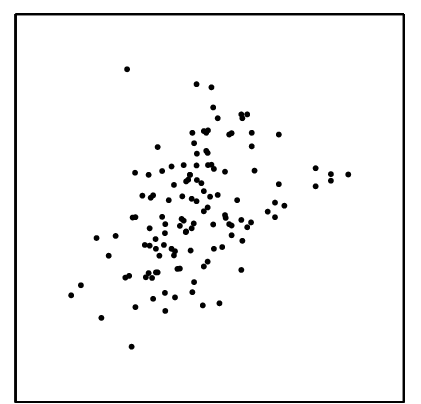}&
  \includegraphics[width=0.2\textwidth]{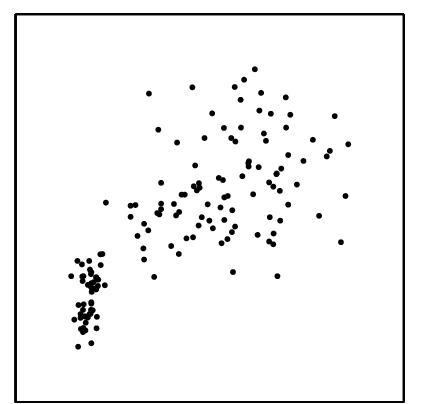}&
  \includegraphics[width=0.2\textwidth]{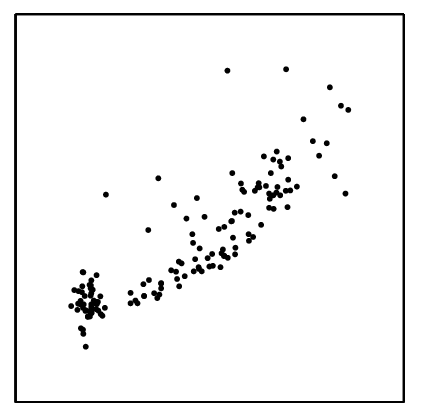}\\
  {\tt pair0089} & {\tt pair0090} & {\tt pair0091} & {\tt pair0092}
\end{tabular}}
\caption{\label{fig:D32}Scatter plots of pairs from D32.
{\tt pair0089}: Root decomposition in April \causes Root decomposition in October (Forests),
{\tt pair0090}: Root decomposition in April \causes Root decomposition in October (Grasslands),
{\tt pair0091}: Clay content in soil \causes Soil moisture (forests),
{\tt pair0092}: Clay content in soil \causes Organic carbon content (forests).}
\end{figure}

\subsubsection*{{\tt pair0089}--{\tt pair0090}: Root decomposition in April \causes Root decomposition in October}

Root decomposition happens monotonously in time. Hence the amount decomposed in April directly affects the amount decomposed in October in the same year.

\subsubsection*{{\tt pair0091}: Clay content in soil \causes Soil moisture}

The amount of water that can be stored in soils depends on its texture. The clay content of a soil influences whether precipitation is stored longer in soils or runs off immediately. In contrast, it is clear that wetness of a soil does not affect its clay content.

\subsubsection*{{\tt pair0092}: Clay content in soil \causes Organic carbon content}

How much carbon an ecosystem stores in its soil depends on multiple factors, including the land cover type, climate and soil texture. Higher amounts of clay are favorable for storage of organic carbon \citep{Solly2014}. Soil organic carbon, on the other hand, does not alter the texture of a soil.

\hypertarget{sec:D33}{\subsection*{D33: Runoff data}}


This dataset comes from the MOPEX data base (\url{http://www.nws.noaa.gov/ohd/mopex/mo_datasets.htm}
and can be downloaded directly from \url{ftp://hydrology.nws.noaa.gov/pub/gcip/mopex/US_Data/Us_438_Daily/}.
It contains precipitation and runoff data from over 400 river catchments in the USA on a daily resolution from 1948 to 2004.
We computed yearly averages of precipitation and runoff for each catchment.

\begin{figure}[h!]
\centerline{\small\begin{tabular}{c}
  \includegraphics[width=0.2\textwidth]{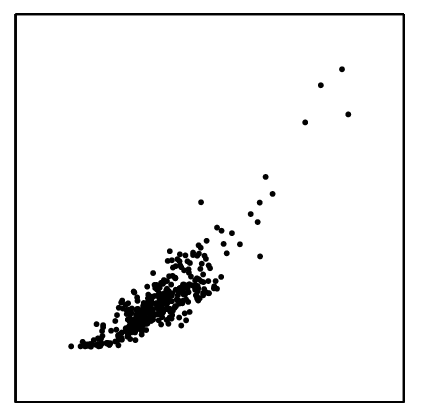}\\
  {\tt pair0093}
\end{tabular}}
\caption{\label{fig:D33}Scatter plots of pairs from D33.
{\tt pair0093}: Precipitation \causes Runoff.}
\end{figure}

\subsubsection*{{\tt pair0093}: Precipitation \causes Runoff}

Precipitation is by far the largest driver for runoff in a given river catchment. There might be a very small feedback from 
runoff that evaporates and generates new precipitation. This is, however, negligible if the catchment does not span over full continents.

\hypertarget{sec:D34}{\subsection*{D34: Electricity load}}

This data set comes from a regional energy distributor in Turkey. It contains three variables, the hour of the day, temperature in degree Celsius and electricity consumption (load) in MW per hour.
We are thank S. Armagan Tarim and Steve Prestwich for providing the data. 

\begin{figure}[h!]
\centerline{\small\begin{tabular}{ccc}
  \includegraphics[width=0.2\textwidth]{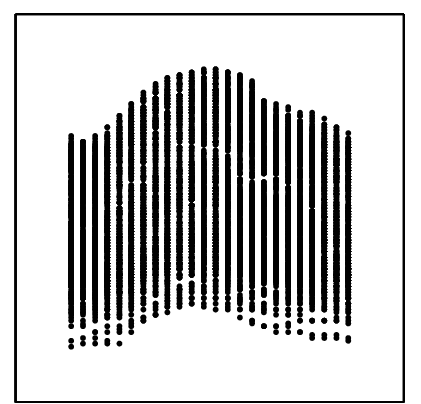}&
  \includegraphics[width=0.2\textwidth]{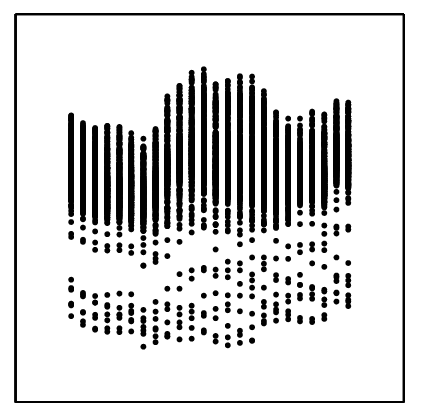}&
  \includegraphics[width=0.2\textwidth]{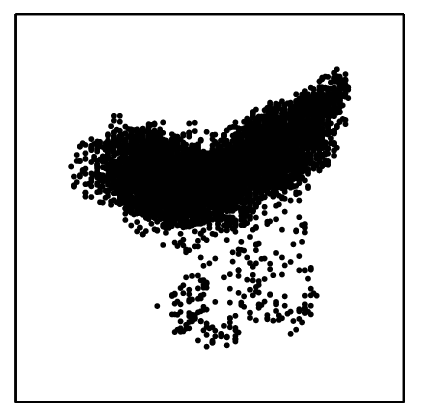} \\
  {\tt pair0094} & {\tt pair0095} & {\tt pair0096}
\end{tabular}}
\caption{\label{fig:D34}Scatter plots of pairs from D34.
{\tt pair0094}: Hour of the day \causes Temperature,
{\tt pair0095}: Hour of the day \causes Electricity consumption,
{\tt pair0096}: Temperature \causes Electricity consumption.}
\end{figure}

\subsubsection*{{\tt pair0094}: Hour of the day \causes Temperature}

We consider hour of the day as the cause, since it can be seen as expressing the angular position
of the sun. Although true interventions are unfeasible, it is commonly agreed that 
changing the position of the sun would result in temperature changes at a fixed location due to
the different solar incidence angle. 

\subsubsection*{{\tt pair0095}: Hour of the day \causes Electricity consumption}

The hour of the day constrains in many ways what people do and thus also their use of electricity. 
Consequently, we consider hour of the day as cause and electricity consumption as effect.

\subsubsection*{{\tt pair0096}: Temperature \causes Electricity consumption}

Changes in temperature can prompt people to use certain electric devices, e.g., an electric heating when it is gets very cold or the usage of a fan or air conditioning when it gets very hot. Furthermore, certain machines such as computers have to be cooled more if temperatures rise. Hence we consider temperature as cause and electricity consumption as effect. 

\hypertarget{sec:D35}{\subsection*{D35: Ball track}}

The data has been recorded by D.\ Janzing using a ball track that has been
equipped with two pairs of light barriers. The first pair measures the initial
speed and the second pair the speed of a ball at some later position of the
track. The units are arbitrary and differ for both measurements
since they are obtained by inverting the time the ball needed to pass the
distance between two light barriers of one pair.

\begin{figure}[h!]
\centerline{\small\begin{tabular}{cc}
  \includegraphics[width=0.2\textwidth]{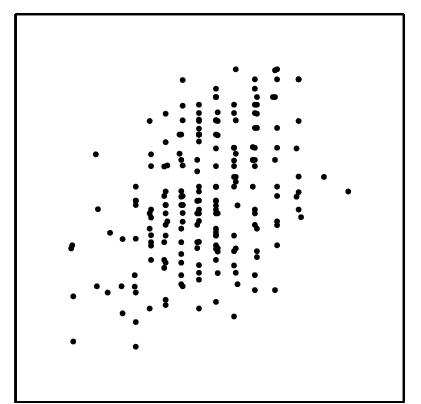}&
  \includegraphics[width=0.2\textwidth]{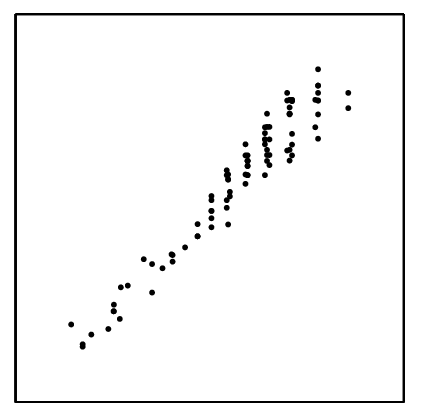} \\
  {\tt pair0097} & {\tt pair0098}
\end{tabular}}
\caption{\label{fig:D35}Scatter plots of pairs from D35.
{\tt pair0097}: Initial speed \causes Final speed,
{\tt pair0098}: Initial speed \causes Final speed.}
\end{figure}

The initial part of the track has large slope. The initial speed is strongly
determined by the exact position where the ball is put on the track. For part
of the runs, the position of the ball has been chosen by D.\ Janzing, the other
part by a 4-year old child. This should avoid that the variation of the initial
position is done in a too systematic way.

Two similar experiments have been performed, using different ball track setups. 
For {\tt pair0098} the ball track had a longer acceleration zone than for {\tt pair0097}, 
which allows for larger variations in initial speed.

\subsubsection*{{\tt pair0097}: Initial speed \causes Final speed}

These data consists of 202 measurements.
Obviously, the initial speed of the ball causes the final speed.

\subsubsection*{{\tt pair0098}: Initial speed \causes Final speed}

These data consist of 94 measurements.
Again, the initial speed of the ball causes the final speed.

\hypertarget{sec:D36}{\subsection*{D36: nlschools}}

This is dataset {\tt nlschools} from the {\tt R} package {\tt MASS}. The data
were used by \citet{SnijdersBosker1999} as a running example and are about a
study of 2287 eigth-grade pupils (aged about 11) in 132 classes in 131 schools
in the Netherlands. We used two variables: {\tt lang}, a language test score,
and {\tt SES}, the social-economic status of the pupil's family.

\begin{figure}[h!]
\centerline{\small\begin{tabular}{c}
  \includegraphics[width=0.2\textwidth]{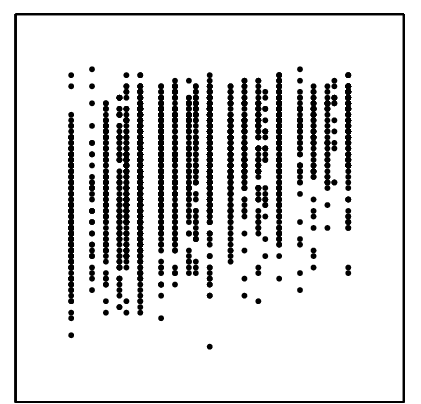}\\
  {\tt pair0099}
\end{tabular}}
\caption{\label{fig:D36}Scatter plots of pairs from D36.
{\tt pair0099}: Social-economic status of family \causes Language test score.}
\end{figure}

\subsubsection*{{\tt pair0099}: Social-economic status of family \causes Language test score}

We consider the social-economic status of the pupil's family to be the cause of the
language test score of the pupil. However, note that selection bias may be present
via the choice of the schools to include in the study.

\hypertarget{sec:D37}{\subsection*{D37: cpus}}

This is dataset {\tt cpus} from the {\tt R} package {\tt MASS}, and concerns
characteristics of 209 CPUs \citep{EinDorFeldmesser1987}. We used two variables:
{\tt syct}, cycle time in nanoseconds, and {\tt perf}, the published performance
on a benchmark mix relative to an IBM 370/158-3, and took the logarithms of the original values.

\begin{figure}[h!]
\centerline{\small\begin{tabular}{c}
  \includegraphics[width=0.2\textwidth]{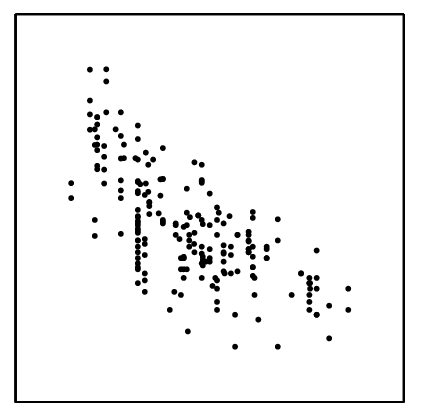}\\
  {\tt pair0100}
\end{tabular}}
\caption{\label{fig:D37}Scatter plots of pairs from D37.
{\tt pair0100}: CPU cycle time \causes Performance.}
\end{figure}

\subsubsection*{{\tt pair0100}: CPU cycle time \causes Performance}

It should be obvious that CPU cycle time causes its performance.

\section{Computation time}\label{sec:computation_times} 

We report the total computation time for each benchmark set and for each of our
implementations of various methods in Figures~\ref{fig:time_ANM_all_data} and
\ref{fig:time_IGCI_uniform_all_data}. We used a machine with \texttt{Intel Xeon
CPU E5-2680 v2 @ 2.80GHz} processors, 40 cores, and 125 GB of RAM. The measured
computation time measures the total time spent (i.e., the sum of the
computation times of individual cores). We did not spend much effort on
optimizing the implementations, so the reported computation times should be seen
as upper bounds on what is achievable. We only report results for the
unperturbed data, as the preprocessing does not affect computation time
significantly. 

In general, for the ANM implementations, most time is taken by the Gaussian
Process regression. The HSIC test and entropy estimators are relatively quick
compared to that. A notable outlier is \texttt{ANM-MML} which spends much time
on estimating the MML of the marginal distribution using the algorithm
by \cite{FigueiredoJain2002}. IGCI implementations
are much faster than ANM (about two orders of magnitude in our setting), as
non-parametric regression is not required. One notable outlier for the IGCI
implementations is \texttt{IGCI-ent-PSD}, which shows that the 
\texttt{ent-PSD} estimator is slower than the other entropy estimators in the 
\texttt{ITE} toolbox. Interestingly, this is also the only non-parameteric entropy
estimator that turned out to be robust to perturbations of the data.

\begin{figure}[p]
\centering
\includegraphics[scale=0.9]{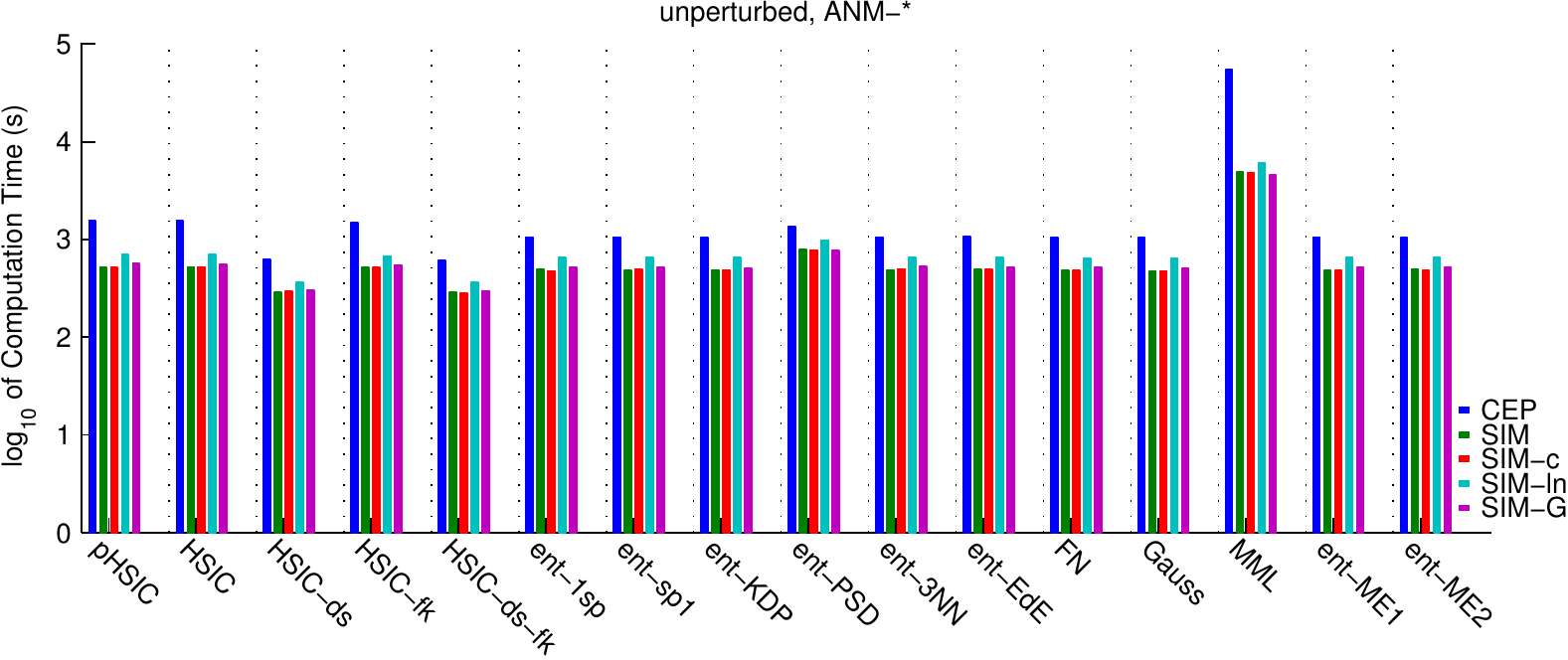}
\caption{\label{fig:time_ANM_all_data}Computation times of various ANM methods on different (unperturbed) data sets. For the variants of the spacing estimator, only the results for \texttt{sp1} are shown, as results for \texttt{sp2},\dots,\texttt{sp6} were similar.}
\end{figure}

\begin{figure}[p]
\centering
\includegraphics[scale=0.9]{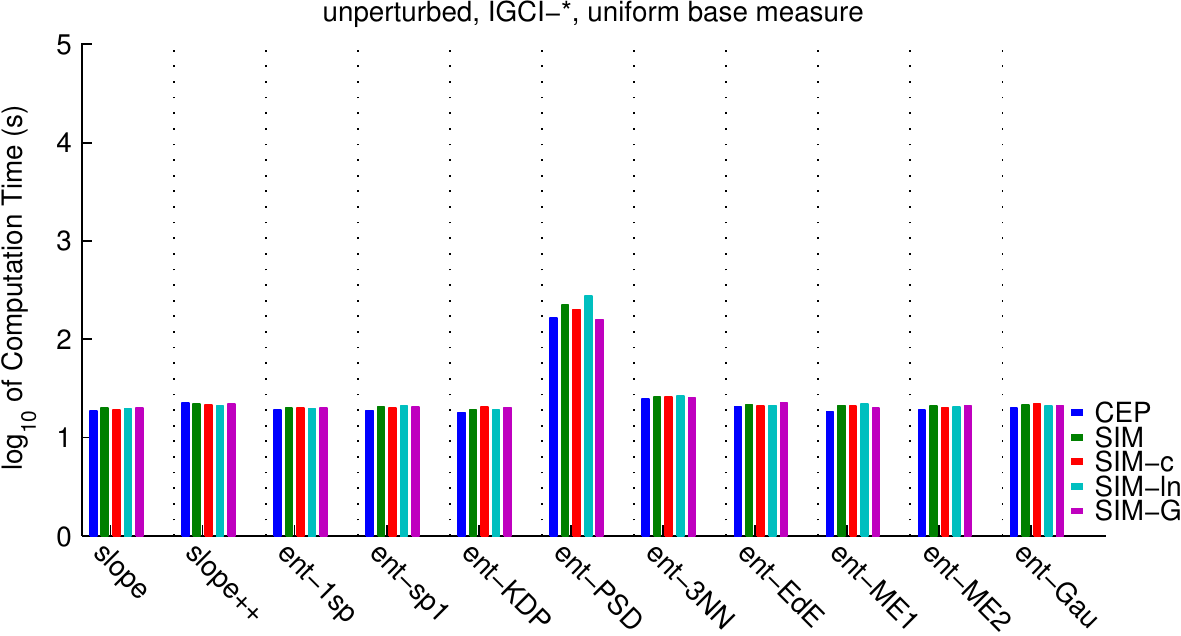}
\caption{\label{fig:time_IGCI_uniform_all_data}Computation times of various IGCI methods on different (unperturbed) data sets. For the variants of the spacing estimator, only the results for \texttt{sp1} are shown, as results for \texttt{sp2},\dots,\texttt{sp6} were similar. We only show results for the uniform base measure as those for the Gaussian base measure are similar.}
\end{figure}


\section*{Acknowledgements} 

JMM was supported by NWO, the Netherlands Organization for Scientific Research (VIDI grant 639.072.410). 
JP received funding from the People Programme (Marie Curie Actions) of the European Union's Seventh Framework Programme (FP7/2007-2013) under REA grant agreement no 326496.
The authors thank Stefan Harmeling for fruitful discussions and providing the code to create Figure~\ref{fig:allpairs_CEP}. We also thank S.\ Armagan Tarim and Steve Prestwich for contributing cause-effect pairs \texttt{pair0094}, \texttt{pair0095}, and \texttt{pair0096}. Finally, we thank several anonymous reviewers for their comments that helped us to improve the drafts.



\small
\bibliography{dataset,article,consistency}
\end{document}